\documentclass{article}

% if you need to pass options to natbib, use, e.g.:
\PassOptionsToPackage{numbers, compress}{natbib}
% before loading nips_2017
%
% to avoid loading the natbib package, add option nonatbib:
% \usepackage[nonatbib]{nips_2017}

%\usepackage{nips_2017}

% to compile a camera-ready version, add the [final] option, e.g.:
\usepackage[final]{nips_2017}

\usepackage[T1]{fontenc}    % use 8-bit T1 fonts
\usepackage{hyperref}       % hyperlinks
\usepackage{url}            % simple URL typesetting
\usepackage{booktabs}       % professional-quality tables
\usepackage{amsfonts}       % blackboard math symbols
\usepackage{nicefrac}       % compact symbols for 1/2, etc.
\usepackage{microtype}      % microtypography
% For citations
\usepackage{natbib}
\usepackage{amsthm}
% For algorithms
%\usepackage[algcompatible]{algpseudocode}
\usepackage{algorithm}
\usepackage{algorithmic}
\usepackage{bbm}
\usepackage{amsfonts}
\usepackage{amsmath}           
\usepackage{mathtools}    
\usepackage{amsopn}

%%%%%%%%%%%%%%%%%% toc 
\usepackage{scrwfile}
\TOCclone[\contentsname~(\appendixname)]{toc}{atoc}
\newcommand\StartAppendixEntries{}
\AfterTOCHead[toc]{%
  \renewcommand\StartAppendixEntries{\value{tocdepth}=-10000\relax}%
}
\AfterTOCHead[atoc]{%
  \edef\maintocdepth{\the\value{tocdepth}}%
  \value{tocdepth}=-10000\relax%
  \renewcommand\StartAppendixEntries{\value{tocdepth}=\maintocdepth\relax}%
}
\newcommand*\appendixwithtoc{%
  \cleardoublepage
  \appendix
  \addtocontents{toc}{\protect\StartAppendixEntries}
  \listofatoc
}
\usepackage{blindtext}
%%%%%%%%%%%%%%%%%% toc

\newtheorem{assumption}{Assumption}
\newtheorem{theorem}{Theorem}
\newtheorem{lemma}[theorem]{Lemma}

\newtheorem{definition}[theorem]{Definition}
\newtheorem{remark}[theorem]{Remark}
\numberwithin{theorem}{section}
\DeclareMathOperator{\val}{val}
\DeclareMathOperator{\spa}{sp}
\DeclareMathOperator{\solve}{solve}
\allowdisplaybreaks

% Packages hyperref and algorithmic misbehave sometimes.  We can fix
% this with the following command.
%\newcommand{\theHalgorithm}{\arabic{algorithm}}

\newlength\leftindent
\setlength\leftindent{-1em}

\newcommand{\norm}[1]{\left\lVert#1\right\rVert}

\newlength\rightindent
\setlength\rightindent{0.7em}

\usepackage{amsmath}
\DeclareMathOperator*{\argmin}{\arg\!\min}
\DeclareMathOperator*{\argmax}{\arg\!\max}
\title{Online Reinforcement Learning in Stochastic Games}

% The \author macro works with any number of authors. There are two
% commands used to separate the names and addresses of multiple
% authors: \And and \AND.
%
% Using \And between authors leaves it to LaTeX to determine where to
% break the lines. Using \AND forces a line break at that point. So,
% if LaTeX puts 3 of 4 authors names on the first line, and the last
% on the second line, try using \AND instead of \And before the third
% author name.

\author{
  Chen-Yu Wei\\
Institute of Information Science\\
  Academia Sinica, Taiwan\\
  \texttt{bahh723@iis.sinica.edu.tw} \\
 \And Yi-Te Hong \\
Institute of Information Science\\
  Academia Sinica, Taiwan\\
  \texttt{ted0504@iis.sinica.edu.tw} \\
\And Chi-Jen Lu\\
  Institute of Information Science\\
  Academia Sinica, Taiwan\\
  \texttt{cjlu@iis.sinica.edu.tw} \\
  %% examples of more authors
  %% \And
  %% Coauthor \\
  %% Affiliation \\
  %% Address \\
  %% \texttt{email} \\
  %% \And
  %% Coauthor \\
  %% Affiliation \\
  %% Address \\
  %% \texttt{email} \\
}

\begin{document}
% \nipsfinalcopy is no longer used

\maketitle

%#############################################################################
%   Abstract
%#############################################################################

\begin{abstract} 
We study online reinforcement learning in average-reward stochastic games (SGs). An SG models a two-player zero-sum game in a Markov environment, where state transitions and one-step payoffs are determined simultaneously by a learner and an adversary. We propose the \textsc{UCSG} algorithm that achieves a sublinear regret compared to the game value when competing with an arbitrary opponent. This result improves previous ones under the same setting. The regret bound has a dependency on the \textit{diameter}, which is an intrinsic value related to the mixing property of SGs. If we let the opponent play an optimistic best response to the learner, \textsc{UCSG} finds an $\varepsilon$-maximin stationary policy with a sample complexity of $\tilde{\mathcal{O}}\left(\text{poly}(1/\varepsilon)\right)$, where $\varepsilon$ is the gap to the best policy. 
\end{abstract} 

%#############################################################################
%    Introduction
%#############################################################################

\section{Introduction}
Many real-world scenarios (e.g., markets, computer networks, board games) can be cast as multi-agent systems. The framework of Multi-Agent Reinforcement Learning (MARL) targets at learning to act in such systems. While in traditional reinforcement learning (RL) problems, Markov decision processes (MDPs) are widely used to model a single agent's interaction with the environment, stochastic games (SGs, \cite{shapley1953stochastic}), as an extension of MDPs, are able to describe multiple agents' simultaneous interaction with the environment. In this view, SGs are most well-suited to model MARL problems \cite{littman1994markov}. 

In this paper, two-player zero-sum SGs are considered. These games proceed like MDPs, with the exception that in each state, both players select their own actions \textit{simultaneously} \footnote{Turn-based SGs, like Go, are special cases: in each state, one player's action set contains only a null action.}, which jointly determine the transition probabilities and their rewards .  The \textit{zero-sum} property restricts that the two players' payoffs sum to zero. Thus, while one player (Player 1) wants to maximize his/her total reward, the other (Player 2) would like to minimize that amount. Similar to the case of MDPs, the reward can be discounted or undiscounted, and the game can be episodic or non-episodic.   

In the literature, SGs are typically learned under two different settings, and we will call them \textit{online} and \textit{offline} settings, respectively. In the offline setting, the learner controls both players in a centralized manner, and the goal is to find the equilibrium of the game \cite{szepesvari1996generalized, lagoudakis2002value, perolat2015approximate}. This is also known as finding the worst-case optimality for each player (a.k.a. maximin or minimax policy). In this case, we care about the \textit{sample complexity}, i.e., how many samples are required to estimate the worst-case optimality such that the error is below some threshold. In the online setting, the learner controls only one of the players, and plays against an arbitrary opponent \cite{littman1994markov, bowling2001rational, brafman2002r, conitzer2007awesome, prasad2015two}. In this case, we care about the learner's \textit{regret}, i.e., the difference between some benchmark measure and the learner's total reward earned in the learning process. This benchmark can be defined as the total reward when both players play optimal policies \cite{brafman2002r}, or when Player 1 plays the best stationary response to Player 2 \cite{bowling2001rational}. Some of the above online-setting algorithms can find the equilibrium simply through self-playing. 

Most previous results on offline sample complexity consider discounted SGs. Their bounds depend heavily on the chosen discount factor \cite{szepesvari1996generalized, lagoudakis2002value, perolat2015approximate, prasad2015two}. However, as noted in \cite{brafman2002r, jaksch2010near}, the discounted setting might not be suitable for SGs that require long-term planning, because only finite steps are relevant in the reward function it defines. This paper, to the best of our knowledge, is the first to give an offline sample complexity bound of order $\tilde{\mathcal{O}}\left(\text{poly}(1/\varepsilon)\right)$ in the average-reward (undiscounted and non-episodic) setting, where $\varepsilon$ is the error parameter.  A major difference between our algorithm and previous ones is that the two players play asymmetric roles in our algorithm: by focusing on finding only one player's worst-case optimal policy at a time, the sampling can be rather efficient. This resembles but strictly extends \cite{garivier2016maximin}'s methods in finding the maximin action in a two-stage game. 

In the online setting, we are only aware of \cite{brafman2002r}'s \textsc{R-max} algorithm that deals with average-reward SGs and provides a regret bound. Considering a similar scenario and adopting the same regret definition, we significantly improve their bounds (see Appendix \ref{appendix:previous_bound} for details). Another difference between our algorithm and theirs is that ours is able to output a currently best stationary policy at any stage in the learning process, while theirs only produces a $T_\varepsilon$-step fixed-horizon policy for some input parameter $T_\varepsilon$. The former could be more natural since the worst-case optimal policy is itself a stationary policy. 

The techniques used in this paper are most related to RL for MDPs based on the optimism principle \cite{auer2007logarithmic, jaksch2010near, dann2015sample} (see Appendix \ref{appendix:previous_bound}). The optimism principle built on concentration inequalities automatically strikes a balance between exploitation and exploration, eliminating the need to manually adjust the learning rate or the exploration ratio.  However, when importing analysis from MDPs to SGs, we face the challenge caused by the opponent's uncontrollability and non-stationarity. This prevents the learner from freely exploring the state space and makes previous analysis that relies on stationary distribution's perturbation analysis \cite{auer2007logarithmic} useless. In this paper, we develop a novel way to replace the opponent's non-stationary policy with a stationary one in the analysis (introduced in Section \ref{Regret decomposition and the introduction of pi2}), which facilitates the use of techniques based on perturbation analysis. We hope that this technique can benefit future analysis concerning non-stationary agents in MARL. 

One related topic is the robust MDP problem \cite{nilim2005robustcontrol, Iyengar:2005robustdynamic, lim2016robustMDP}. It is an MDP where some state-action pairs have adversarial rewards and transitions. It is often assumed in robust MDP that the adversarial choices by the environment are not directly observable by the Player, but in our SG setting, we assume that the actions of Player 2 can be observed. However, there are still difficulties in SG that are not addressed by previous works on robust MDP. 

Here we compare our work to \cite{lim2016robustMDP}, a recent work on learning robust MDP. In their setting, there are adversarial and stochastic state-action pairs, and their proposed \textsc{OLRM2} algorithm tries to distinguish them. Under the scenario where the environment is fully adversarial, which is the counterpart to our setting, the worst-case transitions and rewards are all revealed to the learner, and what the learner needs to do is to perform a maximin planning. In our case, however, the worst-case transitions and rewards are still to be learned, and the opponent's arbitrary actions may hinder the learner to learn this information. We would say that the contribution of \cite{lim2016robustMDP} is orthogonal to ours. 

Other lines of research that are related to SGs are on MDPs with adversarially changing reward functions \cite{even2009online, neu2010online, neu2012adversarial, dick2014online} and with adversarially changing transition probabilities \cite{yu2009arbitrarily, abbasi2013online}. The assumptions in these works have several differences with ours, and therefore their results are not comparable to our results. However, they indeed provide other viewpoints about learning in stochastic games. %There is the potential that the frameworks of adversarial MDPs and SGs could be unified.  

%#############################################################################
%   Preliminaries and Notations
%#############################################################################
\section{Preliminaries}
\label{section:preliminary}
\textbf{Game Models and Policies.}\ \ \ \ \ A SG is a 4-tuple $M=(\mathcal{S},\mathcal{A},r,p)$. $\mathcal{S}$ denotes the state space and $\mathcal{A}=\mathcal{A}^1\times\mathcal{A}^2$ the players' joint action space. We denote $S=|\mathcal{S}|$ and $A=|\mathcal{A}|$. The game starts from an initial state $s_1$. Suppose at time $t$ the players are at state $s_t$. After the players play the joint actions $(a^1_t,a^2_t)$, Player 1 receives the reward $r_t = r(s_t,a^1_t,a^2_t)\in [0,1]$ from Player 2, and both players visit state $s_{t+1}$ following the transition probability $p(\cdot|s_t,a^1_t,a^2_t)$. For simplicity, we consider deterministic rewards as in \cite{bartlett2009regal}. The extension to stochastic case is straightforward. We shorten our notation by $a\coloneqq (a^1,a^2)$ or $a_t\coloneqq (a_t^1,a_t^2)$, and use abbreviations such as $r(s_t,a_t)$ and $p(\cdot\vert s_t,a_t)$. 

Without loss of generality, players are assumed to determine their actions based on the history. A policy $\pi$ at time $t$ maps the history up to time $t$, $H_t=(s_1, a_1, r_1, ..., s_t)\in \mathcal{H}_t $, to a probability distribution over actions. Such policies are called \textit{history-dependent} policies, whose class is denoted by $\Pi^{\text{HR}}$. On the other hand, a \textit{stationary} policy, whose class is denoted by $\Pi^{\text{SR}}$, selects actions as a function of the current state. For either class, joint policies $(\pi^1, \pi^2)$ are often written as $\pi$.

\textbf{Average Return and the Game Value.}\ \ \ Let the players play joint policy $\pi$. %\footnote{Note that when we do not specify from which class $\pi$ is chosen from, it is chosen from $\Pi^{\text{HR}}$.} 
Define the $T$-step total reward as $R_{T}(M, \pi, s)\coloneqq \sum_{t=1}^T r(s_t,a_t),  \text{\ where } s_1=s$, and the average reward as $\rho(M,\pi, s)\coloneqq \lim_{T\rightarrow \infty}\frac{1}{T}\mathbbm{E}\left[R_T(M,\pi, s)\right]$, whenever the limit exists. In fact, the game value exists\footnote{Unlike in one-player MDPs, the $\sup$ and $\inf$ in the definition of $\rho^*(M,s)$ are not necessarily attainable. Moreover,  
players may not have stationary optimal policies.} \cite{mertens1981stochastic}: 
\begin{align*}
\rho^*(M,s)\coloneqq\sup_{\pi^1} \inf_{\pi^2} \lim_{T\rightarrow\infty} \frac{1}{T}\mathbbm{E}\left[R_T(M,\pi^1,\pi^2,s)\right].
\end{align*}
If $\rho(M,\pi, s)$ or $\rho^*(M,s)$ does not depend on the initial state $s$, we simply write $\rho(M,\pi)$ or $\rho^*(M)$. 

\textbf{The Bias Vector.}\ \ \ \ \ For a stationary policy $\pi$, the bias vector $h(M,\pi,\cdot)$ is defined, for each coordinate $s$, as 
\begin{align}
h(M,\pi,s)\coloneqq\mathbbm{E}\left[\sum_{t=1}^{\infty}r(s_t,a_t)-\rho(M, \pi,s) \Big\vert s_1=s, a_t\sim\pi(\cdot\vert s_t)\right]. \label{bias_vec}
\end{align}
%As our convention, if we are refering to SGs, $h(M,\pi,s)$ and $h(M,\pi^1, \pi^2, s)$ are used interchangeably. 
The bias vector satisfies the Bellman equation:  $\forall s\in \mathcal{S}$,
\begin{align*}
\rho(M,\pi,s)+h(M,\pi,s)=r(s,\pi)+\sum_{s'}p(s'|s,\pi)h(M,\pi,s'),
\end{align*}
where $r(s,\pi)\coloneqq\mathbbm{E}_{a\sim \pi(\cdot|s)}[r(s,a)]$ and $p(s'|s,\pi)\coloneqq$$\mathbbm{E}_{a\sim\pi(\cdot|s)}[p(s'\vert s,a)]$.

The vector $h(M,\pi, \cdot)$ describes the relative \textit{advantage} among states under model $M$ and (joint) policy $\pi$. The advantage (or disadvantage) of state $s$ compared to state $s'$ under policy $\pi$ is defined as the difference between the accumulated rewards with initial states $s$ and $s^\prime$, which, from \eqref{bias_vec}, converges to the difference $h(M,\pi, s)-h(M,\pi, s')$ asymptotically. For the ease of notation, the \textit{span} of a vector $v$ is defined as $\spa(v)\coloneqq \max_iv_i-\min_iv_i$. Therefore if a model, together with any policy, induces large $\spa{(h)}$, then this model will be difficult to learn because visiting a bad state costs a lot in the learning process. As shown in \cite{bartlett2009regal} for the MDP case, the regret has an inevitable dependency on $\spa (h(M, \pi^{*}, \cdot))$, where $\pi^*$ is the optimal policy.

On the other hand, $\spa(h(M,\pi, \cdot))$ is closely related to the mean first passage time under the Markov chain induced by $M$ and $\pi$. Actually we have $\spa(h(M,\pi, \cdot)) \leq T^\pi(M)\coloneqq \max_{s,s^\prime}T^\pi_{s\rightarrow  s^\prime}(M)$, where $T^{\pi}_{s\rightarrow s^\prime}(M)$ denotes the expected time to reach state $s^\prime$ starting from $s$ when the model is $M$ and the player(s) follow the (joint) policy $\pi$. This fact is intuitive, and the proof can be seen at Remark \ref{remark:mean_first_passage_time}.

\textbf{Notations.}\ \ \ \ \ In order to save space, we often write equations in vector or matrix form. 
We use vectors inequalities: if $u, v\in \mathbb{R}^n$, then $u\leq v \Leftrightarrow u_i\leq v_i \ \forall i=1, ..., n$. 
For a general matrix game with matrix $G$ of size $n\times m$, we denote the value of the game as $\displaystyle \val G\coloneqq \max_{p\in \Delta_n} \min_{q\in \Delta_m}p^\top Gq= \min_{q\in \Delta_m}\max_{p\in \Delta_n}p^\top Gq$, where $\Delta_k$ is the probability simplex of dimension $k$. In SGs, given the estimated value function $u(s^\prime) \ \forall s^\prime$, we often need to solve the following matrix game equation: 
\begin{align*}
\scalebox{0.9}{$\displaystyle v(s)=\max_{a^1\sim\pi^1(\cdot\vert s)}\min_{a^2\sim\pi^2(\cdot\vert s)} \{ r(s,a^1,a^2) + \sum_{s^\prime} p(s^\prime\vert s,a^1,a^2)u(s^\prime) \}$}, 
\end{align*}
and this is abbreviated with the vector form $v=\val\{ r+Pu \}$. We also use $\solve_1 G$ and $\solve_2 G$ to denote the optimal solutions of $p$ and $q$. In addition, the indicator function is denoted by $\mathbbm{1}\{\cdot\}$ or $\mathbbm{1}_{\{\cdot\}}$.

%#############################################################################
%   Problem Setting and Results Overview
%#############################################################################
\section{Problem Settings and Results Overview}
We assume that the game proceeds for $T$ steps. In order to have meaningful regret bounds (i.e., sublinear to $T$), we must make some assumptions to the SG model itself. Our two different assumptions are 
\begin{assumption}
\label{assumption:irreducible}
$\scalebox{1.0}{$\displaystyle \max_{s,s^\prime}\max_{\pi^1\in \Pi^{\text{SR}}} \max_{\pi^2\in\Pi^{\text{SR}}} T_{s\rightarrow s'}^{\pi^1, \pi^2}(M) \leq D.$}$
\end{assumption}
\begin{assumption}
\label{assumption:ergodic}
$\scalebox{1.0}{$\displaystyle \max_{s,s^\prime}\max_{\pi^2\in\Pi^{\text{SR}}} \min_{\pi^1\in\Pi^{\text{SR}}} T_{s\rightarrow s'}^{\pi^1, \pi^2}(M) \leq D.$}$
\end{assumption}

Why we make these assumptions is as follows. Consider an SG model where the opponent (Player 2) has some way to lock the learner (Player 1) to some bad state. The best strategy for the learner might be to totally avoid, if possible, entering that state. However, in the early stage of the learning process, the learner won't know this, and he/she will have a certain probability to visit that state and get locked. This will cause linear regret to the learner. Therefore, we assume the following: whatever policy the opponent executes, the learner always has some way to reach any state within some bounded time. This is essentially our Assumption \ref{assumption:ergodic}. 

Assumption \ref{assumption:irreducible} is the stronger one that actually implies that under any policies executed by the players (not necessarily stationary, see Remark \ref{remark:average_time_<D}), every state is visited within an average of $D$ steps. We find that under this assumption, the asymptotic regret can be improved. This assumption also has a sense similar to those required for Q-learning-type algorithms' convergence: they require that every state be visited infinitely often. See \cite{jaakkola1994convergence} for example.  

%Similar to previous works on average return reinforcement learning, we make mixing assumptions on the Markov model. We provide two different assumptions. %\footnote{When $s=s^\prime$, $T_{s\rightarrow s}^{\pi^1,\pi^2}(M)$ evaluates the \textit{returning time} to $s$, and hence it is always larger than zero.} 

%Each assumption defines some notion of \textit{diameter} $D$ of the stochastic game. %Assumption \ref{assumption:irreducible} is strong because it implies that every pair of policies induces an irreducible Markov chain with bounded travelling time. On the contrary, Assumption \ref{assumption:ergodic} is a weakest sufficient condition for a stochastic game to have state-independent game value (i.e., the game value does not depend on the initial states). This is in the sense that if this condition does not hold, there exists a reward assignment such that the game value depends on the initial state \cite{boros2010pumping}. 
These assumptions define some notion of \textit{diameters} that are specific to the SG model. It is known that under Assumption \ref{assumption:irreducible} or Assumption \ref{assumption:ergodic}, both players have optimal stationary policies, and the game value is independent of the initial state. Thus we can simply write $\rho^*(M,s)$ as $\rho^*(M)$. For a proof of these facts, please refer to Theorem \ref{theorem:stationary_optimal} in the appendix. 

%Analogous to \cite{auer2007logarithmic}'s irreducibility assumption, Assumption \ref{assumption:irreducible} ensures that every pair of stationary policies induce an irreducible Markov chain and implies that under \textit{any} policy pair (not necessarily stationary), every state is visited within an average period of no more than $D$ steps. See Remark \ref{remark:average_time_<D}.

%Analogous to \cite{jaksch2010near}'s bounded diameter assumption, Assumption \ref{assumption:ergodic} says that whatever stationary policy Player 2 executes, Player 1 can always find a way to reach any state at most $D$ expected time steps. Clearly, if this does not hold, Player 2 may prevent Player 1 from (or make him spend a very long time before) entering some specific states, which makes the online learning problem impossible or very difficult. 
%The game value or the maximin value of the Markov game at state $s$ is defined as $\max_{\pi^1}\min_{\pi^2}\rho(M,\pi^1,\pi^2,s)$ provided that this value exists. Under Assumption \ref{assumption:irreducible} or Assumption \ref{assumption:ergodic}, the maximin value always exists, and both optimal policies can be found within $\Pi^{SR}$. 

%Throughout this paper we assume that rewards are deterministic and bounded in $[0,1]$. We use the notation $\tilde{\mathcal{O}}(\cdot)$ to hide polynomials of logarithms. For example, we write, ``with high probability,
%$g=\tilde{\mathcal{O}}(f)$" to indicate ``with probability $\geq 1-\delta,\   g=f_1\mathcal{O}(f)+f_2$'',
%where $f_1,f_2$ are some polynomials of $\log S,\log A,\log D,\log T,\log(1/\delta)$.
\subsection{Two Settings and Results Overview}
\label{section:Two Settings and Results Overview}
We focus on training Player 1 and discuss two settings. In the online setting, Player 1 competes with an arbitrary Player 2. The regret is defined as
\begin{align*}
\text{Reg}_T^{\text{(on)}}=\sum_{t=1}^{T}\rho^*(M)-r(s_t, a_t). 
\end{align*}
In the offline setting, we control both Player 1 and Player 2's actions, and find Player 1's maximin policy. The sample complexity is defined as 
\begin{align*}
L_\varepsilon=\sum_{t=1}^{T}\mathbbm{1}\{\rho^*(M)-\min_{\pi^2}\rho(M,\pi_t^1, \pi^2)>\varepsilon\},
\end{align*}
where $\pi_t^1$ is a stationary policy being executed by Player 1 at time $t$. This definition is similar to those in \cite{kakade2003sample, jaksch2010near} for one-player MDPs. By the definition of $L_\varepsilon$, if we have an upper bound for $L_\varepsilon$ and run the algorithm for $T> L_\varepsilon$ steps, there is some $t$ such that $\pi_t^1$ is $\varepsilon$-optimal. We will explain how to pick this $t$ in Section \ref{section:Sample Complexity of Offline Training} and Appendix \ref{appendix:Proofs for Offline Training Complexity}. 
%Clearly, if we can bound this sample complexity, then in the long run we have $\min_{\pi^2}\rho(M,\pi_t^1, \pi^2) \rightarrow \max_{\pi^1}\min_{\pi^2} \rho(M,\pi^1, \pi^2)$, which assures $\pi_t^1$'s convergence to optimality. 
%Here we introduce the definition of regret in the viewpoint of Player 1. %In the first setting, the two players are both controllable by the learner, and we care about how fast the learner can learn the maximin policy. The regret in this setting is the difference between the game value and the reward Player 1 would receive against an omniscient Player 2 who always knows how to best respond to $\pi^1_t$. In the second setting, the learner can only control Player 1, and 
%We target at maximizing Player 1's total reward. The regret compares her reward at time $t$ with the game value.
%\begin{definition} 
%Regret in view of Player 1 is defined as
%$$R_T\coloneqq \sum_{t=1}^T \max_{\pi^1}\min_{\pi^2}\rho(M,\pi^1,\pi^2)-\min_{\pi^2}\rho(M,\pi^1_t,\pi^2)$$
%or
%$R_T\coloneqq \sum_{t=1}^T \max_{\pi^1}\min_{\pi^2}\rho(M,\pi^1,\pi^2)-r(s_t,a_t).$
%\end{definition}

It turns out that we can use almost the same algorithm to handle these two settings. Since learning in the online setting is more challenging, from now on we will mainly focus on the online setting, and leave the discussion about the offline setting at the end of the paper.  Our results can be summarized by the following two theorems. 
\begin{theorem}
\label{theorem:irreducible}
Under Assumption \ref{assumption:irreducible}, \textsc{UCSG} achieves $\text{Reg}_T^{\text{(on)}}=\tilde{\mathcal{O}}(D^3S^5A+DS\sqrt{AT})$ \textit{w.h.p.} 
\footnote{We write, ``with high probability,
$g=\tilde{\mathcal{O}}(f)$'' or ``\textit{w.h.p.},
$g=\tilde{\mathcal{O}}(f)$'' to indicate ``with probability $\geq 1-\delta,\   g=f_1\mathcal{O}(f)+f_2$'',
where $f_1,f_2$ are some polynomials of $\log D,\log S,\log A,\log T,\log(1/\delta)$.}
\end{theorem}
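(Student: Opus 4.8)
The plan is to run \textsc{UCSG} in the usual optimistic ``epoch'' scheme of \cite{jaksch2010near} and to carry the regret analysis through a non-stationary variant of the stationary-distribution perturbation argument of \cite{auer2007logarithmic}. First I would split the $T$ steps into epochs, a new epoch beginning whenever the visit count of some state-action pair doubles; the standard doubling argument bounds the number of epochs by $\tilde{\mathcal O}(SA)$. At the start of epoch $k$ the algorithm forms an $\ell_1$ confidence set $\mathcal M_k$ around the empirical transition kernel (the reward is deterministic and revealed on a single visit), and a Weissman/Hoeffding-type deviation bound plus a union bound over epochs and state-action pairs shows that, with probability at least $1-\delta$, the true game $M\in\mathcal M_k$ for every $k$; condition on this event. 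Within epoch $k$ the algorithm runs the game version of extended value iteration to get an optimistic model $\tilde M_k\in\mathcal M_k$ and a stationary maximin policy $\tilde\pi_k^1$ for Player~1, so that $\rho^*(\tilde M_k)=\max_{M'\in\mathcal M_k}\rho^*(M')\ge\rho^*(M)$ and $\min_{\pi^2}\rho(\tilde M_k,\tilde\pi_k^1,\pi^2)=\rho^*(\tilde M_k)$; I use Theorem~\ref{theorem:stationary_optimal}, which under Assumption~\ref{assumption:irreducible} guarantees state-independent values and stationary optimal policies.

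The heart of the argument is the substitution described in Section~\ref{Regret decomposition and the introduction of pi2}. By optimism, $\sum_{t\in E_k}\bigl(\rho^*(M)-r(s_t,a_t)\bigr)\le\sum_{t\in E_k}\bigl(\rho^*(\tilde M_k)-r(s_t,a_t)\bigr)$. For each epoch I introduce a fictitious stationary opponent policy $\pi_k^2$: let $\pi_k^2(\cdot\mid s)$ be the $N_k(s)$-average, over the visits to $s$ in that epoch, of the conditional action distributions that Player~2 actually used. Since $\tilde\pi_k^1$ is maximin in $\tilde M_k$, the Markov chain $\bar P_k$ induced on $\mathcal S$ by $(\tilde M_k,\tilde\pi_k^1,\pi_k^2)$ has average reward $\bar\rho_k=\rho(\tilde M_k,\tilde\pi_k^1,\pi_k^2)\ge\rho^*(\tilde M_k)\ge\rho^*(M)$, so it suffices to bound $\sum_{t\in E_k}(\bar\rho_k-r(s_t,a_t))$ --- a comparison of the realized rewards with the stationary reward of a \emph{single} Markov chain, which is precisely what perturbation analysis requires. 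Let $\bar h_k$ solve the Poisson equation $\bar r_k=\bar\rho_k\mathbf 1+(I-\bar P_k)\bar h_k$; by Remark~\ref{remark:mean_first_passage_time}, $\spa(\bar h_k)\le T^{\tilde\pi_k^1,\pi_k^2}(\tilde M_k)$, which Assumption~\ref{assumption:irreducible} keeps at $\mathcal O(D)$ as long as every state-action pair visited in epoch $k$ has already been sampled $\tau=\mathrm{poly}(D,S)$ times; I call such an epoch \emph{good} and treat the rest separately.

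On a good epoch I substitute the Poisson equation for $r$ and telescope. The sum breaks into: (i) a telescoping boundary term bounded by $\spa(\bar h_k)=\mathcal O(D)$, hence $\tilde{\mathcal O}(DSA)$ over all epochs; (ii) a martingale $\sum_t\bigl(\mathbb E[\bar h_k(s_{t+1})\mid\mathcal F_t]-\bar h_k(s_{t+1})\bigr)$ with increments $\le\spa(\bar h_k)$, so $\tilde{\mathcal O}(D\sqrt T)$ by Azuma; (iii) a model-error term $\le\spa(\bar h_k)\sum_t\norm{\tilde p_k(\cdot\mid s_t,a_t)-p(\cdot\mid s_t,a_t)}_1=\mathcal O(D)\sum_t\beta_k(s_t,a_t)$ (with $\beta_k$ the $\ell_1$ radius at the visited pair), which by Cauchy--Schwarz over the doubling counts is $\tilde{\mathcal O}(DS\sqrt{AT})$; (iv) a reward martingale, $\tilde{\mathcal O}(\sqrt T)$; and, crucially, (v) the two deviation terms created by replacing $\pi_k^2$ with Player~2's realized play --- one in the reward channel, one in the transition channel --- each of the form $\sum_{a^2}\bigl(\pi_k^2(a^2\mid s)-q_t^2(a^2)\bigr)c_s(a^2)$, where $q_t^2$ is Player~2's realized conditional distribution at step $t$ and the coefficient vector $c_s$ (either $\mathbb E_{a^1\sim\tilde\pi_k^1(s)}[r(s,a^1,\cdot)]$ or $\mathbb E_{a^1\sim\tilde\pi_k^1(s)}[p(\cdot\mid s,a^1,\cdot)^\top\bar h_k]$) is the \emph{same} at every visit to $s$ in the epoch; summing over those visits and using $\sum_{t\in E_k,\,s_t=s}(\pi_k^2(\cdot\mid s)-q_t^2)=0$ makes both vanish state by state. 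The residual martingales from using realized rather than expected $a_t^1,a_t^2$ are each $\tilde{\mathcal O}(D\sqrt T)$. So good epochs contribute $\tilde{\mathcal O}(DS\sqrt{AT})$. A bad epoch contains a visit to an undersampled pair; since a pair can be undersampled on at most $\tilde{\mathcal O}(\tau)$ steps, a standard epoch-counting argument caps the total length of bad epochs at $\tilde{\mathcal O}(SA\tau)$, each step costing $\mathcal O(D)$; together with the polynomial lower-order terms from the perturbation bound this gives the $\tilde{\mathcal O}(D^3S^5A)$ additive term. Summing good epochs, bad epochs, and the $\delta$ failure probability yields $\mathrm{Reg}_T^{\text{(on)}}=\tilde{\mathcal O}(D^3S^5A+DS\sqrt{AT})$ w.h.p.

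\textbf{Where the difficulty lies.} The hard part is everything surrounding the surrogate $\pi_k^2$. An arbitrary opponent induces no single transition matrix per epoch, so naive perturbation analysis has no chain to perturb; the remedy --- averaging the opponent's conditional distributions and exploiting that the vectors they are paired against are frozen over an epoch, so that the deviations telescope to zero --- must be set up with care: one must check that $\bar h_k$ (hence those vectors) is a well-defined deterministic function of the epoch's data, that it is the \emph{full} strength of Assumption~\ref{assumption:irreducible} that keeps $\spa(\bar h_k)=\mathcal O(D)$ (Assumption~\ref{assumption:ergodic} alone gives only the weaker of the two theorems), and that each residual martingale really is bounded. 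Fixing the exact burn-in threshold $\tau$ --- how small the $\ell_1$ confidence radius must be to control the fundamental matrix of $\bar P_k$, not merely its one-step perturbation --- and propagating the constants to get the explicit $D^3S^5A$ is tedious but routine.
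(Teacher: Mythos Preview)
Your high-level plan coincides with the paper's: doubling epochs, optimistic EVI, the surrogate stationary opponent $\pi_k^2$ obtained by averaging Player~2's conditional distributions over visits to each state, a Poisson--equation decomposition, and mean-first-passage-time perturbation to control $\spa(\bar h_k)$. In particular, your observation that the ``deviation terms'' (v) vanish state by state is exactly the mechanism the paper exploits (see equations \eqref{eqn:p*(s'|s)}--\eqref{eqn:fourth_term2}).

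The genuine gap is in your good/bad-epoch split. Your criterion for a good epoch --- ``every state-action pair \emph{visited} in epoch $k$ has already been sampled $\tau$ times'' --- does \emph{not} imply $\spa(\bar h_k)=\mathcal O(D)$. The span bound (via Theorem~\ref{theorem:perturbation_first_passage_time}) needs the \emph{marginal} transition $\tilde p_k(\cdot\mid s,\tilde\pi_k^1,\pi_k^2)$ to be close to $p(\cdot\mid s,\tilde\pi_k^1,\pi_k^2)$ for every state $s$. But $\tilde\pi_k^1(\cdot\mid s)\otimes\pi_k^2(\cdot\mid s)$ is a product of mixed distributions and can put non-negligible mass on action pairs that were \emph{never realized} in the epoch (recall $\pi_k^2$ averages Player~2's \emph{conditional distributions}, not his empirical plays). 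Those pairs may have arbitrarily few samples, so your burn-in condition on visited pairs says nothing about them. Correspondingly, your bad-epoch counting (``a pair can be undersampled on at most $\tilde{\mathcal O}(\tau)$ steps $\Rightarrow$ total bad length $\tilde{\mathcal O}(SA\tau)$'') does not go through: a long epoch can be declared bad by a single visit to an undersampled pair, and the doubling rule does not cap the epoch length in terms of that pair's count.

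The paper fixes this by working at the level of the per-step marginal $p(\cdot\mid s_t,\pi_t)$. Lemma~\ref{lemma:epsilon_accurate} (proved via an importance/knownness partition of the action space, not a per-pair threshold) bounds, for each state, the number of steps $t$ with $\max_{s'}\lvert p_k^1(s'\mid s_t,\pi_t)-p(s'\mid s_t,\pi_t)\rvert>\varepsilon$ by $\tilde{\mathcal O}(A/\varepsilon^2)$. Since $p(\cdot\mid s,\pi_k^1,\bar\pi_k^2)=\frac{1}{v_k(s)}\sum_{t:s_t=s}p(\cdot\mid s_t,\pi_t)$ (equation~\eqref{eqn:episode_transition_prob}), this averages to control the row-wise perturbation needed for Theorem~\ref{theorem:perturbation_first_passage_time}, and Lemma~\ref{lemma:bound_h_by_2D} then shows $\spa(h(M_k^1,\pi_k^1,\bar\pi_k^2,\cdot))\le 2D$ outside $\tilde{\mathcal O}(D^3S^5A)$ steps. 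This is the source of the additive constant, and it is not ``routine'': you need the marginal-accuracy lemma rather than a per-pair burn-in.

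One smaller point you glossed over: $\bar h_k$ depends on $\pi_k^2$, which is only determined at the \emph{end} of epoch $k$, so $\bar h_k$ is not $\mathcal F_t$-measurable for $t$ inside the epoch; your Azuma step (ii) is therefore not immediately valid. The paper handles this by discretizing the cube $[-D,D]^S$ and taking a union bound over $(2DST)^S$ grid points (proof of Lemma~\ref{lemma:bound_fourth_term}), at the cost of only a $\sqrt{S}$ factor inside the logarithm.
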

\begin{theorem}
\label{regret_finite_horizon_variant}
Under Assumption \ref{assumption:ergodic}, \textsc{UCSG} achieves $\text{Reg}_T^{\text{(on)}}=\tilde{\mathcal{O}}(\sqrt[3]{DS^2AT^2})$ \textit{w.h.p}.
\end{theorem}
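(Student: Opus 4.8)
\textbf{Proof plan for Theorem~\ref{regret_finite_horizon_variant}.} The plan is to run the standard optimism-in-the-face-of-uncertainty, episodic regret analysis (as for \textsc{UCRL2}-type algorithms on MDPs) with the two game-specific modifications announced before the theorems: an optimistic planner with a \emph{truncated horizon} $\tau$ (a tuning parameter), and, in the analysis, replacement of the non-stationary Player~2 by a per-episode \emph{stationary surrogate} $\pi^2_k$. \textsc{UCSG} runs in episodes; at the start of episode $k$ it forms, from the transition counts $N_k(s,a)$ gathered so far, a confidence region $\mathcal{M}_k$ of SGs whose kernels lie in the usual $\ell_1$ balls of radius $\tilde{\mathcal{O}}\!\big(\sqrt{S/\max\{1,N_k(s,a)\}}\big)$ around the empirical kernel (deterministic rewards are pinned down on first visit). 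A Weissman/Hoeffding inequality plus a union bound over times, states and actions gives $M\in\mathcal{M}_k$ for all $k$ with probability $\ge 1-\delta$; I condition on this event, its complement contributing only to the $\delta$-dependent lower-order terms hidden in $\tilde{\mathcal{O}}(\cdot)$.

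\textbf{Optimism and the role of the horizon.} On entering episode $k$ the algorithm runs $\tau$ sweeps of the extended game Bellman operator over $\mathcal{M}_k$ — at each state $s$ solving a matrix game $\val\{\hat r+\mathrm{bonus}+\hat P u\}$ in which the model-optimism optimization is absorbed into a bonus term — producing a Player-1 policy $\pi^1_k$ and value vectors $u^{(0)}_k,\dots,u^{(\tau)}_k$ with $\spa(u^{(j)}_k)\le 2\tau$. Because $M\in\mathcal{M}_k$, the computed $\tau$-horizon value from the current state upper-bounds the true $\tau$-horizon game value, which in turn is at least $\tau\rho^*(M)-\tilde{\mathcal{O}}(D)$; establishing this last inequality against the \emph{actual}, non-stationary Player~2 is exactly the point at which the stationary surrogate $\pi^2_k$ of Section~\ref{Regret decomposition and the introduction of pi2} is brought in — only after freezing Player~2 into a stationary policy can one invoke Assumption~\ref{assumption:ergodic} to produce a learner comparator whose induced chain mixes in $\tilde{\mathcal{O}}(D)$ steps (hence has bias span $\le D$) and whose $\tau$-step return is within $\tilde{\mathcal{O}}(D)$ of $\tau\rho^*(M)$, a statement that is meaningless against a moving target. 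The upshot is $\rho^*(M)\le(\text{per-step value promised by }u_k)+\tilde{\mathcal{O}}(D/\tau)$ per step, so it remains to charge the gap between what $u_k$ promises and the reward \textsc{UCSG} actually collects.

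\textbf{Telescoping and summation.} Fix step $t$ in episode $k$ at state $s_t$. Since $\pi^1_k(\cdot\vert s_t)=\solve_1$ of the matrix game at layer $j{+}1$, its maximin property yields, \emph{for every} Player-2 action and hence for the realized $a^2_t$, an inequality $u^{(j+1)}_k(s_t)\le r(s_t,\pi^1_k,a^2_t)+\langle\hat P_k(\cdot\vert s_t,\pi^1_k,a^2_t),u^{(j)}_k\rangle+\mathrm{bonus}_k(s_t,a^2_t)$. Taking expectation over $a^1_t\sim\pi^1_k(\cdot\vert s_t)$ and splitting $\hat P_k-e_{s_t}=(\hat P_k-P)+(P-e_{s_t})$, the first piece is at most $\tfrac12\norm{\hat P_k(\cdot\vert s_t,a_t)-P(\cdot\vert s_t,a_t)}_1\,\spa(u^{(j)}_k)$ and the second telescopes over the episode (up to an Azuma martingale) to at most $\spa(u^{(j)}_k)\le 2\tau$. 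Summing over all $T$ steps and all episodes (at most $\tilde{\mathcal{O}}(SA\log T)$ of them if episodes end at count-doublings): the martingale terms give $\tilde{\mathcal{O}}(\tau\sqrt{T})$ by Azuma, the per-episode telescoping remainders and the bonuses give terms polynomial in $D,S,A,\tau$, and the transition-error sum is $\tau\sum_{k,t}\sqrt{S/\max\{1,N_k(s_t,a_t)\}}=\tilde{\mathcal{O}}(\tau S\sqrt{AT})$ by the standard pigeonhole bound on visit counts. Adding the accumulated horizon-truncation error, which I expect to be of order $\tilde{\mathcal{O}}(DT/\tau^{2})$, and optimizing $\tau$ (taking $\tau\asymp(D\sqrt{T}/(S\sqrt{A}))^{1/3}$) to balance the $\tau$-increasing estimation term against the $\tau$-decreasing horizon term, gives $\text{Reg}^{\text{(on)}}_T=\tilde{\mathcal{O}}(\sqrt[3]{DS^2AT^2})$.

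\textbf{Main obstacle.} The hard part is the second step: making rigorous the replacement of the non-stationary Player~2 by the stationary surrogate $\pi^2_k$ and, on that basis, simultaneously (i) certifying that the horizon-$\tau$ optimistic value is a legitimate over-estimate of $\tau\rho^*(M)-\tilde{\mathcal{O}}(D)$ under Assumption~\ref{assumption:ergodic} — which, unlike the stronger Assumption~\ref{assumption:irreducible}, does \emph{not} bound the span of the infinite-horizon optimistic bias by $D$, so ordinary extended value iteration cannot be run to convergence with a controlled span — and (ii) quantifying the per-step price of the truncated horizon, all while the confidence region is still wide and Player~2 may be actively steering the learner away from the state-action pairs it most needs to estimate. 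Everything else — concentration and the union bound, the matrix-game Bellman inequality, the pigeonhole bound on $\sum\sqrt{1/N}$, Azuma for the martingales, and the bookkeeping of episodes and rewards — is a routine adaptation of the MDP optimism analysis, and the weaker $T^{2/3}$ rate (versus the $\sqrt{T}$ of Theorem~\ref{theorem:irreducible}) is precisely the cost of having to truncate the horizon.
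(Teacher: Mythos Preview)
Your proposal has a genuine arithmetic gap that breaks the claimed rate. You argue, correctly, that the $\tau$-horizon optimistic value at any state is at least $\tau\rho^*(M)-\tilde{\mathcal{O}}(D)$ under Assumption~\ref{assumption:ergodic}, whence $\rho^*(M)\le(\text{per-step promised value})+\tilde{\mathcal{O}}(D/\tau)$. Summed over $T$ steps this gives a horizon-truncation error of order $DT/\tau$, \emph{not} the $DT/\tau^2$ you write. With the corrected truncation term $DT/\tau$ and your transition-error term $\tau S\sqrt{AT}$ (which is what the naive ``span $\times$ $\ell_1$-deviation'' bound yields when $\spa(u_k^{(j)})\le 2\tau$), balancing gives $\tau^2\asymp D\sqrt{T}/(S\sqrt{A})$ and regret $\tilde{\mathcal{O}}\big(\sqrt{DS}\,A^{1/4}T^{3/4}\big)$, not $T^{2/3}$.

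The paper closes this gap not on the truncation side but on the estimation side: it does \emph{not} use the crude $\spa(u_k)\cdot\|p_k^1-p\|_1$ bound. Instead, it divides time into length-$H$ episodes and bounds $\sum_k\Delta_k^{(4)}=\sum_i\big(V_H(M_k^1,\pi_k^1,\pi_i^2,s_{\tau_i})-V_H(M,\pi_k^1,\pi_i^2,s_{\tau_i})\big)$ by importing the episodic PAC analysis of Dann--Brunskill (their recursive local-variance technique), which yields $\tilde{\mathcal{O}}(H^2S^2A/\varepsilon^2)$ bad episodes for any $\varepsilon$; integrating over a geometric grid of $\varepsilon$'s turns this into $\tilde{\mathcal{O}}(S\sqrt{HAT})$ --- a $\sqrt{H}$ rather than $H$ dependence. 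Balancing $TD/H$ against $S\sqrt{HAT}$ with $H=(D^2T/(S^2A))^{1/3}$ then gives the claimed $\tilde{\mathcal{O}}(\sqrt[3]{DS^2AT^2})$. Note also that the paper does \emph{not} invoke the stationary surrogate $\bar\pi_k^2$ in this proof --- that device is specific to the Assumption~\ref{assumption:irreducible} analysis; under Assumption~\ref{assumption:ergodic} optimism (Lemma~\ref{lemma:FH_second_term}) is established directly from the Bellman inequality output by \textsc{Maximin-EVI}, and $\Delta_k^{(3)}\le 0$ handles Player~2's non-stationarity without any replacement argument.
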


%#######################################################################
%   The UCSG algorithm
%#######################################################################
%#######################################################################
%   The UCSG algorithm
%#######################################################################
\section{Upper Confidence Stochastic Game Algorithm (\textsc{UCSG})}
\label{UCSG}
\begin{algorithm}[tb!]
   \caption{\textsc{UCSG}}
   \label{alg:maxmax}
\begin{algorithmic}
   \STATE {\bfseries Input:} $\mathcal{S}$, $\mathcal{A}=\mathcal{A}^1\times\mathcal{A}^2, T$.
   \STATE {\bfseries Initialization:} $t=1$.
   \FOR{phase $k=1, 2, ...$}  
   \STATE $t_k=t.$
   \STATE {\bfseries {1. Initialize phase $k$:} } $v_k(s,a)=0, \ \ \  n_k(s,a)=\max\Big\{1,\sum_{\tau=1}^{t_k-1}\mathbbm{1}_{(s_\tau, a_\tau)=(s,a)}\Big\},\ \ $
   \STATE $ \ \ \ \ \ n_k(s,a,s^\prime)=\sum_{\tau=1}^{t_k-1}\mathbbm{1}_{(s_\tau, a_\tau,s_{\tau+1})=(s,a,s^\prime)}, \ \ \hat{p}_k(s^{\prime}\vert s,a)=\frac{n_k(s,a,s^{\prime})}{n_k(s,a)}, \ \ \forall s, a, s^\prime$.
   %\erightindent
   \STATE {\bfseries 2. Update the confidence set: } $\mathcal{M}_k= \{\tilde{M}: \forall s,a, \ \ \tilde{p}(\cdot\vert s,a)\in \mathcal{P}_k(s,a)\}$, where
   \STATE \scalebox{1.0}{\ \ \ \ \ $\mathcal{P}_k(s,a)\coloneqq \textsc{conf}_1(\hat{p}_k(\cdot\vert s,a), n_k(s,a))  \cap\ \textsc{conf}_2(\hat{p}_k(\cdot\vert s,a), n_k(s,a)). $}
   %\erightindent
   \STATE {\bfseries 3. Optimistic planning: } \scalebox{1.0}{$\displaystyle\left(M_k^1, \pi^1_k\right)=\textsc{Maximin-EVI}\left(\mathcal{M}_k, \gamma_k\right), \  \text{where} \ \  \gamma_k \coloneqq 1/\sqrt{t_k}$}.
 % \STATE \scalebox{0.9}{$\displaystyle\Big[\left(M_k^2, \pi^2_k\right)=\textsc{Min-EVI}\left(\mathcal{M}_k, \pi_k^1, \gamma_k\right)\Big]$}.
   %\erightindent
   \STATE {\bfseries 4. Execute policies: }
   %\brightindent
   \STATE {\bfseries \ \ \ \ \ repeat}
   %\brightindent
   \STATE \ \ \ \ \ \ \ \ \ \ Draw $a_t^1\sim \pi^1_k(\cdot\vert s_t) $; observe the reward $r_t$ and the next state $s_{t+1}$.
   \STATE \ \ \ \ \ \ \ \ \ \  Set $v_k(s_t,a_t)=v_k(s_t,a_t)+1$ and $t=t+1$.
   %\erightindent
   \STATE {\bfseries \ \ \ \ \ until} $\exists (s,a)$ such that $v_k(s,a)=n_k(s,a)$
   %\erightindent
   \ENDFOR
   %\eleftindent
   %\bleftindent
   \STATE {\bfseries Definitions of confidence regions:}
   \STATE $\textsc{conf}_1(\hat{p}, n)\coloneqq \bigg\{\tilde{p} \in \left[0,1\right]^{S}:  \norm{\tilde{p}-\hat{p}}_1 \leq \sqrt{\frac{2S\ln(1/\delta_1)}{n}} \bigg\},\ \  \delta_1=\frac{\delta}{2S^2A\log_2 T}.$
   \STATE $\textsc{conf}_2(\hat{p}, n)\coloneqq\bigg\{ \tilde{p}\in [0,1]^S: \forall i, \big\vert \sqrt{\tilde{p}_i(1-\tilde{p}_i)}-\sqrt{\hat{p}_i(1-\hat{p}_i)}\big\vert\leq \sqrt{\frac{2\ln(6/\delta_1)}{n-1}},$
   \STATE \scalebox{1.0}{$ \ \ \ \ \ \ \ \ \ \ \ \ \ \ \ \ \ \ \ \ \ \ \ \ \ \ \ \ \  \ \ \ \ \ \ \ \ \ \ \ \ \ \ \  |\tilde{p}_i-\hat{p_i}|\leq \min\Big(\sqrt{\frac{\ln({6}/{\delta_1})}{2n}},\sqrt{\frac{2\hat{p}_i(1-\hat{p}_i)}{n}\ln\frac{6}{\delta_1}}+\frac{7}{3(n-1)}\ln\frac{6}{\delta_1}\Big)\bigg\}$}.

   %\eleftindent
\end{algorithmic}
\end{algorithm}

The Upper Confidence Stochastic Game algorithm (\textsc{UCSG}) (Algorithm \ref{alg:maxmax}) extends \textsc{UCRL2} \cite{jaksch2010near}, using the optimism principle to balance exploitation and exploration. It proceeds in phases (indexed by $k$), and only changes the learner's policy $\pi_k^1$ at the beginning of each phase. The length of each phase is not fixed a priori, but depends on the statistics of past observations.  
%maintains a confidence set of plausible models given the observations so far (Step 2), and finds an optimistic model $M_k^1$ among the plausible models (Step3). Then it chooses the optimal policy $\pi_k^1$ for this optimistic model $M_k^1$, and execute it for a while (Step 4).  

In the beginning of each phase $k$, the algorithm estimates the transition probabilities using empirical frequencies $\hat{p}_k(\cdot\vert s,a)$ observed in previous phases (Step 1). With these empirical frequencies, it can then create a confidence region $\mathcal{P}_k(s,a)$ for each transition probability. The transition probabilities lying in the confidence regions constitute a set of plausible stochastic game models $\mathcal{M}_k$, where the true model $M$ belongs to with high probability (Step 2). Then, Player 1 optimistically picks one model $M_k^1$ from $\mathcal{M}_k$, and finds the optimal (stationary) policy $\pi_k^1$ under this model (Step 3). Finally, Player 1 executes the policy $\pi_k^1$ for a while until some $(s,a)$-pair's number of occurrences is doubled during this phase (Step 4). The count $v_k(s,a)$ records the number of steps the $(s,a)$-pair is observed in phase $k$; it is reset to zero in the beginning of every phase. 

In Step 3, to pick an optimistic model and a policy is to pick $M_k^1\in\mathcal{M}_k$ and $\pi_k^1\in \Pi^{\text{SR}}$ such that $\forall s$, 
\begin{align}
\scalebox{1.0}{$\displaystyle\min_{\pi^2}\rho(M_k^1, \pi_k^1, \pi^2, s) \geq \max_{\tilde{M}\in\mathcal{M}_k}\rho^*(\tilde{M},s)-\gamma_k. \label{eqn:gamma_optimal}$}
\end{align} 
where $\gamma_k$ denotes the error parameter for \textsc{Maximin-EVI}. The LHS of \eqref{eqn:gamma_optimal} is well-defined because Player 2 has stationary optimal policy under the MDP induced by $M_k^1$ and $\pi_k^1$. Roughly speaking, \eqref{eqn:gamma_optimal} says that $\displaystyle\min_{\pi^2}\rho(M_k^1, \pi_k^1, \pi^2, s)$ should approximate $\displaystyle \max_{\tilde{M}\in\mathcal{M}_k, \pi^1} \min_{\pi^2}\rho(\tilde{M}, \pi^1, \pi^2, s)$ by an error no more than $\gamma_k$. %(in Section \ref{subsection:extended SG} we will see that $\rho^*(\tilde{M},s)$ is equal to $\max_{\pi^1}\min_{\pi^2}\rho(\tilde{M},\pi^1,\pi^2,s)$). 
That is, $(M_k^1, \pi_k^1)$ are picked optimistically in $\mathcal{M}_k\times \Pi^{\text{SR}}$ considering the most adversarial opponent.  

\subsection{Extended SG and Maximin-EVI}
\label{subsection:extended SG}

The calculation of $M_k^1$ and $\pi_k^1$ involves the technique of Extended Value Iteration (EVI), which also appears in \cite{jaksch2010near} as a one-player version.  

Consider the following SG, named $M^+$. Let the state space $\mathcal{S}$ and Player 2's action space $\mathcal{A}^2$ remain the same as in $M$. Let $\mathcal{A}^{1+}$, $p^+(\cdot\vert \cdot,\cdot,\cdot)$, $r^+(\cdot, \cdot,\cdot)$ be Player 1's action set, the transition kernel, and the reward function of $M^+$, such that for any $a^1\in \mathcal{A}^1$ and $a^2\in \mathcal{A}^2$ and an admissible transition probability $\tilde{p}(\cdot\vert s,a^1,a^2)\in \mathcal{P}_k(s,a^1,a^2)$, there is an action $a^{1+}\in \mathcal{A}^{1+}$ such that $p^+(\cdot\vert s,a^{1+}, a^2)=\tilde{p}(\cdot\vert s,a^1,a^2)$ and $r^+(s,a^{1+},a^2)=r(s,a^1,a^2)$. In other words, Player 1 selecting an action in $\mathcal{A}^{1+}$ is equivalent to selecting an action in $\mathcal{A}^1$ and simultaneously selecting an admissible transition probability in the confidence region $\mathcal{P}_k(\cdot,\cdot)$. 

Suppose that $M\in \mathcal{M}_k$, then the extended SG $M^+$ satisfies Assumption \ref{assumption:ergodic} because the true model $M$ is embedded in $M^+$. 
By Theorem \ref{theorem:stationary_optimal} in Appendix \ref{stationary_optimal_policy}, it has a constant game value $\rho^*(M^+)$ independent of the initial state, and satisfies Bellman equation of the form $\val\{r+Pf\}=\rho\cdot\mathbf{e}+f$, for some bounded function $f(\cdot)$, where $\mathbf{e}$ stands for the all-one constant vector. With the above conditions, we can use value iteration with Schweitzer transform (a.k.a. aperiodic transform)\cite{van1980successive} to solve the optimal policy in the extended EG $M^+$. We call it \textsc{Maximin-EVI}. For the details of \textsc{Maximin-EVI}, please refer to Appendix \ref{appendix:maximinEVI}. We only summarize the result with the following Lemma. 

\begin{lemma}
\label{lemma:EVI_short}
Suppose the true model $M\in \mathcal{M}_k$, then the estimated model $M_k^1$ and stationary policy $\pi_k^1$ output by \textsc{Maximin-EVI} in Step 3 satisfy 
\begin{align*}
\forall s, \ \ \ \min_{\pi^2}\rho(M_k^1, \pi_k^1, \pi^2, s) \geq \max_{\pi^1}\min_{\pi^2} \rho(M, \pi^1, \pi^2, s) - \gamma_k. 
\end{align*} 
\end{lemma}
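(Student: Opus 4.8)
The plan is to establish Lemma~\ref{lemma:EVI_short} in two stages: first showing that the optimistic planning value overestimates the true game value, and then showing that \textsc{Maximin-EVI} returns a policy whose worst-case average reward is close to that optimistic value. Concretely, I would argue that the desired chain of inequalities factors through the extended SG $M^+$:
\begin{align*}
\min_{\pi^2}\rho(M_k^1,\pi_k^1,\pi^2,s) \;\geq\; \rho^*(M^+) - \gamma_k \;\geq\; \max_{\pi^1}\min_{\pi^2}\rho(M,\pi^1,\pi^2,s) - \gamma_k.
\end{align*}
The second inequality is the easier half: since $M \in \mathcal{M}_k$ by hypothesis, every stationary policy $\pi^1$ for Player 1 in $M$ corresponds (via the construction of $\mathcal{A}^{1+}$ in Section~\ref{subsection:extended SG}, choosing the action that replays $p(\cdot|s,a^1,a^2)$) to a policy in $M^+$ achieving the same reward against any $\pi^2$; hence $\rho^*(M^+) \geq \rho^*(M) = \max_{\pi^1}\min_{\pi^2}\rho(M,\pi^1,\pi^2,s)$, where the last equality uses Theorem~\ref{theorem:stationary_optimal} (optimal stationary policies exist and the value is state-independent under Assumption~\ref{assumption:ergodic}, which $M^+$ inherits).

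For the first inequality, I would invoke the convergence guarantee of value iteration with the Schweitzer/aperiodic transform on $M^+$: running \textsc{Maximin-EVI} to accuracy $\gamma_k$ produces a value vector $f$ and a stationary Player-1 policy $\pi_k^1$ (together with the implied optimistic model $M_k^1 \in \mathcal{M}_k$, read off from the extended-action choice) such that the one-step greedy span-seminorm stopping criterion certifies $\operatorname{sp}\big(\val\{r^+ + P^+ f\} - f\big) \leq \gamma_k$. Standard average-reward VI analysis (as in \cite{jaksch2010near}, adapted to the matrix-game Bellman operator $\val\{\cdot\}$) then converts this span bound into the statement that the policy $\pi_k^1$ fixed in $M_k^1$ guarantees, against the minimizing Player 2, an average reward at least $\rho^*(M^+) - \gamma_k$ from every initial state. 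The key point is that $\val\{\cdot\}$ is a non-expansion in the sup-norm and monotone, so the usual MDP arguments (Bellman operator contraction in span under the unichain/aperiodicity condition guaranteed by the transform and Assumption~\ref{assumption:ergodic}) carry over essentially verbatim; the min over $\pi^2$ on the left-hand side is well-defined because, once $M_k^1$ and $\pi_k^1$ are fixed, Player 2 faces an ordinary average-reward MDP and has a stationary optimal response (noted after \eqref{eqn:gamma_optimal}).

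I expect the main obstacle to be the second step — carefully justifying that the span-seminorm convergence of the matrix-game value iteration on the \emph{extended} SG translates into a genuine $\gamma_k$-approximate worst-case guarantee for the extracted stationary policy in a \emph{fixed} model $M_k^1$, rather than merely a guarantee about the extended operator's fixed point. This requires (i) confirming that extracting the greedy Player-1 action in $M^+$ simultaneously pins down both an ordinary action in $\mathcal{A}^1$ and an admissible transition kernel in $\mathcal{P}_k$, thereby defining $M_k^1 \in \mathcal{M}_k$; (ii) checking that the aperiodic transform does not distort average rewards (it preserves $\rho$ while making the chain aperiodic, a classical fact from \cite{van1980successive}); and (iii) verifying that the induced Markov chains remain well-behaved (finite bias, bounded span) under Assumption~\ref{assumption:ergodic} so that the span-bound-to-value-bound conversion is valid. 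All of this is essentially bookkeeping once the right analogue of the one-player EVI lemma is in place, so I would state and prove that analogue in Appendix~\ref{appendix:maximinEVI} and cite it here; the present lemma is then an immediate corollary of that analogue combined with the embedding argument for the second inequality.
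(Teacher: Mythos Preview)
Your proposal is correct and follows essentially the same approach as the paper: factor through the extended SG $M^+$, use the embedding $M\in\mathcal{M}_k\Rightarrow\rho^*(M^+)\geq\rho^*(M)$ for the second inequality, and invoke the span-stopping guarantee of value iteration with the Schweitzer transform on $M^+$ (the paper's Lemma~\ref{lemma:EVI} in Appendix~\ref{appendix:maximinEVI}) for the first. The only cosmetic difference is that the paper's proof of Lemma~\ref{lemma:EVI} converts the span bound to a value bound via the Cesaro-limit argument ($P_\pi^*$ applied to the sandwich $D\mathbf{e}\leq v_{N+1}-v_N\leq U\mathbf{e}$) rather than an explicit span-contraction statement, but this is exactly the ``standard average-reward VI analysis'' you cite.
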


Before diving into the analysis under the two assumptions, we first establish the following fact. 
\begin{lemma}
\label{lemma:bound_fail}
With high probability, the true model $M\in\mathcal{M}_k$ for all phases $k$. 
\end{lemma}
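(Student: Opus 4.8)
The plan is to show that the true transition kernel lies in both confidence sets $\textsc{conf}_1$ and $\textsc{conf}_2$ centered at the empirical estimates, for every state-action pair and every phase, with probability at least $1-\delta$. Since $\mathcal{M}_k$ is defined by requiring $\tilde p(\cdot\vert s,a)\in\mathcal{P}_k(s,a)=\textsc{conf}_1\cap\textsc{conf}_2$ for all $s,a$, it suffices to prove that each of the defining inequalities holds simultaneously, and then take a union bound over the failure events.

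\textbf{Step 1: Reduce to a fixed number of sample sizes.} The subtlety is that $n_k(s,a)$ is a random stopping count and the empirical $\hat p_k$ is built from a data-dependent number of samples. The standard fix (as in \cite{jaksch2010near}) is to note that the phase-doubling rule forces $n_k(s,a)$ to take values in the set $\{1,2,4,8,\dots\}$ up to $T$, so there are at most $\log_2 T$ distinct possible values of $n_k(s,a)$ for each $(s,a)$. For each fixed value $n$ of the count, $\hat p$ is the empirical distribution of $n$ i.i.d.\ draws from $p(\cdot\vert s,a)$ (conditioning appropriately on the filtration so that the draws following a given $(s,a)$ are i.i.d.); a union bound over these $\le \log_2 T$ values and over all $S^2A$ pairs $(s,a)$ (actually $SA$ pairs, but the bookkeeping uses $S^2A$ to also cover the per-coordinate events in $\textsc{conf}_2$) is why $\delta_1=\delta/(2S^2A\log_2 T)$ is chosen.

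\textbf{Step 2: Verify each concentration inequality at confidence $\delta_1$.} For $\textsc{conf}_1$: the bound $\norm{\hat p - p}_1\le\sqrt{2S\ln(1/\delta_1)/n}$ is the standard $L_1$-deviation bound for an empirical distribution over $S$ outcomes (Weissman et al./ the $\ell_1$ version used in UCRL2), which holds with probability $\ge 1-\delta_1$. For $\textsc{conf}_2$: the three per-coordinate bounds are (i) the deviation of $\sqrt{\hat p_i(1-\hat p_i)}$ from $\sqrt{p_i(1-p_i)}$, which follows from a bounded-differences / McDiarmid argument on the empirical standard deviation; (ii) the Hoeffding bound $|\hat p_i - p_i|\le\sqrt{\ln(6/\delta_1)/(2n)}$; and (iii) the empirical Bernstein bound $|\hat p_i-p_i|\le\sqrt{2\hat p_i(1-\hat p_i)\ln(6/\delta_1)/n}+\tfrac{7}{3(n-1)}\ln(6/\delta_1)$ (Maurer--Pontil). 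The factor $6$ inside the logarithms accounts for splitting $\delta_1$ among these three events per coordinate. Each holds with probability $\ge 1-\delta_1/3$ (or $\ge 1-\delta_1$ up to constants), so all of $\textsc{conf}_2$'s constraints hold for a fixed $(s,a,n)$ with probability $\ge 1-\delta_1$.

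\textbf{Step 3: Union bound.} Summing the failure probabilities over all $(s,a)$ pairs, all coordinates $i\in\mathcal{S}$, all $\le\log_2 T$ possible count values, and the two confidence sets, the total failure probability is at most $S^2 A\log_2 T\cdot 2\delta_1=\delta$. On the complementary event, $p(\cdot\vert s,a)\in\mathcal{P}_k(s,a)$ for every $s,a,k$, i.e.\ $M\in\mathcal{M}_k$ for all $k$ simultaneously. The main obstacle I anticipate is Step 1: handling the random, data-dependent sample sizes cleanly — one must argue that conditionally on the relevant past, the next batch of transitions out of a given $(s,a)$ is i.i.d.\ $p(\cdot\vert s,a)$, and that restricting to the dyadic set of possible values of $n_k(s,a)$ legitimately bounds the number of events in the union bound. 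Once that reduction is in place, Steps 2 and 3 are routine invocations of known concentration inequalities plus arithmetic to check the $\delta_1$ bookkeeping.
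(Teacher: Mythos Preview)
Your proposal is essentially the paper's proof: invoke the Weissman $\ell_1$ deviation bound for $\textsc{conf}_1$ (the paper cites this as Lemma~17 of \cite{jaksch2010near}) and the empirical Bernstein/Maurer--Pontil inequalities for the per-coordinate constraints in $\textsc{conf}_2$ (the paper cites Lemma~1 of \cite{dann2015sample}), then take a union bound so that the total failure probability is at most $\delta$.

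One correction to your Step~1: the doubling stopping rule does \emph{not} force each $n_k(s,a)$ to lie in $\{1,2,4,\dots\}$. A phase terminates when \emph{some} pair $(s',a')$ doubles; other pairs visited during that phase will have non-dyadic counts at the next phase boundary. The paper controls the number of events differently: it bounds the total number of \emph{phases} by $SA\log_2 T$ (since each $(s,a)$ can trigger at most $\log_2 T$ phase changes), treats each phase update as contributing one $(s,a)$-specific failure event, and charges $\delta_1$ for $\textsc{conf}_1$ plus $S\delta_1$ for the $S$ coordinate constraints in $\textsc{conf}_2$, giving $SA\log_2 T\cdot(1+S)\delta_1\le 2S^2A\log_2 T\cdot\delta_1=\delta$. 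Once you swap your dyadic-count reduction for this phase-count bound, your Steps~2 and~3 coincide with the paper's argument verbatim.
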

It is proved in Appendix \ref{section:Lemmas for Failing Events}. With Lemma \ref{lemma:bound_fail}, we can fairly assume $M\in \mathcal{M}_k$ in most of our analysis. 

\section{Analysis under Assumption \ref{assumption:irreducible}}
\label{section:Analysis under Assumption 1}
In this section, we import analysis techniques from one-player MDPs \cite{auer2007logarithmic, jaksch2010near, lattimore2012pac, dann2015sample}. We also develop some techniques that deal with non-stationary opponents. 

We model Player 2's behavior in the most general way, i.e., assuming it using a history-dependent randomized policy. Let $H_{t}=(s_1, a_1, r_1, ..., s_{t-1}, a_{t-1},r_{t-1}, s_t)\in \mathcal{H}_{t}$ be the history up to $s_t$, then we assume $\pi_t^2$ to be a mapping from $\mathcal{H}_{t}$ to a distribution over $\mathcal{A}^2$. We will simply write $\pi_t^2(\cdot)$ and hide its dependency on $H_{t}$ inside the subscript $t$. A similar definition applies to $\pi_t^1(\cdot)$. With abuse of notations, we denote  by $k(t)$ the phase where step $t$ lies in, and thus our algorithm uses policy $\pi_t^1(\cdot)=\pi_{k(t)}^1(\cdot\vert s_t)$. The notations $\pi_t^1$ and $\pi_k^1$ are used interchangeably.
Let $T_k\coloneqq t_{k+1}-t_k$ be the length of phase $k$. We decompose the regret in phase $k$ in the following way: 
\begin{align}
\Lambda_k \coloneqq T_k \rho^*(M)-\sum_{t=t_k}^{t_{k+1}-1}r(s_t,a_t)=\sum_{n=1}^{4} \Lambda_k^{(n)},  \label{eqn:decompose_interval_regret}
\end{align}
in which we define
\begin{align*}
&\Lambda_k^{(1)}=T_k\left( \rho^*(M) - \min_{\pi^2}\rho(M_k^1, \pi_k^1, \pi^2, s_{t_k})\right), \nonumber \\
& \Lambda_k^{(2)}=T_k\left( \min_{\pi^2}\rho(M_k^1, \pi_k^1, \pi^2,  s_{t_k}) - \rho(M_k^1, \pi_k^1, \bar{\pi}_k^{2},  s_{t_k})\right), \nonumber\\
&\Lambda_k^{(3)}=T_k\left(  \rho(M_k^1, \pi_k^1,  \bar{\pi}_k^{2},  s_{t_k}) - \rho(M, \pi_k^1,  \bar{\pi}_k^{2})\right), \nonumber \\
&\Lambda_k^{(4)}=T_k\rho(M, \pi_k^1, \bar{\pi}_k^{2}) - \sum_{t=t_k}^{t_{k+1}-1}r(s_t,a_t),  
\end{align*}
where $ \bar{\pi}_k^{2}$ is some stationary policy of Player 2 which will be defined later. Since the actions of Player 2 are arbitrary, $ \bar{\pi}_k^{2}$ is imaginary and only exists in analysis. Note that under Assumption \ref{assumption:irreducible}, any stationary policy pair over $M$ induces an irreducible Markov chain, so we do not need to specify the initial states for $\rho(M,\pi_k^1, \bar{\pi}_k^2)$ in \eqref{eqn:decompose_interval_regret}. Among the four terms, $\Lambda_k^{(2)}$ is clearly non-positive, and $\Lambda_k^{(1)}$, by optimism, can be bounded using Lemma \ref{lemma:EVI_short}. Now remains to bound $\Lambda_k^{(3)}$ and $\Lambda_k^{(4)}$. 

%=============================================
% UCSG: bounding time steps with inaccurate probabilities
%=============================================

\subsection{Bounding $\sum_k \Lambda_k^{(3)}$ and $\sum_k \Lambda_k^{(4)}$}
\label{Regret decomposition and the introduction of pi2}

\textbf{The Introduction of $\bar{\pi}_k^{2}$.}\ \ \ \ \ $\Lambda_k^{(3)}$ and $\Lambda_k^{(4)}$ involve the artificial policy $\bar{\pi}_k^{2}$, which is a stationary policy that replaces Player 2's non-stationary policy in the analysis. This replacement costs some constant regret but facilitates the use of perturbation analysis in regret bounding. The selection of $\bar{\pi}_k^{2}$ is based on the principle that the behavior (e.g., total number of visits to some $(s,a)$) of the Markov chain induced by $M, \pi_k^1, \bar{\pi}_k^2$ should be close to the empirical statistics. Intuitively, $\bar{\pi}_k^{2}$ can be defined as
\begin{align}
\bar{\pi}_k^{2}(a^2\vert s) \coloneqq \frac{\sum_{t=t_k}^{t_{k+1}-1}\mathbbm{1}_{s_t=s}\pi_t^2(a^2)}{\sum_{t=t_k}^{t_{k+1}-1} \mathbbm{1}_{s_t=s}}. \label{eqn:pi2star} 
\end{align} 
Note two things, however. First, since we need the actual trajectory in defining this policy, it can only be defined \textit{after} phase $k$ has ended. Second, $\bar{\pi}_k^{2}$ can be undefined because the denominator of \eqref{eqn:pi2star} can be zero. 
However, this will not happen in too many steps. Actually, we have
\begin{lemma}
\label{lemma:undefined}
$\sum_{k} T_k \mathbbm{1}\{\bar{\pi}_k^2 \text{ not well-defined}\}$$\leq \tilde{\mathcal{O}}(DS^2A)$ with high probability. 
\end{lemma}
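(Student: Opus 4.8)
The plan is to bound the total length of phases in which $\bar\pi_k^2$ is not well-defined. Fix a phase $k$ and note that $\bar\pi_k^2(\cdot\mid s)$ fails to be defined precisely when the state $s$ is never visited during phase $k$, i.e. $\sum_{t=t_k}^{t_{k+1}-1}\mathbbm 1_{s_t=s}=0$ for some $s$. So the event $\{\bar\pi_k^2 \text{ not well-defined}\}$ is contained in the event that there exists a state left unvisited in phase $k$. My first step would be to observe that, under Assumption~\ref{assumption:irreducible}, the learner's policy $\pi_k^1$ together with \emph{any} behavior of Player 2 reaches every state within an average of $D$ steps (this is exactly the content of Remark~\ref{remark:average_time_<D}, which I may invoke). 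Hence, conditioned on the history up to $t_k$, in a window of, say, $cD\ln(\text{something}/\delta)$ consecutive steps, the probability that a given target state $s$ goes unvisited is at most, by a standard Markov-inequality-plus-restart argument over $\Theta(\ln(1/\delta'))$ blocks of length $2D$, smaller than $\delta'$; a union bound over the at most $S$ states and over the at most $2SA\log_2 T$ phases then shows that, with high probability, any phase of length $\geq \tilde{\mathcal O}(D)$ visits every state.

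The second step converts this into a bound on $\sum_k T_k\mathbbm 1\{\bar\pi_k^2\text{ not well-defined}\}$. On the high-probability event of Step~1, a phase with an unvisited state must have $T_k \leq \tilde{\mathcal O}(D)$. It remains to bound the \emph{number} of phases. This is the usual doubling argument from \textsc{UCRL2}: a phase ends when some $(s,a)$ has its visit count doubled, so the number of phases is at most $\mathcal O\!\left(SA\log_2 T\right)$ — indeed the excerpt already uses $\log_2 T$ as the phase-count order in the definition of $\delta_1$. Multiplying: $\sum_k T_k\mathbbm 1\{\bar\pi_k^2\text{ not well-defined}\} \le \tilde{\mathcal O}(D)\cdot \mathcal O(SA\log_2 T) = \tilde{\mathcal O}(DS^2A)$, where I would track where the extra factor of $S$ comes from — it should arise from the union bound over the $S$ candidate unvisited states inside Step~1 needing per-phase failure probability $\delta'/S$, which only costs logarithmically, so the ``honest'' accounting of the polynomial factors is $D\cdot SA\cdot S = DS^2A$ with the second $S$ coming from the number of $(s,a)$ pairs already being $SA$ and... in fact I expect the clean route is: number of phases $=\tilde O(SA)$, per-phase contribution when ill-defined is $\tilde O(D)$, and a further factor $S$ is absorbed because the ``short phase'' bound $T_k=\tilde O(D)$ really needs a union bound that forces the relevant block length to scale like $D\log(S\cdot(\#\text{phases})/\delta)$, which is still $\tilde O(D)$; so the stated $DS^2A$ matches once one is careful that $(\#\text{phases})\cdot D = \tilde O(DSA)$ already and one more factor $S$ comes from... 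I would reconcile the exact combinatorics in the write-up, but the orders multiply to $\tilde O(DS^2A)$.

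The main obstacle is Step~1: turning the average hitting-time bound of Assumption~\ref{assumption:irreducible} into a \emph{high-probability} guarantee that a phase of moderate length visits all states, despite the fact that Player~2 is non-stationary and adversarial. The subtlety is that the Markov chain governing $(s_t)$ within a phase is time-inhomogeneous (Player~2's policy depends on the full history), so one cannot simply quote a stationary-chain hitting-time concentration bound. The fix is to use Assumption~\ref{assumption:irreducible} in its ``for all, not necessarily stationary'' form: at every step, whatever Player~2 has done so far, the expected remaining time for $\pi_k^1$ to reach any fixed $s$ is $\le D$; by Markov's inequality the probability of not reaching $s$ within $2D$ steps is $\le 1/2$ regardless of history, and this holds afresh at the start of each length-$2D$ block, so the probability of missing $s$ for $\ell$ consecutive blocks is $\le 2^{-\ell}$ by the tower property — a martingale-style argument that sidesteps stationarity entirely. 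Choosing $\ell = \Theta(\ln(S\cdot\#\text{phases}/\delta))$ gives the union bound, and everything else is the routine phase-counting already standard in the \textsc{UCRL2} analysis.
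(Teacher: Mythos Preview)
Your approach is correct and shares the same core mechanism as the paper: Assumption~\ref{assumption:irreducible} plus Markov's inequality gives a hitting probability of at least $1/2$ in each length-$2D$ block regardless of the adversary's history (exactly the tower-property argument you describe), and concentration over blocks then controls the visit count. The paper packages this as a reusable lemma (Lemma~\ref{lemma:useful}): for each fixed state $s$, it uses Azuma--Hoeffding on the block indicators to obtain $v_k(s)\ge T_k/(8D)-\tilde{\mathcal O}(1)$, rearranges, and multiplies by the phase-count bound $SA\log_2 T$ to get $\sum_k T_k\mathbbm{1}\{v_k(s)\le v\}=\tilde{\mathcal O}(vDSA)$. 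Lemma~\ref{lemma:undefined} is then derived by invoking this with $v=1$ and bounding $\mathbbm 1\{\exists s:v_k(s)=0\}\le \sum_s \mathbbm 1\{v_k(s)=0\}$, which is precisely where the extra factor of $S$ enters.

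Your route is slightly different and in fact slightly sharper: by taking the union over states \emph{inside} the high-probability event (``every window of length $\tilde{\mathcal O}(D)$ visits all states''), you avoid the indicator-sum step and obtain $\tilde{\mathcal O}(D)\cdot\tilde{\mathcal O}(SA)=\tilde{\mathcal O}(DSA)$ directly. So your confusion about where the second $S$ comes from is well-founded: it does not arise in your argument, and you should not strain to manufacture it; your bound already implies the stated $\tilde{\mathcal O}(DS^2A)$. The only thing to tidy up is to make the union bound over phase start times explicit (e.g., over all $\tau\in\{1,\dots,T\}$, as the paper does over all $(\tau,\tau')$ pairs), since the $t_k$ are random stopping times.
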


Before describing how we bound the regret with the help of $\bar{\pi}_k^{2}$ and the perturbation analysis, we establish the following lemma: 

\begin{lemma}
\label{lemma:epsilon_accurate}
We say the transition probability at time step $t$ is $\varepsilon$-accurate if $\vert p_k^1(s^\prime\vert s_t, \pi_t) - p(s^\prime\vert s_t, \pi_t) \vert \leq \varepsilon \ \forall s^\prime$ where $p_k^1$ denotes the transition kernel of $M_k^1$. We let $B_t(\varepsilon)=1$ if the transition probability at time $t$ is $\varepsilon$-accurate; otherwise $B_t(\varepsilon)=0$. Then for any state $s$, with high probability,  $\sum_{t=1}^{T} \mathbbm{1}_{s_t=s}\mathbbm{1}_{B_t(\varepsilon)=0} \leq \tilde{\mathcal{O}}\left(A/\varepsilon^2\right).$
\end{lemma}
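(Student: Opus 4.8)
The plan is to prove this in two movements: a \emph{confidence-set} step that turns ``step $t$ is not $\varepsilon$-accurate'' into ``the count of the relevant state--action pair is still small'', and a deterministic \emph{doubling} step that bounds how many steps can have a small count.

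First I would condition on the high-probability event of Lemma~\ref{lemma:bound_fail}, so that $M\in\mathcal{M}_k$ for every phase $k$ (and $M_k^1\in\mathcal{M}_k$ by construction). Set $N\coloneqq\lceil 2\ln(6/\delta_1)/\varepsilon^2\rceil=\tilde{\mathcal{O}}(1/\varepsilon^2)$, using $\ln(1/\delta_1)=O(\log(SAT/\delta))$. For any pair $(s,a)$ with $n_k(s,a)\ge N$, both $p(\cdot\vert s,a)$ and $p_k^1(\cdot\vert s,a)$ lie in $\mathcal{P}_k(s,a)\subseteq\textsc{conf}_2(\hat{p}_k(\cdot\vert s,a),n_k(s,a))$, so the \emph{coordinatewise} radius in $\textsc{conf}_2$ gives $\vert p_k^1(s'\vert s,a)-p(s'\vert s,a)\vert\le\sqrt{2\ln(6/\delta_1)/n_k(s,a)}\le\varepsilon$ for all $s'$. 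It is important here to use this coordinatewise bound rather than the $\ell_1$ ball of $\textsc{conf}_1$, so that $N$ is free of any factor $S$. Hence, if at step $t$ the realized pair $(s_t,a_t)$ has $n_{k(t)}(s_t,a_t)\ge N$ then $B_t(\varepsilon)=1$; contrapositively, $B_t(\varepsilon)=0$ forces $n_{k(t)}(s_t,a_t)<N$.

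Next I would bound the number of such steps via the phase structure:
\begin{align*}
\sum_{t=1}^{T}\mathbbm{1}_{s_t=s}\,\mathbbm{1}_{B_t(\varepsilon)=0}\ \le\ \sum_{a\in\mathcal{A}}\ \sum_{t=1}^{T}\mathbbm{1}_{(s_t,a_t)=(s,a)}\,\mathbbm{1}\{n_{k(t)}(s,a)<N\}.
\end{align*}
Fix $(s,a)$. The map $k\mapsto n_k(s,a)$ is nondecreasing, so the phases with $n_k(s,a)<N$ form an initial segment $1,\dots,K^{*}$; within any phase the stopping rule of \textsc{UCSG} forces $v_k(s,a)\le n_k(s,a)$ and therefore $n_{k+1}(s,a)\le 2n_k(s,a)$, so the number of visits to $(s,a)$ over phases $1,\dots,K^{*}$ is at most $n_{K^{*}+1}(s,a)\le 2n_{K^{*}}(s,a)<2N$. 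Summing over the $A$ joint action pairs available at $s$ gives the claimed $\tilde{\mathcal{O}}(A/\varepsilon^2)$.

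With the realized-action reading of $p(\cdot\vert s_t,\pi_t)$ this is essentially all, and the only thing to watch is the bookkeeping noted above. If instead $p(\cdot\vert s_t,\pi_t)$ is read as the policy-averaged kernel $\mathbbm{E}_{a\sim\pi_t(\cdot\vert s_t)}[p(\cdot\vert s_t,a)]$, then $B_t(\varepsilon)=0$ only forces $\pi_t(\cdot\vert s_t)$ to place probability $\Omega(\varepsilon)$ on action pairs whose count is below the threshold, and the extra work is to pass from this conditional probability to the realized low-count visits bounded above — which I would do with a Freedman-type martingale inequality (with conditional variance controlled by the conditional mean) so as not to reintroduce a $\sqrt{T}$ term. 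That concentration step would then be the main technical obstacle; everything else is deterministic given Lemma~\ref{lemma:bound_fail}.
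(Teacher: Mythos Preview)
The paper's notation makes $p(\cdot\vert s_t,\pi_t)=\mathbbm{E}_{a\sim\pi_t}[p(\cdot\vert s_t,a)]$ the \emph{policy-averaged} kernel (see the preliminaries and the use of Lemma~\ref{lemma:epsilon_accurate} in the proof of Lemma~\ref{lemma:bound_h_by_2D}), so your realized-action argument, while tidy, does not prove what is stated. Your last paragraph correctly identifies the needed reading and outlines a single-threshold plus Freedman approach, but that approach loses a factor of $1/\varepsilon$. Concretely: with threshold $N'=\tilde{\mathcal{O}}(1/\varepsilon^2)$ and $U_t=\{a:n_{k(t)}(s,a)<N'\}$, your implication ``$B_t(\varepsilon)=0\Rightarrow\pi_t(U_t)\ge c\varepsilon$'' is right, and the doubling bound gives $\sum_t\mathbbm{1}_{s_t=s,a_t\in U_t}\le 2AN'=\tilde{\mathcal{O}}(A/\varepsilon^2)$. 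Freedman then yields $\sum_t\mathbbm{1}_{s_t=s}\pi_t(U_t)=\tilde{\mathcal{O}}(A/\varepsilon^2)$, but dividing by the lower bound $c\varepsilon$ to recover $\sum_t\mathbbm{1}_{s_t=s}\mathbbm{1}_{B_t(\varepsilon)=0}$ gives only $\tilde{\mathcal{O}}(A/\varepsilon^3)$. The single threshold is too coarse: an action with tiny $\pi_t(a)$ is declared ``unknown'' unless its count exceeds the same $N'$ as a high-probability action, which is wasteful.

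The paper recovers the missing $1/\varepsilon$ by the importance/knownness partitioning of \cite{lattimore2012pac,dann2015sample}. Actions are bucketed by their probability under $\pi_t$ (importance $\iota\approx\pi_t(a)/w_{\min}$) and by how well-sampled they are \emph{relative to that probability} (knownness $\kappa\approx n_k(s_t,a)/(m\pi_t(a))$). There are only $\tilde{\mathcal{O}}(1)$ cells $X_{t,\kappa,\iota}$, and a Cauchy--Schwarz across cells (Lemma~\ref{lemma:sample_complexity2}) shows that if $|X_{t,\kappa,\iota}|\le\kappa$ for all $(\kappa,\iota)$ then $B_t(\varepsilon)=1$. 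For the complementary event $|X_{t,\kappa,\iota}|>\kappa$, at least $\kappa+1$ actions each carry probability $\ge\iota w_{\min}$, so the conditional probability of sampling one of them is $\ge(\kappa+1)\iota w_{\min}$; crucially, the deterministic visit budget for the cell is $O(Am(\kappa+1)\iota w_{\min})$ (Lemma~\ref{lemma:sample_complexity3}), which is the \emph{same} scale. The Bernstein step (Lemma~\ref{lemma:sample_complexity7}) therefore converts without any $1/\varepsilon$ loss, giving $\sum_t\mathbbm{1}_{s_t=s}\mathbbm{1}_{|X_{t,\kappa,\iota}|>\kappa}=\tilde{\mathcal{O}}(Am)=\tilde{\mathcal{O}}(A/\varepsilon^2)$, and a union bound over the polylog-many cells finishes. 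The thing your sketch is missing is precisely this probability-adaptive threshold $n_k(s,a)\gtrsim m\pi_t(a)$ in place of the fixed $N'$.
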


%After bounding the number of inaccurately estimated transition probabilities, \cite{auer2007logarithmic} uses the fact ``accurate transition probabilities lead to accurate average return'' to bound regret. We instead use ``accurate transition probabilities lead to accurate first passage time.'' Since first passage time is related to the span of the bias vector, we can continue to use approaches similar to \cite{jaksch2010near} to get tighter regret bound.
%Bounding the number of steps with inaccurately estimated transition probability also appears in Section 3.3.1 of \cite{auer2007logarithmic}. They then use standard perturbation analysis for stationary distribution (Theorem \ref{theorem:perturbation_stationary_distribution}) to bound the error in average reward, which is further related to regret. Their approach leads to the regret bound of order $\tilde{\mathcal{O}}(\sqrt{D^3S^5AT})$. We improve their analysis by developing perturbation bounds for mean first passage times (Theorem \ref{theorem:perturbation_first_passage_time}). This, combined with the analysis from \cite{jaksch2010near}, improves the asymptotic regret to $\tilde{\mathcal{O}}(DS\sqrt{AT})$. More details are presented later.

%=========================================================
%  Bounding the regret in steps with accurately estimated transition probabilities
%=========================================================

Now we are able to sketch the logic behind our proofs. Let's assume that $\bar{\pi}_k^2$ models $\pi_k^2$ quite well, i.e., the expected frequency of every state-action pair induced by $M, \pi_k^1, \bar{\pi}_k^2$ is close to the empirical frequency induced by $M, \pi_k^1, \pi_k^2$. Then clearly, $\Lambda_k^{(4)}$ is close to zero in expectation. The term $\Lambda_k^{(3)}$ now becomes the difference of average reward between two Markov reward processes with slightly different transition probabilities. This term has a counterpart in \cite{jaksch2010near} as a single-player version. Using similar analysis, we can prove that the dominant term of $\Lambda_k^{(3)}$ is proportional to $\spa (h(M_k^1, \pi_k^1, \bar{\pi}_k^2, \cdot))$. In the single-player case, \cite{jaksch2010near} can directly claim that $\spa (h(M_k^1, \pi_k^1, \cdot)) \leq D$ (see their Remark 8), but unfortunately, this is not the case in the two-player version. \footnote{The argument in \cite{jaksch2010near} is simple: suppose that $h(M_k^1,\pi_k^1,s)-h(M_k^1,\pi_k^1,s^\prime) > D$, by the communicating assumption, there is a path from $s^\prime$ to $s$ with expected time no more than $D$. Thus a policy that first goes from $s^\prime$ to $s$ within $D$ steps and then executes $\pi_k^1$ will outperform $\pi_k^1$ at $s^\prime$. This leads to a contradiction. In two-player SGs, with a similar argument, we can also show that $\spa(h(M_k^1, \pi_k^1, \pi_k^{2*}, \cdot)) \leq D$, where $\pi_k^{2*}$ is the best response to $\pi_k^1$ under $M_k^1$. However, since Player 2 is uncontrollable, his/her policy $\pi_k^2$ (or $\bar{\pi}_k^2$) can be quite different from $\pi_k^{2*}$, and thus $\spa(h(M_k^1, \pi_k^1, \bar{\pi}_k^2, \cdot)) \leq D$ does not necessarily hold true.}

To continue, we resort to the perturbation analysis for the mean first passage times (developed in Appendix \ref{subsection:perturbation}). Lemma \ref{lemma:epsilon_accurate} shows that $M_k^1$ will not be far from $M$ for too many steps. Then Theorem \ref{theorem:perturbation_first_passage_time} in Appendix \ref{subsection:perturbation} tells that if $M_k^1$ are close enough to $M$, $T^{\pi_k^1, \bar{\pi}_k^2}(M_k^1)$ can be bounded by $2T^{\pi_k^1, \bar{\pi}_k^2}(M)$. As Remark \ref{remark:mean_first_passage_time} implies that $\spa(h(M_k^1, \pi_k^1, \bar{\pi}_k^2, \cdot)) \leq T^{\pi_k^1, \bar{\pi}_k^2}(M_k^1)$ and Assumption \ref{assumption:irreducible} guarantees that $ T^{\pi_k^1, \bar{\pi}_k^2}(M) \leq D$, we have  $\spa(h(M_k^1, \pi_k^1, \bar{\pi}_k^2, \cdot)) \leq T^{\pi_k^1, \bar{\pi}_k^2}(M_k^1)\leq 2T^{\pi_k^1, \bar{\pi}_k^2}(M) \leq 2D$. 

%The following lemma demonstrates the function of $\bar{\pi}_k^2$: except for a certain amount of steps, $\bar{\pi}_k^2$  approximates Player 2's non-stationary policy, which enables us to continue the analysis as if Player 2 executes a stationary policy.   

The above approach leads to Lemma \ref{lemma:bound_benign}, which is a key in our analysis. We first define some notations. Under Assumption \ref{assumption:irreducible}, any pair of stationary policies induces an irreducible Markov chain, which has a unique stationary distribution. If the policy pair $\pi=(\pi^1,\pi^2)$ is executed, we denote its stationary distribution by $\mu(M,\pi^1,\pi^2, \cdot)=\mu(M,\pi,\cdot)$. Besides, denote $v_k(s)\coloneqq \sum_{t=t_k}^{t_{k+1}-1} \mathbbm{1}_{s_t=s}$. 
 
We say a phase $k$ is benign if the following hold true: the true model $M$ lies in $\mathcal{M}_k$, $\bar{\pi}_k^{2}$ is well-defined, $\spa(h(M_k^1, \pi_k^1, \bar{\pi}_k^2, \cdot))\leq 2D$, and $\mu(M,\pi_k^1, \bar{\pi}_k^2, s) \leq \frac{2v_k(s)}{T_k} \ \forall s$. We can show the following: 
\begin{lemma}
\label{lemma:bound_benign}
$\sum_{k} T_k \mathbbm{1}\{\text{phase\ } k \text{\ is not benign}\}\leq $$\tilde{\mathcal{O}}(D^3S^5A)$ with high probability. 
%With high probability, all phases are benign except for at most $\tilde{\mathcal{O}}(D^3S^5A)$ time steps.  
\end{lemma}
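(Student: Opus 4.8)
The statement is a union bound over the four clauses defining ``benign.'' For a phase $k$ write the events $E_k^1=\{M\in\mathcal{M}_k\}$, $E_k^2=\{\bar\pi_k^2\text{ well-defined}\}$, $E_k^3=\{\spa(h(M_k^1,\pi_k^1,\bar\pi_k^2,\cdot))\le 2D\}$, and $E_k^4=\{\mu(M,\pi_k^1,\bar\pi_k^2,s)\le 2v_k(s)/T_k\ \forall s\}$. I would start from
\begin{align*}
\sum_k T_k\mathbbm{1}\{k\text{ not benign}\}
&\le \sum_k T_k\mathbbm{1}_{\overline{E_k^1}}+\sum_k T_k\mathbbm{1}_{\overline{E_k^2}}\\
&\quad +\sum_k T_k\mathbbm{1}_{E_k^1\cap E_k^2\cap\overline{E_k^4}}+\sum_k T_k\mathbbm{1}_{E_k^1\cap E_k^2\cap E_k^4\cap\overline{E_k^3}},
\end{align*}
and bound the four sums separately. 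The first is $0$ with high probability by Lemma~\ref{lemma:bound_fail}, and the second is $\tilde{\mathcal{O}}(DS^2A)$ by Lemma~\ref{lemma:undefined}; these are routine. Note also that the number of phases is $\tilde{\mathcal{O}}(SA)$ by the doubling criterion in Step~4, a fact used repeatedly below.

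\textbf{The span term (fourth sum).} On $E_k^1\cap E_k^2$, the chain of implications sketched in the text --- Remark~\ref{remark:mean_first_passage_time} gives $\spa(h(M_k^1,\pi_k^1,\bar\pi_k^2,\cdot))\le T^{\pi_k^1,\bar\pi_k^2}(M_k^1)$, Theorem~\ref{theorem:perturbation_first_passage_time} gives $T^{\pi_k^1,\bar\pi_k^2}(M_k^1)\le 2T^{\pi_k^1,\bar\pi_k^2}(M)$ once $M_k^1$ is close enough to $M$ along $(\pi_k^1,\bar\pi_k^2)$, and Assumption~\ref{assumption:irreducible} gives $T^{\pi_k^1,\bar\pi_k^2}(M)\le D$ --- shows that $\overline{E_k^3}$ forces, for some state $s^{*}$, that $\norm{p_k^1(\cdot\vert s^{*},\pi_k^1,\bar\pi_k^2)-p(\cdot\vert s^{*},\pi_k^1,\bar\pi_k^2)}_1>\tau$ for a threshold $\tau=\Theta(1/D)$ coming from Theorem~\ref{theorem:perturbation_first_passage_time}. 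Since by \eqref{eqn:pi2star} $\bar\pi_k^2(\cdot\vert s^{*})$ is the average of $\pi_t^2$ over the steps $t$ of phase $k$ with $s_t=s^{*}$, the left side is at most $\frac{1}{v_k(s^{*})}\sum_{t\in k,\,s_t=s^{*}}\norm{p_k^1(\cdot\vert s^{*},\pi_t)-p(\cdot\vert s^{*},\pi_t)}_1$; splitting this average by $B_t(\varepsilon)$ with $\varepsilon=\tau/(2S)$ (accurate steps contribute $\le S\varepsilon=\tau/2$ each, inaccurate steps $\le 2$) forces $\#\{t\in k:s_t=s^{*},B_t(\varepsilon)=0\}>\tfrac{\tau}{4}v_k(s^{*})$. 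On $E_k^4$, together with $\mu(M,\pi_k^1,\bar\pi_k^2,s^{*})\ge 1/(D+1)$ (the mean recurrence time is at most $1+D$ under Assumption~\ref{assumption:irreducible}), we get $v_k(s^{*})\ge T_k/(2(D+1))$, hence $T_k=\tilde{\mathcal{O}}(D)\cdot\#\{t\in k:B_t(\varepsilon)=0\}$. Summing over $k$ and $s$ and invoking Lemma~\ref{lemma:epsilon_accurate} with this $\varepsilon$ bounds the fourth sum by $\tilde{\mathcal{O}}(D)\cdot S\cdot\tilde{\mathcal{O}}(A/\varepsilon^2)=\text{poly}(D,S,A)$.

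\textbf{The frequency term (third sum).} Under Assumption~\ref{assumption:irreducible} the Markov chain induced by $M$ and \emph{any} stationary pair has span at most $D$, so its visit counts concentrate around the stationary distribution. Writing $\hat v_k(s)=v_k(s)/T_k$ and $\mu_k(s)=\mu(M,\pi_k^1,\bar\pi_k^2,s)$, the defining identity of $\bar\pi_k^2$ makes $\hat v_k$ an approximate left-eigenvector of $P(M,\pi_k^1,\bar\pi_k^2)$ up to a martingale error that is $\tilde{\mathcal{O}}(\text{poly}(S)/\sqrt{T_k})$ w.h.p.\ by Azuma; a fundamental-matrix bound then gives $\norm{\hat v_k-\mu_k}_\infty=\tilde{\mathcal{O}}(D\cdot\text{poly}(S)/\sqrt{T_k})$ w.h.p. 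Since $\mu_k(s)\ge 1/(D+1)$, the event $\overline{E_k^4}$ forces $T_k=\tilde{\mathcal{O}}(\text{poly}(D,S))$; phases below this threshold contribute at most threshold $\times$ (number of phases) $=\tilde{\mathcal{O}}(\text{poly}(D,S))\cdot\tilde{\mathcal{O}}(SA)$, while for the longer phases $\overline{E_k^4}$ fails w.h.p.\ after a union bound over the $\tilde{\mathcal{O}}(SA)$ phases and $S$ states, contributing nothing. Collecting the four bounds and tracking the exact constants in Theorem~\ref{theorem:perturbation_first_passage_time} and in the concentration step yields the claimed $\tilde{\mathcal{O}}(D^3S^5A)$.

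\textbf{Main obstacle.} The crux is the span term: Lemma~\ref{lemma:epsilon_accurate} controls transition accuracy only \emph{along the realized trajectory} and \emph{for the realized joint actions $\pi_t$}, whereas Theorem~\ref{theorem:perturbation_first_passage_time} needs accuracy of the \emph{$(\pi_k^1,\bar\pi_k^2)$-averaged} kernel at a possibly rarely visited state $s^{*}$. Bridging this gap is exactly where the linear (averaging) structure of $\bar\pi_k^2$ and the lower bound $v_k(s^{*})\gtrsim T_k/D$ borrowed from $E_k^4$ are needed --- which is why the clauses must be peeled off in this order and cannot be handled independently. A secondary difficulty is making the visit-count concentration of the frequency term rigorous despite Player~2's non-stationarity; here again the defining identity of $\bar\pi_k^2$ is what reduces matters to a stationary-chain perturbation plus a martingale tail bound.
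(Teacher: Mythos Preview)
Your decomposition into the four clauses and the general strategy are sound, but the span–term argument as written does \emph{not} yield the claimed $\tilde{\mathcal{O}}(D^3S^5A)$; a key idea is missing.

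\textbf{The gap in the span term.} The threshold from Theorem~\ref{theorem:perturbation_first_passage_time} is $\lVert E\rVert_\infty\le 1/(8DS^2)$ (entrywise), so the failure of $E_k^3$ only guarantees some entry---hence also the row $\ell_1$-norm at some $s^*$---exceeds $\tau=1/(8DS^2)$, not $\Theta(1/D)$. With the correct $\tau$, your single-$\varepsilon$ splitting forces $\varepsilon=\Theta(1/(DS^2))$ (or $\Theta(1/(DS^3))$ if you insist on going through $\ell_1$), and pushing the arithmetic through gives at best $\tilde{\mathcal{O}}(D^4S^7A)$, not $D^3S^5A$; no single choice of $\varepsilon$ can do better. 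The paper's proof (Lemma~\ref{lemma:bound_h_by_2D}) closes this gap via a \emph{dyadic decomposition over accuracy levels} $\varepsilon_i=2^{-i}$: with $g_k(s,\varepsilon_i)$ the number of visits to $s$ in phase $k$ whose pointwise transition error lies in $(\varepsilon_i,2\varepsilon_i]$, the failure event at $s$ implies $v_k(s)/(24DS^2)\le\sum_i\varepsilon_i g_k(s,\varepsilon_i)$. One then sums over bad $k$ \emph{before} fixing a level, uses $\sum_k g_k(s,\varepsilon_i)\le\tilde{\mathcal{O}}(A/\varepsilon_i^2)$ from Lemma~\ref{lemma:epsilon_accurate}, and the remaining $\sum_i 1/\varepsilon_i$ is a geometric sum capped at $\Theta(DS^2)$. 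This is precisely what recovers the missing $DS^2$ factor.

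\textbf{Differences in the frequency term and in the peeling order.} Your approximate-eigenvector route for $E_k^4$ is different from the paper and a bit under-specified: one still needs a sensitivity bound converting ``$\hat v_k$ is nearly $\bar P_k$-invariant'' into ``$\hat v_k$ is near $\mu_k$'', with explicit polynomial dependence on $D,S$. The paper (Lemma~\ref{lemma:bound_by_empirical}) sidesteps this by constructing a ``wrapped'' empirical kernel $\tilde p_k$ (redirecting the last transition of the phase back to $s_{t_k}$) for which the empirical frequencies $v_k/T_k$ are \emph{exactly} stationary, and then applying the stationary-distribution perturbation bound (Theorem~\ref{theorem:perturbation_stationary_distribution2}) to $\bar p_k$ versus $\tilde p_k$. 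Finally, your nesting (conditioning the span clause on $E_k^4$ to obtain $v_k(s^*)\gtrsim T_k/D$) is unnecessary: the paper gets $v_k(s)\gtrsim T_k/D$ directly from Assumption~\ref{assumption:irreducible} via an Azuma argument (Lemma~\ref{lemma:useful}), and uses that same lemma to convert $\sum_k v_k(s)$ bounds into $\sum_k T_k$ bounds for both sub-lemmas independently.
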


%The reason that bounding $\spa(h(M_k^1, \bar{\pi}_k, \cdot))$ by $2D$ is possible is as follows. Let $T^\pi(M)\coloneqq \max_{s,s'}T_{s\rightarrow s'}^\pi(M)$. First, by Assumption  \ref{assumption:irreducible} and Remark \ref{remark:mean_first_passage_time}, for any stationary policy $\pi$, we have $\spa(h(M, \pi, \cdot))\leq T^{\pi}(M)\leq D$. By the perturbation analysis for mean first passage time (Theorem \ref{theorem:perturbation_first_passage_time}), $T^{\bar{\pi}_k}(M_k^1)$ can be bounded by $2T^{\bar{\pi}_k}(M)\leq 2D$ if the transition probabilities are accurate enough, which holds except for a constant number of steps (explained in the proof of Lemma \ref{lemma:bound_h_by_2D}). Therefore $\spa(h(M^1_k,\bar{\pi}_k,\cdot))\leq 2D$, which helps us prove Lemma \ref{lemma:typical}.

%the regret analysis relies on bounding the span of the bias vector, $\text{sp}(h(M_k^1,\bar{\pi}_k, \cdot))$, which is further bounded by $T^{\bar{\pi}_k}(M_k^1)$. Since in Assumption \ref{assumption:irreducible}, we assume that for all policy stationary pair $\pi=(\pi^1, \pi^2)$, $T^{\pi}(M)\leq D$; by perturbation analysis of Markov chain, $T^{\bar{\pi}_k}(M_k^1)$ can be bounded by a constant times $T^{\bar{\pi}_k}(M)$ except for a constant of steps. 
\if 0
\begin{lemma}
\label{lemma:bound_by_empirical}
With high probability, except for at most $\tilde{\mathcal{O}}(D^3S^4A)$ time steps, if $\bar{\pi}_k^{2}$ is well-defined, then $\mu(M,  \bar{\pi}_k, s) \leq \frac{3v_k(s)}{2T_k}, \ \forall s$.  
\end{lemma}
\begin{lemma}
\label{lemma:bound_h_by_2D}
With high probability, except for at most $\tilde{\mathcal{O}}(D^3S^5A)$ time steps, if $\bar{\pi}_k^{2}$ is well-defined, then $\spa (h(M_k^1, \bar{\pi}_k, \cdot)) \leq 2D$.  
\end{lemma}
\fi

%=========================================================
%  Final Stage
%=========================================================

Finally, for benign phases, we can have the following two lemmas.  
%\subsubsection{Bounding the fourth term of \eqref{eqn:decompose_interval_regret}}
\begin{lemma}
\label{lemma:bound_fourth_term}
$\sum_{k}  \Lambda_k^{(4)} \mathbbm{1}\{\bar{\pi}_k^{2} \text{\ is well-defined\ }\}$$\leq \tilde{\mathcal{O}}(D\sqrt{ST}+DSA) $ with high probability. 
%\begin{align*}
%\sum_{k}  \Lambda_k^{(4)} \mathbbm{1}\{\bar{\pi}_k^{2} \text{\ well-defined\ }\}\leq \tilde{O}(D\sqrt{ST}+DSA).
%\end{align*}
\end{lemma}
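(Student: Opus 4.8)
The plan is to compare, phase by phase, the realized reward $\sum_{t=t_k}^{t_{k+1}-1}r(s_t,a_t)$ with the gain of the Markov reward process that the \emph{true} model $M$ together with $(\pi_k^1,\bar\pi_k^2)$ induces, using that process's Poisson (Bellman) equation. When $\bar\pi_k^2$ is well-defined this chain is irreducible under Assumption~\ref{assumption:irreducible}, so its gain $\rho_k=\rho(M,\pi_k^1,\bar\pi_k^2)$ and bias $h_k=h(M,\pi_k^1,\bar\pi_k^2,\cdot)$ exist and satisfy $\rho_k+h_k(s)=r_k(s)+(P_kh_k)(s)$, where $r_k$ and $P_k$ are the one-step reward and transition operators averaged under $\pi_k^1\times\bar\pi_k^2$; this is precisely why the indicator $\mathbbm{1}\{\bar\pi_k^2\text{ well-defined}\}$ appears. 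Substituting $\rho_k=r_k(s_t)+(P_kh_k)(s_t)-h_k(s_t)$ into $\Lambda_k^{(4)}$ and inserting the one-step conditional means $\mathbbm{E}[r(s_t,a_t)\mid H_t]$ and $\mathbbm{E}[h_k(s_{t+1})\mid H_t]$, I would split the per-step summand into five pieces: $r_k(s_t)-\mathbbm{E}[r(s_t,a_t)\mid H_t]$, $(P_kh_k)(s_t)-\mathbbm{E}[h_k(s_{t+1})\mid H_t]$, the reward noise $\mathbbm{E}[r(s_t,a_t)\mid H_t]-r(s_t,a_t)$, the value noise $\mathbbm{E}[h_k(s_{t+1})\mid H_t]-h_k(s_{t+1})$, and the telescoping increment $h_k(s_{t+1})-h_k(s_t)$.

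The first two pieces vanish \emph{exactly} when summed over a phase, and this is the whole reason for defining $\bar\pi_k^2$ as in \eqref{eqn:pi2star}: grouping the sum by state $s$, the inner factor $\sum_{t:s_t=s}(\bar\pi_k^2(a^2\mid s)-\pi_t^2(a^2))$ is zero by construction, and since $\pi_k^1(\cdot\mid s)$, $r(s,\cdot,\cdot)$, $h_k$ and hence $\sum_{s'}p(s'\mid s,\cdot,\cdot)h_k(s')$ are all fixed inside phase $k$, they factor out of the sum over $t$. The telescoping piece sums within a phase to $h_k(s_{t_{k+1}})-h_k(s_{t_k})$, bounded in absolute value by $\spa(h_k)$; here I would invoke Remark~\ref{remark:mean_first_passage_time} together with Assumption~\ref{assumption:irreducible} to get $\spa(h_k)\le T^{\pi_k^1,\bar\pi_k^2}(M)\le D$ for \emph{every} well-defined phase (no ``benign'' condition is needed, unlike for $\Lambda_k^{(3)}$, because $M$ is the true model). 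Summed over phases this costs $D$ times the number of phases, and the usual count-doubling argument for \textsc{UCSG} bounds the number of phases by $\tilde{\mathcal{O}}(SA)$, giving $\tilde{\mathcal{O}}(DSA)$.

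What is left are the two noise sums, over all $T$ steps, each multiplied by $\mathbbm{1}\{\bar\pi_{k(t)}^2\text{ well-defined}\}$ (a quantity constant within a phase but, like $h_k$, not known until the phase ends). The reward-noise sum has increments in $[-1,1]$, and absorbing the two-valued indicator by a trivial two-point union bound, Azuma--Hoeffding gives $\tilde{\mathcal{O}}(\sqrt{T})$. The value-noise sum is the main obstacle: $h_k$ is \emph{not} $H_t$-measurable for $t$ inside phase $k$ --- $\bar\pi_k^2$, hence $h_k$, depends on the whole phase-$k$ trajectory --- so $\mathbbm{E}[h_k(s_{t+1})\mid H_t]-h_k(s_{t+1})$ is not a martingale-difference sequence and Azuma cannot be applied to it directly. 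I would resolve this by a covering argument: center $h_k$ so that $\min_s h_k(s)=0$ (legitimate, since the Poisson equation pins $h_k$ down only up to an additive constant), so that $\mathbbm{1}\{\cdots\}h_k$ always lies in $[0,D]^S$; fix a $(1/T)$-net $\mathcal{N}$ of $[0,D]^S$, which has $\log|\mathcal{N}|=\tilde{\mathcal{O}}(S)$; for each \emph{fixed} $v\in\mathcal{N}$, $v(s_{t+1})-\mathbbm{E}[v(s_{t+1})\mid H_t]$ \emph{is} a martingale-difference sequence with increments bounded by $\spa(v)\le D$, so Azuma plus a union bound over $\mathcal{N}$ controls $\max_{v\in\mathcal{N}}\bigl|\sum_t(v(s_{t+1})-\mathbbm{E}[v(s_{t+1})\mid H_t])\bigr|$ by $\tilde{\mathcal{O}}(D\sqrt{T\log|\mathcal{N}|})=\tilde{\mathcal{O}}(D\sqrt{ST})$; finally, replacing the phase-constant vector $\mathbbm{1}\{\cdots\}h_k$ by its nearest net point changes the sum by at most $2\cdot(1/T)\cdot T=\mathcal{O}(1)$. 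Adding the three contributions yields $\sum_k\Lambda_k^{(4)}\mathbbm{1}\{\bar\pi_k^2\text{ well-defined}\}\le\tilde{\mathcal{O}}(D\sqrt{ST}+DSA)$; the extra $\sqrt{S}$ over the naive $D\sqrt{T}$ is exactly the price of this net/union-bound workaround for the non-adaptedness of $h_k$.
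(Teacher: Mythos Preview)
Your proposal is correct and follows essentially the same route as the paper's own proof: use the Bellman equation for $(M,\pi_k^1,\bar\pi_k^2)$, exploit the defining property \eqref{eqn:pi2star} of $\bar\pi_k^2$ so that the ``$r_k(s_t)-\mathbbm{E}[r_t\mid H_t]$'' and ``$(P_kh_k)(s_t)-\mathbbm{E}[h_k(s_{t+1})\mid H_t]$'' pieces vanish when summed over a phase (this is exactly the paper's \eqref{eqn:p*(s'|s)}--\eqref{eqn:fourth_term2}), bound the telescoping by $\spa(h_k)\le D$ times the number of phases, apply Azuma to the reward noise, and handle the non-adapted $h_k$ in the value noise via the same discretization/covering argument over $[{-}D,D]^S$ (the paper's $\mathcal{D}_d$ with $\lvert\mathcal{D}_d\rvert\le(2DST)^S$). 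Your explicit folding of the well-definedness indicator into the net vector is a small refinement the paper leaves implicit, but otherwise the arguments coincide.
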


%\subsubsection{Bounding the third term of \eqref{eqn:decompose_interval_regret}}
\begin{lemma}
\label{lemma:typical} 
$\sum_{k} \Lambda_k^{(3)}\mathbbm{1}\{\text{phase\ } k \text{\ is benign}\}\leq $$\tilde{\mathcal{O}}(DS\sqrt{AT}+DS^2A)$ with high probability, 
%\begin{align*}
%\sum_{k: \text{benign}} \Lambda_k^{(3)} \leq \tilde{\mathcal{O}}(DS\sqrt{AT}+DS^2A).
%&\sum_{k: \text{benign}}T_k\left(\rho(M_k^1, \bar{\pi}_k,s_{t_k})-\rho(M, \bar{\pi}_k)\right)\\
%&\ \ \ \ \ \ \ \ \ \ \ \ \ \ \ \ \ \ \ \ \ \ \ \ \ \ \ \ \ \ \ \ \ \ \ \ \ \ \ \ \ \ \leq \tilde{\mathcal{O}}(DS\sqrt{AT}+DS^2A).
%\end{align*}
\end{lemma}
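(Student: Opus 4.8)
The plan is to bound $\sum_k \Lambda_k^{(3)}\mathbbm{1}\{\text{phase } k \text{ benign}\}$ by first expressing, for each benign phase, $\Lambda_k^{(3)} = T_k\big(\rho(M_k^1,\pi_k^1,\bar\pi_k^2,s_{t_k}) - \rho(M,\pi_k^1,\bar\pi_k^2)\big)$ as a comparison between the average reward of two Markov reward processes that share the reward function and the policy pair $(\pi_k^1,\bar\pi_k^2)$ but differ in transition kernels ($p_k^1$ versus $p$). The standard device, as in \cite{jaksch2010near}, is to use the Bellman (Poisson) equation for $\rho(M,\pi_k^1,\bar\pi_k^2)$ with its bias vector $h := h(M,\pi_k^1,\bar\pi_k^2,\cdot)$ — or symmetrically for $M_k^1$ — to write the per-phase gap as a telescoping sum $\sum_{t=t_k}^{t_{k+1}-1}\big(p_k^1(\cdot|s_t,\pi_t) - p(\cdot|s_t,\pi_t)\big)h$ plus a martingale-difference term coming from replacing $P h$ by $h(s_{t+1})$, plus a boundary term of size $O(\spa(h))$ from the telescoping. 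Crucially, on a benign phase $\spa(h(M_k^1,\pi_k^1,\bar\pi_k^2,\cdot)) \le 2D$, so every appearance of a span in these bounds is replaced by $2D$.

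The core estimate is then $\sum_k \mathbbm{1}\{k \text{ benign}\}\sum_{t=t_k}^{t_{k+1}-1}\|p_k^1(\cdot|s_t,\pi_t) - p(\cdot|s_t,\pi_t)\|_1 \cdot \spa(h)$, which I would split using Lemma~\ref{lemma:epsilon_accurate}. Since the confidence widths in $\textsc{conf}_1/\textsc{conf}_2$ for the $(s,a)$ visited at step $t$ scale like $\sqrt{S\ln(1/\delta_1)/n_{k}(s_t,a_t)}$ and (using $M,M_k^1\in\mathcal{M}_k$ on benign phases) so does $\|p_k^1 - p\|_1$ at that coordinate, a per-step $\ell_1$ deviation of order $\sqrt{S\ln(1/\delta_1)/\max\{1,N_t(s_t,a_t)\}}$ is available, where $N_t(s,a)$ is the visit count before step $t$. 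Summing $\sum_t \sqrt{S/N_t(s_t,a_t)}$ over all steps gives the familiar $\sqrt{SAT}$ (via $\sum_{(s,a)}\sqrt{N_T(s,a)} \le \sqrt{SA\sum_{(s,a)}N_T(s,a)} = \sqrt{SAT}$ and the doubling of counts across phases costing only a constant factor, as in \cite{jaksch2010near}). Multiplying by the span bound $2D$ yields the leading $\tilde{\mathcal{O}}(DS\sqrt{AT})$ term; the lower-order $\tilde{\mathcal{O}}(DS^2A)$ term absorbs the $n$ versus $n-1$ discrepancies in $\textsc{conf}_2$, the variance-type (second) term in $\textsc{conf}_2$ after summing, and the $O(\spa(h))$ boundary term summed over the at most $SA\log_2 T$ phases. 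The martingale-difference sum over benign steps is controlled by Azuma with increments of order $\spa(h)\le 2D$, contributing $\tilde{\mathcal{O}}(D\sqrt{T})$, which is dominated.

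There is one subtlety worth isolating: $\Lambda_k^{(3)}$ uses $\rho(M_k^1,\pi_k^1,\bar\pi_k^2,s_{t_k})$ with a fixed initial state $s_{t_k}$, whereas under Assumption~\ref{assumption:irreducible} $\rho(M,\pi_k^1,\bar\pi_k^2)$ is initial-state-independent; for $M_k^1$ (the extended/optimistic model restricted to the stationary pair $(\pi_k^1,\bar\pi_k^2)$) one must check the induced chain is well-behaved enough that $\rho(M_k^1,\pi_k^1,\bar\pi_k^2,s_{t_k})$ equals the gain used in the Poisson equation, or otherwise pay another $O(\spa(h))$ per phase — again harmless. I would also note that $\bar\pi_k^2$ being well-defined is implied by benignity, so no separate case is needed.

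The main obstacle is the first step: turning $\Lambda_k^{(3)}$ into the telescoping-plus-martingale form cleanly and honestly accounting for every boundary and initialization term, since unlike \cite{jaksch2010near} the "model" $M_k^1$ is really the optimistic choice inside an extended SG and the opponent policy $\bar\pi_k^2$ is a data-dependent object defined only after the phase ends — so the bias vector $h(M_k^1,\pi_k^1,\bar\pi_k^2,\cdot)$ is itself random and phase-dependent, and one must make sure the martingale-difference argument (filtration, adaptedness) is set up so that $h$ is measurable with respect to the right $\sigma$-algebra at the time increments are summed. Once the decomposition is in place, the rest is the by-now-standard counting argument, with all spans uniformly replaced by $2D$ thanks to the benign-phase definition.
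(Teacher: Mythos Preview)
Your route is genuinely different from the paper's. The paper does \emph{not} telescope along the trajectory. It starts from the pointwise identity (at the stationary pair $\bar\pi_k=(\pi_k^1,\bar\pi_k^2)$)
\[
\tilde\rho-\rho \;=\; \big[(\tilde P-P)\tilde h\big](s)\;+\;\big[(P-I)(\tilde h-h)\big](s),
\]
with $\tilde h=h(M_k^1,\bar\pi_k,\cdot)$ and $h=h(M,\bar\pi_k,\cdot)$, and then weights it by the \emph{stationary distribution} $\mu=\mu(M,\bar\pi_k,\cdot)$ of the true chain. The second bracket vanishes because $\mu^\top(P-I)=0$, and the benign-phase condition $\mu(s)\le 2v_k(s)/T_k$ converts $T_k\mu(s)$ into $2v_k(s)$, after which the averaging definition of $\bar\pi_k^2$ plus a Bernstein step replaces $\bar\pi_k$ by the realized action $a_t$. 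That is precisely why the $\mu(s)\le 2v_k(s)/T_k$ clause is in the definition of ``benign''; your outline never touches it.

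Your trajectory scheme can be made to work, but as written it has a gap. The single-bias UCRL2 decomposition ``$\sum_t(p_k^1-p)(\cdot|s_t,\pi_t)h$ $+$ MDS $+$ boundary'' is the decomposition of $T_k\tilde\rho-\sum_t r_t=\Lambda_k^{(3)}+\Lambda_k^{(4)}$, not of $\Lambda_k^{(3)}=T_k(\tilde\rho-\rho)$ alone: one bias vector lets you expand $T_k\tilde\rho$ via Bellman, but $T_k\rho$ carries its own bias $h$, and the difference leaves exactly the cross term $(P-I)(\tilde h-h)$ that the paper kills with $\mu$. So either you are really proving a bound on $\Lambda_k^{(3)}+\Lambda_k^{(4)}$ (which is fine and yields the stated order, since Lemma~\ref{lemma:bound_fourth_term} controls $\Lambda_k^{(4)}$ separately), or you must carry \emph{both} $\tilde h$ and $h$ through the martingale/telescoping and run the covering argument twice. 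A second point you elide: the Bellman equations live at $\bar\pi_k$, not at $\pi_t$; passing to $\pi_t$ in your main term requires the identity $\sum_{t} g(s_t,\bar\pi_k)=\sum_t g(s_t,\pi_t)$, which is immediate from the definition of $\bar\pi_k^2$ (and is exactly what the paper uses in the proof of Lemma~\ref{lemma:bound_fourth_term}), but it should be stated.
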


\begin{proof}[Proof of Theorem \ref{theorem:irreducible}]
The regret proof starts from the decomposition of \eqref{eqn:decompose_interval_regret}. $\Lambda_k^{(1)}$ is bounded with the help of Lemma \ref{lemma:EVI_short}: $\sum_k \Lambda_k^{(1)} \leq \sum_k T_k/\sqrt{t_k} = \mathcal{O}(\sqrt{T})$. $\sum_k\Lambda_k^{(2)}\leq 0$ by definition. Then with Lemma \ref{lemma:undefined},  \ref{lemma:bound_benign}, \ref{lemma:bound_fourth_term}, and \ref{lemma:typical}, we can bound $\Lambda_k^{(3)}$ and $\Lambda_k^{(4)}$ by $\tilde{\mathcal{O}}(D^3S^5A+DS\sqrt{AT})$.
\end{proof}

%#####################################################################################################################################################
\section{Analysis under Assumption \ref{assumption:ergodic}}
\label{section:Analysis under Assumption 2}
%#####################################################################################################################################################
In Section \ref{section:Analysis under Assumption 1}, the main ingredient of regret analysis lies in bounding the span of the bias vector, $\text{sp}(h(M_k^1, \pi_k^1, \bar{\pi}_k^2, \cdot))$. However, the same approach does not work because under the weaker Assumption \ref{assumption:ergodic}, we do not have a bound on the mean first passage time under arbitrary policy pairs. Hence we adopt the approach of approximating the average reward SG problem by a sequence of finite-horizon SGs: on a high level, first, with the help of Assumption \ref{assumption:ergodic}, we approximate the $T$ multiple of the original average-reward SG game value (i.e. the total reward in hindsight) with the sum of those of $H$-step episodic SGs; second, we resort to \cite{dann2015sample}'s results to bound the $H$-step SGs' sample complexity and translates it to regret. 

\textbf{Approximation by repeated episodic SGs.}\ \ \ For the approximation, the quantity $H$ does not appear in \textsc{UCSG} but only in the analysis. The horizon $T$ is divided into episodes each with length $H$. Index episodes with $i=1, ..., T/H$, and denote episode $i$'s first time step by $\tau_i$. We say $i\in \text{ph}(k)$ if all $H$ steps of episode $i$ lie in phase $k$. Define the $H$-step expected reward under joint policy $\pi$ with initial state $s$ as $V_H(M,\pi,s)\coloneqq \mathbb{E}\left[\sum_{t=1}^H r_t| a_t\sim \pi ,s_1=s\right]$. %\footnote{Note that $\pi$ can be any policy in $\Pi^{\text{HR}}$.}
Now we decompose the regret in phase $k$ as 
\begin{align}
&\Delta_k \coloneqq T_k\rho^* - \sum_{t=t_k}^{t_{k+1}-1} r(s_t,a_t)\leq\sum_{n=1}^{6} \Delta_k^{(n)},\label{eqn:fh_decompose}
\end{align}
where
\begin{align}
&\scalebox{1}{$\Delta_k^{(1)}= \sum_{i\in\text{ph}(k)}H \Big(\rho^*-\min_{\pi^2}\rho(M_k^1,\pi^1_k,\pi^2,s_{\tau_i})\Big),$}\nonumber\\
&\scalebox{1}{$\Delta_k^{(2)}=\sum_{i\in\text{ph}(k)} \Big(H\min_{\pi^2}\rho(M_k^1,\pi^1_k,\pi^2,s_{\tau_i})-\min_{\pi^2}V_H(M_k^1, \pi_k^1, \pi^2, s_{\tau_i})\Big),$} \nonumber \\
&\scalebox{1}{$\Delta_k^{(3)}=\sum_{i\in\text{ph}(k)} \Big( \min_{\pi^2}V_H(M_k^1, \pi_k^1, \pi^2, s_{\tau_i}) - V_H(M_k^1, \pi_k^1, \pi^2_i, s_{\tau_i})\Big),$}\nonumber\\
&\scalebox{1}{$\Delta_k^{(4)}=\sum_{i\in\text{ph}(k)}\Big(V_H(M_k^1, \pi_k^1, \pi_i^2, s_{\tau_i})-V_H(M, \pi_k^1, \pi_i^2, s_{\tau_i})\Big),$}\nonumber\\
&\scalebox{1}{$\Delta_k^{(5)}=\sum_{i\in\text{ph}(k)}\Big(V_H(M,\pi_k^1, \pi_i^2, s_{\tau_i})-\sum_{t=\tau_i}^{\tau_{i+1}-1}r(s_t,a_t)\Big), \ \ \  \Delta_k^{(6)}=2H$}. \nonumber
\end{align}
Here, $\pi_i^2$ denotes Player 2's policy in episode $i$, which may be non-stationary. $\Delta_k^{(6)}$ comes from the possible two incomplete episodes in phase $k$. $\Delta_k^{(1)}$ is related to the tolerance level we set for the \textsc{Maximin-EVI} algorithm: $\Delta_k^{(1)}\leq T_k \gamma_k=T_k/\sqrt{t_k}$. $\Delta_k^{(2)}$ is an error caused by approximating an infinite-horizon SG by a repeated episodic $H$-step SG (with possibly different initial states). $\Delta_k^{(3)}$ is clearly non-positive. It remains to bound $\Delta_k^{(2)},\Delta_k^{(4)}$ and $\Delta_k^{(5)}$. 

\begin{lemma}
\label{theorem:FH_fifth_term}
By Azuma-Hoeffding's inequality, $\sum_k \Delta_k^{(5)}\leq \tilde{\mathcal{O}}(\sqrt{HT})$ with high probability. 
%\begin{align*}
%\sum_k \Delta_k^{(5)}\leq \tilde{\mathcal{O}}(\sqrt{T}).
%&\sum_k\sum_{i\in\text{ph}(k)}\left(V_H(M,\pi_k^1, \pi_i^2, s_{\tau_i})-\sum_{t=\tau_i}^{\tau_{i+1}-1}r(s_t,a_t)\right) \\
%&\ \ \ \ \ \ \ \ \ \ \ \ \ \ \ \ \ \ \ \ \ \ \ \ \ \ \ \ \ \ \ \ \ \ \ \ \ \ \ \ \ \ \ \ \ \ \ \ \ \ \ \ \ \ \ \ \ \ \ \ \ \ \ \ \ \leq \tilde{\mathcal{O}}(\sqrt{T}).
%\end{align*}
\end{lemma}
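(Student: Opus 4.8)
The plan is to recognize $\sum_k \Delta_k^{(5)}$ as a sum of martingale differences and apply Azuma--Hoeffding's inequality after controlling the number of episodes. Fix an episode $i$ lying in some phase $k$, and recall that $\Delta_k^{(5)}$ sums, over $i \in \text{ph}(k)$, the terms $V_H(M, \pi_k^1, \pi_i^2, s_{\tau_i}) - \sum_{t=\tau_i}^{\tau_{i+1}-1} r(s_t, a_t)$. The key observation is that $V_H(M, \pi_k^1, \pi_i^2, s_{\tau_i})$ is exactly the conditional expectation of the realized episode return $\sum_{t=\tau_i}^{\tau_{i+1}-1} r(s_t, a_t)$ given the history $H_{\tau_i}$ up to the start of episode $i$: the learner plays the fixed stationary policy $\pi_k^1$ throughout the episode, Player~2 plays $\pi_i^2$ (which may be history-dependent, but is a fixed function of the realized history within the episode once $H_{\tau_i}$ is fixed, or at worst we take $V_H$ to be defined with respect to the actual realized $\pi_i^2$), and transitions follow the true model $M$. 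Hence each summand has zero conditional mean given the $\sigma$-algebra generated by $H_{\tau_i}$.

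First I would set up the filtration: let $\mathcal{F}_i = \sigma(H_{\tau_i})$, so that $\mathcal{F}_1 \subseteq \mathcal{F}_2 \subseteq \cdots$. Define $X_i := V_H(M, \pi_k^1, \pi_i^2, s_{\tau_i}) - \sum_{t=\tau_i}^{\tau_{i+1}-1} r(s_t, a_t)$ for each episode index $i = 1, \dots, \lfloor T/H \rfloor$. By the observation above, $\mathbb{E}[X_i \mid \mathcal{F}_i] = 0$, so $\{X_i\}$ is a martingale difference sequence with respect to $\{\mathcal{F}_{i+1}\}$. Since each one-step reward lies in $[0,1]$ and each episode has exactly $H$ steps, we have $|X_i| \le H$ deterministically. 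Then I would apply Azuma--Hoeffding: with probability at least $1 - \delta$,
\begin{align*}
\sum_{i=1}^{\lfloor T/H \rfloor} X_i \leq H\sqrt{2 \lfloor T/H \rfloor \ln(1/\delta)} \leq \sqrt{2HT\ln(1/\delta)} = \tilde{\mathcal{O}}(\sqrt{HT}).
\end{align*}
Finally, since $\sum_k \Delta_k^{(5)} = \sum_k \sum_{i \in \text{ph}(k)} X_i$ is a subsum of $\sum_i X_i$ (it omits the at most $2$ incomplete episodes per phase, which is precisely why the separate $\Delta_k^{(6)} = 2H$ correction term was introduced), the same bound applies — or one simply notes the sum ranges over all complete episodes and is therefore exactly $\sum_i X_i$ up to the boundary episodes already accounted for elsewhere.

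The only mildly delicate point — and the one I would be most careful about — is the measurability claim that $V_H(M, \pi_k^1, \pi_i^2, s_{\tau_i})$ is genuinely $\mathcal{F}_i$-measurable and equals the true conditional expectation of the realized return. This requires that, conditioned on $H_{\tau_i}$, the realized return over episode $i$ is generated by the Markov dynamics of $M$ under the pair $(\pi_k^1, \pi_i^2)$; one must be slightly careful that $\pi_i^2$, being history-dependent, is still a well-defined (random) policy measurable with respect to the within-episode history, so that $V_H(M,\pi_k^1,\pi_i^2,s_{\tau_i})$ — interpreted as the expected $H$-step return of the actual (possibly history-dependent) $\pi_i^2$ started from $s_{\tau_i}$ — is the correct conditional expectation. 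Once this is granted, the martingale structure and the boundedness $|X_i| \le H$ are immediate, and there are no further obstacles; the rest is the routine Azuma--Hoeffding computation above. I would also remark that the union bound cost over the $\lfloor T/H\rfloor$ episodes (or over a single application) is absorbed into the $\tilde{\mathcal{O}}$ notation as a $\log(1/\delta)$ factor, consistent with the paper's convention for ``with high probability''.
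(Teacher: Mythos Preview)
Your proposal is correct and follows essentially the same approach as the paper: recognize each episode's term as a martingale difference (since $V_H(M,\pi_k^1,\pi_i^2,s_{\tau_i})$ is the conditional expectation of the realized $H$-step return), bound each increment by $H$, and apply Azuma--Hoeffding over at most $T/H$ episodes to obtain $\tilde{\mathcal{O}}(\sqrt{HT})$. Your treatment is in fact more careful than the paper's terse version, particularly regarding the filtration setup and the measurability of $V_H$ when $\pi_i^2$ is history-dependent.
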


%\subsection{Bounding the second term of \eqref{eqn:fh_decompose}}
\begin{lemma}
\label{lemma:FH_second_term}
Under Assumption \ref{assumption:ergodic}, $\sum_k \Delta_k^{(2)}\leq TD/H+\sum_{k}T_k\gamma_k.$
%\begin{align*}
%&\scalebox{0.85}{$\displaystyle\sum_k\sum_{i\in\text{ph}(k)}\left(H\min_{\pi^2}\rho(M^1_k,\pi^1_k,\pi^2,s_{\tau_i})-\min_{\pi^2}V_H(M_k^1, \pi_k^1, \pi^2, s_{\tau_i} )\right)$}\nonumber \\
%&\ \ \ \ \ \ \ \ \ \ \ \ \ \ \ \ \ \ \ \ \ \ \ \ \ \ \ \ \ \ \ \ \ \ \ \ \ \ \ \ \ \ \ \ \leq TD/H+\sum_{k}T_k\gamma_k.
%\end{align*}
%where $\gamma_k$ specifies the tolerance of the stopping criterion in phase $k$.
\end{lemma}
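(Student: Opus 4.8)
The plan is to bound each per-episode term of $\Delta_k^{(2)}$ by $\spa(f_k)+H\gamma_k$, where $f_k$ is the bias vector produced by \textsc{Maximin-EVI} in phase~$k$, then use $\spa(f_k)\le D$ and sum over episodes and phases. Throughout I would work on the event that $M\in\mathcal{M}_k$ for all $k$, which holds w.h.p.\ by Lemma~\ref{lemma:bound_fail}.

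First I would unroll the optimality equation. \textsc{Maximin-EVI} returns, besides $(M_k^1,\pi_k^1)$, a bounded vector $f_k$ such that $\pi_k^1$ is the $\solve_1$ (maximin-greedy) policy of the matrix game $\val\{r+P^+f_k\}$ and $\val\{r+P^+f_k\}$ is within $\gamma_k$ of $\rho^*(M^+)\,\mathbf{e}+f_k$ coordinatewise, $M^+$ being phase~$k$'s extended SG. Since $\pi_k^1$ is the $\solve_1$ policy, for \emph{every} distribution $\pi^2$ Player~2 may use at a state $s$ one has $r(s,\pi_k^1,\pi^2)+\sum_{s'}p_k^1(s'\vert s,\pi_k^1,\pi^2)f_k(s')\ge\rho^*(M^+)+f_k(s)-\gamma_k$, i.e.\ in vector form $r(\cdot,\pi_k^1,\pi^2)+Pf_k\ge(\rho^*(M^+)-\gamma_k)\mathbf{e}+f_k$ for all $\pi^2$, where $P=P_k^1(\cdot\vert\cdot,\pi_k^1,\pi^2)$ is $M_k^1$'s one-step transition matrix. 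Because this holds for every one-step distribution, it applies to each time-step conditional of an arbitrary (possibly history-dependent) $\pi^2$; chaining it along the $H$ steps of an episode and telescoping the bias terms (using that stochastic matrices fix $\mathbf{e}$) yields
\begin{align*}
V_H(M_k^1,\pi_k^1,\pi^2,s)\ \ge\ H\bigl(\rho^*(M^+)-\gamma_k\bigr)+f_k(s)-\max_{s'}f_k(s')\ \ge\ H\rho^*(M^+)-H\gamma_k-\spa(f_k),
\end{align*}
for any $\pi^2$ and any $s$, hence $\min_{\pi^2}V_H(M_k^1,\pi_k^1,\pi^2,s)\ge H\rho^*(M^+)-H\gamma_k-\spa(f_k)$. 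On the other hand $\min_{\pi^2}\rho(M_k^1,\pi_k^1,\pi^2,s)\le\rho^*(M_k^1,s)\le\rho^*(M^+)$, since $M_k^1$ is $M^+$ with Player~1 restricted to a subset of actions and $\rho^*(M^+)$ is state-independent. Subtracting, each $i\in\text{ph}(k)$ contributes at most $\spa(f_k)+H\gamma_k$ to $\Delta_k^{(2)}$.

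Next I would establish $\spa(f_k)\le D$. Since $M\in\mathcal{M}_k$, the true model is embedded in $M^+$, so $M^+$ itself satisfies Assumption~\ref{assumption:ergodic}; and $f_k$ is, up to the $\gamma_k$ tolerance, the optimal bias of $M^+$, i.e.\ the bias of the saddle policy pair $(\pi_k^1,\pi_k^{2*})$. The navigation argument of \cite[Remark~8]{jaksch2010near} — also sketched in the footnote of Section~\ref{Regret decomposition and the introduction of pi2} — then applies: if $f_k(s)-f_k(s')>D$, Player~1 could strictly improve the value at $s'$ by first steering from $s'$ to $s$ in expected time $\le D$ against the \emph{stationary} best response $\pi_k^{2*}$ (Assumption~\ref{assumption:ergodic} guarantees such a Player-1 navigation policy exists against any stationary Player~2 policy) and then switching to $\pi_k^1$, contradicting optimality. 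Hence $\spa(f_k)\le D$ (up to an $O(\gamma_k)$ term absorbed below).

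Finally I would sum: $\Delta_k^{(2)}\le|\text{ph}(k)|\bigl(D+H\gamma_k\bigr)$; distinct episodes lie in at most one $\text{ph}(k)$, so $\sum_k|\text{ph}(k)|\le T/H$ and $|\text{ph}(k)|H\le T_k$, giving $\sum_k\Delta_k^{(2)}\le TD/H+\sum_k T_k\gamma_k$. The main obstacle is the step $\spa(f_k)\le D$ under the \emph{weaker} Assumption~\ref{assumption:ergodic}: unlike under Assumption~\ref{assumption:irreducible}, Player~1 is only promised a navigation policy against a \emph{fixed} opponent, so the improvement argument must be carried out specifically against the stationary best response to $\pi_k^1$ in $M^+$, and one must check both that this best response is stationary and that interleaving it with a navigation policy does not corrupt the bias comparison. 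Once that is in place, unrolling the optimality equation and carrying the $\gamma_k$ error terms is routine.
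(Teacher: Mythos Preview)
Your approach is essentially the paper's: both extract from the \textsc{Maximin-EVI} stopping criterion the one-step inequality $r(s,\pi_k^1,\pi^2)+\sum_{s'}p_k^1(s'\vert s,\pi_k^1,\pi^2)f_k(s')\ge f_k(s)+\rho^*(M^+)-\gamma_k$ (valid for every $\pi^2$), unroll it $H$ times to lower-bound $\min_{\pi^2}V_H(M_k^1,\pi_k^1,\pi^2,s)$ by $H\rho^*(M^+)-H\gamma_k-\spa(f_k)$, use $\rho^*(M^+)\ge\min_{\pi^2}\rho(M_k^1,\pi_k^1,\pi^2,s)$, and sum. Your telescoping for a fixed $\pi^2$ followed by taking the minimum is equivalent to the paper's explicit dynamic-programming induction on $u_i$.

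The one loose joint is the span bound. You identify $f_k=v_N$ with the bias of $(\pi_k^1,\pi_k^{2*})$ ``up to $O(\gamma_k)$'' and then run the navigation argument on that bias. But the stopping criterion only controls $\spa(v_{N+1}-v_N)$, not the distance from $v_N$ to the bias modulo constants, so the claimed $O(\gamma_k)$ slack is not immediate. The paper sidesteps this entirely via Lemma~\ref{lemma:bounded_span}, which shows $\spa(v_i)\le D$ for \emph{every} iterate $i$ by reading $v_i$ as the $i$-step maximin value of the Schweitzer-transformed $M^+$ and applying the Remark-8-style navigation argument directly to that finite-horizon quantity. Invoking that lemma closes your argument cleanly without needing to relate $v_N$ to any limiting bias vector.
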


\textbf{From sample complexity to regret bound.}\ \ \ \ \  As the main contributor of regret, $\Delta_k^{(4)}$ corresponds to the inaccuracy in the transition probability estimation. Here we largely reuse \cite{dann2015sample}'s results where they consider one-player episodic MDP with a fixed initial state distribution. Their main lemma states that the number of episodes in phases such that $\lvert V_H(M_k^1,\pi_k, s_0)-V_H(M,\pi_k, s_0)\rvert>\varepsilon$ will not exceed $\tilde{\mathcal{O}}\left(H^2S^2A/\varepsilon^2\right)$, where $s_0$ is their initial state in each episode. In other words, $\sum_k \frac{T_k}{H}\mathbbm{1}\{\lvert V_H(M_k^1,\pi_k, s_0)-V_H(M,\pi_k, s_0)\rvert>\varepsilon\}=\tilde{\mathcal{O}}(H^2S^2A/\varepsilon^2).$ Note that their proof allows $\pi_k$ to be an arbitrarily selected non-stationary policy for phase $k$.  

We can directly utilize their analysis and we summarize it as Theorem \ref{theorem:PAC} in the appendix. While their algorithm has an input $\varepsilon$, this input can be removed without affecting bounds. This means that the PAC bounds holds for arbitrarily selected $\varepsilon$. With the help of Theorem \ref{theorem:PAC}, we have
\begin{lemma}
\label{theorem:FH_fourth_term}
$\sum_k \Delta_k^{(4)} \leq \tilde{\mathcal{O}}(S\sqrt{HAT}+HS^2A)$ with high probability. 
%\begin{align*}
%&\sum_k \sum_{i\in \text{ph}(k)} V_H(M_k^1, \pi_k^1, \pi_i^2, s_{\tau_i}) - V_H(M, \pi_k^1, \pi_i^2, s_{\tau_i}) \nonumber \\
%&\ \ \ \ \ \ \ \ \ \ \ \ \ \ \ \ \ \ \ \ \ \ \ \ \ \ \ \ \ \ \ \ \ \ \ \ \ \ \ \ \ \ \ = \tilde{\mathcal{O}}(S\sqrt{HAT}+HS^2A).
%\end{align*}
\end{lemma}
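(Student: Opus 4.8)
The plan is to reduce $\sum_k\Delta_k^{(4)}$ to the episodic PAC guarantee of \cite{dann2015sample} (summarized as Theorem~\ref{theorem:PAC}) and then convert that PAC bound into a $\sqrt{T}$-type regret bound by a layer-cake (dyadic-threshold) argument. First I would condition on the high-probability event of Lemma~\ref{lemma:bound_fail}, so that $M\in\mathcal{M}_k$ for every phase $k$; since also $M_k^1\in\mathcal{M}_k$, for every $(s,a)$ the vectors $p_k^1(\cdot\vert s,a)$ and $p(\cdot\vert s,a)$ both lie in $\mathcal{P}_k(s,a)$ and hence differ by at most twice its radius. This is exactly the regime \cite{dann2015sample} analyze: their argument bounds the per-episode discrepancy $\mathrm{err}_i\coloneqq\lvert V_H(M_k^1,\pi_k^1,\pi_i^2,s_{\tau_i})-V_H(M,\pi_k^1,\pi_i^2,s_{\tau_i})\rvert$ by a sum of confidence-region widths along the episode's trajectory, and — as already noted in the text — their proof is insensitive to the (non-stationary) joint policy used in a phase and to the episode's initial state. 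So Theorem~\ref{theorem:PAC} applies here with $\pi_k^1$ fixed within phase $k$ and $\pi_i^2$, $s_{\tau_i}$ arbitrary, giving: for every $\varepsilon>0$, the number of episodes (over all phases) with $\mathrm{err}_i>\varepsilon$ is at most $N(\varepsilon)=\tilde{\mathcal{O}}(H^2S^2A/\varepsilon^2)$.

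Next I would not commit to one $\varepsilon$ but instead write $\sum_k\Delta_k^{(4)}\le\sum_i\mathrm{err}_i$ (each $\mathrm{err}_i\in[0,H]$ and there are $T/H$ episodes) and use the layer-cake identity $\sum_i\mathrm{err}_i=\int_0^H\lvert\{i:\mathrm{err}_i>\varepsilon\}\rvert\,d\varepsilon\le\int_0^H\min\!\big(T/H,\ cH^2S^2A/\varepsilon^2\big)\,d\varepsilon$, equivalently a geometric sum over $\varepsilon=H,H/2,H/4,\dots$. Splitting at the crossover $\varepsilon_0=\Theta(\sqrt{H^3S^2A/T})$, the small-$\varepsilon$ part contributes $\varepsilon_0\cdot(T/H)=\tilde{\mathcal{O}}(S\sqrt{HAT})$ and the large-$\varepsilon$ part contributes $\Theta(H^2S^2A/\varepsilon_0)=\tilde{\mathcal{O}}(S\sqrt{HAT})$. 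The additive $\tilde{\mathcal{O}}(HS^2A)$ then absorbs the low-order contributions: the coarsest scales of $N(\varepsilon)$, where up to $\tilde{\mathcal{O}}(S^2A)$ episodes of length $H$ can still be "bad", together with the second-order Bernstein terms in the confidence regions $\textsc{conf}_2$ (the $\tfrac{7}{3(n-1)}\ln\tfrac{6}{\delta_1}$ piece), handled exactly as in the single-player accounting of \cite{dann2015sample,jaksch2010near}. Adding a further $\delta$ for the failure event of Lemma~\ref{lemma:bound_fail} yields $\sum_k\Delta_k^{(4)}\le\tilde{\mathcal{O}}(S\sqrt{HAT}+HS^2A)$ with high probability.

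The step I expect to be the main obstacle is the reduction in the first paragraph: making precise that the fixed-horizon PAC analysis of \cite{dann2015sample}, written for a single agent with a fixed initial-state distribution and i.i.d.\ episodes, transfers verbatim when (i) the opponent injects an arbitrary non-stationary policy $\pi_i^2$ that the learner only observes, so the joint policy driving episode $i$ is $(\pi_k^1,\pi_i^2)$ with the second component changing within a phase, and (ii) the initial states $s_{\tau_i}$ are whatever the ongoing (non-episodic) trajectory produces rather than fresh draws. The text asserts this, but verifying that the relevant martingale concentration steps, union bounds, and the visit-count doubling argument never use stationarity of the policy or independence of the initial states is where the care lies; once Theorem~\ref{theorem:PAC} is available in this generality, everything after it is the routine PAC-to-regret calculus above.
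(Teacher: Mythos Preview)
Your proposal is correct and follows essentially the same approach as the paper: invoke Theorem~\ref{theorem:PAC} at a geometric grid of accuracy levels, split the episodes at the crossover $\varepsilon_0=\min\{H,\sqrt{H^3S^2A/T}\}$, and combine the $\varepsilon_0\cdot T/H$ and $H^2S^2A/\varepsilon_0$ pieces to get $\tilde{\mathcal{O}}(S\sqrt{HAT}+HS^2A)$. The only cosmetic difference is that the paper writes the dyadic sum explicitly (with a union bound over $\lceil\log_2(H/\varepsilon_0)\rceil$ thresholds and $\delta_0=\delta/\lceil\log_2(H/\varepsilon_0)\rceil$) rather than the layer-cake integral, and the additive $HS^2A$ there simply arises from the cap $\varepsilon_0\le H$ when $T$ is small, not from the Bernstein second-order terms per se; your identification of the main obstacle (transferring \cite{dann2015sample} to non-stationary $\pi_i^2$ and varying $s_{\tau_i}$) is exactly what the paper spells out in the definitions preceding and the proofs of Lemmas~\ref{lemma:bad_event}--\ref{lemma:good_event}.
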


\begin{proof}[Proof of Theorem \ref{regret_finite_horizon_variant}]
With the decomposition \eqref{eqn:fh_decompose} and the help of Lemma \ref{theorem:FH_fifth_term}, \ref{lemma:FH_second_term}, and \ref{theorem:FH_fourth_term}, the regret is bounded by $\tilde{O}(\frac{TD}{H}+S\sqrt{HAT}+S^2AH)=\tilde{\mathcal{O}}(\sqrt[3]{DS^2AT^2})$ by selecting $H=\max\{D, \sqrt[3]{D^2T/(S^2A)}\}$.
\end{proof}
%#############################################################
%Offline Training
%#############################################################

\section{Sample Complexity of Offline Training}
\label{section:Sample Complexity of Offline Training}
In Section \ref{section:Two Settings and Results Overview}, we defined $L_\varepsilon$ to be the sample complexity of Player 1's maximin policy. In our offline version of \textsc{UCSG}, in each phase $k$ we let both players each select their own optimistic policy. After Player 1 has optimistically selected $\pi^1_k$, Player 2 then optimistically selects his policy $\pi^2_k$ based on the known $\pi_k^1$. Specifically, the model-policy pair $\left(M_k^2, \pi_k^2\right)$ is obtained by another extended value iteration on the extended MDP under fixed $\pi_k^1$, where Player 2's action set is extended. By setting the stopping threshold also as $\gamma_k$, we have 
\begin{align}
\rho(M_k^2, \pi_k^1, \pi_k^2, s) \leq \min_{\tilde{M}\in  \mathcal{M}_k} \min_{\pi^2} \rho(\tilde{M}, \pi_k^1, \pi^2, s)+\gamma_k \label{eqn:pessimistic_selection}
\end{align}
when value iteration halts. 
%To bound $\text{L}_\varepsilon$, we define
%\begin{align}
%&\text{Reg}_\varepsilon^{\text{(off)}}\coloneqq \sum_{k\in K_\varepsilon} T_k(\rho^*(M)-\min_{\pi^2}\rho(M,\pi_t^1, \pi^2)) \nonumber \\
%&= \text{Reg}_\varepsilon^{\text{(on)}} + \sum_{k\in K_\varepsilon} \sum_{t=t_k}^{t_{k+1}-1} \left(r_t - \min_{\pi^2}\rho(M ,\pi_k^1, \pi^2)\right). \nonumber \\
%& + \sum_{k\in K_\varepsilon}\sum_{t=t_k}^{t_{k+1}-1}  \left(\rho(M_k^2, \pi_k^1, \pi_k^2, s_{t}) - \min_{\pi^2}\rho(M, \pi_k^1, \pi^2)\right), \nonumber
%\end{align}
%where $K_\varepsilon\coloneqq \{k: \rho^*(M)-\min_{\pi^2}\rho(M,\pi_k^1, \pi^2) > \varepsilon\}$, and $\text{Reg}_\varepsilon^{\text{(on)}}$ is defined as a summation similar to $\text{Reg}_T^{\text{(on)}}$ except that it is summed only over time steps in phases $k\in K_\varepsilon$.
%We argue that $\text{Reg}_\varepsilon^{\text{(off)}}$'s order does not exceed that of $\text{Reg}_\varepsilon^{\text{(on)}}$, and $\text{Reg}_\varepsilon^{\text{(on)}}$'s upper bound is similar to that of $\text{Reg}_T^{\text{(on)}}$ except that the dependency on $T$ is changed to $L_\varepsilon$. Then we can use the already established bound for  $\text{Reg}_\varepsilon^{\text{(on)}}$ to upper bound $L_\varepsilon$. The first summation is similar to \eqref{eqn:decompose_interval_regret}'s last two terms, or the main term in one-player MDP's regret analysis (e.g., in \cite{jaksch2010near}). For completeness, we provide its bound in the appendix. Formally, we have the following two Theorems. 
With this selection rule, we are able to obtain the following theorems. 
\begin{theorem}
\label{theorem:offline_irreducible}
Under Assumption \ref{assumption:irreducible}, \textsc{UCSG} achieves $L_\varepsilon=\tilde{\mathcal{O}}(D^3S^5A+D^2S^2A/\varepsilon^2)$ \textit{w.h.p.} 
\end{theorem}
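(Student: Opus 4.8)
The plan is to mirror the proof of Theorem~\ref{theorem:irreducible}, but to replace the regret accounting by a counting argument, exploiting a key simplification of the offline setting: Player~2 now plays a genuine \emph{stationary} policy $\pi_k^2$ within each phase. First I would write $L_\varepsilon=\sum_k T_k\,\mathbbm{1}\{\mathrm{gap}_k>\varepsilon\}$, where $\mathrm{gap}_k\coloneqq\rho^*(M)-\min_{\pi^2}\rho(M,\pi_k^1,\pi^2)$; under Assumption~\ref{assumption:irreducible} this is a well-defined, initial-state-independent quantity, since for the fixed stationary $\pi_k^1$ every $\pi^2$ induces an irreducible chain on $M$. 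Conditioning on $M\in\mathcal M_k$ for all $k$ (Lemma~\ref{lemma:bound_fail}), I combine the optimism of Player~1 (Lemma~\ref{lemma:EVI_short}: $\rho^*(M)\le\min_{\pi^2}\rho(M_k^1,\pi_k^1,\pi^2,s_{t_k})+\gamma_k\le\rho(M_k^1,\pi_k^1,\pi_k^2,s_{t_k})+\gamma_k$) with the pessimism of Player~2 (\eqref{eqn:pessimistic_selection} and $M\in\mathcal M_k$: $\min_{\pi^2}\rho(M,\pi_k^1,\pi^2)\ge\rho(M_k^2,\pi_k^1,\pi_k^2,s_{t_k})-\gamma_k$). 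Subtracting and inserting $\rho(M,\pi_k^1,\pi_k^2)$,
\[
\mathrm{gap}_k\ \le\ \bigl[\rho(M_k^1,\pi_k^1,\pi_k^2,s_{t_k})-\rho(M,\pi_k^1,\pi_k^2)\bigr]+\bigl[\rho(M,\pi_k^1,\pi_k^2)-\rho(M_k^2,\pi_k^1,\pi_k^2,s_{t_k})\bigr]+2\gamma_k.
\]
Each bracketed term is a difference of the average reward of the \emph{same} stationary policy pair $(\pi_k^1,\pi_k^2)$ under two models that both lie in $\mathcal M_k$; hence each is, up to sign, exactly the offline incarnation of $\Lambda_k^{(3)}$ from Section~\ref{section:Analysis under Assumption 1} (the imaginary $\bar\pi_k^2$ is now literally $\pi_k^2$), with $M_k^2$ playing in the second term the role $M_k^1$ plays in the first.

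Next I would port the ``benign phase'' machinery: call phase $k$ benign when $M\in\mathcal M_k$, both $\spa\bigl(h(M_k^1,\pi_k^1,\pi_k^2,\cdot)\bigr)\le 2D$ and $\spa\bigl(h(M_k^2,\pi_k^1,\pi_k^2,\cdot)\bigr)\le 2D$, $\mu(M,\pi_k^1,\pi_k^2,s)\le 2v_k(s)/T_k$ for all $s$, and $v_k(s,a)\ge\tfrac12 v_k(s)\pi_k(a\vert s)$ for every visited pair $(s,a)$. Because $\pi_k^2$ is genuinely stationary, the ``$\bar\pi_k^2$ undefined'' complication (Lemma~\ref{lemma:undefined}) disappears, so this step is cleaner than its online counterpart: running Lemma~\ref{lemma:epsilon_accurate} (and its verbatim analogue for the kernel $p_k^2$ of $M_k^2\in\mathcal M_k$), the mean-first-passage perturbation bound of Appendix~\ref{subsection:perturbation}, and Assumption~\ref{assumption:irreducible} (so $T^{\pi_k^1,\pi_k^2}(M)\le D$ and, via Remark~\ref{remark:mean_first_passage_time}, $\spa(h(M,\pi_k^1,\pi_k^2,\cdot))\le D$), I obtain $\sum_k T_k\,\mathbbm{1}\{k\text{ not benign}\}\le\tilde{\mathcal O}(D^3S^5A)$; adding the span bound for $M_k^2$ only costs another term of the same order.

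For the benign phases with $\mathrm{gap}_k>\varepsilon$, the span/perturbation bounds give the $\Lambda_k^{(3)}$-style estimate $T_k\xi_k^{(j)}\le\tilde{\mathcal O}(D)\sum_{t=t_k}^{t_{k+1}-1}\sqrt{S/n_k(s_t,a_t)}+(\text{$T$-independent lower order})$, the lower-order part being absorbed into an additive $\tilde{\mathcal O}(D^3S^5A)$-sized term; here the condition $v_k(s,a)\ge\tfrac12 v_k(s)\pi_k(a\vert s)$ is what lets me trade the would-be trajectory martingale for a finite count instead of a $\sqrt T$ term. Phases with $\gamma_k>\varepsilon/8$ cover only the first $\tilde{\mathcal O}(1/\varepsilon^2)$ steps. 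For the remaining phases $\mathcal K$ (benign, $\mathrm{gap}_k>\varepsilon$, $\gamma_k\le\varepsilon/8$) the estimate forces $T_k\le\tilde{\mathcal O}(D/\varepsilon)\sum_{t=t_k}^{t_{k+1}-1}\sqrt{S/n_k(s_t,a_t)}$, so with $L'\coloneqq\sum_{k\in\mathcal K}T_k$, two applications of Cauchy--Schwarz plus the elementary harmonic-sum bound $\sum_k v_k(s,a)/n_k(s,a)=\tilde{\mathcal O}(1)$ over the doubling schedule yield
\[
\sum_{k\in\mathcal K}\sum_{t=t_k}^{t_{k+1}-1}\sqrt{\tfrac{S}{n_k(s_t,a_t)}}=\sqrt S\sum_{s,a}\sum_{k\in\mathcal K}\frac{v_k(s,a)}{\sqrt{n_k(s,a)}}\le\tilde{\mathcal O}\bigl(S\sqrt A\bigr)\sqrt{L'},
\]
whence $L'\le\tilde{\mathcal O}(D/\varepsilon)\,\tilde{\mathcal O}(S\sqrt A)\sqrt{L'}$, i.e. $L'\le\tilde{\mathcal O}(D^2S^2A/\varepsilon^2)$. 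Summing the three contributions gives $L_\varepsilon\le\tilde{\mathcal O}(D^3S^5A+D^2S^2A/\varepsilon^2)$.

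The step I expect to be the main obstacle is exactly this passage from a $\sqrt T$-type regret estimate to a horizon-free sample-complexity bound: one must (i) keep every per-phase estimate expressed through the \emph{realised} pairs $(s_t,a_t)$ so that no additive trajectory martingale survives (this is what the visit-count concentration inside ``benign'' buys), and (ii) carry out the self-bounding by restricting the UCRL2 telescoping sum to $\mathcal K$ and invoking $\sum_k v_k(s,a)/n_k(s,a)=\tilde{\mathcal O}(1)$ in place of the cruder $\sum_k v_k(s,a)/\sqrt{n_k(s,a)}=\tilde{\mathcal O}(\sqrt{N(s,a)})$. A secondary nuisance is that two optimistic models are simultaneously in play --- $M_k^1$ (optimism for Player~1) and $M_k^2$ (pessimism for Player~1, from \eqref{eqn:pessimistic_selection}) --- so each span and perturbation estimate must be run for both; since both belong to $\mathcal M_k$, this doubles the bookkeeping but not the difficulty.
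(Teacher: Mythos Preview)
Your proposal is correct and shares the paper's overall scaffolding --- the optimism/pessimism sandwich on $\mathrm{gap}_k$, the benign-phase restriction costing $\tilde{\mathcal O}(D^3S^5A)$, and the self-bounding inequality $\varepsilon L'\le\tilde{\mathcal O}(DS\sqrt{AL'})$ yielding $L'\le\tilde{\mathcal O}(D^2S^2A/\varepsilon^2)$ --- but the execution differs in two respects. First, the paper does \emph{not} decompose $\mathrm{gap}_k$ directly into two $\Lambda_k^{(3)}$-type model-difference terms; it instead routes through the online regret, writing $\mathrm{Reg}_\varepsilon^{(\mathrm{off})\prime}=\mathrm{Reg}_\varepsilon^{(\mathrm{on})\prime}+\sum_{k\in K_\varepsilon'}\sum_t\bigl(r_t-\min_{\pi^2}\rho(M,\pi_k^1,\pi^2)\bigr)$ and splitting the second sum into $\Lambda_k^{(5)},\Lambda_k^{(6)},\Lambda_k^{(7)}$ symmetric to $\Lambda_k^{(4)},\Lambda_k^{(3)},\Lambda_k^{(1)}$. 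This lets the paper reuse every online lemma verbatim, just restricted to $k\in K_\varepsilon'$; its Lemma~\ref{lemma:any_sequences} plays exactly the role of your Cauchy--Schwarz plus $\sum_k v_k/n_k=\tilde{\mathcal O}(1)$ step. Second, because the paper keeps the trajectory martingales from $\Lambda_k^{(4)}$ (now summed over $L_\varepsilon'$ steps) rather than eliminating them, it never needs your added benign condition $v_k(s,a)\ge\tfrac12 v_k(s)\pi_k(a\vert s)$: the passage from $\pi_k(a\vert s)$-weighting to realised-$(s_t,a_t)$ sums inside the $\Lambda_k^{(3)}$ bound is handled by a \emph{per-phase} Bernstein inequality (their~\eqref{eqn:e_q_q_bound}) whose additive $\log$-cost totals $\tilde{\mathcal O}(DS^2A)$ over any subset of phases. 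Your direct decomposition is arguably cleaner --- no trajectory rewards appear at all --- but the extra concentration condition you add would need a little more care than stated: as written it can fail for actions with very small $\pi_k(a\vert s)$ (where $v_k(s,a)$ may well be zero), so one should split by importance as in Lemma~\ref{lemma:epsilon_accurate} before invoking it.
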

\begin{theorem}
\label{theorem:offline_ergodic}
Let Assumption \ref{assumption:ergodic} hold, and further assume that $\displaystyle\max_{s,s'}\max_{\pi^1\in \Pi^{\text{SR}}}\min_{\pi^2\in \Pi^{\text{SR}}}T_{s\rightarrow s'}^{\pi^1, \pi^2}(M) \leq D$. Then \textsc{UCSG} achieves $L_\varepsilon=\tilde{\mathcal{O}}(DS^2A/\varepsilon^3)$ \textit{w.h.p.}
\end{theorem}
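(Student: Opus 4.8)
The plan is to mirror the regret analysis of Section \ref{section:Analysis under Assumption 2} (the Assumption \ref{assumption:ergodic} case), but now exploiting the pessimistic selection \eqref{eqn:pessimistic_selection} for Player 2 in the \emph{offline} version of \textsc{UCSG}, and converting the resulting regret-type bound into a bound on $L_\varepsilon=\sum_t \mathbbm{1}\{\rho^*(M)-\min_{\pi^2}\rho(M,\pi_t^1,\pi^2)>\varepsilon\}$. The key observation is that in the offline setting we control Player 2, so we may let Player 2 play $\pi_k^2$ on the true model; the quantity we must control at time $t$ is the suboptimality $\rho^*(M)-\min_{\pi^2}\rho(M,\pi_t^1,\pi^2)$, which by the optimistic choice of $(M_k^1,\pi_k^1)$ (Lemma \ref{lemma:EVI_short}) is sandwiched: $\rho^*(M)\le \min_{\pi^2}\rho(M_k^1,\pi_k^1,\pi^2,s)+\gamma_k$, while by \eqref{eqn:pessimistic_selection} the pessimistic pair $(M_k^2,\pi_k^2)$ gives a lower bound $\min_{\tilde M\in\mathcal M_k}\min_{\pi^2}\rho(\tilde M,\pi_k^1,\pi^2,s)\ge \rho(M_k^2,\pi_k^1,\pi_k^2,s)-\gamma_k$. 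Since $M\in\mathcal M_k$ w.h.p.\ (Lemma \ref{lemma:bound_fail}), $\min_{\pi^2}\rho(M,\pi_k^1,\pi^2,s)\ge \rho(M_k^2,\pi_k^1,\pi_k^2,s)-\gamma_k$. Hence the per-phase suboptimality is bounded by the "confidence width" $\min_{\pi^2}\rho(M_k^1,\pi_k^1,\pi^2,s)-\rho(M_k^2,\pi_k^1,\pi_k^2,s)+2\gamma_k$, i.e.\ by the gap between the optimistic and pessimistic model evaluations under the \emph{same} $\pi_k^1$.

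Next I would bound this confidence width exactly as $\Delta_k^{(2)},\Delta_k^{(4)},\Delta_k^{(5)}$ were bounded in Section \ref{section:Analysis under Assumption 2}: decompose into (i) an $H$-step-approximation error of order $D/H$ using the extra assumption $\max_{s,s'}\max_{\pi^1}\min_{\pi^2}T_{s\to s'}^{\pi^1,\pi^2}(M)\le D$ (this is the offline analogue of Lemma \ref{lemma:FH_second_term}; note the extra assumption is precisely what makes the $H$-step value a good proxy for the long-run average under the \emph{best} Player-2 response), and (ii) a transition-estimation error controlled by the PAC bound of \cite{dann2015sample} (Theorem \ref{theorem:PAC} in the appendix), which says the number of $H$-step episodes on which $|V_H(M_k^1,\pi_k,\cdot)-V_H(M,\pi_k,\cdot)|>\varepsil$ is $\tilde{\mathcal O}(H^2S^2A/\varepsilon^2)$, and symmetrically for $M_k^2$. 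Summing over episodes, the number of steps $t$ with per-step suboptimality exceeding $\varepsilon$ is $\tilde{\mathcal O}(H\cdot H^2S^2A/\varepsilon^2)=\tilde{\mathcal O}(H^3S^2A/\varepsilon^2)$ once $D/H$ and $\gamma_k$ are made $O(\varepsilon)$, which forces $H=\Theta(D/\varepsilon)$ (and $t_k$ large enough that $1/\sqrt{t_k}=O(\varepsilon)$, contributing a lower-order $\tilde{\mathcal O}(1/\varepsilon^2)$ term). Substituting $H=\Theta(D/\varepsilon)$ yields $L_\varepsilon=\tilde{\mathcal O}(D^3S^2A/\varepsilon^5)$ — but the target is $\tilde{\mathcal O}(DS^2A/\varepsilon^3)$, so the naive substitution is too lossy; the refinement is to observe that the suboptimality of $\pi_t^1$ is a long-run average quantity, so an episode "counts" toward $L_\varepsilon$ only when the \emph{per-episode average} reward gap exceeds $\varepsilon$, and the relevant PAC bound should be applied with tolerance $\varepsilon H$ on the $H$-step values rather than $\varepsilon$, replacing $H^2S^2A/(\varepsilon)^2$ by $H^2S^2A/(\varepsilon H)^2=S^2A/\varepsilon^2$ episodes, i.e.\ $\tilde{\mathcal O}(H\cdot S^2A/\varepsilon^2)=\tilde{\mathcal O}(DS^2A/\varepsilon^3)$ steps.

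Finally I would note that $L_\varepsilon$ as defined counts all bad steps, and since $\pi_t^1=\pi_{k(t)}^1$ is constant within a phase and phase lengths only grow, an explicit $\varepsilon$-maximin $\pi_t^1$ can be extracted (the argument referenced for Section \ref{section:Sample Complexity of Offline Training} and Appendix \ref{appendix:Proofs for Offline Training Complexity}); this part is routine. I expect the main obstacle to be step two, specifically getting the correct power of $\varepsilon$: one must be careful that the PAC guarantee of \cite{dann2015sample} is invoked at the right scale (tolerance proportional to $\varepsilon H$ on $H$-step values, equivalently $\varepsilon$ on normalized average rewards) and that the $H$-step-to-average-reward approximation error — which under Assumption \ref{assumption:ergodic} plus the extra assumption is $O(D/H)$ per episode — is balanced against it by the choice $H=\Theta(D/\varepsilon)$, so that neither the approximation bias nor the estimation variance dominates and the clean $\tilde{\mathcal O}(DS^2A/\varepsilon^3)$ emerges; handling the pessimistic side $(M_k^2,\pi_k^2)$ symmetrically, and checking that the extra diameter assumption (rather than the full strength of Assumption \ref{assumption:irreducible}) genuinely suffices for the $H$-step approximation of the \emph{maximin} value, are the delicate points.
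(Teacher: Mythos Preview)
Your proposal is correct and follows essentially the same route as the paper: sandwich the per-phase suboptimality between the optimistic and pessimistic evaluations via Lemma~\ref{lemma:EVI_short} and \eqref{eqn:pessimistic_selection}, reduce to the $H$-step episodic decomposition of Section~\ref{section:Analysis under Assumption 2} (with the symmetric terms $\Delta_k^{(7)}$--$\Delta_k^{(11)}$ on the pessimistic side, where the extra diameter assumption is exactly what makes the analogue of Lemma~\ref{lemma:FH_second_term} go through for Player~2), and then invoke the PAC bound of Theorem~\ref{theorem:PAC} at tolerance $\Theta(\varepsilon H)$ on the $H$-step values with $H=\Theta(D/\varepsilon)$ to obtain $\tilde{\mathcal{O}}(HS^2A/\varepsilon^2)=\tilde{\mathcal{O}}(DS^2A/\varepsilon^3)$. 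Your identification of the scale at which the PAC bound must be applied (tolerance $\varepsilon H$, not $\varepsilon$) is precisely the point the paper uses to avoid the lossy $\tilde{\mathcal{O}}(D^3S^2A/\varepsilon^5)$.
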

The algorithm can output a single stationary policy for Player 1 with the following guarantee: 
if we run the offline version of \textsc{UCSG} for $T>L_\varepsilon$ steps, the algorithm can output a single stationary policy that is $\varepsilon$-optimal. We show how to output this policy in the proofs of Theorem \ref{theorem:offline_irreducible} and \ref{theorem:offline_ergodic}. 

\section{Open Problems}
In this work, we obtain the regret of $\tilde{\mathcal{O}}(D^3S^5A+DS\sqrt{AT})$ and $\tilde{\mathcal{O}}(\sqrt[3]{DS^2AT})$ under different mixing assumptions. A natural open problem is how to improve these bounds on both asymptotic and constant terms. A lower bound of them can be inherited from the one-player MDP setting, which is $\Omega(\sqrt{DSAT})$ \cite{jaksch2010near}. 

Another open problem is that if we further weaken the assumptions to $\max_{s,s^\prime}\min_{\pi_1}\min_{\pi_2}T^{\pi^1,\pi^2}_{s\rightarrow s^\prime}\leq D$, can we still learn the SG? We have argued that if we only have this assumption, in general we cannot get sublinear regret in the online setting. However, it is still possible to obtain polynomial-time offline sample complexity if the two players cooperate to explore the state-action space. 

\section*{Acknowledgments}
We would like to thank all anonymous reviewers who have devoted their time for reviewing this work and giving us valuable feedbacks. We would like to give special thanks to the reviewer who reviewed this work's previous version in ICML; your detailed check of our proofs greatly improved the quality of this paper.

\newpage
\bibliographystyle{plain}
\bibliography{nips_2017}

\begin{thebibliography}{10}

\bibitem{abbasi2013online}
Yasin Abbasi, Peter~L Bartlett, Varun Kanade, Yevgeny Seldin, and Csaba
  Szepesv{\'a}ri.
\newblock Online learning in markov decision processes with adversarially
  chosen transition probability distributions.
\newblock In {\em Advances in Neural Information Processing Systems}, 2013.

\bibitem{auer2007logarithmic}
Peter Auer and Ronald Ortner.
\newblock Logarithmic online regret bounds for undiscounted reinforcement
  learning.
\newblock In {\em Advances in Neural Information Processing Systems}, 2007.

\bibitem{bartlett2009regal}
Peter~L Bartlett and Ambuj Tewari.
\newblock Regal: A regularization based algorithm for reinforcement learning in
  weakly communicating mdps.
\newblock In {\em Proceedings of Conference on Uncertainty in Artificial
  Intelligence}. AUAI Press, 2009.

\bibitem{bowling2001rational}
Michael Bowling and Manuela Veloso.
\newblock Rational and convergent learning in stochastic games.
\newblock In {\em International Joint Conference on Artificial Intelligence},
  2001.

\bibitem{brafman2002r}
Ronen~I Brafman and Moshe Tennenholtz.
\newblock R-max-a general polynomial time algorithm for near-optimal
  reinforcement learning.
\newblock {\em Journal of Machine Learning Research}, 2002.

\bibitem{bubeck2012best}
S{\'e}bastien Bubeck and Aleksandrs Slivkins.
\newblock The best of both worlds: Stochastic and adversarial bandits.
\newblock In {\em Conference on Learning Theory}, 2012.

\bibitem{cho2000Markov}
Grace~E Cho and Carl~D Meyer.
\newblock Markov chain sensitivity measured by mean first passage times.
\newblock {\em Linear Algebra and Its Applications}, 2000.

\bibitem{conitzer2007awesome}
Vincent Conitzer and Tuomas Sandholm.
\newblock Awesome: A general multiagent learning algorithm that converges in
  self-play and learns a best response against stationary opponents.
\newblock {\em Machine Learning}, 2007.

\bibitem{dann2015sample}
Christoph Dann and Emma Brunskill.
\newblock Sample complexity of episodic fixed-horizon reinforcement learning.
\newblock In {\em Advances in Neural Information Processing Systems}, 2015.

\bibitem{dick2014online}
Travis Dick, Andras Gyorgy, and Csaba Szepesvari.
\newblock Online learning in markov decision processes with changing cost
  sequences.
\newblock In {\em Proceedings of International Conference of Machine Learning},
  2014.

\bibitem{even2009online}
Eyal Even-Dar, Sham~M Kakade, and Yishay Mansour.
\newblock Online markov decision processes.
\newblock {\em Mathematics of Operations Research}, 2009.

\bibitem{federgruen1978Nperson}
Awi Federgruen.
\newblock On n-person stochastic games by denumerable state space.
\newblock {\em Advances in Applied Probability}, 1978.

\bibitem{garivier2016maximin}
Aur{\'e}lien Garivier, Emilie Kaufmann, and Wouter~M Koolen.
\newblock Maximin action identification: A new bandit framework for games.
\newblock In {\em Conference on Learning Theory}, pages 1028--1050, 2016.

\bibitem{hordijk1974dynamic}
Arie Hordijk.
\newblock Dynamic programming and markov potential theory.
\newblock {\em MC Tracts}, 1974.

\bibitem{hunter1982generalized}
Jeffrey~J Hunter.
\newblock Generalized inverses and their application to applied probability
  problems.
\newblock {\em Linear Algebra and Its Applications}, 1982.

\bibitem{hunter2005stationary}
Jeffrey~J Hunter.
\newblock Stationary distributions and mean first passage times of perturbed
  markov chains.
\newblock {\em Linear Algebra and Its Applications}, 2005.

\bibitem{Iyengar:2005robustdynamic}
Garud~N. Iyengar.
\newblock Robust dynamic programming.
\newblock {\em Math. Oper. Res.}, 30(2):257--280, 2005.

\bibitem{jaakkola1994convergence}
Tommi Jaakkola, Michael~I Jordan, and Satinder~P Singh.
\newblock On the convergence of stochastic iterative dynamic programming
  algorithms.
\newblock {\em Neural computation}, 1994.

\bibitem{jaksch2010near}
Thomas Jaksch, Ronald Ortner, and Peter Auer.
\newblock Near-optimal regret bounds for reinforcement learning.
\newblock {\em Journal of Machine Learning Research}, 2010.

\bibitem{kakade2003sample}
Sham~Machandranath Kakade et~al.
\newblock {\em On the sample complexity of reinforcement learning}.
\newblock PhD thesis, University of London London, England, 2003.

\bibitem{lagoudakis2002value}
Michail~G Lagoudakis and Ronald Parr.
\newblock Value function approximation in zero-sum markov games.
\newblock In {\em Proceedings of Conference on Uncertainty in Artificial
  Intelligence}. Morgan Kaufmann Publishers Inc., 2002.

\bibitem{lattimore2012pac}
Tor Lattimore and Marcus Hutter.
\newblock Pac bounds for discounted mdps.
\newblock In {\em International Conference on Algorithmic Learning Theory}.
  Springer, 2012.

\bibitem{lim2016robustMDP}
Shiau~Hong Lim, Huan Xu, and Shie Mannor.
\newblock Reinforcement learning in robust markov decision processes.
\newblock {\em Math. Oper. Res.}, 41(4):1325--1353, 2016.

\bibitem{littman1994markov}
Michael~L Littman.
\newblock Markov games as a framework for multi-agent reinforcement learning.
\newblock In {\em Proceedings of International Conference of Machine Learning},
  1994.

\bibitem{maurer2009empirical}
A~Maurer and M~Pontil.
\newblock Empirical bernstein bounds and sample variance penalization.
\newblock In {\em Conference on Learning Theory}, 2009.

\bibitem{mertens1981stochastic}
J-F Mertens and Abraham Neyman.
\newblock Stochastic games.
\newblock {\em International Journal of Game Theory}, 1981.

\bibitem{neu2010online}
Gergely Neu, Andras Antos, Andr{\'a}s Gy{\"o}rgy, and Csaba Szepesv{\'a}ri.
\newblock Online markov decision processes under bandit feedback.
\newblock In {\em Advances in Neural Information Processing Systems}, 2010.

\bibitem{neu2012adversarial}
Gergely Neu, Andr{\'a}s Gy{\"o}rgy, and Csaba Szepesv{\'a}ri.
\newblock The adversarial stochastic shortest path problem with unknown
  transition probabilities.
\newblock In {\em AISTATS}, 2012.

\bibitem{nilim2005robustcontrol}
Arnab Nilim and Laurent~El Ghaoui.
\newblock Robust control of markov decision processes with uncertain transition
  matrices.
\newblock {\em Math. Oper. Res.}, 53(5):780--798, 2005.

\bibitem{perolat2015approximate}
Julien Perolat, Bruno Scherrer, Bilal Piot, and Olivier Pietquin.
\newblock Approximate dynamic programming for two-player zero-sum markov games.
\newblock In {\em Proceedings of International Conference of Machine Learning},
  2015.

\bibitem{prasad2015two}
HL~Prasad, Prashanth LA, and Shalabh Bhatnagar.
\newblock Two-timescale algorithms for learning nash equilibria in general-sum
  stochastic games.
\newblock In {\em Proceedings of the 2015 International Conference on
  Autonomous Agents and Multiagent Systems}. International Foundation for
  Autonomous Agents and Multiagent Systems, 2015.

\bibitem{shapley1953stochastic}
Lloyd~S Shapley.
\newblock Stochastic games.
\newblock {\em Proceedings of the National Academy of Sciences}, 1953.

\bibitem{szepesvari1996generalized}
Csaba Szepesv{\'a}ri and Michael~L Littman.
\newblock Generalized markov decision processes: Dynamic-programming and
  reinforcement-learning algorithms.
\newblock In {\em Proceedings of International Conference of Machine Learning},
  1996.

\bibitem{van1980successive}
J~Van~der Wal.
\newblock Successive approximations for average reward markov games.
\newblock {\em International Journal of Game Theory}, 1980.

\bibitem{yu2009arbitrarily}
Jia~Yuan Yu and Shie Mannor.
\newblock Arbitrarily modulated markov decision processes.
\newblock In {\em Proceedings of Conference on Decision and Control}. IEEE,
  2009.

\end{thebibliography}
\newpage

%##################################################################
%##################################################################

%     Appendix

%##################################################################
%##################################################################

\appendix
\appendixwithtoc
% ##########################################
% Previous bounds
% ##########################################
\section{Previous Bounds for MDPs and SGs}
\label{appendix:previous_bound}
The techniques we use in this paper are most related to the probably approximately correct (PAC) analysis for RL algorithms. Some rather complete reviews of the related works are provided in \cite{jaksch2010near,  dann2015sample}. \cite{jaksch2010near} considers the average-reward MDP that is \textit{communicating} with bounded diameter $D$ (i.e., $\max_{s, s^\prime}\min_{\pi} T^{\pi}_{s\rightarrow s^\prime}(M)\leq D$, where $ T^{\pi}_{s\rightarrow s^\prime}(M)$ is defined as the expected time to reach from state $s$ to state $s^\prime$ under model $M$ and policy $\pi$). Their \textsc{UCRL2} algorithm achieves $\tilde{\mathcal{O}}(DS\sqrt{AT})$ regret upper bound, while still having a gap with the $\Omega(\sqrt{DSAT})$ lower bound. These bounds translate to $\tilde{\mathcal{O}}\left(\frac{D^2S^2A}{\varepsilon^2}\right)$ and $\Omega\Big(\frac{DSA}{\varepsilon^2}\Big)$ sample complexity. The additional $D$ dependency is resolved by \cite{lattimore2012pac, dann2015sample}, though in discounted and episodic settings respectively. These two works leverage the Bellman equation for local variance and obtained sample complexity bounds of order $\tilde{\mathcal{O}}\left(\frac{S^2A}{\varepsilon^2(1-\gamma)^3}\right)$ and $\tilde{\mathcal{O}}\left(\frac{H^2S^2A}{\varepsilon^2}\right)$ ($\gamma$: discount factor, $H$: fixed horizon length), making their gaps with the lower bounds $\Omega\left(\frac{SA}{\varepsilon^2(1-\gamma)^3}\right)$ and $\Omega\left(\frac{H^2SA}{\varepsilon^2}\right)$ remain only an order of $S$.  

The scenario that most resembles ours in the literature is that considered in \cite{brafman2002r}, who proposed the algorithm \textsc{R-max}. \textsc{R-max} is an optimism-based algorithm that can be used to learn stochastic games with arbitrary opponents. However, the algorithm depends on a parameter $\varepsilon$ and the $\varepsilon$-return mixing time $T_{\varepsilon}$ that need to be known in advance. This $\varepsilon$-return mixing time resembles our $\frac{D}{\varepsilon}$ in Assumption \ref{assumption:ergodic}. As a result, their $\tilde{\mathcal{O}}\left(\frac{T_{\varepsilon}^3S^2A}{\varepsilon^3}\right)$ translates to $\tilde{\mathcal{O}}\left(\frac{D^3S^2A}{\varepsilon^6}\right)$, while our bound is $\tilde{\mathcal{O}}\left(\frac{DS^2A}{\varepsilon^3}\right)$. Another difference lies in that the output policy of our algorithm is a stationary one, rather than a $T_\varepsilon$-step non-stationary policy as in \textsc{R-max}.

%##########################################
% Inequalities
%##########################################
\section{Inequalities}
\begin{lemma} (Azuma-Hoeffding's inequality. Theorem 4.2 of \cite{bubeck2012best})
\label{lemma:Azuma-Hoeffding's inequality}
Let $\mathcal{F}_1 \subseteq \cdots \subseteq \mathcal{F}_T$ be a filtration, and $Y_1, \cdots, Y_T$ real random variables such that $Y_t$ is $\mathcal{F}_t$-measurable, $\mathbbm{E}(Y_t\vert \mathcal{F}_{t-1})=0$ and $Y_t\in \left[A_t, A_t+c_t\right]$ where $A_t$ is a random variable $\mathcal{F}_{t-1}$-measurable and $c_t$ is a positive constant.  Then with probability at least $1-\delta$, 
$$\sum_{t=1}^{T}Y_t < \sqrt{\frac{\log(\delta^{-1})}{2}\sum_{t=1}^{T}c_t^2}.$$
\end{lemma}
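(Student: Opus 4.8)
The plan is to establish the one-sided tail bound $\Pr\left[S_T \ge u\right] \le \exp\left(-\frac{2u^2}{\sum_{t=1}^{T} c_t^2}\right)$, where $S_t \coloneqq \sum_{s=1}^{t} Y_s$, by the Chernoff method, and then invert it to read off the stated high-probability inequality. First I would fix $\lambda > 0$ and control the moment generating function $\mathbbm{E}\left[e^{\lambda S_T}\right]$ by peeling off one increment at a time: since $S_{T-1}$ is $\mathcal{F}_{T-1}$-measurable, the tower property gives $\mathbbm{E}\left[e^{\lambda S_T}\right] = \mathbbm{E}\left[e^{\lambda S_{T-1}}\,\mathbbm{E}\left[e^{\lambda Y_T} \mid \mathcal{F}_{T-1}\right]\right]$, and the full bound will follow by induction once the inner conditional factor is controlled.

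The crux is the single-increment estimate. Conditionally on $\mathcal{F}_{T-1}$, the variable $Y_T$ has mean zero and lies in the interval $[A_T, A_T + c_T]$ of width $c_T$, with $A_T$ being $\mathcal{F}_{T-1}$-measurable and therefore effectively a constant inside $\mathbbm{E}[\cdot \mid \mathcal{F}_{T-1}]$. Hoeffding's lemma then yields $\mathbbm{E}\left[e^{\lambda Y_T} \mid \mathcal{F}_{T-1}\right] \le \exp\left(\frac{\lambda^2 c_T^2}{8}\right)$, which I would prove inline: by convexity of $y \mapsto e^{\lambda y}$ on the interval, $e^{\lambda Y_T} \le \frac{A_T + c_T - Y_T}{c_T}\,e^{\lambda A_T} + \frac{Y_T - A_T}{c_T}\,e^{\lambda(A_T + c_T)}$; taking $\mathbbm{E}[\cdot \mid \mathcal{F}_{T-1}]$ kills the term linear in $Y_T$ because $\mathbbm{E}[Y_T \mid \mathcal{F}_{T-1}] = 0$, leaving the scalar inequality $\psi(u) \le u^2/8$ with $\psi(u) = -pu + \log(1 - p + p e^{u})$, $p = -A_T / c_T \in [0,1]$, and $u = \lambda c_T$; that inequality follows from $\psi(0) = \psi'(0) = 0$ together with $\psi''(u) \le 1/4$ via a second-order Taylor expansion.

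Iterating the single-increment estimate from $t = T$ down to $t = 1$ — the deterministic constants $c_t$ factor out cleanly at each step — gives $\mathbbm{E}\left[e^{\lambda S_T}\right] \le \exp\left(\frac{\lambda^2}{8}\sum_{t=1}^{T} c_t^2\right)$. Markov's inequality then yields, for every $\lambda > 0$, $\Pr[S_T \ge u] \le e^{-\lambda u}\,\mathbbm{E}\left[e^{\lambda S_T}\right] \le \exp\left(-\lambda u + \frac{\lambda^2}{8}\sum_{t=1}^{T} c_t^2\right)$; choosing $\lambda = 4u / \sum_{t=1}^{T} c_t^2$ to minimize the exponent gives $\Pr[S_T \ge u] \le \exp\left(-\frac{2u^2}{\sum_{t=1}^{T} c_t^2}\right)$. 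Setting the right-hand side equal to $\delta$ and solving for $u$ produces $u = \sqrt{\frac{1}{2}\log(\delta^{-1})\sum_{t=1}^{T} c_t^2}$, which is precisely the claimed bound on $\sum_{t=1}^{T} Y_t$.

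The only real obstacle is getting the conditional version of Hoeffding's lemma right: one must keep track of the fact that $A_T$, and hence the interval endpoints, is $\mathcal{F}_{T-1}$-measurable so that it behaves as a constant under $\mathbbm{E}[\cdot \mid \mathcal{F}_{T-1}]$, and that the centering step uses the conditional mean $\mathbbm{E}[Y_T \mid \mathcal{F}_{T-1}] = 0$ rather than an unconditional expectation; everything else is the standard Chernoff-plus-telescoping template. Alternatively, since the statement is exactly Theorem 4.2 of \cite{bubeck2012best}, it may simply be invoked as a black box.
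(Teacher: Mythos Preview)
Your proof is correct and follows the standard Chernoff--Hoeffding template; the paper itself does not prove this lemma at all, but simply cites it as Theorem~4.2 of \cite{bubeck2012best}. So there is nothing to compare against, and your final sentence already anticipates this: invoking it as a black box is exactly what the paper does.
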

\begin{lemma}(Bernstein inequality. Lemma 4.4 of \cite{bubeck2012best})
\label{lemma:bernstein_bubeck}
Let $\mathcal{F}_1 \subseteq \cdots \subseteq \mathcal{F}_T$ be a filtration, and $Y_1, \cdots, Y_T$ real random variables such that $Y_t$ is $\mathcal{F}_t$-measurable, $\mathbbm{E}(Y_t\vert \mathcal{F}_{t-1})=0$ and $\lvert Y_t\rvert \leq b$ for some $b>0$. Let $V_T=\sum_{t=1}^{T}\mathbbm{E}(Y_t^2\vert \mathcal{F}_{t-1})$ and $\delta>0$. Then with probability at least $1-\delta$, 
\begin{align*}
\sum_{t=1}^{T}Y_t\leq 2\sqrt{V_T\log(T\delta^{-1})}+\sqrt{5}b\log(T\delta^{-1}).
\end{align*}
\end{lemma}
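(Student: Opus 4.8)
The plan is to run the classical exponential-supermartingale argument underlying the martingale (Freedman) form of Bernstein's inequality, and then to turn the resulting fixed-variance tail bound into the stated data-dependent bound by a peeling argument over the random quantity $V_T$. First, fix $\lambda\in(0,3/b)$. Writing $\phi(u)=e^{u}-1-u$ and using that $u\mapsto\phi(u)/u^{2}$ is nondecreasing together with $Y_t\le b$, one gets $\phi(\lambda Y_t)\le \tfrac{\phi(\lambda b)}{b^{2}}Y_t^{2}$, so that, since $\mathbbm{E}[Y_t\mid\mathcal{F}_{t-1}]=0$ and $1+x\le e^{x}$,
\[
\mathbbm{E}[e^{\lambda Y_t}\mid\mathcal{F}_{t-1}]\le 1+\tfrac{\phi(\lambda b)}{b^{2}}\,\mathbbm{E}[Y_t^{2}\mid\mathcal{F}_{t-1}]\le \exp\big(\psi(\lambda)\,\mathbbm{E}[Y_t^{2}\mid\mathcal{F}_{t-1}]\big),
\]
where $\psi(\lambda)\coloneqq\phi(\lambda b)/b^{2}\le \tfrac{\lambda^{2}/2}{1-\lambda b/3}$. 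Since $\mathbbm{E}[Y_t^{2}\mid\mathcal{F}_{t-1}]$ is $\mathcal{F}_{t-1}$-measurable, the process $Z_{n}\coloneqq\exp\big(\lambda\sum_{t\le n}Y_t-\psi(\lambda)\sum_{t\le n}\mathbbm{E}[Y_t^{2}\mid\mathcal{F}_{t-1}]\big)$ is a nonnegative supermartingale with $Z_{0}=1$, so $\mathbbm{E}[Z_{T}]\le 1$ and Markov's inequality yield, for every $s>0$,
\[
\Pr\Big[\textstyle\sum_{t=1}^{T}Y_t\ge \tfrac{s}{\lambda}+\tfrac{\psi(\lambda)}{\lambda}V_T\Big]\le e^{-s}.
\]

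If $V_T$ were a known constant $v$, then choosing $\lambda$ to balance $s/\lambda$ against $\psi(\lambda)v/\lambda$ would already give $\sum_t Y_t\le 2\sqrt{vs}+c\,bs$ with a small explicit $c$. To handle the randomness of $V_T\in[0,Tb^{2}]$, I would peel: partition $(0,Tb^{2}]$ into $N=O(\log T)$ geometric buckets $[2^{j-1}v_{\ast},2^{j}v_{\ast})$ plus a base bucket $[0,v_{\ast})$ with $v_{\ast}$ of order $b^{2}$ (on which the variance term is dominated by the $bs$ term anyway); on the event $\{V_T\in[2^{j-1}v_{\ast},2^{j}v_{\ast})\}$, apply the displayed tail bound with $s=\log(N\delta^{-1})$ and $\lambda=\lambda_{j}$ tuned to the endpoint $2^{j}v_{\ast}$, and take a union bound over the $N$ buckets. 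Because $\log(N\delta^{-1})\le\log(T\delta^{-1})$ and the geometric slack between $V_T$ and $2^{j}v_{\ast}$ is only a factor $2$, optimizing $\lambda_j$ and absorbing this factor gives, with probability at least $1-\delta$,
\[
\sum_{t=1}^{T}Y_t\le 2\sqrt{V_T\log(T\delta^{-1})}+\sqrt{5}\,b\log(T\delta^{-1}),
\]
which is the claim. (One could equally union-bound over a geometric grid of candidate $\lambda$'s instead of over values of $V_T$.)

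The main obstacle is purely the bookkeeping in the peeling step: the bucket endpoints and the multipliers $\lambda_j$ must be chosen so that simultaneously (i) there are only $O(\log T)$ buckets, so the union bound inflates $\log\delta^{-1}$ only to $\log(T\delta^{-1})$, and (ii) the geometric slack (a factor $2$ on the variance), the estimate $\psi(\lambda)\le\tfrac{\lambda^{2}/2}{1-\lambda b/3}$, and the optimization over $\lambda$ combine to produce exactly the constants $2$ and $\sqrt{5}$ rather than larger ones. Choosing the sharp elementary inequality for $e^{u}$ in the first step is what makes these clean constants appear; the rest is routine.
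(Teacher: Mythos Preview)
The paper does not prove this lemma; it is simply quoted verbatim as Lemma~4.4 of \cite{bubeck2012best} and used as a black box throughout the analysis. So there is no ``paper's own proof'' to compare against.

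That said, your sketch is the standard route to such a statement and is sound in its architecture: the Freedman-type exponential supermartingale built from the pointwise bound $\phi(\lambda Y_t)\le \tfrac{\phi(\lambda b)}{b^2}Y_t^2$ (valid because $u\mapsto\phi(u)/u^2$ is nondecreasing and $Y_t\le b$), followed by Markov's inequality for a fixed $\lambda$, and then a geometric peeling over the range of the random predictable variance $V_T\in[0,Tb^2]$ with a union bound over $O(\log T)$ buckets. This is precisely how the data-dependent variance term and the $\log(T\delta^{-1})$ (rather than $\log\delta^{-1}$) arise. Your identification of the only delicate point---tracking the factor-$2$ slack from the buckets and the optimization over $\lambda_j$ so that the leading constants come out as $2$ and $\sqrt{5}$---is accurate; the argument is routine but the constants do depend on exactly how one sets the bucket floor $v_\ast$ and the grid, so if you were to write out the details you would want to check that your choices reproduce these specific numbers (or state the version you actually obtain).
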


%#############################################################################
% Perturbation Theorems
%#############################################################################
\section{Perturbation Bounds for Markov Chains}
\label{subsection:perturbation}
Perturbation analysis for Markov chains plays an important role in analyzing reinforcement learning algorithms (e.g., \cite{auer2007logarithmic}). Those analyses mainly center around the question that when the transition probabilities of a Markov chain are perturbed by a little, how much stationary distributions or mean first passage times (as defined in Definition \ref{definition:first_passage_time}) will change. While in \cite{auer2007logarithmic}, the perturbation bound for stationary distributions is used, we further use that of the mean first passage time to get a tighter regret bound.  

In this section, we use $i, j$ to index states, and use $\mu_i$ to denote the stationary distribution of state $i$ in an irreducible Markov chain. 

\begin{definition}[Mean first passage time]
\label{definition:first_passage_time}
In a Markov chain, we define $T_{ij}$ to be the expected time to reach state $j$ starting from state $i$. In the case $i=j$, $T_{ii}$ is the expected time to return to state $i$ when starting from $i$. Thus $T_{ij}\geq 1$ always holds whether $i=j$ or not. 
\end{definition}
\subsection{Perturbation Bounds for Stationary Distribution}

\begin{theorem}[Proposition 2.2 of \cite{cho2000Markov}]
\label{theorem:perturbation_stationary_distribution}
Let $C$ and $\tilde{C}$ be two irreducible Markov chains with the same state space $\mathcal{S}$. Let their transition matrices be $P$, $\tilde{P}$, and stationary distributions be $\mu$, $\tilde{\mu}$. Let $E=\tilde{P}-P$ and use $\norm{\cdot}_\infty$ to represent the largest absolute value in a matrix, then $\forall j$,
\begin{align}
\lvert \tilde{\mu}_j -\mu_j\rvert \leq \mu_j\frac{S\norm{E}_\infty}{2}\max_{i\neq j}T_{ij}.  \label{equation:perturb_element1}
\end{align}
\end{theorem}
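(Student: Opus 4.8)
# Proof Proposal for Theorem (Perturbation Bound for Stationary Distribution)

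The statement to prove is Theorem~\ref{theorem:perturbation_stationary_distribution}: for two irreducible Markov chains on the same state space with transition matrices $P$, $\tilde P$ and stationary distributions $\mu$, $\tilde\mu$, writing $E = \tilde P - P$, one has $\lvert \tilde\mu_j - \mu_j\rvert \leq \mu_j \frac{S\norm{E}_\infty}{2}\max_{i\neq j}T_{ij}$, where $T_{ij}$ are the mean first passage times of the \emph{unperturbed} chain $P$.

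\textbf{Plan.} Since this is cited as ``Proposition 2.2 of \cite{cho2000Markov}'', the cleanest route is to reproduce the standard group-inverse / fundamental-matrix argument. First I would recall the identity relating perturbed and unperturbed stationary vectors. Writing $\mu, \tilde\mu$ as row vectors with $\mu P = \mu$, $\tilde\mu\tilde P = \tilde\mu$, $\mu\mathbf{e} = \tilde\mu\mathbf{e} = 1$, one has $\tilde\mu - \mu = \tilde\mu E A^{\#}$, where $A^{\#} = (I-P)^{\#}$ is the group inverse of $I-P$ (equivalently one may use $Z = (I - P + \mathbf{e}\mu)^{-1}$, the fundamental matrix; the two differ by a rank-one term that is annihilated since $E\mathbf{e}=0$). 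The key is then to express the entries of the relevant matrix in terms of mean first passage times: the classical relation $A^{\#}_{ij} = \mu_j(T_{jj} - T_{ij})$ for $i\neq j$ (and $A^{\#}_{jj} = \mu_j T_{jj} - $ correction), so that column $j$ of $A^{\#}$ has entries controlled by $\mu_j$ and $\max_{i\neq j} T_{ij}$.

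\textbf{Key steps in order.} (1) Establish $\tilde\mu - \mu = \tilde\mu E A^{\#}$ by multiplying $\tilde\mu(I - \tilde P) = 0$ out, using $\tilde P = P + E$, and the facts $A^{\#}(I-P) = I - \mathbf{e}\mu$ and $\mu A^{\#} = 0$. (2) Take the $j$-th coordinate: $\tilde\mu_j - \mu_j = \sum_{i} \tilde\mu_i (EA^{\#})_{ij} = \sum_i \tilde\mu_i \sum_\ell E_{i\ell} A^{\#}_{\ell j}$. (3) Because each row of $E$ sums to zero, replace $A^{\#}_{\ell j}$ by $A^{\#}_{\ell j} - A^{\#}_{jj}$ inside the $\ell$-sum at no cost; then use $A^{\#}_{\ell j} - A^{\#}_{jj} = -\mu_j T_{\ell j}$ (the group-inverse/mean-passage-time identity), which vanishes at $\ell = j$. (4) Bound: $\lvert\tilde\mu_j - \mu_j\rvert \leq \mu_j \sum_i \tilde\mu_i \sum_{\ell \neq j}\lvert E_{i\ell}\rvert\, T_{\ell j} \leq \mu_j \bigl(\max_{\ell\neq j}T_{\ell j}\bigr)\sum_i \tilde\mu_i \sum_{\ell\neq j}\lvert E_{i\ell}\rvert$. (5) Finally bound $\sum_{\ell\neq j}\lvert E_{i\ell}\rvert$: since $\sum_\ell E_{i\ell}=0$, the positive and negative parts of row $i$ are equal, each at most $\tfrac{1}{2}\sum_\ell \lvert E_{i\ell}\rvert \leq \tfrac{1}{2}(S-1)\norm{E}_\infty$ — or more simply, dropping one term and using $\sum_{\ell\neq j}\lvert E_{i\ell}\rvert \le \tfrac{S}{2}\norm{E}_\infty$ after noting the row splits into equal-mass positive/negative halves — and then $\sum_i \tilde\mu_i = 1$, giving the claimed $\frac{S\norm{E}_\infty}{2}$ factor.

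\textbf{Main obstacle.} The delicate point is step (3)--(5): getting the constant exactly $\tfrac{S}{2}$ (rather than $S$ or $S-1$) requires using the zero-row-sum structure of $E$ twice — once to recenter $A^{\#}$ against its diagonal, and once to replace $\sum_\ell \lvert E_{i\ell}\rvert$ by twice the positive part, which is bounded by $\tfrac{S}{2}\norm{E}_\infty$ since at most $S$ entries are involved and positive and negative masses are balanced. I would also need to state and justify the identity $A^{\#}_{\ell j} - A^{\#}_{jj} = -\mu_j T_{\ell j}$ carefully (it follows from the known formula $Z_{\ell j} - Z_{jj} = -\mu_j T_{\ell j}$ for the fundamental matrix $Z$, together with $Z - A^{\#} = \mathbf{e}\mu - $ a term that cancels in the difference). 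Since the result is quoted verbatim from \cite{cho2000Markov}, an acceptable alternative is simply to cite it; but if a self-contained argument is wanted, the above is the route, and the arithmetic of the constant is where care is needed.
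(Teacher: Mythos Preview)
The paper does not prove this theorem at all: it is stated with the citation ``Proposition 2.2 of \cite{cho2000Markov}'' and immediately followed by the next theorem. So your remark that ``an acceptable alternative is simply to cite it'' is in fact exactly what the paper does. Your self-contained sketch is the standard Cho--Meyer argument and is the right route if one wants a proof in the text.

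There is, however, a genuine slip in the execution of steps (4)--(5). After recentering in step (3) you have
\[
\tilde\mu_j-\mu_j \;=\; -\mu_j\sum_i \tilde\mu_i \sum_{\ell} E_{i\ell}\,T_{\ell j}\qquad(\text{with }T_{jj}:=0).
\]
In step (4) you apply the triangle inequality to pull out $\max_{\ell\neq j}T_{\ell j}$, leaving $\sum_{\ell\neq j}|E_{i\ell}|$, and in step (5) you then claim $\sum_{\ell\neq j}|E_{i\ell}|\le \tfrac{S}{2}\norm{E}_\infty$. That last inequality is false in general: take $S=4$ and a row of $E$ equal to $(\epsilon,\epsilon,-\epsilon,-\epsilon)$ with $\norm{E}_\infty=\epsilon$; for $j=1$ the left side is $3\epsilon$ while the right side is $2\epsilon$. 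Once you have taken absolute values termwise you can only get the constant $S$ (or $S-1$), not $S/2$.

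The fix is exactly the ``use the zero-row-sum twice'' idea you flag, but applied differently: do \emph{not} take absolute values term by term. Instead bound the signed inner product directly. Since $T_{\ell j}\ge 0$ and the positive and negative parts of the row $E_{i\cdot}$ have equal mass $P^+=\tfrac12\sum_\ell|E_{i\ell}|$, one has
\[
\Bigl|\sum_{\ell} E_{i\ell}\,T_{\ell j}\Bigr|\;\le\; P^+\max_{\ell\neq j}T_{\ell j}\;\le\;\frac{S}{2}\,\norm{E}_\infty\,\max_{\ell\neq j}T_{\ell j},
\]
where the last step uses $\sum_\ell|E_{i\ell}|\le S\norm{E}_\infty$. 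Summing against $\tilde\mu_i$ gives the claimed bound with the correct constant $\tfrac{S}{2}$. Equivalently, you can recenter the $T_{\cdot j}$ values (not the $E$ values) by $\tfrac12\max_{\ell\neq j}T_{\ell j}$ before bounding.
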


With a little modification on the proof of Theorem \ref{theorem:perturbation_stationary_distribution}, we can actually have the following lemma, which only requires that $C$ be an irreducible Markov chain.   

\begin{theorem}
\label{theorem:perturbation_stationary_distribution2}
Let $C$ be an irreducible Markov chain, and $\tilde{C}$ be some Markov chain with the same state space $\mathcal{S}$ as $C$. Let their transition matrices be $P$, $\tilde{P}$, and let $C$'s stationary distributions be $\mu$. Let $E=\tilde{P}-P$. If $\norm{E}_\infty < 2/(S\max_{i\neq j}T_{ij})$, then $\tilde{C}$ is also an irreducible Markov chain; furthermore, the stationary distribution of $\tilde{C}$, $\tilde{\mu}$, satisfies $\forall j$, 
\begin{align}
\lvert \tilde{\mu}_j -\mu_j\rvert \leq \mu_j\frac{S\norm{E}_\infty}{2}\max_{i\neq j}T_{ij}.  \label{equation:perturb_element}
\end{align}
\end{theorem}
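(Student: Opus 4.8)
The plan is to derive Theorem~\ref{theorem:perturbation_stationary_distribution2} from Theorem~\ref{theorem:perturbation_stationary_distribution} by a continuity/connectedness argument, since the only gap is that the latter assumes $\tilde C$ is \emph{a priori} known to be irreducible. Concretely, consider the line segment $P_\theta \coloneqq P + \theta E$ for $\theta \in [0,1]$, so $P_0 = P$ and $P_1 = \tilde P$. Each $P_\theta$ is a stochastic matrix (a convex combination of the stochastic matrices $P$ and $\tilde P$), and $\|P_\theta - P\|_\infty = \theta\|E\|_\infty \le \|E\|_\infty$. The key observation is that $\|E\|_\infty < 2/(S\max_{i\neq j}T_{ij})$ is a \emph{strict} inequality, and $T_{ij}$ refers to the mean first passage times of the \emph{fixed} chain $C$, so the perturbation budget is the same for every $P_\theta$.

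\textbf{First} I would establish that every $P_\theta$ with $\theta\in[0,1]$ is irreducible. The set $\Theta \coloneqq \{\theta\in[0,1] : P_\theta \text{ is irreducible}\}$ is nonempty ($0\in\Theta$). I would argue it is both open and closed in $[0,1]$: irreducibility of a stochastic matrix is characterized by the positivity of the $(i,j)$ entries of $\sum_{m=0}^{S-1} P_\theta^m$, which is a polynomial condition, so the set of irreducible $P_\theta$ along the segment is a finite intersection of open conditions near any irreducible point — hence $\Theta$ is open. For closedness, suppose $\theta_n \to \theta_*$ with each $P_{\theta_n}$ irreducible; if $P_{\theta_*}$ were reducible, then by Theorem~\ref{theorem:perturbation_stationary_distribution} applied with the roles reversed — or rather, by a direct argument — we get a contradiction. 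Actually the cleaner route: once $\theta_*\in\Theta$, Theorem~\ref{theorem:perturbation_stationary_distribution} applies to the pair $(C, C_{\theta_*})$ and gives $|\mu^{(\theta_*)}_j - \mu_j| \le \mu_j \cdot \theta_*\|E\|_\infty \cdot \tfrac{S}{2}\max_{i\neq j}T_{ij} < \mu_j$, which forces $\mu^{(\theta_*)}_j > 0$ for all $j$, consistent with irreducibility. The connectedness argument is what propagates irreducibility from $\theta=0$ all the way to $\theta=1$.

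\textbf{Then}, with $\tilde C = C_1$ now known to be irreducible, I simply invoke Theorem~\ref{theorem:perturbation_stationary_distribution} directly on the pair $(C,\tilde C)$: it yields exactly \eqref{equation:perturb_element}, namely $|\tilde\mu_j - \mu_j| \le \mu_j \tfrac{S\|E\|_\infty}{2}\max_{i\neq j}T_{ij}$. This completes the proof. Alternatively — and this is probably how the authors actually do it, given the phrase ``with a little modification on the proof'' — one reopens the proof of Theorem~\ref{theorem:perturbation_stationary_distribution} itself: that proof presumably writes $\tilde\mu - \mu$ via a resolvent-type identity $\tilde\mu^\top(I - \tilde P) = 0$ combined with the group inverse or fundamental matrix of $C$, and the only place irreducibility of $\tilde C$ is used is to assert $\tilde\mu$ exists and is unique; one replaces that by the observation that $I - P$ has a one-dimensional kernel and the perturbed fixed-point equation still has a (normalized) solution because $\|E\|_\infty$ is small enough that $I - \tilde P$ has rank exactly $S-1$.

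\textbf{The main obstacle} is making the ``little modification'' rigorous without simply reproving the whole of \cite{cho2000Markov}'s Proposition~2.2. The homotopy argument above is self-contained but requires care at the closedness step — one must ensure that nothing degenerates (e.g. that $\max_{i\neq j}T_{ij}$, a property of the \emph{unperturbed} $C$, is the right quantity to use uniformly along the segment, which it is, but this should be stated explicitly). The reverse-direction subtlety — that Theorem~\ref{theorem:perturbation_stationary_distribution} as quoted has $C$ as the ``base'' chain whose mean-passage-times appear, so applying it along the segment always keeps $C$ fixed as the base — is exactly what makes the argument go through cleanly, and I would flag it as the crux.
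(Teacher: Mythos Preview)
Your primary route—the homotopy $P_\theta = P+\theta E$—has a real gap at the closedness step, and you seem aware of it without resolving it. Openness is fine: irreducibility is the positivity of all entries of $(I+P_\theta)^{S-1}$, a polynomial condition. But closedness fails in general: a limit of irreducible stochastic matrices can be reducible. Your attempted fix, ``once $\theta_*\in\Theta$, Theorem~\ref{theorem:perturbation_stationary_distribution} applies,'' is circular—that is exactly what you are trying to prove. Passing to a subsequential limit $\nu$ of the $\mu^{(\theta_n)}$ gives a stationary distribution for $P_{\theta_*}$ with all entries positive, but a positive stationary distribution does \emph{not} imply irreducibility (think of two disjoint irreducible components: any convex combination of their stationary distributions is stationary with full support). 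So the connectedness argument, as written, does not close.

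The paper indeed takes your ``alternative'' route of reopening the proof, but the specific device is worth knowing because it sidesteps the existence/uniqueness issue for $\tilde\mu$ entirely. They work with the Cesaro limit $\tilde P^* = \lim_{T\to\infty}\frac1T\sum_{t=1}^T\tilde P^{\,t-1}$, which exists for \emph{any} stochastic matrix. Without assuming irreducibility of $\tilde C$, the $k$-th row of $\tilde P^*$ is the long-run occupation measure from initial state $k$; call it $\tilde\mu^k$. From $\tilde P^*(I-\tilde P)=0$ and $P^*(I-P)=0$ one gets $(\tilde P^*-P^*)(I-P)=\tilde P^*E$, and the Cho--Meyer/Hunter argument then yields, row by row, $|\tilde\mu_j^k-\mu_j|\le \mu_j\frac{S\|E\|_\infty}{2}\max_{i\ne j}T_{ij}$. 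The strict hypothesis $\|E\|_\infty<2/(S\max_{i\ne j}T_{ij})$ forces $\tilde\mu_j^k>0$ for all $j,k$, i.e.\ \emph{every entry} of $\tilde P^*$ is positive. That is strictly stronger than ``some stationary distribution is positive'': it says every state has positive long-run frequency from every starting state, which does imply irreducibility. Your sketch ``$I-\tilde P$ has rank exactly $S-1$'' is in the right spirit but is neither what they do nor sufficient on its own (rank $S-1$ gives a unique stationary distribution but again not irreducibility).
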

\begin{proof}
Let $P^*$ and $\tilde{P}^*$ be the Cesaro limits of $P$ and $\tilde{P}$, which is defined by $P^*=\lim_{T\rightarrow \infty}\frac{1}{T}\sum_{t=1}^{T}P^{t-1}$. Then we have
\begin{align*}
P^*(I-P)=0, \ \ \tilde{P}^*(I-\tilde{P})=\tilde{P}^*(I-P-E)=0, 
\end{align*}
and thus $(\tilde{P}^*-P^*)(I-P)=\tilde{P}^*E$. If $\tilde{P}$ induces an irreducible Markov chain, $\tilde{P}^*$ will have all identical rows and all positive elements. Suppose not, we can still extract its $k$-th row, which corresponds to the stationary distribution when starting from state $k$. Let this $k$-th row's $j$-th element be $\tilde{\mu}^k_j$. We can write $(\tilde{\mu}_j^k-\mu_j)(I-P)=\tilde{\mu}_j^kE.$ Then following the same proof as in \cite{cho2000Markov} or by \cite{hunter2005stationary}'s Theorem 2.1, we still have 
\begin{align*}
\vert \tilde{\mu}_j^k-\mu_j \vert \leq \mu_j\frac{S\norm{E}_\infty}{2}\max_{i\neq j}T_{ij} 
\end{align*}
Now since $\norm{E}_\infty \leq 2/(S\max_{i\neq j}T_{ij})$ and $\mu_j>0 \ \forall j$, we have $\tilde{\mu}_j^k>0 \ \forall j,k$. This means that every state is recurrent and reachable from each other, implying that $\tilde{P}$ induces an irreducible Markov chain.  
\end{proof}

\subsection{Perturbation Bounds for Mean First Passage Time}
The main result of this subsection is stated in Theorem \ref{theorem:perturbation_first_passage_time}. It is developed with the help of Theorem \ref{theorem:first_passage} to Theorem \ref{theorem:bound_G}. 
%#########################################################################
%    perturbation bounds for Markov chains
%#########################################################################
%\section{Perturbation Bounds for Markov Chains}
\begin{definition}[g-inverse, Definition 3.1 of \cite{hunter1982generalized}]
A g-inverse of a matrix $A$ is any matrix $G$ such that $AGA=A$. 
\end{definition}

\begin{theorem}[Theorem 5.3 of \cite{hunter2005stationary}]
\label{theorem:first_passage}
Let $C$ be an irreducible Markov chain with stochastic matrix $P$. Let $T_{ij}$ be the first passage time from state $i$ to state $j$, and let $G$ be any g-inverse of $I-P$. We have
\begin{align*}
\mu_jT_{ij}=G_{jj}-G_{ij}+\delta_{ij}+\mu_j\sum_{k=1}^{n}(G_{ik}-G_{jk}).
\end{align*}

\end{theorem}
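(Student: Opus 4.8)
The plan is to express the mean first passage times $T_{ij}$ through the fundamental-type matrix $I - P$ and a chosen g-inverse $G$, then massage the resulting linear relations into the claimed closed form. First I would recall the two basic facts that govern $T_{ij}$: the recurrence relation $T_{ij} = 1 + \sum_{k \neq j} p_{ik} T_{kj}$ for all $i$, together with $T_{jj} = 1/\mu_j$ (the standard return-time identity for an irreducible chain). Writing $\tau^{(j)}$ for the column vector with entries $T_{ij}$, the recurrence becomes $(I - P)\tau^{(j)} = \mathbf{e} - \frac{1}{\mu_j}\,\mathbf{e}_j \cdot (\text{something})$; more precisely, separating out the $k = j$ term, one gets $(I-P)\tau^{(j)} = \mathbf{e} - e_j \, T_{jj}$ adjusted on the $j$-th coordinate, so that the right-hand side $d^{(j)} := \mathbf{e} - \frac{1}{\mu_j} e_j$ lies in the range of $I - P$ (it is orthogonal to the left null vector $\mu$, since $\mu^\top \mathbf{e} = 1 = \mu_j \cdot \frac{1}{\mu_j}$). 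This orthogonality is exactly the solvability condition, and it is the crux of why a g-inverse can be used at all.

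Next I would use the defining property $AGA = A$ with $A = I - P$. Since $d^{(j)}$ is in the range of $A$, any solution of $A x = d^{(j)}$ can be written as $x = G d^{(j)} + (\text{null-space component})$, and the null space of $I - P$ is spanned by $\mathbf{e}$. Hence $\tau^{(j)} = G d^{(j)} + c_j \mathbf{e}$ for a scalar $c_j$ to be pinned down by the normalization $T_{jj} = 1/\mu_j$. Substituting $d^{(j)} = \mathbf{e} - \frac{1}{\mu_j} e_j$ gives, coordinatewise, $T_{ij} = \sum_k G_{ik} - \frac{1}{\mu_j} G_{ij} + c_j$. Imposing $i = j$ and $T_{jj} = 1/\mu_j$ solves for $c_j = \frac{1}{\mu_j} - \sum_k G_{jk} + \frac{1}{\mu_j} G_{jj}$. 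Plugging this back and multiplying through by $\mu_j$ yields
\begin{align*}
\mu_j T_{ij} = G_{jj} - G_{ij} + \delta_{ij} + \mu_j \sum_{k}(G_{ik} - G_{jk}),
\end{align*}
which is the claimed identity (the $\delta_{ij}$ term appearing because $-\frac{1}{\mu_j}G_{ij}$ contributes $-G_{jj}\cdot$ wait — more carefully, the $e_j$ term contributes $-\frac{1}{\mu_j}G_{ij}$ to $T_{ij}$ and, after multiplying by $\mu_j$, the bookkeeping of the $i=j$ normalization is exactly what produces the Kronecker delta). I would double-check this last algebraic step slowly, since that is where sign errors and an off-by-$\delta_{ij}$ discrepancy are easy to make.

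The main obstacle I anticipate is not any deep idea but the careful handling of the g-inverse: unlike the group inverse or the fundamental matrix $Z = (I - P + \mathbf{e}\mu^\top)^{-1}$, a general g-inverse $G$ need not satisfy $GA = AG$, need not have rows summing to anything nice, and need not annihilate $\mathbf{e}$ or $\mu$. So I cannot freely commute $G$ past $I - P$ or drop terms like $\sum_k G_{ik}$; instead I must track them explicitly, which is exactly why the final formula carries the correction term $\mu_j \sum_k (G_{ik} - G_{jk})$ rather than being the cleaner expression one gets for the group inverse. The safe route is to (i) verify $d^{(j)} \perp \mu$ to justify solvability, (ii) verify that $G d^{(j)}$ is \emph{a} particular solution using $AGA = A$ and the range condition, (iii) parametrize the general solution by adding $c_j \mathbf{e}$, and (iv) fix $c_j$ by the single scalar constraint $T_{jj} = 1/\mu_j$. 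Each of these is routine once stated in the right order, and I would present them in precisely that sequence.
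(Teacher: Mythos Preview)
The paper does not prove this statement; it is quoted verbatim as Theorem~5.3 of \cite{hunter2005stationary} and used as a black box in the subsequent perturbation analysis. So there is no in-paper proof to compare against, and your outline is the only argument on the table.

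Your strategy is the standard and correct one: write the first-passage recurrence as a linear system in $I-P$, check that the right-hand side is orthogonal to $\mu$ (hence in the range of $I-P$), produce a particular solution via a g-inverse, add a multiple of $\mathbf{e}$, and pin down the constant from the one scalar normalization. However, there is a concrete slip in your setup that produces exactly the ``off-by-$\delta_{ij}$'' you worried about. If $\tau^{(j)}$ has entries $T_{ij}$ for all $i$ (so $\tau^{(j)}_j=T_{jj}=1/\mu_j$), then the recurrence $T_{ij}=1+\sum_{k\neq j}p_{ik}T_{kj}$ gives
\[
\bigl((I-P)\tau^{(j)}\bigr)_i \;=\; 1-\tfrac{1}{\mu_j}\,p_{ij},
\qquad\text{i.e.}\qquad (I-P)\tau^{(j)}=\mathbf{e}-\tfrac{1}{\mu_j}Pe_j,
\]
not $\mathbf{e}-\tfrac{1}{\mu_j}e_j$. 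Carrying your $d^{(j)}=\mathbf{e}-\tfrac{1}{\mu_j}e_j$ through with the normalization $T_{jj}=1/\mu_j$ yields $\mu_j T_{ij}=G_{jj}-G_{ij}+1+\mu_j\sum_k(G_{ik}-G_{jk})$, which is off by $1-\delta_{ij}$.

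The clean fix is to work with $\sigma^{(j)}:=\tau^{(j)}-\tfrac{1}{\mu_j}e_j$, i.e.\ the vector with $\sigma^{(j)}_i=T_{ij}$ for $i\neq j$ and $\sigma^{(j)}_j=0$. Then
\[
(I-P)\sigma^{(j)}=(I-P)\tau^{(j)}-\tfrac{1}{\mu_j}(I-P)e_j=\mathbf{e}-\tfrac{1}{\mu_j}e_j,
\]
which is exactly your $d^{(j)}$. Now the normalization is $\sigma^{(j)}_j=0$, giving $c_j=\tfrac{1}{\mu_j}G_{jj}-\sum_k G_{jk}$ (no stray $+\tfrac{1}{\mu_j}$), and multiplying through by $\mu_j$ produces $G_{jj}-G_{ij}+\mu_j\sum_k(G_{ik}-G_{jk})$ for $i\neq j$ and $1$ for $i=j$, i.e.\ precisely the stated formula with the Kronecker $\delta_{ij}$. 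Your steps (i)--(iv) are otherwise sound; in particular, the verification that $Gd^{(j)}$ is a genuine particular solution via $AGA=A$ and $d^{(j)}\in\mathrm{range}(A)$ is exactly right.
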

\label{theorem:G}
The below theorem introduces a special g-inverse that is convenient for our use. 
\begin{theorem}[Theorem 3.3 of \cite{hunter1982generalized}]
\label{theorem:special_inverse}
Let $P$ be a stochastic matrix of an irreducible Markov chain. Let $p_n^\top$ denote the $n$-th row of $P$, and $e_n$ denote the unit column vector with $n$-th component being 1. Then $I-P+e_np_n^\top$ is non-singular, and $G=(I-P+e_np_n^\top)^{-1}$ is a g-inverse of $I-P$. 
\end{theorem}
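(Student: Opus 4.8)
The plan is to establish the two assertions in turn: first that $I-P+e_np_n^\top$ is invertible, and then that its inverse $G$ satisfies the g-inverse identity $(I-P)G(I-P)=I-P$. Throughout I would lean on two standard consequences of the irreducibility of the stochastic matrix $P$ (Perron--Frobenius theory): the eigenvalue $1$ of $P$ is simple, so the right null space of $I-P$ is exactly $\operatorname{span}\{\mathbf{1}\}$, where $\mathbf{1}$ is the all-ones vector; and there is a strictly positive stationary row vector $\mu^\top$, unique up to scaling, with $\mu^\top(I-P)=0$, spanning the left null space. In particular $\mu_n>0$, and $\operatorname{range}(I-P)=\{v:\mu^\top v=0\}$, a hyperplane not containing $e_n$.

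For invertibility, I would suppose $(I-P+e_np_n^\top)x=0$ and rearrange to $(I-P)x=-(p_n^\top x)\,e_n$. Left-multiplying by $\mu^\top$ kills the left-hand side and leaves $(p_n^\top x)\,\mu_n=0$, so $p_n^\top x=0$ and hence $(I-P)x=0$; by simplicity of the eigenvalue $1$ this forces $x=\alpha\mathbf{1}$ for some scalar $\alpha$. Feeding this back, $p_n^\top x=\alpha\,p_n^\top\mathbf{1}=\alpha$ because the rows of $P$ sum to one, so $\alpha=0$ and $x=0$. Thus $I-P+e_np_n^\top$ is nonsingular and $G$ is well-defined.

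For the g-inverse identity, I would write $A=I-P$ and $B=e_np_n^\top$, so that $(A+B)G=G(A+B)=I$. From $G(A+B)=I$ one gets $GA=I-Ge_np_n^\top$, hence $AGA=A-(AGe_n)\,p_n^\top$; it therefore suffices to show $AGe_n=0$. From $(A+B)G=I$ one gets $AG=I-e_np_n^\top G$, so $AGe_n=\bigl(1-p_n^\top Ge_n\bigr)e_n$, a scalar multiple of $e_n$. But $AGe_n$ lies in $\operatorname{range}(A)$, which contains no nonzero multiple of $e_n$ (concretely: applying $\mu^\top$ gives $0=\bigl(1-p_n^\top Ge_n\bigr)\mu_n$ with $\mu_n>0$), so that scalar vanishes, $AGe_n=0$, and $AGA=A$, i.e. $(I-P)G(I-P)=I-P$.

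As for the main difficulty: there is no deep obstacle here — once the two Perron--Frobenius facts are in hand the rest is two or three lines of linear algebra. The only place that needs care is invoking irreducibility exactly where it matters: to ensure $1$ is a \emph{simple} eigenvalue of $P$ (so that $(I-P)x=0$ implies $x\in\operatorname{span}\{\mathbf{1}\}$) and that $\mu$ is strictly positive (so $\mu_n>0$); and, on the bookkeeping side, to keep straight that $\mathbf{1}$ is a right null vector while $\mu^\top$ is a left null vector of $I-P$.
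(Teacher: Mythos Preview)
Your proof is correct. The paper itself does not supply a proof of this statement---it is quoted as Theorem~3.3 of \cite{hunter1982generalized} and used as a black box---so there is no in-paper argument to match against directly. That said, in the proof of the very next result (Lemma~\ref{theorem:bound_G}) the authors exhibit, taking $n$ to be the last index, the explicit block form
\[
G=\begin{bmatrix}(I-P_n)^{-1}&\mathbf{e}\\0&1\end{bmatrix}
\]
and verify by direct multiplication that it inverts $I-P+e_np_n^\top$. This gives an alternative, purely computational route to invertibility (granted that $I-P_n$ is invertible, which follows because deleting a state from an irreducible chain leaves a strictly substochastic matrix with spectral radius less than $1$). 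Your Perron--Frobenius argument is cleaner in that it handles both invertibility and the g-inverse identity uniformly and coordinate-free; the block-form route, by contrast, delivers exactly the entrywise structure of $G$ that the paper needs downstream to bound $|G_{ij}-G_{jj}|$.
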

\begin{theorem}[Section 5 of  \cite{hunter2005stationary}]
\label{theorem:single_row}
Let $\tilde{P}$ be a stochastic matrix of an irreducible Markov chain perturbed from another stochastic matrix $P$ of an irreducible Markov chain. Suppose that the perturbation only occurs at the $n$-th row of $P$ (i.e. $p_i^\top=\tilde{p}^\top_i \forall i\neq n$). Define $G$ as that in Theorem \ref{theorem:special_inverse}. Then $G=\tilde{G}$. 
\end{theorem}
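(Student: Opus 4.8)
The plan is to observe that the matrix $I-P+e_np_n^\top$ does not actually depend on the whole matrix $P$, but only on its rows other than the $n$-th one, so the hypothesis $p_i^\top=\tilde p_i^\top$ for all $i\neq n$ already forces $I-P+e_np_n^\top = I-\tilde P+e_n\tilde p_n^\top$, and then inverting (which is legitimate by Theorem \ref{theorem:special_inverse}) gives $G=\tilde G$.

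First I would spell out the structure of the rank-one correction. Since $e_n$ is the $n$-th standard column vector and $p_n^\top$ is the $n$-th row of $P$, the matrix $e_np_n^\top$ has its $n$-th row equal to $p_n^\top$ and all other rows equal to zero. Adding it to $I-P$ therefore leaves every row $i\neq n$ unchanged (still equal to $e_i^\top-p_i^\top$), while it turns the $n$-th row from $e_n^\top-p_n^\top$ into $e_n^\top-p_n^\top+p_n^\top=e_n^\top$. In other words, $I-P+e_np_n^\top$ is precisely the matrix obtained from $I-P$ by overwriting its $n$-th row with the unit row vector $e_n^\top$.

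Next I would apply this description to both chains. For $i\neq n$, the $i$-th row of $I-P+e_np_n^\top$ is $e_i^\top-p_i^\top$, which by the perturbation hypothesis equals $e_i^\top-\tilde p_i^\top$, i.e.\ the $i$-th row of $I-\tilde P+e_n\tilde p_n^\top$; and by the previous paragraph the $n$-th rows of both matrices equal $e_n^\top$. Hence the two matrices coincide entrywise, $I-P+e_np_n^\top = I-\tilde P+e_n\tilde p_n^\top$. Since $P$ and $\tilde P$ are both stochastic matrices of irreducible Markov chains, Theorem \ref{theorem:special_inverse} guarantees each of these matrices is non-singular, so taking inverses yields $G=(I-P+e_np_n^\top)^{-1}=(I-\tilde P+e_n\tilde p_n^\top)^{-1}=\tilde G$, as claimed.

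There is essentially no obstacle here; the only point requiring care is the bookkeeping in identifying the $n$-th row of the rank-one term $e_np_n^\top$ and then checking that the identity $I-P+e_np_n^\top = I-\tilde P+e_n\tilde p_n^\top$ holds exactly, rather than merely on the rows $i\neq n$ — which it does precisely because the $n$-th row is replaced by the same vector $e_n^\top$ on both sides.
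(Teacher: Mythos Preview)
Your argument is correct: the rank-one correction $e_np_n^\top$ overwrites the $n$-th row of $I-P$ with $e_n^\top$, so $I-P+e_np_n^\top$ depends only on the rows $p_i^\top$ with $i\neq n$, and under the hypothesis these coincide for $P$ and $\tilde P$. The paper itself does not give a proof of this statement---it merely cites Section~5 of \cite{hunter2005stationary}---so there is nothing to compare your approach against; your direct verification is exactly the natural one.
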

Suppose that the perturbation only occurs at the $n$-th row, and let $G=(I-P+e_np_n^\top)^{-1}$. Then Theorem \ref{theorem:first_passage} and \ref{theorem:single_row} together imply that for $i\neq j$, 
\begin{align}
\tilde{T}_{ij} = T_{ij}+(G_{ij}-G_{jj})\left(\frac{1}{\mu_j}-\frac{1}{\tilde{\mu}_j}\right),  \label{equation:first_passage_update}
\end{align}
with $T_{jj}=1/\mu_j$ and $\tilde{T}_{jj}=1/\tilde{\mu}_j$ (Corollary 5.3.1 of  \cite{hunter2005stationary}). Here we see that $\tilde{T}_{in}=T_{in}, \forall i$.

\begin{lemma}
\label{theorem:bound_G}
Let $P$ be the stochastic matrix of an irreducible Markov chain, and let $G=(I-P+e_np_n^\top)^{-1}$. If all mean first passage times are bounded by $D^\prime$ (i.e., $T_{ij}\leq D^\prime\ \  \forall i,j$), then $\lvert G_{ij}-G_{jj}\rvert \leq 2\mu_j D^{\prime} \ \ \forall i,j$. 
\end{lemma}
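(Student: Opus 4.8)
The plan is to use the explicit formula for the g-inverse $G=(I-P+e_np_n^\top)^{-1}$ from Theorem \ref{theorem:first_passage} and bound the quantity $G_{ij}-G_{jj}$ by relating it back to mean first passage times, which are assumed bounded by $D'$. The cleanest route is to invoke Theorem \ref{theorem:first_passage} itself: it states $\mu_j T_{ij}=G_{jj}-G_{ij}+\delta_{ij}+\mu_j\sum_{k=1}^n(G_{ik}-G_{jk})$. Rearranging, $G_{ij}-G_{jj}=\delta_{ij}-\mu_j T_{ij}+\mu_j\sum_{k=1}^n(G_{ik}-G_{jk})$. So the target reduces to controlling the sum $\sum_k(G_{ik}-G_{jk})$ and the term $\mu_j T_{ij}$ separately.

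First I would handle $\mu_j T_{ij}$: since $T_{ij}\le D'$ and $\mu_j=1/T_{jj}\le 1$ (as $T_{jj}\ge 1$), we get $\mu_j T_{ij}\le \mu_j D'$, and the $\delta_{ij}$ term is at most $1\le \mu_j D'$ (using $D'\ge T_{jj}=1/\mu_j$, so $\mu_j D'\ge 1$). Next I would bound $\bigl|\sum_{k=1}^n(G_{ik}-G_{jk})\bigr|$. The natural idea is to observe that this sum is related to the difference of the ``fundamental-matrix-like'' row sums, and in fact there is a standard identity: for the special g-inverse of Theorem \ref{theorem:special_inverse}, one can show $\sum_k (G_{ik}-G_{jk})$ equals a difference that telescopes against passage times — concretely, using that $G$ satisfies $(I-P)G(I-P)=I-P$ and row/column sum properties of $e_np_n^\top$, the quantity $\sum_k G_{ik}$ differs across rows $i$ by an amount expressible through $T_{in}$ or through the $T_{ij}$'s. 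I would likely write $\sum_k(G_{ik}-G_{jk})$ by applying Theorem \ref{theorem:first_passage} a second time with a different choice of target state (or summing the identity over $j$) to express it purely in terms of $\{\mu_k\}$ and $\{T_{\cdot k}\}$, then bound each piece by $D'$ and each $\mu_k\le 1$.

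The main obstacle I anticipate is pinning down the correct bound on $\bigl|\sum_{k=1}^n(G_{ik}-G_{jk})\bigr|$ with the right constant so that everything collapses to the clean factor $2\mu_j D'$ rather than something like $\mu_j D' \cdot n$ or $D'$ without the $\mu_j$. Getting the $\mu_j$ factor out front is the delicate part: it suggests that $\sum_k(G_{ik}-G_{jk})$ should itself be $O(1)$ (a pure difference of row sums of a normalized inverse, independent of $\mu_j$), after which multiplying by $\mu_j$ in the rearranged identity produces the $\mu_j$. So I would aim to prove $\bigl|\sum_{k=1}^n (G_{ik}-G_{jk})\bigr|\le $ some small absolute constant (likely $1$, coming from $\sum_k G_{ik}$ being the same for all $i$ up to a correction tied to the $n$-th coordinate, since the perturbing rank-one term $e_np_n^\top$ only touches row $n$). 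Combining: $|G_{ij}-G_{jj}|\le \mu_j D' + \mu_j \cdot 1 \le 2\mu_j D'$, using $\mu_j D'\ge 1$. I expect the bulk of the write-up to be the careful verification of the row-sum property of $G$, which is routine linear algebra given Theorems \ref{theorem:special_inverse} and \ref{theorem:first_passage}, but must be done without sign errors.
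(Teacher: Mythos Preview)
Your overall framework is correct and matches the paper's: rearrange Theorem~\ref{theorem:first_passage} to get $G_{jj}-G_{ij}=\mu_j T_{ij}-\delta_{ij}-\mu_j\sum_k(G_{ik}-G_{jk})$, bound $\mu_j T_{ij}\le \mu_j D'$, and then control the row-sum difference $\sum_k(G_{ik}-G_{jk})$. The case $i=j$ is trivial, so $\delta_{ij}$ plays no role.

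However, your conjecture that $\bigl|\sum_k(G_{ik}-G_{jk})\bigr|\le 1$ is false, and this is not just a matter of constants. The paper computes the block form
\[
G=\begin{bmatrix}(I-P_n)^{-1}&\mathbf{e}\\ 0&1\end{bmatrix},
\]
from which $\sum_k G_{ik}=e_i^\top(I-P_n)^{-1}\mathbf{e}+1=T_{in}+1$ for $i\ne n$ (using the standard identity that the $i$-th row sum of the fundamental matrix $(I-P_n)^{-1}$ is the mean hitting time of $n$ from $i$), while $\sum_k G_{nk}=1$. Hence $\bigl|\sum_k(G_{ik}-G_{jk})\bigr|=|T_{in}-T_{jn}|\le D'$, not $\le 1$. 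This quantity genuinely scales with $D'$, not with a universal constant; your heuristic that the rank-one perturbation ``only touches row $n$'' does not make the row sums uniform.

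Fortunately, this correction still closes the argument cleanly: for $i\ne j$,
\[
|G_{ij}-G_{jj}|=\Bigl|\mu_j T_{ij}-\mu_j\sum_k(G_{ik}-G_{jk})\Bigr|\le \mu_j D'+\mu_j D'=2\mu_j D',
\]
with no need for the $\mu_j D'\ge 1$ trick. So the missing ingredient in your plan is precisely the block-form computation of $G$ and the identity $e_i^\top(I-P_n)^{-1}\mathbf{e}=T_{in}$; once you have those, the proof is immediate.
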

\begin{proof}
We first verify that 
\begin{align}
G= \label{eqn:Gs_form}
\begin{bmatrix}
    (I-P_n)^{-1}  & \textbf{e} \\
    0 & 1 \\
\end{bmatrix},
\end{align}
where $P_n$ is obtained by deleting the $n$-th row and $n$-th column of $P$ (without loss of generality, assume that the $n$-th row is last row of $G$). 

Directly expanding $I-P+e_np_n^\top$, we get
$$ I-P+e_np_n^\top=
\begin{bmatrix}
    (I-P_n)  & d \\
    0 & 1 \\
\end{bmatrix},
$$
where $d=(-p_{1,n}, -p_{2,n}, ..., -p_{n-1,n})^\top$. To verify that $(I-P+e_np_n^\top)^{-1}$ takes the form of \eqref{eqn:Gs_form}, one only needs to verify that $(I-P_n)\textbf{e}+d=0$. This can be seen by $(I-P_n)\textbf{e}=\textbf{e}-P_n\textbf{e}=(1,...,1)-(\sum_{i=1}^{n-1}p_{1,i},...,\sum_{i=1}^{n-1}p_{n-1,i})=-d$. 

For $i\neq n$, from $G$'s expression in \eqref{eqn:Gs_form}, we have  
\begin{align}
\sum_{k=1}^{n}G_{ik}= e_i^\top G \mathbf{e}=e_i^\top (I-P_n)^{-1}\mathbf{e} + 1. \label{eqn:G_expansion}
\end{align}
Note that the dimension of $e_i$ and $\mathbf{e}$ are $n$ in the second expression of  \eqref{eqn:G_expansion}, while are $n-1$ in the third expression. By \cite{cho2000Markov}'s Equation (2.3), $e_i^\top (I-P_n)^{-1}\mathbf{e}=T_{in}$. One can also see this by observing that $(I-P_n)^{-1}=I+P_n+P_n^2+\cdots$, and $e_i^\top P_n^m e_j$ is ``the probability of staying at $j$ after $m$ steps from $i$, while not visiting $n$ in any of the $m$ steps''. Summing $e_i^\top P_n^m e_j$ over $j$ and $m$, the physical meaning becomes the mean first passage time from $i$ to $n$, and the mathematical expression becomes $e_i^\top (I-P_n)^{-1}\mathbf{e}$. Thus, $\lvert \sum_{k=1}^n (G_{ik}-G_{jk})\rvert=\vert T_{in}-T_{jn} \vert \leq \max_{ij} T_{ij}\leq D^\prime$. By Theorem \ref{theorem:first_passage}, whenever $i\neq j$, 
\begin{align*}
\lvert G_{ij}-G_{jj} \rvert &= \Big\vert \mu_jT_{ij}-\mu_j \sum_{k=1}^n (G_{ik}-G_{jk})\Big\vert  \leq \mu_jT_{ij}+\mu_jD^\prime \leq 2\mu_j D^\prime.
\end{align*}
\end{proof}

We now combine \eqref{equation:first_passage_update} with \eqref{equation:perturb_element} and Lemma \ref{theorem:bound_G}. Assuming that $T_{ij}\leq D^\prime$, we have for $i\neq j$,
\begin{align}
\lvert \tilde{T}_{ij}-T_{ij} \lvert = \lvert G_{ij}-G_{jj}\rvert \frac{\lvert \tilde{\mu}_j-\mu_j\rvert}{\mu_j\tilde{\mu}_j} \leq \frac{\mu_j}{\tilde{\mu}_j}\norm{E}_\infty SD^{\prime 2}. \label{eqn:perturbation_for_first_passage}
\end{align}

With  \eqref{equation:perturb_element} and \eqref{eqn:perturbation_for_first_passage} available, we now consider a general perturbation, which can actually be decomposed as $S$ single-row perturbations.  

\begin{theorem}
\label{theorem:perturbation_first_passage_time}
Let $P$, $\tilde{P}$ be the original and the perturbed stochastic matrices, and let $\{T_{ij}\}$, $\{\tilde{T}_{ij}\}$ be their corresponding mean first passage times. If $\max_{ij}T_{ij}\leq D$ and $\norm{E}_\infty = \norm{\tilde{P}-P}_\infty \leq \frac{1}{8DS^2}$, then $\max_{ij}\tilde{T}_{ij}\leq 2D.$
\end{theorem}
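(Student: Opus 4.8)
The plan is to factor the arbitrary perturbation into a composition of $S$ single-row perturbations — for which the formulas \eqref{equation:first_passage_update}, \eqref{eqn:perturbation_for_first_passage} together with Theorem \ref{theorem:perturbation_stationary_distribution2} and Lemma \ref{theorem:bound_G} are available — and to control the cumulative drift by induction. Concretely, introduce intermediate stochastic matrices $P = P^{(0)}, P^{(1)}, \dots, P^{(S)} = \tilde P$, where $P^{(m)}$ agrees with $\tilde P$ on its first $m$ rows and with $P$ on the remaining rows; then $P^{(m)}$ and $P^{(m-1)}$ differ only in one row, and the single-row error matrix $E^{(m)} = P^{(m)} - P^{(m-1)}$ has all of its entries among those of $E$, so $\norm{E^{(m)}}_\infty \le \norm{E}_\infty \le \frac{1}{8DS^2}$. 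Write $T^{(m)}_{ij}$, $\mu^{(m)}$ for the mean first passage times and stationary distribution of $P^{(m)}$, and set $D_m \coloneqq \max_{ij} T^{(m)}_{ij}$; the theorem is the statement $D_S \le 2D$.

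I would establish $D_m \le 2D$ for every $m$ by induction, the base case being the hypothesis $D_0 \le D$. For the step, assume $D_{m-1} \le 2D$. Since then $\norm{E}_\infty < 2/(S D_{m-1})$, Theorem \ref{theorem:perturbation_stationary_distribution2} shows that $P^{(m)}$ is again irreducible and that $|\mu^{(m)}_j - \mu^{(m-1)}_j| \le O(1/S)\,\mu^{(m-1)}_j$ for every $j$; in particular the ratio $\mu^{(m-1)}_j/\mu^{(m)}_j$ stays bounded by an absolute constant. Plugging $D' = D_{m-1} \le 2D$ into Lemma \ref{theorem:bound_G} and then into \eqref{eqn:perturbation_for_first_passage} bounds each off-diagonal move in this step by $O(\norm{E}_\infty S D^2) = O(D/S)$; summing over the at most $S$ steps keeps every off-diagonal passage time below $D + O(D) < 2D$, with the constant hidden in $\frac{1}{8DS^2}$ tuned so the total increment is at most $D$. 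The return times $T^{(m)}_{jj} = 1/\mu^{(m)}_j$ I would treat separately: telescoping $\mu^{(m)}_j \ge \prod_{l \le m}(1 - O(1/S))\,\mu^{(0)}_j$ shows $\mu^{(m)}_j$ stays within a constant factor of $\mu^{(0)}_j = 1/T^{(0)}_{jj} \ge 1/D$, so $T^{(m)}_{jj} \le O(D) < 2D$. Taking the maximum over the two cases gives $D_m \le 2D$, closing the induction; $m = S$ is the claim.

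The main obstacle is the bookkeeping in the inductive step. The single-row identities \eqref{equation:first_passage_update}, \eqref{eqn:perturbation_for_first_passage} and Lemma \ref{theorem:bound_G} all require a uniform a priori bound on the passage times of the \emph{current} chain $P^{(m-1)}$ as well as control of $\mu^{(m-1)}_j/\mu^{(m)}_j$, and both inputs are furnished precisely by the inductive hypothesis $D_{m-1} \le 2D$ — so the constants must be arranged so that $D_m$ never overshoots $2D$ even though each of the $S$ steps contributes a genuine $\Theta(D/S)$. The chosen threshold $\norm{E}_\infty \le \frac{1}{8DS^2}$ is tuned for exactly this: it makes each step perturb the off-diagonal passage times and the stationary probabilities by only an $O(1/S)$ fraction, so that after $S$ steps the off-diagonal times have grown by at most a constant multiple of $D$ and the return times $1/\mu_j$ are still within a constant factor of their original values. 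A minor wrinkle is that \eqref{equation:first_passage_update} and \eqref{eqn:perturbation_for_first_passage} are stated only for $i \ne j$, which is why the diagonal needs the separate telescoping estimate rather than the single-row update formula.
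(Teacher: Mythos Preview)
Your proposal is correct and follows the same route as the paper: decompose the perturbation into $S$ single-row moves, then induct along the intermediate chains using \eqref{equation:perturb_element} to control the stationary-distribution drift and \eqref{eqn:perturbation_for_first_passage} (via Lemma~\ref{theorem:bound_G}) to control the off-diagonal passage-time drift, with the diagonal handled through $T_{jj}=1/\mu_j$.

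The one place your write-up is looser than the paper is the induction hypothesis. As you note under ``main obstacle,'' the naive hypothesis $D_{m-1}\le 2D$ does not close the step on its own (a single increment of order $D/S$ overshoots $2D$). The paper resolves this exactly as you anticipate, by strengthening the hypothesis to the explicit
\[
\tilde T^{(n)}_{ij}\le D\Bigl(1+\tfrac{n}{S}\Bigr),\qquad
\tilde\mu^{(n)}_j\in\Bigl[\mu^{(0)}_j\bigl(1-\tfrac{1}{8S}\bigr)^n,\;\mu^{(0)}_j\bigl(1+\tfrac{1}{8S}\bigr)^n\Bigr],
\]
which (since $n\le S$) automatically gives $\tilde T^{(n)}_{ij}\le 2D$ and $\tilde\mu^{(n)}_j\ge \mu^{(0)}_j/2\ge 1/(2D)$ along the way. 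With $\|E\|_\infty\le 1/(8DS^2)$ the per-step increments are then at most $D/S$ (off-diagonal and diagonal alike), so the induction goes through cleanly. Your ``summing'' argument amounts to the same thing once phrased as strong induction or with this strengthened hypothesis.
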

\begin{proof}

We do this general perturbation of $P$ by perturbing one row at a time. This procedure will repeat for $S$ times. 

Suppose that the original stationary distribution and first passage times are denoted by $\mu^{(0)}_i$ and $T^{(0)}_{ij}$, and that those after $n$-th perturbation are denoted by $\tilde{\mu}^{(n)}_i$ and $\tilde{T}^{(n)}_{ij}$. 

Suppose that $T_{ij}^{(0)}\leq D\ \forall i,j$ and $\mu_j^{(0)} \geq \frac{1}{D} \ \forall j$. Set $\norm{E}_\infty\leq \frac{1}{8S^2D}$. 
We prove the following facts by induction: 
\begin{align}
\tilde{T}^{(n)}_{ij}&\leq D\left(1+\frac{n}{S}\right), \label{eqn:induction_1} 
\end{align}
\begin{align}
\tilde{\mu}_j^{(n)} &\in \left[ \tilde{\mu}_j^{(0)}\left(1-\frac{1}{8S}\right)^n,  \tilde{\mu}_j^{(0)}\left(1+\frac{1}{8S}\right)^n\right],  \label{eqn:induction_2}
\end{align}
for $n=1, ..., S.$  Since $n\leq S$, these induction hypotheses implicitly imply that 
\begin{align}
\tilde{T}_{ij}^{(n)}&\leq 2D,  \label{eqn:induction_3} 
\end{align}
\begin{align}
\tilde{\mu}_j^{(n)} & \geq \tilde{\mu}_j^{(0)}\left(1-\frac{1}{8S}\right)^S \geq \frac{1}{2D},   \label{eqn:induction_4} 
\end{align}

%$\tilde{T}^{(n)}_{ij}\leq D\left(1+\frac{n}{S}\right)$ and $\tilde{\mu}_j^{(n)} \in \left[ \tilde{\mu}_j^{(0)}\left(1-\frac{1}{8S}\right)^n,  \tilde{\mu}_j^{(0)}\left(1+\frac{1}{8S}\right)^n\right]$, for $n=1, ..., S.$  Since $n\leq S$, these induction hypotheses implicitly imply that $\tilde{T}_{ij}^{(n)}\leq 2D$ and $\tilde{\mu}_j^{(n)} \in \left[ \tilde{\mu}_j^{(0)}\left(1-\frac{1}{8S}\right)^S, \tilde{\mu}_j^{(0)}\left(1+\frac{1}{8S}\right)^S \right] \subset \left[\frac{1}{2}\tilde{\mu}_j^{(0)}, 2\mu_j^{(0)}\right]$.

because $(1-1/(8S))^S \geq 1/2$ for all $S\geq 1$. Now we start the induction. The base case for $n=0$ clearly holds. Suppose that \eqref{eqn:induction_1}-\eqref{eqn:induction_2} hold for all $n\leq k$. Then by \eqref{equation:perturb_element} we have $\Big\vert \tilde{\mu}_j^{(k)}-\tilde{\mu}_j^{(k+1)}\Big\vert  \leq \tilde{\mu}_j^{(k)}\frac{1}{8S} $, so $\tilde{\mu}_j^{(k+1)} \geq \tilde{\mu}_j^{(k)}\left(1-\frac{1}{8S}\right) \geq \tilde{\mu}_j^{(0)}\left(1-\frac{1}{8S}\right)^{(k+1)}$, and $\tilde{\mu}_j^{(k+1)} \leq \tilde{\mu}_j^{(k)}\left(1+\frac{1}{8S}\right) \leq \tilde{\mu}_j^{(0)}\left(1+\frac{1}{8S}\right)^{(k+1)}$. On the other hand, by \eqref{eqn:perturbation_for_first_passage}, for $i\neq j$, we have $\tilde{T}_{ij}^{(k+1)}-\tilde{T}_{ij}^{(k)}\leq \frac{\tilde{\mu}_j^{(k)}}{\tilde{\mu}_j^{(k+1)}}\norm{E}_\infty (2D)^2\leq \frac{1}{1-\frac{1}{8S}}\frac{1}{8S^2D}S(2D)^2 \leq \frac{D}{S}$, so $\tilde{T}_{ij}^{(k+1)}\leq\tilde{T}_{ij}^{(k)}+\frac{D}{S}\leq D\left(1+\frac{k+1}{S}\right)$. In the case $i=j$, we have $\tilde{T}_{jj}^{(k+1)}-\tilde{T}_{jj}^{(k)}=\frac{1}{\tilde{\mu}^{(k+1)}_j}-\frac{1}{\tilde{\mu}^{(k)}_j}\leq\frac{\left(1+\frac{1}{8S}\right)-1}{\tilde{\mu}_j^{(k)}}\leq\frac{2D}{8S}\leq\frac{D}{S}$.
\end{proof}

%#############################################################
%   failing events
%#############################################################

\section{Lemmas for Failing Events}
\label{section:Lemmas for Failing Events}
\begin{lemma}[Proposition 18 of \cite{jaksch2010near}]
\label{lemma:number_phases}
The number of phases is upper bounded by $U_{\max}=SA\log_2{T}$. 
\end{lemma}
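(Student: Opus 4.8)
The plan is to run the standard \emph{doubling} argument behind \cite[Proposition 18]{jaksch2010near}. First I would record two bookkeeping facts about Step 4 of \textsc{UCSG}. The inner loop increments exactly one count $v_k(s,a)$ per time step and halts the instant some pair reaches its threshold, so at the end of phase $k$ one has $v_k(s,a)\le n_k(s,a)$ for every $(s,a)$, with equality for at least one pair; fix one such pair and call it a \emph{trigger} of the phase. Second, writing $N_k(s,a)\coloneqq\sum_{\tau=1}^{t_k-1}\mathbbm{1}_{(s_\tau,a_\tau)=(s,a)}$ for the true visitation count of $(s,a)$ at the start of phase $k$, one has $n_k(s,a)=\max\{1,N_k(s,a)\}$ and $N_{k+1}(s,a)=N_k(s,a)+v_k(s,a)$, since no steps occur outside the phases.

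Next I would fix a pair $(s,a)$ and look at the increasing list of phases $k_1<k_2<\cdots<k_m$ that it triggers. If $(s,a)$ triggers phase $k_j$, then $v_{k_j}(s,a)=n_{k_j}(s,a)$, so $N_{k_j+1}(s,a)=N_{k_j}(s,a)+n_{k_j}(s,a)$, which equals $2N_{k_j}(s,a)$ when $N_{k_j}(s,a)\ge 1$ and equals $1$ when $N_{k_j}(s,a)=0$. Since the counts are non-decreasing in the phase index, $N_{k_{j+1}}(s,a)\ge N_{k_j+1}(s,a)$, so after the first trigger the count of $(s,a)$ is at least $1$ and at least doubles at each subsequent trigger. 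Combined with $N_k(s,a)\le t_k-1<T$, this forces at most $O(\log_2 T)$ triggers for each pair; summing over all $SA$ pairs yields the claimed bound $U_{\max}=SA\log_2 T$ (the precise constant and any floor/ceiling are the harmless rounding already carried out in \cite{jaksch2010near}).

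There is no real obstacle here; this lemma is essentially a reproduction of a known counting argument. The only point that needs a moment of care is the truncation $n_k(s,a)=\max\{1,N_k(s,a)\}$, which spoils the clean ``the count doubles'' statement exactly for the one phase in which a pair is triggered for the first time out of a zero count --- handling that off-by-one, together with the floor/ceiling inside $\log_2 T$, is the entire subtlety. For completeness I would also note that every phase satisfies $T_k\ge 1$ (the loop body executes at least once, since $n_k(s,a)\ge 1>0=v_k(s,a)$ at the start of the phase), so the phases do partition $\{1,\dots,T\}$ and the number of phases is finite to begin with.
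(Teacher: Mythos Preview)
Your proposal is correct and takes essentially the same doubling argument as the paper's proof, which is a brief two-sentence sketch saying that each $(s,a^1,a^2)$ can trigger a phase change at most $\log_2 T$ times and then summing over all $SA$ pairs. Your version is simply a more careful elaboration of that sketch, handling the $\max\{1,\cdot\}$ truncation and the first-trigger edge case that the paper (and \cite{jaksch2010near}) leave implicit.
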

\begin{proof}
Since phase changes only occur when the sample count of some $(s, a^1, a^2)$ is doubled, those changes corresponding to a specific $(s, a^1, a^2)$ is upper bounded by $\log_2T$. Considering all states and actions, the total number of phase changes is upper bounded by $SA\log_2 T$.
\end{proof}
\begin{lemma}[Lemma 17 of \cite{jaksch2010near}]
\label{lemma:weissman_bound}
For some specific $k, s$ and $a$, the event $p(\cdot\vert s, a) \in \textsc{conf}_1(\hat{p}_k(\cdot \vert s,a), n_k(s,a))$ holds with probability at least $1-\delta_1$. 
\end{lemma}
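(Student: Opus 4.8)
The statement is an instance of the classical $\ell_1$-concentration bound for the empirical distribution of i.i.d.\ categorical samples (Weissman et al.), and the plan is to reduce to that setting and then verify the bookkeeping. First I would fix the phase index $k$, the state $s$, and the joint action $a$, and consider the sequence $Y_1, Y_2, \ldots$ of next-states observed on successive visits to $(s,a)$ before time $t_k$. By the Markov property, conditioned on the current state-action being $(s,a)$, the next state is drawn from $p(\cdot\vert s,a)$ independently of the whole past; hence, conditioned on $\{n_k(s,a)=n\}$ with $n\ge 1$, the variables $Y_1,\dots,Y_n$ are i.i.d.\ with law $p(\cdot\vert s,a)$ over the alphabet $\mathcal{S}$ of size $S$, and $\hat p_k(\cdot\vert s,a)$ is exactly their empirical distribution.

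Next I would invoke the Weissman inequality: for an i.i.d.\ sample of size $n$ from a distribution $p$ on an alphabet of size $S$ with empirical distribution $\hat p_n$, $\Pr[\norm{\hat p_n - p}_1 \ge \varepsilon] \le (2^S-2)e^{-n\varepsilon^2/2}$. Substituting the confidence radius $\varepsilon = \sqrt{2S\ln(1/\delta_1)/n}$ gives $n\varepsilon^2/2 = S\ln(1/\delta_1)$, so the right-hand side equals $(2^S-2)\delta_1^{\,S}$; using $2^S-2< 2^S$ and the fact that $\delta_1=\delta/(2S^2A\log_2 T)$ is small (at most $1/4$ for any reasonable $T$), one checks $(2^S-2)\delta_1^{\,S-1}\le 1$, so the failure probability is at most $\delta_1$. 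Taking expectation over the realized value of $n_k(s,a)$ then transfers this to the unconditional event $p(\cdot\vert s,a)\notin \textsc{conf}_1(\hat p_k(\cdot\vert s,a), n_k(s,a))$. I would also dispatch the trivial cases: if $S=1$ the $\ell_1$ distance is identically zero, and if $(s,a)$ was never visited before phase $k$ then $\hat p_k(\cdot\vert s,a)\equiv 0$, $n_k(s,a)=1$, and the radius $\sqrt{2S\ln(1/\delta_1)}\ge 1 = \norm{\hat p_k(\cdot\vert s,a)-p(\cdot\vert s,a)}_1$, so the event holds surely.

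The only genuinely delicate point — and the one I expect to require the most care — is that $n_k(s,a)$ is itself a random quantity that is \emph{not} a stopping time with respect to the filtration generated by $Y_1,Y_2,\dots$ (it depends on visits to other state-action pairs and on the phase-termination rule), so one cannot directly treat $\hat p_k$ as the empirical distribution of a fixed number of i.i.d.\ samples. The standard remedy, exactly as in Lemma~17 of \cite{jaksch2010near}, is to condition on the realized value of the count and absorb the union over the relevant counts into $\delta_1$; combined with the union over phases (at most $SA\log_2 T$ many by Lemma~\ref{lemma:number_phases}), states, and actions, this is precisely what dictates the denominator $2S^2A\log_2 T$ in the definition of $\delta_1$ and is what makes Lemma~\ref{lemma:bound_fail} follow from this per-triple statement.
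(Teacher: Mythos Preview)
Your proposal is correct and is precisely the standard argument: the paper's own ``proof'' is simply a pointer to \cite{jaksch2010near}, and what you have sketched---reducing to the i.i.d.\ next-state sequence upon visits to $(s,a)$, invoking the Weissman $\ell_1$-concentration inequality, and then handling the random sample size via a union bound over counts---is exactly the content of that cited lemma. Your identification of the random-$n_k(s,a)$ issue as the only delicate point is on target; the trivial cases ($S=1$, no prior visits) are also correctly dispatched.
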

\begin{proof}
Please refer to \cite{jaksch2010near}. 
\end{proof}

\begin{lemma}[Lemma 1 of \cite{dann2015sample}, Theorem 10 and 11 of \cite{maurer2009empirical}]
\label{lemma:empirical_bernstein}
For some specific $k, s$ and $a$, the event $p(\cdot\vert s, a) \in \textsc{conf}_2(\hat{p}_k(\cdot \vert s,a), n_k(s,a))$ holds with probability at least $1-S\delta_1$. 
\end{lemma}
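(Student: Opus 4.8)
The plan is to fix the triple $(k,s,a)$, reduce to $S$ one-dimensional concentration statements, and close with a union bound. Write $n=n_k(s,a)$, and for a fixed target state $i\in\{1,\dots,S\}$ let $X_1,\dots,X_n$ be the indicators $\mathbbm{1}\{\text{next state}=i\}$ recorded on the first $n$ visits of $(s,a)$ before phase $k$. Conditioning on the (random) time steps at which those visits occurred, the Markov property makes $X_1,\dots,X_n$ i.i.d.\ Bernoulli$(p_i)$ with $p_i=p(i\mid s,a)$ and $\hat p_i=\hat p_k(i\mid s,a)=\tfrac1n\sum_t X_t$. The useful remark is that for Bernoulli variables the biased empirical variance equals $\hat p_i(1-\hat p_i)$ and the true variance is $p_i(1-p_i)$, so the three inequalities carving out $\mathcal{P}_k(s,a)$ through $\textsc{conf}_2$ read, coordinatewise: (i) the empirical standard deviation $\sqrt{\hat p_i(1-\hat p_i)}$ is within $\sqrt{2\ln(6/\delta_1)/(n-1)}$ of $\sqrt{p_i(1-p_i)}$; (ii) a Hoeffding deviation bound for $\hat p_i$; (iii) an empirical-Bernstein deviation bound for $\hat p_i$ written through the empirical variance.

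For (ii) I would apply Hoeffding's inequality one side at a time: $\Pr[\,\hat p_i-p_i>\sqrt{\ln(6/\delta_1)/(2n)}\,]\le\exp(-\ln(6/\delta_1))=\delta_1/6$, and likewise for the other direction. For (i) and (iii) I would invoke Maurer and Pontil's empirical-Bernstein machinery: their Theorem~10 controls $|\sqrt{\hat V_n}-\sqrt{\mathrm{Var}(X_1)}|$ by $\sqrt{2\ln(2/\delta')/(n-1)}$ with probability $1-\delta'$, and their Theorem~11 controls $|\hat p_i-p_i|$ by $\sqrt{2\hat V_n\ln(2/\delta')/n}+\tfrac{7\ln(2/\delta')}{3(n-1)}$ with probability $1-\delta'$, where $\hat V_n=\tfrac{n}{n-1}\hat p_i(1-\hat p_i)$ is the unbiased variance estimate. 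Taking $\delta'=\delta_1/3$ makes $\ln(2/\delta')=\ln(6/\delta_1)$, so that after the elementary passage between $\hat V_n$ and $\hat p_i(1-\hat p_i)$ (their difference is $O(1/(n-1))$, absorbed into the threshold) and between $n$ and $n-1$, each of these tail events fails with probability at most $\delta_1/6$; the outer $\min$ in the definition of $\textsc{conf}_2$ simply lets us keep whichever of the Hoeffding and Bernstein radii is smaller. Counting the three conditions with two sides apiece, a union bound over the six one-sided tail events shows that for a fixed $i$ all three defining inequalities hold simultaneously with probability at least $1-6\cdot(\delta_1/6)=1-\delta_1$; a final union over $i=1,\dots,S$ gives probability at least $1-S\delta_1$ that $p(\cdot\mid s,a)\in\textsc{conf}_2(\hat p_k(\cdot\mid s,a),n_k(s,a))$, which is the claim.

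The main obstacle I anticipate is bookkeeping rather than probability: $n=n_k(s,a)$ is a data-dependent stopping time, not a fixed sample size, so one must either condition on the realized visit times preceding phase $k$ (legitimate by the Markov property, which is what makes $n$ effectively deterministic inside the conditioning and the $X_t$ genuinely i.i.d.) or, for a strictly anytime guarantee, union bound over the $O(\log_2 T)$ distinct values $n_k(s,a)$ can assume across phases --- which is exactly what the $\log_2 T$ factor hidden inside $\delta_1$ is there to pay for. The second delicate point is matching the precise constants --- the $6$, the $\tfrac{7}{3}$, and the $n$-versus-$(n-1)$ denominators --- against the exact statements of Maurer--Pontil's Theorems~10 and~11; this is routine but fussy, which is why the lemma is imported wholesale from \cite{dann2015sample, maurer2009empirical} rather than reproved here.
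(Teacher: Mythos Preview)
The paper does not prove this lemma itself; its proof reads in full ``Please refer to \cite{dann2015sample} or \cite{maurer2009empirical}.'' Your proposal is a correct and faithful reconstruction of the argument in those references --- per-coordinate Hoeffding plus Maurer--Pontil's empirical-Bernstein bounds, each budgeted at failure $\delta_1/3$, followed by a union bound over the $S$ next states --- and your caveats about the stopping-time issue and the constant bookkeeping are exactly the right ones to flag.
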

\begin{proof}
Please refer to \cite{dann2015sample} or \cite{maurer2009empirical}. 
\end{proof}

\begin{proof}[Proof of Lemma \ref{lemma:bound_fail}]
By Lemma \ref{lemma:number_phases}, there are at most $SA\log_2 T$ confidence set updates to consider. Each update involves only a specific $\hat{p}(\cdot\vert s,a)$ (totally $S$ entries). By Lemma \ref{lemma:weissman_bound}, \ref{lemma:empirical_bernstein} and using the union bound, the event $M\in\mathcal{M}_k \forall k$ holds with probability at least $1-SA\log_2T\times(\delta_1+S\delta_1)\geq 1-\delta$.
\end{proof}

%#################################################################
%   Value Iteration
%#################################################################
\if 0
\section{Lemmas for Extended Value Iteration}
\begin{proof}[Proof of Lemma \ref{lemma:convergence}]
By the inequality $\min_{i,j}(A_{ij}-B_{ij}) \leq \val A-\val B \leq \max_{i,j}(A_{ij}-B_{ij})$, we have
\begin{align}
&u_{i+1}(s)-u_{i}(s) \nonumber \\
&\leq \max_{a^1,a^2}\max_{\tilde{p}_\gamma(\cdot)\in\mathcal{P}_\gamma(s,a^1,a^2)}\sum_{s^\prime}\tilde{p}_\gamma(s^\prime)\left(u_{i}(s^\prime)-u_{i-1}(s^\prime)\right), \label{eqn:maxmaxbound} 
\end{align}
and 
\begin{align}
&u_{i+1}(s)-u_{i}(s) \nonumber \\
&\geq \min_{a^1,a^2}\min_{\tilde{p}_\varepsilon(\cdot)\in\mathcal{P}_\gamma(s,a^1,a^2)}\sum_{s^\prime}\tilde{p}_\gamma(s^\prime)\left(u_{i}(s^\prime)-u_{i-1}(s^\prime)\right). \label{eqn:minminbound}
\end{align}
Let $s^+=\argmax_s\{u_{i+1}(s)-u_{i}(s)\}$ and $s^-=\argmin_s\{u_{i+1}(s)-u_{i}(s)\}$. Denote the optimal $\tilde{p}_\gamma$ obtained in \eqref{eqn:maxmaxbound} with $s=s^+$ by $\tilde{p}_\gamma^+$; the one obtained in  \eqref{eqn:minminbound} with $s=s^-$ by $\tilde{p}_\gamma^-$. Also, let $\alpha=\max_s\{u_i(s)-u_{i-1}(s)\}$ and $\beta=\min_s\{u_i(s)-u_{i-1}(s)\}$. We then have
\begin{align*}
&\spa(u_{i+1}-u_i)\\
&=\left(u_{i+1}(s^+)-u_{i}(s^+)\right)-\left(u_{i+1}(s^-)-u_{i}(s^-)\right) \\
&\leq \sum_{s^\prime} (\tilde{p}_\gamma^+(s^\prime) - \tilde{p}_\gamma^-(s^\prime)) (u_i(s^\prime)-u_{i-1}(s^\prime)) \\
&= \sum_{s^\prime} (\tilde{p}_\gamma^+(s^\prime) - \tilde{p}_\gamma^-(s^\prime)) (u_i(s^\prime)-u_{i-1}(s^\prime)-\beta) \\
&\leq \sum_{s^\prime} \left(\tilde{p}_\gamma^+(s^\prime) - \frac{\gamma}{S}\right) (\alpha-\beta)=(1-\gamma)(\alpha-\beta)\\
&=(1-\gamma)\spa(u_i-u_{i-1}).
\end{align*}
As $\spa(u_1-u_0)=\spa(u_1)\leq 1$, by induction we can obtain the desired result. 
\end{proof}
\fi
%#################################################################
%stationary policy
%#################################################################
\section{Lemmas for Stationary Optimal Policies}\label{stationary_optimal_policy}
\begin{theorem}
\label{theorem:stationary_optimal}
Given a stochastic game $M=(\mathcal{S},\mathcal{A},r,p)$, where $\mathcal{S}$ is countable, $\mathcal{A}=\mathcal{A}^1\times\mathcal{A}^2$ a compact metric space and both $r(s,\cdot)\in[0,1]$ and $p(s'|s,\cdot)$ are continuous in $a=(a^1,a^2)$. Suppose Assumption \ref{assumption:ergodic} holds for $M$. Then there exist maximin stationary policies $\pi^{*}=(\pi^{1*},\pi^{2*})$ for the two-player zero-sum stochastic game, the maximin stationary policies attain the game value $\rho^*$, which is independent of the initial state, and there is a bounded function $h(\cdot)$ which together with $\rho^*$ satisfies the following Bellman equation. That is, for all state $s$,
\begin{align*}
&\scalebox{1.0}{$ \displaystyle \rho^* + h(s) =\max_{\pi^1\in \Pi^{\text{SR}}} \Big\{ r(s,\pi^1,\pi^{2*}) + \sum_{s'} p(s'|s,\pi^1,\pi^{2*})h(s') \Big\} $} \\
&\scalebox{1.0}{$ \displaystyle \rho^* + h(s) =\min_{\pi^2 \in \Pi^{\text{SR}}} \Big\{ r(s,\pi^{1*},\pi^2) + \sum_{s'} p(s'|s,\pi^{1*},\pi^2)h(s') \Big\}. $}
\end{align*}
\end{theorem}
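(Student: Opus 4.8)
The plan is to establish the existence of stationary maximin policies and the Bellman equation by a vanishing-discount argument, which is the classical route for average-reward dynamic programming problems with a uniform ergodicity/recurrence condition. The key tool is that Assumption \ref{assumption:ergodic} forces a uniform bound on the span of the discounted value functions, uniformly in the discount factor $\beta \to 1$, which then lets us pass to the limit.

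\textbf{Step 1: Discounted game values and stationary optimal policies.} For each discount factor $\beta \in (0,1)$, consider the $\beta$-discounted zero-sum stochastic game. By Shapley's classical result (applicable since $\mathcal{S}$ is countable, $\mathcal{A}$ is a compact metric space, and $r, p$ are continuous in $a$), the discounted game has a value $v_\beta(\cdot)$ satisfying the discounted Bellman (Shapley) equation $v_\beta(s) = \val\{ r(s,\cdot,\cdot) + \beta \sum_{s'} p(s'|s,\cdot,\cdot) v_\beta(s') \}$, and both players have stationary optimal policies $\pi^{1}_\beta, \pi^2_\beta$ attaining it (the matrix game at each state has a value and optimal mixed strategies; a measurable-selection argument gives the stationary policies).

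\textbf{Step 2: Span bound via Assumption \ref{assumption:ergodic}.} I would show $\spa(v_\beta) \le$ some constant (on the order of $D/(1-\beta) \cdot (1-\beta) = O(D)$; more precisely I want a bound independent of $\beta$). Fix states $s, s'$. Player 1 can use the following policy starting from $s'$: follow the policy that, against Player 2's stationary optimal response $\pi^2_\beta$, reaches $s$ in expected time at most $D$ (this uses Assumption \ref{assumption:ergodic}: for the fixed stationary $\pi^2_\beta$, Player 1 has a stationary policy with mean first passage time $\le D$), then switch to playing optimally from $s$. A standard coupling/optional-stopping computation with the geometric discounting shows $v_\beta(s') \ge v_\beta(s) - c\,D$ for an absolute constant $c$ (using $r \in [0,1]$ so the reward lost during the at-most-$D$ transient steps is bounded, and discounting only helps). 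By symmetry of the roles in Assumption \ref{assumption:ergodic} — note it fixes $\pi^2$ and lets Player 1 choose $\pi^1$, which is exactly the direction we need — we get $\spa(v_\beta) \le cD$ uniformly in $\beta$.

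\textbf{Step 3: Vanishing-discount limit.} Pick a reference state $s_0$. Set $h_\beta(s) := v_\beta(s) - v_\beta(s_0)$; by Step 2 the family $\{h_\beta\}$ is uniformly bounded (in sup norm, by $cD$), and $(1-\beta)v_\beta(s_0)$ is bounded in $[0,1]$. Extract a sequence $\beta_n \to 1$ along which $(1-\beta_n)v_{\beta_n}(s_0) \to \rho$ for some scalar $\rho$ and $h_{\beta_n}(s) \to h(s)$ for every $s$ (diagonal extraction over the countable state space, using boundedness). Rewriting the Shapley equation as $(1-\beta)v_\beta(s_0) + h_\beta(s) = \val\{ r(s,\cdot,\cdot) + \beta\sum_{s'}p(s'|s,\cdot,\cdot) h_\beta(s')\} + (\beta - 1)\text{(small terms)}$ and passing to the limit — here I need continuity/dominated convergence to move the limit inside the sum over $s'$ and inside the $\val$ of the matrix game (the $\val$ operator is continuous, even $1$-Lipschitz, in the entries) — yields the average-reward Bellman equation $\rho + h(s) = \val\{ r(s,\cdot,\cdot) + \sum_{s'} p(s'|s,\cdot,\cdot) h(s')\}$ for all $s$, with $h$ bounded.

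\textbf{Step 4: From the Bellman equation to maximin policies, value, and state-independence.} Let $\pi^{1*} = \solve_1$ and $\pi^{2*} = \solve_2$ of the limiting matrix games (again measurable selection). Standard verification: iterating the equation $\rho + h(s) \le r(s,\pi^{1*},\pi^2) + \sum p(\cdot)h(\cdot)$ for any $\pi^2$, dividing by $T$ and using boundedness of $h$, shows $\rho \le \rho(M,\pi^{1*},\pi^2,s)$ for every $\pi^2$ and every $s$; likewise $\rho \ge \rho(M,\pi^1,\pi^{2*},s)$ for every $\pi^1$. Hence $\sup_{\pi^1}\inf_{\pi^2}\rho(M,\pi^1,\pi^2,s) \ge \rho \ge \inf_{\pi^2}\sup_{\pi^1}\rho(M,\pi^1,\pi^2,s) \ge \sup_{\pi^1}\inf_{\pi^2}\rho(M,\pi^1,\pi^2,s)$, so all are equal to $\rho$, independent of $s$, and $(\pi^{1*},\pi^{2*})$ attain it. Finally, splitting the $\val$ in the Bellman equation into its two one-sided forms against the fixed optimal policies gives the two displayed equations.

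\textbf{Main obstacle.} The delicate point is Step 2 — getting the span bound $\spa(v_\beta) = O(D)$ uniformly in $\beta$ using only Assumption \ref{assumption:ergodic} (which quantifies over stationary $\pi^2$ and gives Player 1 a stationary response), since Player 2's discounted-optimal policy $\pi^2_\beta$ is itself stationary, so the assumption applies to it; but one must be careful that the "go to $s$ then play optimally" strategy is a legitimate (history-dependent) policy and that the optional-stopping estimate against a discounted objective is valid. Secondary care is needed in Step 3 to justify interchanging the limit with the infinite sum $\sum_{s'}$ (dominated convergence, using the uniform bound on $h_\beta$ is not by itself enough for an infinite sum — one also needs, e.g., that the Bellman operator maps bounded functions to bounded functions and some tightness, or one restricts attention to the finite state space which is the case actually used in the paper). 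For the finite-$\mathcal{S}$ case used throughout \textsc{UCSG}, all these technicalities evaporate and the argument is clean.
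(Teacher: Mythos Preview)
Your proposal is correct and follows essentially the same vanishing-discount route as the paper: the paper likewise (i) invokes Shapley for the discounted game, (ii) bounds $\spa(V_\alpha^*)\le D$ uniformly in $\alpha$ via Assumption~\ref{assumption:ergodic} using an optional-stopping argument against Player~2's discounted-optimal stationary policy $\pi^2_\alpha$ (with the same Jensen and $(1-\alpha)V_\alpha^*\le 1$ steps you allude to), and (iii) extracts a convergent subsequence as $\alpha\to 1$ to obtain $(\rho^*,h,\pi^{1*},\pi^{2*})$ satisfying the displayed Bellman equations. The only minor tactical difference is that the paper takes $\pi^{i*}$ as subsequential limits of the discounted-optimal policies $\pi^i_\alpha$ (using compactness of $\mathcal{A}$ and continuity of $r,p$) rather than defining them afresh via $\solve_1/\solve_2$ of the limiting matrix game as you do in Step~4; both variants are valid and the difference is immaterial.
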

To prove this, we use the following lemma which connects the boundedness of mean first passage times with the uniform boundedness of $\spa(V_\alpha^*(\cdot))$ for all discount factor $0<\alpha<1$, where $V_\alpha^*(\cdot)$ is the discounted game value defined as $V_\alpha^*(s)=\max_{\pi^1}\min_{\pi^2} \mathbbm{E}^{\pi^1, \pi^2}\left[\sum_{t=1}^{\infty}\alpha^{t-1}r_t \vert s_1=s \right]$. It is known that for any discount factor $0<\alpha<1$, discounted SGs always have maximin stationary policies $\pi_\alpha=(\pi^1_\alpha,\pi^2_\alpha)$ which attain the game value $V_\alpha^*(s)$ for all $s$. We next show that the span of $V_\alpha^*$ is uniformly bounded by $D$ under Assumption \ref{assumption:ergodic}.
\begin{lemma} \cite{hordijk1974dynamic}
Suppose given a stochastic game $M=(\mathcal{S},\mathcal{A},r,p)$, where $0\leq r(s,a^1,a^2)\leq 1$. Suppose $\forall s,s'\in \mathcal{S}$ and for any $\pi^2\in \Pi^{\text{SR}}$ for Player 2, there exists a $\pi^1\in \Pi^{\text{SR}}$ for Player 1 such that the mean first passage time $T^{\pi^1,\pi^2}_{s,s'}\leq D$. Then we have $|V_{\alpha}^*(s)-V_\alpha^*(s')|\leq D, \ \forall s,s'\in \mathcal{S}$, for all $0<\alpha<1$.
\end{lemma}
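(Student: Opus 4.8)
The plan is to fix $0<\alpha<1$ and two states, exploit the saddle-point structure of the $\alpha$-discounted game, and run a first-passage (``restart'') argument. Fix $s,s'$ with $s\neq s'$ and, without loss of generality, $V_\alpha^*(s)\geq V_\alpha^*(s')$ (the case $s=s'$ being trivial). Let $(\pi^1_\alpha,\pi^2_\alpha)\in\Pi^{\text{SR}}\times\Pi^{\text{SR}}$ be a pair of stationary maximin policies attaining $V_\alpha^*(\cdot)$ from every initial state, which exists by Shapley's theorem under the stated regularity conditions. The first step is the observation that, once Player 2 commits to the stationary policy $\pi^2_\alpha$, Player 1 faces a discounted MDP whose optimal value from any state $s''$ equals exactly $V_\alpha^*(s'')$; indeed $\pi^1_\alpha$ guarantees at least $V_\alpha^*(s'')$ against every opponent and $\pi^2_\alpha$ holds Player 1 to at most $V_\alpha^*(s'')$ against every opponent, so
\begin{align*}
\max_{\pi^1\in\Pi^{\text{HR}}}\mathbbm{E}^{\pi^1,\pi^2_\alpha}\Big[\sum_{t=1}^{\infty}\alpha^{t-1}r_t \,\Big|\, s_1=s''\Big]=V_\alpha^*(s'')\qquad\text{for all }s''.
\end{align*}

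Next I would construct a concrete Player-1 strategy from $s'$. Apply the hypothesis with the stationary Player-2 policy $\pi^2=\pi^2_\alpha$ and the ordered pair $(s',s)$: there is $\pi^1\in\Pi^{\text{SR}}$ with $T^{\pi^1,\pi^2_\alpha}_{s',s}\leq D$. Let $\bar\pi^1$ be the history-dependent Player-1 policy that plays $\pi^1$ until state $s$ is first reached and then switches to $\pi^1_\alpha$, and let $\tau$ be that first-passage time, so under $(\bar\pi^1,\pi^2_\alpha)$ started at $s_1=s'$ we have $\mathbbm{E}[\tau]=T^{\pi^1,\pi^2_\alpha}_{s',s}\leq D$. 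Splitting the discounted return at $\tau$, discarding the nonnegative rewards collected before $\tau$, applying the strong Markov property at $\tau$ (the post-$\tau$ trajectory is at state $s$ and governed by $(\pi^1_\alpha,\pi^2_\alpha)$, hence has conditional expected discounted value $V_\alpha^*(s)$), and using that $\bar\pi^1$ is one admissible Player-1 strategy against $\pi^2_\alpha$ together with the identity above, I obtain
\begin{align*}
V_\alpha^*(s')\ \geq\ \mathbbm{E}^{\bar\pi^1,\pi^2_\alpha}\Big[\sum_{t=1}^{\infty}\alpha^{t-1}r_t \,\Big|\, s_1=s'\Big]\ \geq\ \mathbbm{E}\big[\alpha^{\tau}\big]\,V_\alpha^*(s).
\end{align*}

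Finally I would turn this into the stated bound. By Bernoulli's inequality $\alpha^{\tau}=(1-(1-\alpha))^{\tau}\geq 1-\tau(1-\alpha)$, hence $\mathbbm{E}[\alpha^{\tau}]\geq 1-(1-\alpha)\,\mathbbm{E}[\tau]\geq 1-(1-\alpha)D$. Combining with $0\leq V_\alpha^*(s)\leq 1/(1-\alpha)$ (rewards lie in $[0,1]$),
\begin{align*}
V_\alpha^*(s)-V_\alpha^*(s')\ \leq\ \big(1-\mathbbm{E}[\alpha^{\tau}]\big)\,V_\alpha^*(s)\ \leq\ (1-\alpha)D\cdot\frac{1}{1-\alpha}\ =\ D .
\end{align*}
Since $s,s'$ and $\alpha$ were arbitrary, $|V_\alpha^*(s)-V_\alpha^*(s')|\leq D$ for all $s,s'\in\mathcal{S}$ and all $0<\alpha<1$.

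The only genuinely delicate point is the restart step: one must check that $\tau$ is an almost surely finite stopping time (immediate from $\mathbbm{E}[\tau]\leq D$), that $\bar\pi^1$ is a legitimate history-dependent Player-1 strategy, and that conditioning on the history up to $\tau$ the residual discounted reward contributes exactly $\alpha^{\tau}V_\alpha^*(s)$ in expectation --- which relies on the time-homogeneity of the SG under fixed stationary policies and on $\pi^1_\alpha,\pi^2_\alpha$ depending on the current state only. Everything else (the MDP reduction against a stationary opponent, Bernoulli's inequality, and the crude bound $V_\alpha^*\leq 1/(1-\alpha)$) is routine.
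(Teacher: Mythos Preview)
Your proposal is correct and follows essentially the same first-passage/restart argument as the paper: fix $\pi^2_\alpha$, travel from the ``worse'' state to the ``better'' state in expected time at most $D$, and combine $V_\alpha^*\le 1/(1-\alpha)$ with a Bernoulli-type bound on $\alpha^\tau$. The only cosmetic differences are that you build an explicit switching strategy $\bar\pi^1$ and apply Bernoulli's inequality pointwise before taking expectations, whereas the paper iterates the one-step Bellman inequality $V_\alpha^*(s)\ge r(s,\pi^1,\pi^2_\alpha)+\alpha\sum_{s'}p(s'|s,\pi^1,\pi^2_\alpha)V_\alpha^*(s')$ up to the hitting time and then applies Jensen followed by $\alpha^x\ge 1-(1-\alpha)x$; both routes give the identical bound.
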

\begin{proof}
Fix $s,s'\in \mathcal{S}$. Fix a discount factor $0<\alpha<1$. For a fixed pair of maximin stationary policies $\pi_\alpha=(\pi^1_\alpha,\pi^2_\alpha)\in \Pi^{\text{SR}}\times\Pi^{\text{SR}}$, the discounted value function satisfies $V_\alpha^*(s)=r(s,\pi^1_\alpha,\pi^2_\alpha)+\alpha\sum_{s'}p(s'|s,\pi^1_\alpha,\pi^2_\alpha)V^*_\alpha(s')$. Since for any $\pi^1\in \Pi^{\text{SR}}$, $V_\alpha^*(s)\geq r(s,\pi^1,\pi^2_\alpha)+\alpha\sum_{s'} p(s'|s,\pi^1,\pi^2_\alpha)V_\alpha^*(s')$, thus recursively, for any time step $T\geq 1$, we have 
$$V_\alpha^*(s)\geq \mathbb{E}_s^{\pi^1,\pi^2_\alpha}\Big[\sum_{t=1}^{T-1}\alpha^{t-1}r_t+\alpha^{T-1}V_\alpha^*(s_T)\Big], $$
where $\mathbb{E}_s^{\pi^1,\pi^2}\left[\cdot\right]=\mathbb{E}_s\left[\cdot\right\vert \pi^1,\pi^2]$ denote the expectation conditioned on initial state being $s$, and the players executing the policy pair $(\pi^1, \pi^2)$. 
 Hence for any stopping time $\tau$, $$V_\alpha^*(s)\geq \mathbb{E}_s^{\pi^1,\pi^2_\alpha}\Big[\sum_{t=1}^{\tau-1}\alpha^{t-1}r_t+\alpha^{\tau-1} V_\alpha^*(s_\tau)\Big].$$ In particular, by choosing $\tau$ as the hitting time of $s'$ from $s$,
\begin{align*}
V_\alpha^*(s) & \geq \mathbb{E}^{\pi^1,\pi^2_\alpha}_s\Big[\sum_{t=1}^{\tau-1}\alpha^{t-1}r_t\Big]+\mathbb{E}_s\Big[\alpha^{\tau}\Big\vert \pi^1,\pi^2_\alpha\Big]V_\alpha^*(s')\\
&\geq \alpha^{\mathbb{E}_s\big[\tau\big\vert \pi^1,\pi^2_\alpha\big]}V_\alpha^*(s')=\alpha^{T^{\pi^1,\pi^2_\alpha}_{s\rightarrow s'}}V_\alpha^*(s')\\
&\geq V_\alpha^*(s')-(1-\alpha)(T^{\pi^1,\pi^2_\alpha}_{s\rightarrow s'})V_\alpha^*(s')\\
&\geq V_\alpha^*(s')-T^{\pi^1,\pi^2_\alpha}_{s\rightarrow s'}\\
&\geq V_\alpha^*(s')-D.
\end{align*}
For the first inequality we used $V_\alpha(s_\tau)=V_\alpha(s')$ and for the second, the non-negativity of $r(s,a)$ and Jensen's inequality. The equality holds since the expected value of hitting time is the mean first passage time $T^{\pi^1,\pi^2_\alpha}_{s\rightarrow s'}$. The third inequality is essentially $\alpha^x \geq (\alpha-1)x+1$ for $x\geq 1$; the fourth $(1-\alpha)V_\alpha\leq 1$. For the last inequality we used the assumption that there exists some $\pi^1$ for which $T^{\pi^1,\pi^2_\alpha}_{s\rightarrow s'}\leq D$. 
\end{proof}
\begin{lemma}\cite{federgruen1978Nperson}
Suppose $|V^*_\alpha(s)-V^*_\alpha(s')|$ is uniformly bounded for all $0<\alpha<1$ and for any $s,s'\in \mathcal{S}$. Then there exist a pair of maximin stationary policies $\pi=(\pi^{1*},\pi^{2*})$ attaining the game value $\rho^*$ which is independent of the initial state and a bounded function $h(\cdot)$ for which the following equations hold.
For all state $s$,
\begin{align*}
&\scalebox{1.0}{$ \displaystyle \rho^* + h(s) =\max_{\pi^1\in \Pi^{\text{SR}}} \Big\{ r(s,\pi^1,\pi^{2*}) + \sum_{s'} p(s'|s,\pi^1,\pi^{2*})h(s') \Big\}, $} \\
&\scalebox{1.0}{$ \displaystyle \rho^* + h(s) =\min_{\pi^2 \in \Pi^{\text{SR}}} \Big\{ r(s,\pi^{1*},\pi^2) + \sum_{s'} p(s'|s,\pi^{1*},\pi^2)h(s') \Big\}. $}
\end{align*}
\end{lemma}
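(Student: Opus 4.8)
The plan is to prove this as a \emph{vanishing-discount} limit. First I would use the fact recalled just before the lemma: for each fixed $0<\alpha<1$ the discounted game possesses stationary maximin policies $(\pi^1_\alpha,\pi^2_\alpha)$, and its value $V_\alpha^*$ is the (unique) solution of the discounted Shapley equation
\[
V_\alpha^*(s)=\val\{\, r(s,\cdot,\cdot)+\alpha\textstyle\sum_{s'}p(s'\vert s,\cdot,\cdot)\,V_\alpha^*(s')\,\}.
\]
Fixing a reference state $s_0$, I would introduce the relative value $h_\alpha(s)\coloneqq V_\alpha^*(s)-V_\alpha^*(s_0)$ and the scalar $\rho_\alpha\coloneqq(1-\alpha)V_\alpha^*(s_0)$. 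The hypothesis of uniform boundedness gives $\lvert h_\alpha(s)\rvert\le D$ for every $s$ and every $\alpha$, while $r\in[0,1]$ forces $\rho_\alpha\in[0,1]$.

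Second, I would rewrite the Shapley equation in terms of $(h_\alpha,\rho_\alpha)$. Substituting $V_\alpha^*(s')=h_\alpha(s')+V_\alpha^*(s_0)$ inside the game, the constant term $\alpha V_\alpha^*(s_0)\sum_{s'}p(s'\vert s,a)=\alpha V_\alpha^*(s_0)$ is added uniformly to every payoff entry (as $P$ is stochastic), so it shifts the matrix-game value by the same constant; cancelling it against the $V_\alpha^*(s_0)$ on the left yields
\[
\rho_\alpha+h_\alpha(s)=\val\{\, r(s,\cdot,\cdot)+\alpha\textstyle\sum_{s'}p(s'\vert s,\cdot,\cdot)\,h_\alpha(s')\,\}.
\]

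Third comes the limiting step. The functions $h_\alpha$ lie in the compact metrizable product space $[-D,D]^{\mathcal{S}}$, so along some sequence $\alpha_n\uparrow 1$ I can extract, by a diagonal argument over the countable state space, coordinatewise limits $h_{\alpha_n}(s)\to h(s)$ with $\lvert h(s)\rvert\le D$, and simultaneously $\rho_{\alpha_n}\to\rho^*\in[0,1]$. To pass to the limit I would use (i) dominated convergence to move the limit through the sum $\sum_{s'}p(s'\vert s,a)h_{\alpha_n}(s')$, justified by $\lvert h_{\alpha_n}\rvert\le D$ and $\sum_{s'}p=1$; and (ii) the fact that $\val$ is $1$-Lipschitz in the sup-norm of its payoff entries, while the entries converge uniformly because $r$ and $p$ are continuous on the compact action set $\mathcal{A}$. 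This produces the single average-reward optimality equation $\rho^*+h(s)=\val\{r(s,\cdot,\cdot)+\sum_{s'}p(s'\vert s,\cdot,\cdot)h(s')\}$.

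Finally I would split this equation and verify optimality. Applying Sion's minimax theorem to the continuous game at each state, I set $\pi^{1*}(\cdot\vert s)$ and $\pi^{2*}(\cdot\vert s)$ to be the maximizer $\solve_1$ and minimizer $\solve_2$ of the matrix game $\{r+Ph\}$; by definition of the value these satisfy the max-form and min-form equations respectively. That $\rho^*$ is the game value attained by $(\pi^{1*},\pi^{2*})$ independently of $s$ then follows by iterating each equation: the max-form shows $\pi^{1*}$ guarantees $\tfrac1T\mathbb{E}[R_T]\to\rho^*$ from below against every $\pi^2$ (the bias terms, bounded by $\spa(h)\le 2D$, telescope and vanish after dividing by $T$), and symmetrically the min-form shows $\pi^{2*}$ caps the average reward at $\rho^*$ against every $\pi^1$; together these give the saddle point. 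The main obstacle I expect is precisely this limiting step for a \emph{countable} state space: extracting one subsequence along which \emph{all} coordinates of $h_\alpha$ converge and justifying the simultaneous interchange of limit, infinite sum, and $\max$-$\min$. The uniform bound $\lvert h_\alpha\rvert\le D$ together with the Lipschitz continuity of $\val$ is exactly what legitimizes this interchange, so most of the care goes into that estimate rather than any single algebraic identity.
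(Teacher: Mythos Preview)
Your proposal is correct and follows the same vanishing-discount scheme as the paper: both recenter $V_\alpha^*$ around a reference state, use the uniform bound to place the relative values in a compact product space, extract subsequential limits $(h,\rho^*)$ via a diagonal argument over the countable state set, and pass to the limit with dominated convergence.

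The one methodological difference is how the limiting policies $(\pi^{1*},\pi^{2*})$ are produced. The paper extracts limits of the discounted-optimal policies themselves, $\pi^i_{\alpha_k}\to\pi^{i*}$ (using compactness of $\Pi^{\mathrm{SR}}$), and passes to the limit directly in the one-sided equations $v_\alpha(s)=\max_{\pi^1}\{\,\cdots\pi^2_\alpha\cdots\,\}$ and its min counterpart; this is where the continuity of $r$ and $p$ in $a$ is used. You instead pass to the limit in the single $\val$ equation (exploiting that $\val$ is $1$-Lipschitz in the payoff matrix) and only afterwards define $\pi^{i*}$ as the Sion/minimax solution of the limit game at each state. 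Your route sidesteps any topology on the policy space; the paper's route sidesteps the need for Sion but must justify convergence of the policy-dependent quantities. You also add the short telescoping verification that $(\pi^{1*},\pi^{2*})$ actually attains the average value $\rho^*$, which the paper leaves implicit.
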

\begin{proof}
For any discount factor $0<\alpha<1$,
\begin{align*}
&V^*_\alpha(s)=\max_{\pi^1}\{r(s,\pi^1,\pi^2_\alpha)+\alpha\sum_{s'}p(s'|s,\pi^1,\pi^2_\alpha)V^*_\alpha(s')\},\\
&V^*_\alpha(s)=\min_{\pi^2}\{r(s,\pi^1_\alpha,\pi^2)+\alpha\sum_{s'}p(s'|s,\pi^1_\alpha,\pi^2)V^*_\alpha(s')\}.
\end{align*}
Subtracting both sides by $V_\alpha^*(s_1)$ for some fixed state $s_1$, and defining $v_\alpha(s)\coloneqq V^*_\alpha(s)-V_\alpha^*(s_1)$, we get, for all $s$,
\begin{align*}
&\scalebox{1.0}{$\displaystyle v_\alpha(s)=\max_{\pi^1}\{r(s,\pi^1,\pi^2_\alpha)-(1-\alpha)V_\alpha^*(s_1)+\alpha\sum_{s'}p(s'|s,\pi^1,\pi^2_\alpha)v_\alpha(s')\},$}\\
&\scalebox{1.0}{$\displaystyle v_\alpha(s)=\min_{\pi^2}\{r(s,\pi^1_\alpha,\pi^2)-(1-\alpha)V_\alpha^*(s_1)+\alpha\sum_{s'}p(s'|s,\pi^1_\alpha,\pi^2)v_\alpha(s')\}.$}
\end{align*}
Since $-D\leq v_\alpha(s)\leq D$, $0\leq (1-\alpha)V^*_\alpha(s_1)\leq 1$ and $\pi_\alpha^i\in\Pi^{\text{SR}},\ (i=1,2)$, all of which are contained in compact subsets/spaces, by using diagonalization argument and by Lebesgue convergence theorem, we can obtain a sequence $\alpha_k\rightarrow 1$, a bounded function $h$, and a constant $\rho^*$ such that $v_{\alpha_k}(\cdot)\rightarrow h(\cdot)$, $(1-\alpha_k)V^*_{\alpha_k}(s_1)\rightarrow \rho^*$, $\pi_{\alpha_k}^i\rightarrow \pi^{i*},\ (i=1,2)$, and
\begin{align*}
&\scalebox{1.0}{$ \displaystyle \alpha_k\sum_{s'}p(s'|s,\pi^1,\pi^2_{\alpha_k})v_{\alpha_k}(s')\rightarrow \sum_{s'}p(s'|s,\pi^1,\pi^{2*})h(s'),$}\\
&\scalebox{1.0}{$ \displaystyle \alpha_k\sum_{s'}p(s'|s,\pi^1_{\alpha_k},\pi^2)v_{\alpha_k}(s')\rightarrow \sum_{s'}p(s'|s,\pi^{1*},\pi^2)h(s'),$}
\end{align*}
as $k \rightarrow \infty$.
Hence in the limit, for all state $s$,
\begin{align*}
&\scalebox{1.0}{$ \displaystyle \rho^* + h(s) =\max_{\pi^1\in \Pi^{\text{SR}}} \Big\{ r(s,\pi^1,\pi^{2*}) + \sum_{s'} p(s'|s,\pi^1,\pi^{2*})h(s') \Big\}, $} \\
&\scalebox{1.0}{$ \displaystyle \rho^* + h(s) =\min_{\pi^2 \in \Pi^{\text{SR}}} \Big\{ r(s,\pi^{1*},\pi^2) + \sum_{s'} p(s'|s,\pi^{1*},\pi^2)h(s') \Big\}. $}
\end{align*}
\end{proof}

%=========================================================
%  Proof for Value Iteration
%=========================================================
\section{\textsc{Maximin-EVI} and Its Convergence}
\label{appendix:maximinEVI}
\begin{algorithm}[tb]
   \caption{Value Iteration with Schweitzer transform}
   \label{alg:VI}
\begin{algorithmic}
   %\bleftindent
   \STATE {\bfseries Input:} $M=(\mathcal{S}, \mathcal{A}^1\times \mathcal{A}^2, r, p), 0<\gamma<1, 0<\alpha<1$.
   \STATE {\bfseries Initialization:} $v_0\equiv 0$.
   \STATE {\bfseries repeat for } $i=1, 2, ...$
   %\brightindent
   \STATE \scalebox{1.0}{$\displaystyle v_{i}=(1-\alpha)\val\big\{r+Pv_{i-1}\big\}+\alpha v_{i-1}$}.
  % \STATE $\left(\text{or\ }v_i=\val\{r+Pv_{i-1}\}\right)$
   %\erightindent
   \STATE {\bfseries until} $\ \spa\left(v_{i}-v_{i-1}\right)\leq (1-\alpha)\gamma$.
   %\eleftindent
\end{algorithmic}
\end{algorithm}

As noted in Section \ref{subsection:extended SG}, \textsc{Maximin-EVI} proceeds simply by applying value iteration (Algorithm \ref{alg:VI}) on $M^+$. The output of the algorithm is a value vector with tolerable errors. The $\val\{r+Pv_{i-1}\}$ term in Algorithm \ref{alg:VI} becomes
\begin{align}
&\val\Big\{r(s,a^{1+},a^2)+\sum_{s^\prime}p^{+}(s^\prime\vert s, a^{1+}, a^2)v_{i-1}(s^\prime))\Big\}   \nonumber \\  % \Delta_{\mathcal{A}+} \nonumber \\
=&\val\Big\{r(s,a^1,a^2)+\max_{\tilde{p}(\cdot)\in \mathcal{P}_k(s,a^1,a^2)}\sum_{s^\prime}\tilde{p}(s^\prime)v_{i-1}(s^\prime))\Big\}. \label{eqn:linear_programming}
\end{align}
The inner maximization can be efficiently solved with linear programming. The $\textsc{Maximin-EVI}(\mathcal{M}_k, \gamma_k)$ in 
\textsc{UCSG} is then done by running Algorithm \ref{alg:VI} with the evaluation of \eqref{eqn:linear_programming} in every iteration. 

The following three lemmas characterize the convergence of the algorithm, and the properties of its outputs when converged. Lemma 
\ref{lemma:EVI_conv} gaurantees that \textsc{Maximin-EVI} converges. Lemma \ref{lemma:EVI} shows that when the algorithm halts, the output policy's worst-case average reward does not deviate from the maximin reward by more than $\gamma$. Lemma \ref{lemma:bounded_span} shows that the output value vector has a span no more than $D$. 

\begin{lemma}[Theorem 4 in \cite{van1980successive}] \label{lemma:EVI_conv} Suppose that
Assumption \ref{assumption:ergodic} holds for some SG $M$. Then performing Value Iteration with Schweitzer transform on $M$ converges asymptotically. %Furthermore under Assumption \ref{assumption:irreducible} and that $p(s|s,a)\geq \alpha,\ \forall (s,a)$, value iteration converges geometrically.
\end{lemma}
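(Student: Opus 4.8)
The plan is to reproduce the argument of \cite{van1980successive} (Theorem~4), organized around showing that the iteration operator is a $J$-step contraction in the span seminorm $\spa(\cdot)$. Write $L(v)\coloneqq\val\{r+Pv\}$ for one ordinary value-iteration sweep and $U_\alpha(v)\coloneqq(1-\alpha)L(v)+\alpha v$ for the Schweitzer-transformed sweep actually run in Algorithm~\ref{alg:VI}, so that $v_i=U_\alpha(v_{i-1})$. First I would record the elementary structural facts: using the two-sided comparison of matrix games $\min_{a^1,a^2}(A-B)_{a^1 a^2}\le\val A-\val B\le\max_{a^1,a^2}(A-B)_{a^1 a^2}$ coordinatewise, together with row-stochasticity of $P$, one gets that $L$ (hence $U_\alpha$) is monotone, non-expansive in $\norm{\cdot}_\infty$, and commutes with constant shifts, $U_\alpha(v+c\mathbf{e})=U_\alpha(v)+c\mathbf{e}$. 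Monotonicity plus shift-invariance immediately give span non-expansion, $\spa(U_\alpha(u)-U_\alpha(v))\le\spa(u-v)$, and in particular $\spa(v_{i+1}-v_i)$ is non-increasing in $i$, so it suffices to drive it to $0$ along a subsequence.

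The heart of the argument is the claim that under Assumption~\ref{assumption:ergodic} there exist $J\ge1$ and $\beta\in[0,1)$, depending only on $D$, $S$, and $\alpha$, with $\spa(U_\alpha^J(u)-U_\alpha^J(v))\le\beta\,\spa(u-v)$ for all $u,v$. To prove this I would iterate the comparison inequality: picking the maximizing state $s^+$ and minimizing state $s^-$ of $U_\alpha^J(u)-U_\alpha^J(v)$ and unrolling $J$ sweeps, one sandwiches $\big(U_\alpha^J(u)-U_\alpha^J(v)\big)(s^+)-\big(U_\alpha^J(u)-U_\alpha^J(v)\big)(s^-)$ between $\mathbf{Q}^{+}(u-v)$ at $s^+$ and $\mathbf{Q}^{-}(u-v)$ at $s^-$, where $\mathbf{Q}^{\pm}$ are products of $J$ transformed one-step transition matrices $\tilde{P}_\pi=(1-\alpha)P_\pi+\alpha I$ for suitable state-dependent policy choices. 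One then bounds the left side by $\big(1-\eta(\mathbf{Q}^{+},\mathbf{Q}^{-})\big)\spa(u-v)$ with the overlap mass $\eta(\mathbf{Q}^{+},\mathbf{Q}^{-})=\sum_{s'}\min\big(\mathbf{Q}^{+}_{s^+ s'},\,\mathbf{Q}^{-}_{s^- s'}\big)$, and shows $\eta\ge\eta_0>0$ uniformly. The uniform lower bound rests on two ingredients: the Schweitzer transform gives every $\tilde{P}_\pi$ a diagonal $\ge\alpha$, removing any periodicity obstruction, and Assumption~\ref{assumption:ergodic} guarantees that Player~1 has a policy reaching any prescribed state within expected time $D$ against whatever Player~2 does, so after $J=\Theta(D)$ sweeps the two difference-chains started at $s^+$ and $s^-$ place a fixed positive mass on a common state.

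Given the $J$-step span contraction, I would finish by bringing in the average-reward Bellman equation $\val\{r+Ph\}=\rho^*\mathbf{e}+h$, which holds under Assumption~\ref{assumption:ergodic} by Theorem~\ref{theorem:stationary_optimal}. Since then $U_\alpha(h)=h+(1-\alpha)\rho^*\mathbf{e}$, the centered iterates $e_i\coloneqq v_i-h-i(1-\alpha)\rho^*\mathbf{e}=U_\alpha^i(v_0)-U_\alpha^i(h)$ have $\spa(e_i)\downarrow s_\infty$ for some $s_\infty\ge0$ (by span non-expansion), and the contraction applied along $i=nJ$ forces $s_\infty\le\beta s_\infty$, hence $s_\infty=0$; therefore $\spa(v_{i+1}-v_i)=\spa(e_{i+1}-e_i)\le\spa(e_{i+1})+\spa(e_i)\to0$, so the stopping rule $\spa(v_i-v_{i-1})\le(1-\alpha)\gamma$ is met after finitely many iterations (in fact geometrically fast). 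The main obstacle is the span-contraction claim, and within it the delicate point is that the policies appearing in $\mathbf{Q}^{\pm}$ are those generated by value iteration rather than arbitrary stationary policies; routing Player~1's reaching strategy from Assumption~\ref{assumption:ergodic} into the comparison bound — rather than having to invoke the stronger ``all policy pairs mix'' Assumption~\ref{assumption:irreducible} — is precisely what separates the game case from the MDP case in \cite{jaksch2010near} and is the step handled with care in \cite{van1980successive}.
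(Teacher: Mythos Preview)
The paper's own proof is two sentences: Assumption~\ref{assumption:ergodic} gives, via Theorem~\ref{theorem:stationary_optimal}, a bounded solution $(\rho^*,h)$ to $\val\{r+Ph\}=\rho^*\mathbf{e}+h$, and then Theorem~4 of \cite{van1980successive} is invoked as a black box. You instead try to reprove van der Wal's theorem via a $J$-step span contraction of $U_\alpha$ on arbitrary pairs $u,v$, and here there is a genuine gap.

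When you unroll the comparison inequality, the transition kernels entering $\mathbf{Q}^{\pm}$ are $\tilde P_\pi$ with $\pi$ \emph{determined by the $\val$ operator} at each stage (concretely, Player~1's action in the upper-bound branch is the optimizer for the current iterate, and symmetrically in the lower-bound branch); neither player's policy is freely chosen. Assumption~\ref{assumption:ergodic} only asserts that Player~1 \emph{can choose} a reaching strategy against any opponent --- it says nothing about the policies that $\val$ actually selects --- so there is no mechanism for ``routing Player~1's reaching strategy'' into $\mathbf{Q}^{\pm}$ to force the uniform overlap $\eta\ge\eta_0>0$. The overlap argument would go through under Assumption~\ref{assumption:irreducible} (every joint policy mixes), but not under Assumption~\ref{assumption:ergodic} alone; you correctly flag this as ``the delicate point'' but do not resolve it, and attributing its resolution to \cite{van1980successive} is off: the hypothesis of Theorem~4 there is precisely the existence of a bounded Bellman solution, and convergence is argued from that (e.g., the Schweitzer transform makes $\max_s(v_{i+1}-v_i)(s)$ non-increasing and $\min_s(v_{i+1}-v_i)(s)$ non-decreasing, and equality of the limits is forced using $h$), not via a universal span contraction. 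Since your third paragraph already invokes Theorem~\ref{theorem:stationary_optimal} to obtain $(\rho^*,h)$, the clean fix is to drop the span-contraction claim on arbitrary $u,v$ and argue directly from $h$, which is essentially the paper's route.
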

\begin{proof} [Proof of Lemma \ref{lemma:EVI_conv}]
If Assumption \ref{assumption:ergodic} holds, then the Bellman equation holds with an initial-state independent game value by Theorem \ref{theorem:stationary_optimal}. Then by Theorem 4 of \cite{van1980successive}, the value iteration with Schweitzer transform converges. 
\end{proof}

\begin{lemma}
\label{lemma:EVI}
Suppose that Assumption \ref{assumption:ergodic} holds for some stochastic game $M$. Let $\{v_i\}$ be the value sequence in the Value Iteration algorithm. Let $N$ be the index when iteration halts, i.e., $\text{sp}(v_{N+1}-v_N)\leq (1-\alpha)\gamma$. Let $\pi^1\coloneqq $ \scalebox{0.9}{$\displaystyle\solve_1\left\{r+Pv_N\right\}$}. Then $\pi^1$ is $\gamma$-optimal in the sense that $ \min_{\pi^2}\rho(M,\pi^1. \pi^2) \geq \rho^*(M)-\gamma$. 
\end{lemma}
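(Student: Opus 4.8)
The plan is to establish the $\gamma$-optimality of $\pi^1$ by chaining together the stopping criterion with two one-step inequalities derived from the value iteration recursion. First I would recall that under Assumption \ref{assumption:ergodic}, Theorem \ref{theorem:stationary_optimal} gives the Bellman equation $\val\{r+Ph\}=\rho^*(M)\mathbf{e}+h$ with a bounded $h$, and that the Schweitzer-transformed iteration $v_i=(1-\alpha)\val\{r+Pv_{i-1}\}+\alpha v_{i-1}$ converges by Lemma \ref{lemma:EVI_conv}. The key observation is that the stopping condition $\spa(v_{N+1}-v_N)\le (1-\alpha)\gamma$ means the iterate $v_N$ is an approximate fixed point: writing $v_{N+1}-v_N=(1-\alpha)(\val\{r+Pv_N\}-v_N)$, the vector $w\coloneqq \val\{r+Pv_N\}-v_N$ satisfies $\spa(w)\le\gamma$, so there is a scalar $\lambda$ (playing the role of an approximate game value, e.g.\ $\lambda=\min_s w_s$) with $\lambda\mathbf{e}\le \val\{r+Pv_N\}-v_N\le (\lambda+\gamma)\mathbf{e}$.

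Next I would exploit the definition $\pi^1=\solve_1\{r+Pv_N\}$. By the minimax property of the matrix game at each state, for \emph{every} Player-2 stationary policy $\pi^2$ we have $r(s,\pi^1,\pi^2)+\sum_{s'}p(s'|s,\pi^1,\pi^2)v_N(s')\ge \big(\val\{r+Pv_N\}\big)(s)\ge v_N(s)+\lambda$ for all $s$; in vector form, $r(\cdot,\pi^1,\pi^2)+P^{\pi^1,\pi^2}v_N\ge v_N+\lambda\mathbf{e}$. Iterating this inequality $T$ times along the Markov chain induced by $(M,\pi^1,\pi^2)$ and using that $P^{\pi^1,\pi^2}$ is stochastic (so it preserves the constant shift and only moves mass around bounded coordinates of $v_N$) gives $\mathbb{E}[R_T(M,\pi^1,\pi^2,s)]\ge T\lambda + \mathbb{E}[v_N(s_{T+1})]-v_N(s)\ge T\lambda-\spa(v_N)$. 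Dividing by $T$ and letting $T\to\infty$ yields $\rho(M,\pi^1,\pi^2,s)\ge \lambda$ for all $\pi^2$ and $s$, hence $\min_{\pi^2}\rho(M,\pi^1,\pi^2)\ge\lambda$. (If the $\min$ over history-dependent or non-stationary $\pi^2$ is what one wants, the same telescoping works since the inequality $r+P^{\pi^1,\pi^2_t}v_N\ge v_N+\lambda\mathbf{e}$ holds pointwise regardless of how $\pi^2_t$ is chosen.)

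It then remains to show $\lambda\ge \rho^*(M)-\gamma$. For this I would use the upper bound $\val\{r+Pv_N\}\le v_N+(\lambda+\gamma)\mathbf{e}$ together with the optimal Player-1 structure: let $\pi^{2*}$ achieve the outer min in $\val\{r+Pv_N\}(s)=\min_{\pi^2}\max_{\pi^1}\{\cdots\}$, or more directly compare against the true optimal pair from Theorem \ref{theorem:stationary_optimal}. Taking $\pi^{1*}$ to be Player 1's maximin policy, for every state $s$ and the \emph{worst-case} $\pi^2$ we have $\big(\val\{r+Pv_N\}\big)(s)\ge r(s,\pi^{1*},\pi^2)+\sum_{s'}p(s'|s,\pi^{1*},\pi^2)v_N(s')$, and combined with the upper bound this gives $r(\cdot,\pi^{1*},\pi^2)+P^{\pi^{1*},\pi^2}v_N\le v_N+(\lambda+\gamma)\mathbf{e}$; telescoping as before over the chain $(M,\pi^{1*},\pi^2)$ and taking the worst $\pi^2$ shows $\rho^*(M)=\min_{\pi^2}\rho(M,\pi^{1*},\pi^2)\le \lambda+\gamma$. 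Chaining the two bounds gives $\min_{\pi^2}\rho(M,\pi^1,\pi^2)\ge\lambda\ge\rho^*(M)-\gamma$, which is the claim.

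The main obstacle I anticipate is the careful handling of the telescoping/ergodic-averaging step and making sure the boundedness of $v_N$ (needed so that the $v_N(s_{T+1})-v_N(s)$ boundary term vanishes after dividing by $T$) is legitimate — this is where Lemma \ref{lemma:bounded_span} (span of the output vector is at most $D$) or at least the convergence of the iteration is really used. A secondary subtlety is being precise about over which class of Player-2 policies the $\min$ in the statement is taken: if it is over stationary policies the argument via the induced stationary Markov reward process is cleanest (and one may invoke that Player 2 has a stationary best response in the MDP induced by fixing $\pi^1$), whereas extending to arbitrary $\pi^2$ requires the pointwise-in-$t$ version of the telescoping inequality noted above.
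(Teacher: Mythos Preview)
Your proposal is correct and follows essentially the same route as the paper's proof. The only cosmetic difference is that the paper multiplies the one-step inequality $D\mathbf{e}+v_N\le (1-\alpha)(r_\pi+P_\pi v_N)+\alpha v_N$ by the Ces\`aro limit $P_\pi^*$ (exploiting $P_\pi^*(I-P_\pi)=0$) to read off $D\le(1-\alpha)\rho(M,\pi^1,\pi^2)$ directly, whereas you telescope over $T$ steps and divide by $T$; these are equivalent, and your explicit invocation of Lemma~\ref{lemma:bounded_span} to kill the boundary term $v_N(s_{T+1})-v_N(s)$ is exactly what is needed to make the telescoping rigorous. For the upper-bound half, the paper likewise fixes $\pi^{2\prime}=\solve_2\{r+Pv_N\}$ and observes $\val\{r+Pv_N\}\ge r_{\pi^1,\pi^{2\prime}}+P_{\pi^1,\pi^{2\prime}}v_N$ for \emph{every} $\pi^1$, which after Ces\`aro gives $U\ge(1-\alpha)\rho^*(M)$; your version (choose at each state the $\pi^2$ minimizing against $\pi^{1*}$, then telescope and use $\rho^*(M)=\min_{\pi^2}\rho(M,\pi^{1*},\pi^2)$) reaches the same conclusion, though you should phrase the ``worst-case $\pi^2$'' more carefully as the state-by-state best response so that a single stationary $\pi^2$ is used in the telescoping.
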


\begin{proof} [Proof of Lemma \ref{lemma:EVI}]
Let $D=\min_s\{v_{N+1}(s)-v_{N}(s)\}$ and $U=\max_s\{v_{N+1}(s)-v_N(s)\}$. Then
\begin{align*}
&D\mathbf{e}+v_N\leq v_{N+1}=(1-\alpha)\val\{r+Pv_N\}\nonumber + \alpha v_N \leq (1-\alpha)(r_{\pi}+P_{\pi}v_N) + \alpha v_N,
\end{align*}
where $\pi=(\pi^1,\pi^2)$ for any $\pi^2\in \Pi^{\text{SR}}$. 
Let $P_{\pi}^*=\lim_{T\rightarrow\infty}\frac{1}{T}\sum_{t=1}^{T} P_{\pi}^{t-1}$ be the Cesaro limit of $P_{\pi}$. Applying it on both sides of the inequality, we get $D\mathbf{e}\leq (1-\alpha) P_{\pi}^*r_{\pi}= (1-\alpha) \mathbf{\rho}(M, \pi^1,\pi^2, \cdot)$, or $D\leq (1-\alpha)\rho(M,\pi^1, \pi^2, s), \ \forall s, \pi^2$. Let $\pi^*=(\pi^{1*}, \pi^{2*})$ be the optimal policy pair and $\rho^*(M)$ be their maximin value, then $D\leq (1-\alpha)\rho(M, \pi^1, \pi^{2*},s)\leq (1-\alpha)\rho^*(M)$. In a similar way, one can prove that $U \geq (1-\alpha)\rho^*(M)$. Since we assume $U-D\leq (1-\alpha) \gamma$, we have $D\geq (1-\alpha)(\rho^*(M)-\gamma)$. Therefore, $\pi^1$ is $\gamma$-optimal in the sense that $\forall \pi^2$, $\rho(M,\pi^1, \pi^2,s) \geq \rho^*(M)-\gamma$.
\end{proof}
\begin{lemma}
\label{lemma:bounded_span}
If Assumption \ref{assumption:ergodic} holds for some model $M$, then value iteration procedure in Algorithm \ref{alg:VI} will always produce value functions with spans bounded by $D$. That is, 
\begin{align*}
\spa\left(v_i\right) \leq D, \ \ \forall i. 
\end{align*}
\end{lemma}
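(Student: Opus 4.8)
The plan is to reinterpret the iterates $v_i$ as finite-horizon game values and then run the two-player analogue of \cite{jaksch2010near}'s diameter argument: from any state, Player~1 can first \emph{travel} to a ``good'' state and only then behave optimally, losing only the reward of the travel phase.

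\textbf{Identifying the iteration.} First I would note that the update of Algorithm~\ref{alg:VI} is exactly undiscounted finite-horizon value iteration for the aperiodic (Schweitzer-transformed) game $\tilde M=(\mathcal S,\mathcal A,\tilde r,\tilde p)$ with per-step reward $\tilde r(s,a)=(1-\alpha)r(s,a)\in[0,1-\alpha]$ and lazy transitions $\tilde p(\cdot\vert s,a)=(1-\alpha)p(\cdot\vert s,a)+\alpha\,\mathbbm 1_{\{\cdot=s\}}$: since $\alpha v_{i-1}(s)$ is a shift that is constant over the actions available at $s$, one has $\val\{\tilde r+\tilde P v_{i-1}\}=(1-\alpha)\val\{r+Pv_{i-1}\}+\alpha v_{i-1}$. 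Hence, by backward induction and the minimax theorem applied to each stage game, $v_i(s)$ equals the value of the $i$-stage zero-sum game on $\tilde M$ started at $s$, namely $v_i(s)=\sup_{\pi^1}\inf_{\pi^2}\mathbbm E^{\tilde M,\pi^1,\pi^2}\!\big[\sum_{t=1}^i \tilde r(s_t,a_t)\mid s_1=s\big]$, and the value is attained by Markov policies.

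\textbf{A universal navigation policy.} Next I would show that Assumption~\ref{assumption:ergodic} yields, for each target state $z$, a \emph{single} stationary Player-1 policy $\pi^1_z$ whose expected time to reach $z$ from any state is at most $D$ in $M$ against \emph{any} (history-dependent) Player-2 policy. Assumption~\ref{assumption:ergodic} directly gives, for each stationary $\pi^2$, a stationary best response reaching $z$ in $\le D$ steps; upgrading this $\max_{\pi^2}\min_{\pi^1}$ statement to a $\min_{\pi^1}\max_{\pi^2}$ one (a uniform Player-1 policy) follows from the minimax theory for stochastic shortest-path games—exchangeability of $\min$ and $\max$ and optimality of stationary policies—by an argument parallel to the proof of Theorem~\ref{theorem:stationary_optimal} applied to the auxiliary ``reach $z$ as fast as possible'' game (Player~1 can force absorption at $z$, so the game is proper). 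Transferring to $\tilde M$: under $(\pi^1_z,\pi^2)$ the lazy chain performs a $\mathrm{Geom}(1-\alpha)$ number of self-loops per genuine move of $M$, so by Wald's identity the expected hitting time of $z$ in $\tilde M$ is $1/(1-\alpha)$ times that in $M$, hence at most $D/(1-\alpha)$ against any Player~2.

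\textbf{The span bound.} Fix $i$ and states $x,y$; I would prove $v_i(x)-v_i(y)\le D$, which gives $\spa(v_i)\le D$. The case $x=y$ is trivial, so take $x\ne y$. Let $\pi$ be an optimal Player-1 strategy for the $i$-stage $\tilde M$-game started at $x$, so against every Player~2 it collects expected reward $\ge v_i(x)$ over its $i$ stages; truncating it to its first $m$ stages loses at most the reward of its last $i-m$ stages, which is at most $(i-m)(1-\alpha)$, so it still collects $\ge v_i(x)-(i-m)(1-\alpha)$ over its first $m$ stages (this also holds vacuously when $m\le 0$, since $v_i(x)\le i(1-\alpha)$). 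From $y$, let Player~1 play $\pi^1_x$ until the first time $\sigma$ the state equals $x$ and then play $\pi$ from scratch; with $m=i-\sigma+1$ remaining stages, the total reward is at least the nonnegative navigation reward plus $v_i(x)-(\sigma-1)(1-\alpha)$. Against any Player~2, the previous paragraph gives $\mathbbm E[\sigma]\le D/(1-\alpha)$, so $v_i(y)\ge \mathbbm E\big[v_i(x)-(\sigma-1)(1-\alpha)\big]\ge v_i(x)-(1-\alpha)\,\mathbbm E[\sigma]\ge v_i(x)-D$, as desired.

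\textbf{Main obstacle.} The only non-routine step is the universal navigation policy: converting the $\max_{\pi^2\in\Pi^{\text{SR}}}\min_{\pi^1\in\Pi^{\text{SR}}}$ form of Assumption~\ref{assumption:ergodic} into one Player-1 policy that navigates in $\le D$ expected steps against an arbitrary, adaptively chosen Player~2. Everything else—the finite-horizon reinterpretation, the lazy-chain time scaling, and the truncation bookkeeping—is elementary, and the crucial cancellation $(1-\alpha)\cdot D/(1-\alpha)=D$ is precisely what the Schweitzer transform buys us.
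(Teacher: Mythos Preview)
Your proof is correct and rests on the same idea as the paper's: interpret $v_i$ as the $i$-step maximin value of the Schweitzer-transformed game and then run the two-player analogue of the diameter argument from \cite{jaksch2010near}, Remark~8. The execution differs in two respects. First, the paper argues in two steps---it asserts (without details) that the advantage $v_i(s)-v_i(s')$ in the lazy game is no more than in the original game, and then bounds the latter by $D$ via Remark~8---whereas you work directly in the lazy game and make the cancellation $(1-\alpha)\cdot D/(1-\alpha)=D$ explicit; this is cleaner and avoids having to justify the comparison between the two games. Second, and more substantively, you isolate and address a point the paper glosses over when citing the single-player Remark~8: in the two-player setting one needs a \emph{single} Player-1 navigation policy that reaches the target in $\le D$ expected steps against \emph{every} opponent, i.e., one must upgrade the $\max_{\pi^2}\min_{\pi^1}$ of Assumption~\ref{assumption:ergodic} to a $\min_{\pi^1}\max_{\pi^2}$ statement. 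Your appeal to stochastic-shortest-path game minimax (the game is proper since Assumption~\ref{assumption:ergodic} lets Player~1 force absorption against any stationary opponent, so a stationary maximin navigation policy exists and is optimal against history-dependent opponents as well) is the right way to fill this gap; the paper's proof leaves it implicit under ``a similar argument.''
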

\begin{proof}
Note that value iteration with Schweitzer transform is equivalent to the following procedure. First modify the transition kernel and reward by $p_\alpha(s^\prime\vert s, a^1,a^2)=(1-\alpha)p(s^\prime\vert s, a^1,a^2)+\alpha \delta_{s,s^\prime}$ and $r_\alpha(s,a^1,a^2)=(1-\alpha)r(s,a^1,a^2)+\alpha 0$; then do the normal value iteration by $v_{i}=\val\{r_\alpha + P_\alpha v_{i-1}\}$. By the principle of dynamic programming, $v_i$ is the maximin expected reward in the $i$-step game under the transformed model. 

The transformed model is equivalent to the system where at each time step, the state remains same as the previous one with probability $\alpha$, and within that step there is no reward obtained/paid. 

%There is another way to interpret this transformed game. Imagine that the game performs state transition at a frequency of 1. We are, however, agents that observe the current state and designate the next action with random interval lengths $\sim \exp(1/\alpha)$. Therefore with probability $\alpha$, the observed current state is the same with the previous one, and within the last interval there is no reward obtained/paid. 

Clearly, in this new game, the advantage of starting from state $s$ than starting from state $s^\prime$ (which can be calculated by $v_i(s)-v_i(s^\prime)$) is no more than that in the original game. In the original game, by a similar argument as Remark 8 in \cite{jaksch2010near}, this advantage difference is bounded by $D$. This then implies the argument in the lemma. 
\end{proof}

%================================================================
%    UCSG Technical Proofs
%================================================================
\section{Proof of Lemma \ref{lemma:epsilon_accurate} }
Lemma \ref{lemma:epsilon_accurate} directly follows from Lemma \ref{lemma:sample_complexity1} and \ref{lemma:sample_complexity2}.

In this proof, we borrow the technique used in \cite{lattimore2012pac} and \cite{dann2015sample} to bound the number of steps with inaccurate transition probabilities (while they use this technique to bound the number of steps with inaccurate game value). Note here again that $\pi_t(\cdot)$ can represent any history-dependent policy, and we hide its parameter $H_t=(s_1, a_1, r_1, ..., s_t)$ inside the subscript of $t$. 

Define the \textit{importance} of a joint action $a$ at time $t$ as 
\begin{align*}
\iota_t(a)\coloneqq \max\left\{ z_j : z_j \leq \frac{\pi_t(a)}{w_{\min}} \right\},
\end{align*}
and the its \textit{knownness} as 
\begin{align*}
\kappa_t(a) \coloneqq  \max\left\{ z_j : z_j \leq \frac{n_{k(t)}(s_t,a)}{m \pi_t(a)} \right\},
\end{align*}
with $z_1=0$, $z_j=2^{j-2} \ \forall j=2, 3, ...$, and some pre-defined $w_{\min}>0$, $m>0$. Note that we can always define them in hindsight even though the learner does not know $\pi_t^2$. These two amounts make partitions to the action set available at $s_t$. The partitioning is based on the actions' probability of being selected at time $t$ (i.e., $\pi_t(a)$), and the accuracy it has been estimated (the larger $n_{k(t)}(s_t,a)$, the more accurate). Intuitively, the larger $\kappa_t(a)$, the less likely will action $a$ contribute to inaccurate transition probability estimation. Define the partitions by $X_{t,\kappa,\iota}\coloneqq \{a: \kappa_t(a)=\kappa \text{\ and\ } \iota_t(a)=\iota\},\ \forall \kappa, \iota$. 

If we let $w_{\min}=\frac{\varepsilon}{3\sqrt{2\ln(1/\delta)}A}$ and $m=\frac{5\log_2^2(T/w_{\min})\ln(1/\delta)}{\varepsilon^2}$, with some $0< \varepsilon <1$, we can prove the following lemmas. 

\begin{lemma}
\label{lemma:sample_complexity1}
For any $s$, any $\kappa$ and any $\iota>0$, with probability at least $1-\delta$, 
\begin{align*}
\sum_{t=1}^{T} \mathbbm{1}_{s_t=s}\mathbbm{1}_{\lvert X_{t,\kappa,\iota}\rvert > \kappa} = \mathcal{O}\left(\frac{A\log_2^2(T/w_{\min})\ln(1/\delta)}{\varepsilon^2}\right).  
\end{align*}
\end{lemma}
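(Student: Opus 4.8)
The plan is to run the \emph{importance/knownness} potential argument of \cite{lattimore2012pac,dann2015sample}, adapted so that it survives the adversary's non-stationary, history-dependent policy $\pi_t^2$. Fix $s$, $\kappa$, and $\iota>0$; let $B$ be the random set of steps $t$ with $s_t=s$ and $\lvert X_{t,\kappa,\iota}\rvert>\kappa$, and write $N=\lvert B\rvert$ for the quantity to be bounded. The whole proof compares, over $B$, the \emph{expected} number of plays of actions in the current partition with the \emph{total realized} number of such plays.

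\emph{Step 1 (a per-action sampling cap, via doubling).} Suppose at some step $t$ the played action $a_t=a$ has $\kappa_t(a)=\kappa$ and $\iota_t(a)=\iota$. By the definitions of knownness and importance, $n_{k(t)}(s,a)<\kappa^{+}m\,\pi_t(a)<\kappa^{+}\iota^{+}m\,w_{\min}\le 4\max(\kappa,1)\,\iota\,m\,w_{\min}=:C$, where $\kappa^{+}\le 2\max(\kappa,1)$ and $\iota^{+}\le 2\iota$ are the successors of $\kappa$ and $\iota$ in the scale $0,1,2,4,\dots$. Since $n_k(s,a)$ is nondecreasing in $k$, this can only happen while the algorithm is in a phase $k$ with $n_k(s,a)<C$; and in any phase the pair $(s,a)$ is visited at most $n_k(s,a)$ times (the phase stops as soon as some pair's in-phase count equals its $n_k$-value), so $n_{k+1}(s,a)\le 2n_k(s,a)$. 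Hence the number of visits to $(s,a)$ while $n_k(s,a)<C$ is at most the count accumulated by the start of the first phase with $n_k(s,a)\ge C$, which is $<2C$. Summing over the $A$ joint actions,
\[
\textstyle\sum_{t=1}^{T}\mathbbm{1}_{s_t=s}\,\mathbbm{1}\{a_t\in X_{t,\kappa,\iota}\}\;<\;8A\max(\kappa,1)\,\iota\,m\,w_{\min}.
\]

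\emph{Step 2 (concentration back to $N$).} For every $t\in B$, each $a\in X_{t,\kappa,\iota}$ has $\pi_t(a)\ge\iota w_{\min}$ and $\lvert X_{t,\kappa,\iota}\rvert\ge\max(\kappa,1)$, so $p_t:=\sum_{a\in X_{t,\kappa,\iota}}\pi_t(a)\ge\max(\kappa,1)\,\iota\,w_{\min}$. Put $Y_t=\big(p_t-\mathbbm{1}\{a_t\in X_{t,\kappa,\iota}\}\big)\mathbbm{1}\{t\in B\}$. As $\pi_t^1=\pi_{k(t)}^1(\cdot\mid s_t)$ and $\pi_t^2$ depend only on the history through $s_t$ (not on $a_t$), the objects $X_{t,\kappa,\iota}$, $\mathbbm{1}\{t\in B\}$, $p_t$ are all measurable w.r.t.\ that history, so $\{Y_t\}$ is a martingale difference sequence with $\lvert Y_t\rvert\le1$ and conditional variance $\le p_t\mathbbm{1}\{t\in B\}$. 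Bernstein's inequality (Lemma~\ref{lemma:bernstein_bubeck}) gives, with probability $\ge1-\delta$,
\[
\textstyle\sum_{t\in B}p_t-\sum_{t\in B}\mathbbm{1}\{a_t\in X_{t,\kappa,\iota}\}\;\le\;2\sqrt{\big(\sum_{t\in B}p_t\big)\ln(T/\delta)}+\sqrt{5}\,\ln(T/\delta).
\]
Combining with $\sum_{t\in B}p_t\ge N\max(\kappa,1)\iota w_{\min}$ and the Step~1 bound and solving the resulting quadratic in $\sum_{t\in B}p_t$ yields $N\max(\kappa,1)\iota w_{\min}=O\!\big(A\max(\kappa,1)\iota\,m\,w_{\min}+\ln(T/\delta)\big)$, hence $N=O(Am)+O(\ln(T/\delta)/w_{\min})$ since $\max(\kappa,1),\iota\ge1$. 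With $m=5\log_2^2(T/w_{\min})\ln(1/\delta)/\varepsilon^2$ the first term is $O\!\big(A\log_2^2(T/w_{\min})\ln(1/\delta)/\varepsilon^2\big)$, and with $w_{\min}=\varepsilon/(3\sqrt{2\ln(1/\delta)}\,A)$ the second term is of lower order, giving the claim.

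\emph{Main obstacle.} The difficulty particular to our setting is that the partition $X_{t,\kappa,\iota}$ is random and shifts from step to step, driven by the adversary's non-stationary $\pi_t^2$: there is no fixed set of ``under-explored'' actions to track, so the comparison in Step~2 must be run as a martingale-difference bound adapted to the filtration that reveals $s_t$ but not $a_t$. A secondary subtlety is that, even for a single $a$, the event $\{\kappa_t(a)=\kappa,\ \iota_t(a)=\iota\}$ need not be monotone in $t$, so the doubling in Step~1 has to be stated through the monotone counts $n_k(s,a)$, with $\iota_t(a)=\iota$ used only to bound $\pi_t(a)$ (hence $n_{k(t)}(s,a)$) at the steps that matter.
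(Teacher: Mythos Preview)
Your proof is correct and follows essentially the same approach as the paper: Step~1 is the paper's Lemma~\ref{lemma:sample_complexity3} (the per-action doubling cap, yielding the same bound up to constants), and Step~2 is the paper's Lemma~\ref{lemma:sample_complexity7} (the Bernstein martingale argument with the same filtration choice), combined in the same way to reach \eqref{eqn:12Am_bound}. The measurability discussion and the handling of the non-stationary $\pi_t^2$ via the filtration that reveals $s_t$ but not $a_t$ are exactly as in the paper.
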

%\begin{corollary}[Corollary of Lemma \ref{lemma:sample_complexity1}]
%\label{corollary:sample_complexity}
%For any $\kappa$ and any $\iota>0$, with probability at least $1-\delta$, 
%\begin{align*}
%\sum_{t=1}^{T} \mathbbm{1}_{\lvert X_{t,\kappa,\iota}\rvert > \kappa} = \mathcal{O}\left(\frac{SA\log_2^2(T/w_{\min})\ln(1/\delta)}{\varepsilon^2}\right).  
%\end{align*}
%\end{corollary}
%\begin{proof}
%Only needs to sum Lemma \ref{lemma:sample_complexity1}'s result over states. 
%\end{proof}

\begin{lemma}
\label{lemma:sample_complexity2}
If for all $\kappa$ and all $\iota>0$ we have $\lvert X_{t,\kappa, \iota}\rvert\leq \kappa$, then for any plausible $\tilde{p}$ in the confidence set $\mathcal{M}_{k(t)}$, $\lvert \tilde{p}(s^\prime\vert s_t, \pi_t)-p(s^\prime\vert s_t, \pi_t) \rvert \leq \varepsilon$ for all $s^\prime$.
\end{lemma}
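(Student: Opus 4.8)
The plan is to fix an arbitrary next state $s'$ and to bound
\[
\bigl|\tilde p(s'|s_t,\pi_t) - p(s'|s_t,\pi_t)\bigr| \le \sum_{a\in\mathcal{A}} \pi_t(a)\,\bigl|\tilde p(s'|s_t,a) - p(s'|s_t,a)\bigr|,
\]
splitting the sum into the \emph{unimportant} actions ($\iota_t(a)=0$, i.e.\ $\pi_t(a)<w_{\min}$) and the \emph{important} ones ($\iota_t(a)\ge z_2=1$, i.e.\ $\pi_t(a)\ge w_{\min}$), and handling the two groups separately. For the unimportant actions I would just note that they carry total probability at most $Aw_{\min}$ and that every coordinate-wise discrepancy is at most $1$, so their combined contribution is at most $Aw_{\min}=\varepsilon/(3\sqrt{2\ln(1/\delta)})\le\varepsilon/3$.

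For the important actions I would first invoke the hypothesis at $\kappa=0$: it forces $X_{t,0,\iota}$ to be empty for every $\iota>0$, hence every important action has knownness $\kappa_t(a)\ge 1$, which by the definition of $\kappa_t$ means $n_{k(t)}(s_t,a)\ge \kappa_t(a)\,m\,\pi_t(a)\ge m\pi_t(a)$. Then, conditioning on the high-probability event of Lemma~\ref{lemma:bound_fail} so that $M\in\mathcal{M}_{k(t)}$, both $p(\cdot|s_t,a)$ and the plausible $\tilde p(\cdot|s_t,a)$ lie in $\mathcal{P}_{k(t)}(s_t,a)\subseteq\textsc{conf}_2(\hat p_{k(t)}(\cdot|s_t,a),n_{k(t)}(s_t,a))$; applying the per-coordinate branch of $\textsc{conf}_2$ — which, crucially and unlike the Weissman $\ell_1$ ball $\textsc{conf}_1$, carries no $\sqrt S$ factor — together with the triangle inequality through $\hat p_{k(t)}$ gives $|\tilde p(s'|s_t,a)-p(s'|s_t,a)|\le 2\sqrt{\ln(6/\delta_1)/(2n_{k(t)}(s_t,a))}$. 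Combining the two bounds, an important action $a$ contributes at most $\pi_t(a)\sqrt{2\ln(6/\delta_1)/(\kappa_t(a)\,m\,\pi_t(a))}=\sqrt{2\ln(6/\delta_1)\,\pi_t(a)/(\kappa_t(a)\,m)}$.

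It then remains to sum this over the important actions, grouped into the cells $X_{t,\kappa,\iota}$ with $\kappa,\iota\ge 1$. For $a\in X_{t,\kappa,\iota}$ we have $\kappa_t(a)=\kappa$ and $\pi_t(a)<2\iota w_{\min}$, so the cell contributes at most $|X_{t,\kappa,\iota}|\,\sqrt{4\ln(6/\delta_1)\,\iota w_{\min}/(\kappa m)}$; I would bound $|X_{t,\kappa,\iota}|$ by $\kappa$ (the hypothesis) when $\kappa\le A$ and by $A$ otherwise. Since the $z_j$ form a geometric grid and $\iota_t(a)\le 1/w_{\min}$, the sums $\sum_{\kappa\le A}\sqrt\kappa$, $\sum_{\kappa>A}\kappa^{-1/2}$, $\sum_{\iota\le 1/w_{\min}}\sqrt\iota$ are $\mathcal{O}(\sqrt A)$, $\mathcal{O}(A^{-1/2})$, $\mathcal{O}(w_{\min}^{-1/2})$ respectively, so the whole double sum is $\mathcal{O}(\sqrt{A\ln(6/\delta_1)/m})$; plugging in $m=5\log_2^2(T/w_{\min})\ln(1/\delta)/\varepsilon^2$ and $\delta_1=\delta/(2S^2A\log_2 T)$ makes this of order $\varepsilon\sqrt{A\ln(6/\delta_1)/(\log_2^2(T/w_{\min})\ln(1/\delta))}$, and the numerical constant in the definition of $m$ is what keeps it below $2\varepsilon/3$. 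Adding the two groups yields the claimed bound $\varepsilon$.

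The step I expect to be the main obstacle is this last summation: one must verify that the adversarial allocation of the importance weights $\pi_t(a)$ and of the sample counts across the $(\kappa,\iota)$-cells still makes the double sum collapse to order $\varepsilon$, which works only because $w_{\min}$, $m$ and the doubling sequence $\{z_j\}$ are mutually calibrated, and only because we used the coordinate-wise confidence set $\textsc{conf}_2$ rather than the cruder $\ell_1$ ball, so that no $\sqrt S$ leaks into the final estimate. The unimportant-action bound and the reduction $\kappa_t(a)\ge 1$ are comparatively routine bookkeeping.
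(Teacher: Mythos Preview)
Your decomposition into unimportant and important actions, your use of the per-coordinate branch of $\textsc{conf}_2$ to get a bound $\sqrt{2\ln(6/\delta_1)/n_{k(t)}(s_t,a)}$ free of~$\sqrt S$, and the observation that the hypothesis at $\kappa=0$ forces every important action to have $n_{k(t)}(s_t,a)\ge m\pi_t(a)$ are all correct and match the paper. The gap is precisely where you flagged it yourself: the final double sum does \emph{not} collapse to $2\varepsilon/3$.

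Concretely, your per-cell estimate $|X_{t,\kappa,\iota}|\sqrt{4\ln(6/\delta_1)\iota w_{\min}/(\kappa m)}$ is obtained by replacing each $\pi_t(a)$ by its cell-wise upper bound $2\iota w_{\min}$; this throws away the global constraint $\sum_a\pi_t(a)\le 1$. Your geometric sums then give $\sum_\kappa \min(\kappa,A)/\sqrt\kappa = \Theta(\sqrt A)$ and $\sum_\iota\sqrt{\iota w_{\min}}=\Theta(1)$, so the whole thing is $\Theta\bigl(\sqrt{A\ln(6/\delta_1)/m}\bigr)$. Plugging in $m=5\log_2^2(T/w_{\min})\ln(1/\delta)/\varepsilon^2$ yields
\[
\Theta\Bigl(\varepsilon\sqrt{A\ln(6/\delta_1)\big/\bigl(\log_2^2(T/w_{\min})\ln(1/\delta)\bigr)}\Bigr),
\]
which grows like $\varepsilon\sqrt{A}/\mathrm{polylog}$ and cannot be made $\le 2\varepsilon/3$ for large $A$, no matter what numerical constant sits in front of $m$.

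The paper avoids this $\sqrt A$ leak by keeping $\pi_t(a)$ intact and applying Cauchy--Schwarz twice. Inside each cell,
\[
\sum_{a\in X_{t,\kappa,\iota}}\sqrt{\tfrac{\pi_t(a)^2}{n_{k(t)}(s_t,a)}}
\;\le\;\sqrt{|X_{t,\kappa,\iota}|\sum_{a\in X_{t,\kappa,\iota}}\tfrac{\pi_t(a)^2}{n_{k(t)}(s_t,a)}}
\;\le\;\sqrt{\kappa\sum_{a\in X_{t,\kappa,\iota}}\tfrac{\pi_t(a)}{m\kappa}}
\;=\;\sqrt{\tfrac{1}{m}\sum_{a\in X_{t,\kappa,\iota}}\pi_t(a)},
\]
where the hypothesis $|X_{t,\kappa,\iota}|\le\kappa$ makes the $\kappa$'s cancel. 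A second Cauchy--Schwarz across the (at most $|\mathcal K\times\mathcal I|$ many) nonempty cells with $\kappa,\iota>0$ then gives
\[
\sum_{\kappa,\iota}\sqrt{\tfrac{1}{m}\sum_{a\in X_{t,\kappa,\iota}}\pi_t(a)}
\;\le\;\sqrt{|\mathcal K\times\mathcal I|\cdot\tfrac{1}{m}\sum_a\pi_t(a)}
\;\le\;\sqrt{\tfrac{|\mathcal K\times\mathcal I|}{m}},
\]
and since $|\mathcal K\times\mathcal I|\le\log_2^2(T/w_{\min})$ this is exactly what $m$ was calibrated to absorb. The point is that only the \emph{number of cells} enters, not their sizes, and that number is polylogarithmic; your direct summation of $|X_{t,\kappa,\iota}|$ across cells effectively re-introduces a count of actions, hence the $\sqrt A$.
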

%Combining Corollary \ref{corollary:sample_complexity} and Lemma \ref{lemma:sample_complexity2}, we see that there can only be around $\tilde{\mathcal{O}}(SA/\varepsilon^2)$ steps where the transition probabilities on the current state are inaccurately estimated.
 
\subsection{Proof of Lemma \ref{lemma:sample_complexity1}}
We prove Lemma \ref{lemma:sample_complexity1} with the help of Lemma \ref{lemma:sample_complexity3} and \ref{lemma:sample_complexity7}. 
\begin{lemma}
\label{lemma:sample_complexity3}
For any $s, \kappa$, and $\iota>0$, $\sum_{t=1}^{T}\mathbbm{1}_{s_t=s}\mathbbm{1}_{a_t\in X_{t,\kappa,\iota}}\leq 6Am(\kappa+1)\iota w_{\min}.$
\end{lemma}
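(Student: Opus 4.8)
Since there is no randomness in the claim, the plan is to prove it as a purely combinatorial bound. The first step is to split the indicator over actions: $\mathbbm{1}_{a_t\in X_{t,\kappa,\iota}}=\sum_{a\in\mathcal{A}}\mathbbm{1}_{a_t=a}\,\mathbbm{1}_{\kappa_t(a)=\kappa}\,\mathbbm{1}_{\iota_t(a)=\iota}$, so it suffices to fix one joint action $a$, bound $N(s,a):=\sum_{t=1}^{T}\mathbbm{1}_{s_t=s}\,\mathbbm{1}_{a_t=a}\,\mathbbm{1}_{\kappa_t(a)=\kappa}\,\mathbbm{1}_{\iota_t(a)=\iota}$ by $O\!\left((\kappa+1)m\iota w_{\min}\right)$, and then sum over the $A$ choices of $a$.

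Second, I would convert the event defining $N(s,a)$ into a window constraint on $n_{k(t)}(s,a)$. Because $\iota_t(a)=\iota$ with $\iota>0$, and the grid $z_j$ simply doubles beyond $z_2$, we get $\iota w_{\min}\le \pi_t(a)<2\iota w_{\min}$. Plugging this into $\kappa_t(a)=\kappa$ gives, when $\kappa\ge 1$, that $\kappa m\pi_t(a)\le n_{k(t)}(s,a)<2\kappa m\pi_t(a)$, hence $\kappa m\iota w_{\min}\le n_{k(t)}(s,a)<4\kappa m\iota w_{\min}$; and when $\kappa=0$, that $n_{k(t)}(s,a)<m\pi_t(a)<2m\iota w_{\min}$. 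In every case the event forces $n_{k(t)}(s,a)<4(\kappa+1)m\iota w_{\min}=:B$.

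Third, I would use the phase structure of \textsc{UCSG}. Its stopping rule — a phase ends as soon as $v_k(s',a')=n_k(s',a')$ for some pair — implies $v_k(s,a)\le n_k(s,a)$ for every pair and every phase; together with $n_k(s,a)=\max\{1,\sum_{\tau<t_k}\mathbbm{1}_{(s_\tau,a_\tau)=(s,a)}\}$ this yields the doubling estimate $n_{k+1}(s,a)\le 2n_k(s,a)$, and $n_k(s,a)$ is nondecreasing in $k$. Since every step counted by $N(s,a)$ lies in a phase $k$ with $n_k(s,a)<B$, letting $\bar k$ be the last such phase we have $N(s,a)\le \sum_{\tau<t_{\bar k+1}}\mathbbm{1}_{(s_\tau,a_\tau)=(s,a)}\le n_{\bar k+1}(s,a)\le 2n_{\bar k}(s,a)<2B$. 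Summing over $a\in\mathcal{A}$ gives the bound up to the constant; a slightly more careful count — using the matching lower bound $n_{k(t)}(s,a)\ge\kappa m\iota w_{\min}$ to discard the visits accumulated before $n_k(s,a)$ enters the relevant window, and treating $\kappa=0$ separately — sharpens the constant to the stated $6$.

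There is no deep obstacle here; the points that require care are the bookkeeping around the $\max\{1,\cdot\}$ truncation in $n_k$, the separate handling of $\kappa=0$ versus $\kappa\ge 1$, and pinning down the numerical constant. I would also stress that the hypothesis $\iota>0$ is genuinely needed: for $\iota=0$ the threshold $B$ degenerates to $0$, and that regime — where $\pi_t(a)<w_{\min}$, i.e.\ $a$ is played at $s$ with negligible probability — is instead absorbed into the $w_{\min}A$-type slack in the proof of Lemma~\ref{lemma:sample_complexity1}.
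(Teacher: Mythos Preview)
Your proposal is correct and follows essentially the same route as the paper: fix a joint action $a$, use $\iota_t(a)=\iota>0$ and $\kappa_t(a)=\kappa$ to pin $n_{k(t)}(s,a)$ into a window of width $O((\kappa+1)m\iota w_{\min})$, then invoke the phase doubling rule $v_k(s,a)\le n_k(s,a)$ to bound the number of visits while $n_k$ sits in that window, and finally sum over $a\in\mathcal{A}$. The paper's proof is organized in exactly this way (separately handling $\kappa>0$ and $\kappa=0$, just as you do). Your explicit bookkeeping yields the constant $8$ and you correctly point to the lower endpoint of the window as the source of the sharpening; the paper obtains the constant $6$ via the same heuristic (``width $3$ times a conservative factor $2$''), so the two arguments are on equal footing as far as the constant is concerned.
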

\begin{proof}
First fix $a$. By the definition of importance, if $a\in X_{t, \kappa, \iota}$, then $\iota w_{\min}\leq \pi_t(a)< 2\iota w_{\min}$. In the case $\kappa>0$, we also have $m\kappa\pi_t(a)\leq n_{k(t)}(s_t,a)< 2m\kappa\pi_t(a)$. They two together imply $m\kappa\iota w_{\min} \leq n_{k(t)}(s_t,a)< 4m\kappa\iota w_{\min}$. This last inequality says that any $(s,a)$ cannot be sampled in the partition $(\kappa, \iota)$ for more than about $3m\kappa\iota w_{\min}$ times. This is because when $(s,a)$ is sampled once (i.e., $s_t=s, a_t=a$), $n_{k(t)}(s,a)$ will be increased by one, and this cannot happen for more than $4m\kappa\iota w_{\min}-m\kappa\iota w_{\min}$ times while $(s,a)\in X_{t,\kappa,\iota}$.  Since \textsc{UCSG} only updates $n_k(s,a)$ when new phases start and doubling the sample count of a state-action triple incurs a phase change, we use a more conservative bound of $6m\kappa\iota w_{\min}$. That is, we have
\begin{align}
\sum_{t=1}^{T} \mathbbm{1}_{s_t=s}\mathbbm{1}_{a_t=a } \mathbbm{1}_{a\in X_{t,\kappa,\iota}}\leq 6m\kappa\iota w_{\min}. \label{eqn:6mki}
\end{align}
In the case $\kappa=0$, we have $n_{k(t)}(s_t,a)<m\pi_t(a)<2m\iota w_{\min}$. Thus similarly, the sample counts of $(s,a)$ in the partition $(\kappa, \iota)$ cannot exceed $4m\iota w_{\min}$. The cases of $\kappa>0$ and $\kappa=0$ can then be combined into a single one: 
\begin{align}
\sum_{t=1}^{T} \mathbbm{1}_{s_t=s}\mathbbm{1}_{a_t=a} \mathbbm{1}_{a\in X_{t,\kappa,\iota}}\leq 6m(\kappa +1)\iota w_{\min}. \label{eqn:6mkplus1i}
\end{align}
Summing \eqref{eqn:6mkplus1i} over all actions leads to the statement in the lemma. 
\end{proof}
Now we sketch the argument of the next lemma. When $\iota>0$, each action in $X_{t,\kappa, \iota}$ are to be sampled with probability no less than $\iota w_{\min}$. If furthermore $\lvert X_{t,\kappa, \iota} \rvert$ is large, the probability that some $a\in X_{t,\kappa, \iota}$ is sampled will be also large.  However by Lemma \ref{lemma:sample_complexity3}, the total times elements in partition $(\kappa, \iota)$ are sampled are upper bounded. Therefore, we can conclude that $\vert X_{t,\kappa,\iota}\vert $ cannot be large for too many steps. Formally, we have
\begin{lemma}
\label{lemma:sample_complexity7}
With probability at least $1-\delta$, 
\begin{align*}
&\sum_{t=1}^{T}\mathbbm{1}_{s_t=s}\mathbbm{1}_{a_t\in X_{t,\kappa,\iota}} \geq \frac{1}{2}(\kappa+1)\iota w_{\min} \sum_{t=1}^{T}\mathbbm{1}_{s_t=s}\mathbbm{1}_{\lvert X_{t, \kappa,\iota}\rvert > \kappa}-\frac{9}{2}\log(T\delta^{-1}).
\end{align*}
\end{lemma}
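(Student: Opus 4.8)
The plan is to read the left-hand side as a sum of conditionally-Bernoulli increments $Z_t\coloneqq\mathbbm{1}_{s_t=s}\mathbbm{1}_{a_t\in X_{t,\kappa,\iota}}$ and to run a Bernstein-type concentration argument of $\sum_t Z_t$ against its conditional mean, which is built so as to dominate the right-hand side. Fix $s$, $\kappa$, and $\iota>0$, and let $\mathcal{F}_{t-1}$ be the natural filtration containing the history up to and including $s_t$. The crucial preliminary observation is a measurability one: $s_t$, the learner's policy $\pi^1_t(\cdot\vert s_t)=\pi^1_{k(t)}(\cdot\vert s_t)$, the opponent's history-dependent policy $\pi^2_t(\cdot)$, and hence $\pi_t(a)=\pi^1_t(a^1\vert s_t)\pi^2_t(a^2)$, are all $\mathcal{F}_{t-1}$-measurable; moreover the phase index $k(t)$ is determined before $a_t$ is drawn and the count $n_{k(t)}(s_t,a)$ is frozen at the start of phase $k(t)$, so it is $\mathcal{F}_{t-1}$-measurable as well. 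Consequently the whole partition $X_{t,\kappa,\iota}$ and its cardinality are $\mathcal{F}_{t-1}$-measurable, and $p_t\coloneqq\mathbb{E}[Z_t\vert\mathcal{F}_{t-1}]=\mathbbm{1}_{s_t=s}\sum_{a\in X_{t,\kappa,\iota}}\pi_t(a)$.

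Next I would lower bound $p_t$. For $\iota>0$, every $a\in X_{t,\kappa,\iota}$ has importance $\iota_t(a)=\iota$, hence $\pi_t(a)\geq\iota w_{\min}$; and whenever $\lvert X_{t,\kappa,\iota}\rvert>\kappa$ integrality forces $\lvert X_{t,\kappa,\iota}\rvert\geq\kappa+1$. Therefore $p_t\geq(\kappa+1)\iota w_{\min}\,\mathbbm{1}_{s_t=s}\mathbbm{1}_{\lvert X_{t,\kappa,\iota}\rvert>\kappa}$, so summing over $t$ it suffices to prove $\sum_t Z_t\geq\tfrac12\sum_t p_t-\tfrac92\log(T\delta^{-1})$.

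For this last step I would apply Bernstein's inequality (Lemma \ref{lemma:bernstein_bubeck}) to the martingale-difference sequence $Y_t\coloneqq p_t-Z_t$, which is $\mathcal{F}_t$-measurable, has $\mathbb{E}[Y_t\vert\mathcal{F}_{t-1}]=0$, satisfies $\lvert Y_t\rvert\leq1$, and has conditional variance $\mathbb{E}[Y_t^2\vert\mathcal{F}_{t-1}]=p_t(1-p_t)\leq p_t$, so $V_T\leq\sum_t p_t$. This yields, with probability at least $1-\delta$, $\sum_t(p_t-Z_t)\leq2\sqrt{(\sum_t p_t)\log(T\delta^{-1})}+\sqrt5\,\log(T\delta^{-1})$. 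Applying the elementary inequality $2\sqrt{ab}\leq\tfrac12 a+2b$ (equivalently $(\tfrac a2-2b)^2\geq0$) with $a=\sum_t p_t$ and $b=\log(T\delta^{-1})$ turns the square-root term into $\tfrac12\sum_t p_t+2\log(T\delta^{-1})$, giving $\sum_t Z_t\geq\tfrac12\sum_t p_t-(2+\sqrt5)\log(T\delta^{-1})$; since $2+\sqrt5<\tfrac92$, combining with the lower bound on $p_t$ from the previous paragraph finishes the proof. The only genuinely delicate point is the measurability bookkeeping in the first step — one must be sure that everything defining $X_{t,\kappa,\iota}$ is known strictly before $a_t$ is sampled, which is precisely why the knownness is defined through the phase-start count $n_{k(t)}$ rather than a running count; the remainder is a routine Bernstein-plus-AM-GM computation with the constants arranged to fit under $\tfrac92$.
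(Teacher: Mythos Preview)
Your proposal is correct and follows essentially the same route as the paper: apply the martingale Bernstein inequality (Lemma~\ref{lemma:bernstein_bubeck}) to the difference between $\mathbbm{1}_{s_t=s}\mathbbm{1}_{a_t\in X_{t,\kappa,\iota}}$ and its conditional mean, bound the conditional variance by the conditional mean, and then lower bound that mean by $(\kappa+1)\iota w_{\min}\mathbbm{1}_{s_t=s}\mathbbm{1}_{\lvert X_{t,\kappa,\iota}\rvert>\kappa}$ using $\pi_t(a)\geq \iota w_{\min}$ for $a\in X_{t,\kappa,\iota}$. The only cosmetic differences are that the paper carries the extra indicator $\mathbbm{1}_{\lvert X_{t,\kappa,\iota}\rvert>\kappa}$ inside the martingale (and drops it at the end via monotonicity), and that the paper absorbs the $2\sqrt{(\sum p_t)\log(T\delta^{-1})}$ term by solving the resulting quadratic in $\sqrt{\sum p_t}$ rather than your AM--GM split $2\sqrt{ab}\leq \tfrac12 a+2b$; both yield a constant no larger than $\tfrac92$.
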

\begin{proof}
To prove Lemma \ref{lemma:sample_complexity7}, we need the help of Lemma \ref{lemma:bernstein_bubeck}. 

Let $\mathcal{F}_{t-1}=H_{t}=(s_1, a_1, r_1 \cdots, s_{t-1}, a_{t-1},r_{t-1}, s_t)$ and 
\begin{align*}
Y_t=q_t- \mathbbm{1}_{s_t=s}\mathbbm{1}_{\lvert X_{t,\kappa, \iota}\rvert>\kappa}\mathbbm{1}_{a_t\in X_{t,\kappa,\iota}}, 
\end{align*}
where we define 
\begin{align*}
\scalebox{0.92}{$q_t\coloneqq\mathbbm{1}_{s_t=s}\mathbbm{1}_{\lvert X_{t,\kappa, \iota}\rvert>\kappa}\Pr\big\{ a_t\in X_{t,\kappa,\iota} \Big\vert s_t=s, \lvert X_{t,\kappa,\iota}\rvert>\kappa \big\}$}.
\end{align*}
Then Lemma \ref{lemma:bernstein_bubeck}'s conditions are met with $b=1$. Moreover, 
\begin{align*}
V_T=\sum_{t=1}^{T}q_t(1-q_t)\leq \sum_{t=1}^{T}q_t.
\end{align*}
Substituting them into Lemma \ref{lemma:bernstein_bubeck} and rearraging terms, we get that with probability $\geq 1-\delta$, 
\begin{align}
&\sum_{t=1}^{T}\mathbbm{1}_{s_t=s}\mathbbm{1}_{\lvert X_{t,\kappa,\iota}\rvert > \kappa} \mathbbm{1}_{a_t\in X_{t,\kappa,\iota}}\geq \left(\textstyle\sum_{t=1}^{T}q_t\right)- 2\sqrt{\left(\textstyle\sum_{t=1}^{T}q_t\right)\log(T\delta^{-1})} - \sqrt{5}\log(T\delta^{-1}). \nonumber
\end{align}
Solving the above inequality with respect to $\sqrt{\sum_{t=1}^Tq_t}$, we can bound with probability $\geq 1-\delta$ that 
\begin{align}
&\sum_{t=1}^{T}\mathbbm{1}_{s_t=s}\mathbbm{1}_{\lvert X_{t,\kappa,\iota}\rvert > \kappa} \mathbbm{1}_{a_t\in X_{t,\kappa,\iota}}\geq \frac{1}{2} \sum_{t=1}^{T} q_t- \frac{9}{2}\log(T\delta^{-1}). \label{eqn:bernstein_bounded}
\end{align}
Finally we look at $q_t$. Since each action in $X_{t,\kappa,\iota}$ are drawn at time $t$ with probability at least $\iota w_{\min}$, we have 
\begin{align}
q_t&\geq\mathbbm{1}_{s_t=s}\mathbbm{1}_{\lvert X_{t,\kappa, \iota}\rvert>\kappa}\left(\textstyle\sum_{a\in X_{t,\kappa,\iota}}\iota w_{\min}\right) \geq(\kappa+1)\iota w_{\min}\mathbbm{1}_{s_t=s}\mathbbm{1}_{\lvert X_{t,\kappa, \iota}\rvert>\kappa}.  \label{eqn:bound_qt}
\end{align}
Combining \eqref{eqn:bernstein_bounded}, \eqref{eqn:bound_qt}, and noting that $\mathbbm{1}_{s_t=s} \mathbbm{1}_{a_t\in X_{t,\kappa,\iota}}\geq\mathbbm{1}_{s_t=s}\mathbbm{1}_{\lvert X_{t,\kappa,\iota}\rvert > \kappa} \mathbbm{1}_{a_t\in X_{t,\kappa,\iota}}$ concludes the proof. 
\end{proof}
\begin{proof}[Proof of Lemma \ref{lemma:sample_complexity1}]
Combining Lemma \ref{lemma:sample_complexity3} and \ref{lemma:sample_complexity7}, we have
\begin{align}
\sum_{t=1}^{T}\mathbbm{1}_{s_t=s}\mathbbm{1}_{\lvert X_{t,\kappa,\iota}\rvert > \kappa} \leq 12Am + \frac{9}{(\kappa+1)\iota w_{\min}} \label{eqn:12Am_bound}
\end{align}
with probability no less than $1-\delta$. The lemma is then proved by substituting the selection of $m$ and $w_{\min}$ into \eqref{eqn:12Am_bound}, and using $\kappa+1\geq 1$, $\iota\geq 1$. 
\end{proof}

\subsection{Proof of Lemma \ref{lemma:sample_complexity2}}

\begin{proof}[Proof of Lemma \ref{lemma:sample_complexity2}]
\begin{align*}
\lvert \tilde{p}(s^\prime\vert s_t, \pi_t) - p(s^\prime \vert s_t, \pi_t)\rvert &\leq \sum_{a} \pi_t(a) \lvert \tilde{p}(s^\prime\vert s_t, a)-p(s^\prime \vert s_t,a) \rvert \\
&\leq \scalebox{0.83}{$\displaystyle \sqrt{2\ln\frac{1}{\delta_1}} \left(\sum_{a:\iota_t(a)=0}\sqrt{\frac{\pi_t(a)^2}{n_{k(t)}(s_t,a)}} + \sum_{\substack{\kappa, \iota: \\ \iota>0}} \sum_{a\in X_{t,\kappa,\iota}}\sqrt{\frac{\pi_t(a)^2}{n_{k(t)}(s_t,a)}} \right)$} \\
&\leq \scalebox{0.9}{$\displaystyle\sqrt{2\ln\frac{1}{\delta_1}}\left(Aw_{\min} + \sum_{\substack{\kappa, \iota: \\ \iota>0}}  \sqrt{\lvert X_{t,\kappa, \iota}\rvert \sum_{a\in X_{t, \kappa,\iota}} \frac{\pi_t(a)^2}{n_{k(t)}(s_t,a)}}\right)$} \\
&\leq \scalebox{0.9}{$\displaystyle\sqrt{2\ln\frac{1}{\delta_1}}\left(Aw_{\min} + \sum_{\substack{\kappa, \iota: \\ \iota>0, \kappa>0}}  \sqrt{\kappa \sum_{a\in X_{t, \kappa,\iota}} \frac{\pi_t(a)}{m\kappa}}\right)$} \\
&\leq  \scalebox{0.9}{$\displaystyle\sqrt{2\ln\frac{1}{\delta_1}}\left(Aw_{\min} +   \sqrt{\lvert \mathcal{K}\times \mathcal{I}\rvert \sum_{\substack{\kappa, \iota: \\ \iota>0, \kappa>0}}\sum_{a\in X_{t, \kappa,\iota}} \frac{\pi_t(a)}{m}}\right)$} \\
&\leq  \scalebox{0.9}{$\displaystyle\sqrt{2\ln\frac{1}{\delta_1}}\left(Aw_{\min} +   \sqrt{\frac{\lvert \mathcal{K}\times \mathcal{I}\rvert}{m}}\right)$}, 
\end{align*}
where $\mathcal{K}$ and $\mathcal{I}$ are the set of effective $\kappa$'s and $\iota$'s in the above summation (only partitions with $\iota>0$ and $\kappa>0$ are relevant). By definition, there are at most $\log_2\left(\frac{1}{w_{\min}}\right)$ different values of $\iota$ for $\iota>0$, and $\log_2\left(\frac{T}{mw_{\min}}\right)\leq \log_2\left(\frac{T}{w_{\min}}\right)$ different values for $\kappa>0$ when $\iota>0$. The second inequality is by the definition of the confidence set; the third and the fifth are by Cauchy's inequality; the fourth is by the assumption of the lemma. Substituting the values of $w_{\min}$
and $m$ into the last expression, we can get the desired result. 
\end{proof}

\section{Proofs of Lemma  \ref{lemma:undefined} and \ref{lemma:bound_benign} }

To prove Lemma \ref{lemma:undefined} and \ref{lemma:bound_benign}, the following lemma is a useful tool.
In the following texts, we let  $v_k(s)\coloneqq \sum_{t=t_k}^{t_{k+1}-1}\mathbbm{1}_{s_t=s}$, and write the joint policy $(\pi_k^1, \bar{\pi}_k^2)$ as $\bar{\pi}_k$. 
\begin{lemma}
\label{lemma:useful}
Let $v\geq 1$. Then $\forall s$, with high probability, $\sum_k T_k\mathbbm{1}_{v_k(s)\leq v}= \mathcal{\tilde{O}}(vDSA).$
\end{lemma}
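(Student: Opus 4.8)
The plan is to show that, under Assumption~\ref{assumption:irreducible}, any phase in which state $s$ is visited at most $v$ times can have length only $\tilde{\mathcal{O}}(vD)$, and then to multiply this per‑phase bound by the total number of phases, which is at most $U_{\max}=SA\log_2 T$ by Lemma~\ref{lemma:number_phases}.

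First I would control the gaps between consecutive visits to $s$. By Assumption~\ref{assumption:irreducible}, and invoking Remark~\ref{remark:average_time_<D} to cover the fact that Player~2 is arbitrary (so the trajectory is produced by a history‑dependent joint policy), for every time $t_1$ and every history $H_{t_1}$ the expected number of steps until the next visit to $s$ is at most $D$. Markov's inequality then gives $\Pr[s\text{ not visited in }[t_1,t_1+2D)\mid H_{t_1}]\le \tfrac12$, and applying this repeatedly, re‑conditioning on the history every $2D$ steps, yields $\Pr[s\text{ not visited in }[t_1,t_1+L)]\le 2^{-\lfloor L/(2D)\rfloor}$ for any $L$. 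Choosing $L=\tilde{\mathcal{O}}(D)$ (concretely $L\asymp D\log_2(T/\delta)$) and taking a union bound over the at most $T$ window starts $t_1$, with probability at least $1-\delta$ every length‑$L$ window inside the horizon contains at least one visit to $s$; call this event $\mathcal{G}$.

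On $\mathcal{G}$, fix any phase $k=[t_k,t_{k+1})$ and cut it into consecutive windows of length $L$. Its $\lfloor T_k/L\rfloor$ full windows all lie inside the horizon, hence each contains a visit to $s$, so $v_k(s)\ge \lfloor T_k/L\rfloor$; equivalently, $v_k(s)\le v$ forces $T_k<(v+1)L$. Since by Lemma~\ref{lemma:number_phases} there are at most $U_{\max}=SA\log_2 T$ phases in total,
\[
\sum_k T_k\,\mathbbm{1}_{v_k(s)\le v}\ \le\ (v+1)L\cdot U_{\max}\ =\ \tilde{\mathcal{O}}(vDSA),
\]
using $v\ge 1$, which is exactly the claim (the failure probability $\delta$ is absorbed into the \textit{w.h.p.}\ convention, and the extra $\log$ factors into $\tilde{\mathcal{O}}$).

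The only delicate point is the very first step: one has to be sure that the $D$‑bound on expected first‑passage times, which Assumption~\ref{assumption:irreducible} states for \emph{stationary} policy pairs, still holds \emph{conditionally on an arbitrary history} when Player~2 plays an arbitrary history‑dependent policy. This is precisely the content of Remark~\ref{remark:average_time_<D} (the continuation of the game after $H_{t_1}$ is again a stochastic game run under some history‑dependent joint policy from state $s_{t_1}$), and once it is in hand the rest is a routine iterated‑Markov estimate, a union bound over windows, and a pigeonhole argument.
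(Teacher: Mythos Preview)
Your proof is correct and follows essentially the same strategy as the paper: chop time into length-$2D$ blocks, use Markov's inequality together with the diameter bound (extended to history-dependent policies via Remark~\ref{remark:average_time_<D}) to control visits to $s$ per block, concentrate, and then multiply by the $U_{\max}=SA\log_2 T$ bound on the number of phases from Lemma~\ref{lemma:number_phases}. The only real difference is in the concentration step. The paper treats the per-block visit indicators as a martingale difference sequence and applies Azuma--Hoeffding to obtain, for any time frame $[\tau,\tau')$, a lower bound $\sum_{t=\tau}^{\tau'-1}\mathbbm{1}_{s_t=s}\ge \tfrac{\tau'-\tau}{8D}-\tilde{\mathcal{O}}(1)$, and then sums this over phases with $v_k(s)\le v$. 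You instead iterate the conditional Markov bound to get a geometric tail $\Pr[s\text{ not visited in }[t_1,t_1+L)]\le 2^{-\lfloor L/(2D)\rfloor}$, union-bound over window starts, and use pigeonhole to conclude $T_k<(v+1)L$ directly. Your route is a little more elementary (no martingale inequality needed) and gives a cleaner per-phase length bound; the paper's route gives a quantitative lower bound on visits in any window, which is a slightly stronger intermediate statement. Both yield the same $\tilde{\mathcal{O}}(vDSA)$ conclusion.
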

\begin{proof}
Under Assumption \ref{assumption:irreducible}, the times a state is visited within an interval of length $D$ is in average no less than $1$ (no matter what policies the players play). Consider any arbitrarily chosen time frame $[\tau, \tau^\prime) \subset [1,T]$. In this time frame, there are $\lfloor \frac{\tau^\prime-\tau}{2D} \rfloor$ intervals each with length $2D$. By Markov's inequality, the probability $s$ is visited at least once within each interval is lower bounded by $\frac{1}{2}$. With Azuma-Hoeffding's inequality, we have with probability at least $1-\frac{\delta}{T^2}$ that
\begin{align}
\sum_{t=\tau}^{\tau^\prime-1} \mathbbm{1}_{s_t=s}&\geq \frac{1}{2} \Big\lfloor \frac{\tau^\prime-\tau}{2D} \Big\rfloor - \sqrt{\Big\lfloor \frac{\tau^\prime-\tau}{2D} \Big\rfloor \log\left(\frac{T^2}{\delta}\right)} \nonumber \\
&\geq \frac{1}{4} \Big\lfloor \frac{\tau^\prime-\tau}{2D} \Big\rfloor - 4\log\left(\frac{T^2}{\delta}\right) \nonumber \\
&\geq \frac{1}{4}\frac{\tau^\prime-\tau}{2D}-\frac{1}{4}-4\log\left(\frac{T^2}{\delta}\right), \label{eqn:useful_1}
\end{align}
where the second inequality is easily verified by substracting RHS from LHS, and the third inequality is by the property of the floor function. Using an union bound over all possible $\tau$ and $\tau^\prime$, we get that  \eqref{eqn:useful_1} holds for all $\tau, \tau^\prime$ with probability at least $1-\delta$. 

Now apply \eqref{eqn:useful_1} to all phases $k$ with $v_k(s)\leq v$, and sum all of them up. Then we get
\begin{align}
\sum_{k: v_k(s) \leq v}v_k(s) \geq \sum_{k: v_k(s) \leq v} \left(\frac{T_k}{8D} -\frac{1}{4}-4\log\left(\frac{T^2}{\delta}\right) \right)  \nonumber 
\end{align}
or 
\begin{align}
\sum_{k: v_k(s) \leq v} T_k \leq 8D\sum_{k: v_k(s) \leq v} \left( v_k(s)+\frac{1}{4}+ 4\log(T^2/\delta)  \right). \label{eqn:useful_2}
\end{align}
Since there are at most $SA\log_2T$ phases, the RHS of \eqref{eqn:useful_2} is further bounded by $\left(8vD + 2D+32D\log(T^2/\delta)\right) SA\log_2T$, which proves this lemma. 
\end{proof}

%\subsection{Proof of Lemma \ref{lemma:undefined} }
\begin{proof}[Proof of Lemma \ref{lemma:undefined}]
$\bar{\pi}_k^{2}$ is not well-defined if and only if there is a $s$ such that $v_k(s)=\sum_{t=t_k}^{t_{k+1}-1} \mathbbm{1}_{s_t=s}=0$. The proof is done by simply applying Lemma \ref{lemma:useful} with $v=1$ together with a union bound over all states $s$.  
\end{proof}

%\subsection{Proof of Lemma \ref{lemma:bound_by_empirical}}
We prove Lemma \ref{lemma:bound_benign} by proving the following Lemma \ref{lemma:bound_by_empirical} and \ref{lemma:bound_h_by_2D}. 
\begin{lemma}
\label{lemma:bound_by_empirical}
\begin{align*}
&\scalebox{0.95}{$\displaystyle \sum_{k} T_k\mathbbm{1}\{\bar{\pi}_k^2 \text{ is well-defined}\}\mathbbm{1}\bigg\{\exists s, \mu(M,  \bar{\pi}_k, s) > \frac{2v_k(s)}{T_k} \bigg\}  $} \leq \tilde{\mathcal{O}}(D^3S^4A) \ \ \text{with high probability.}
\end{align*}
%With high probability, except for at most $\tilde{\mathcal{O}}(D^3S^4A)$ time steps, if $\bar{\pi}_k^{2}$ is well-defined, then $\mu(M,  \bar{\pi}_k, s) \leq \frac{3v_k(s)}{2T_k}, \ \forall s$.  
\end{lemma}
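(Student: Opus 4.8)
The plan is to partition the phases according to whether every state has been visited often enough, bound the total length of the ``under-visited'' phases via Lemma~\ref{lemma:useful}, and show that the remaining phases never trigger the indicator. Fix a threshold $v=\tilde{\Theta}(D^2S^2)$, whose hidden logarithmic factor will be pinned down by the concentration and perturbation estimates below. Call a phase $k$ \emph{good} if $v_k(s)>v$ for every state $s$, and \emph{bad} otherwise. For the bad phases, Lemma~\ref{lemma:useful} applied with this $v$, together with a union bound over the $S$ states, gives
\[
\sum_{k}T_k\,\mathbbm{1}\bigl\{\exists s:\ v_k(s)\le v\bigr\}\ \le\ \sum_{s}\ \sum_k T_k\,\mathbbm{1}_{v_k(s)\le v}\ \le\ S\cdot\tilde{\mathcal{O}}(vDSA)\ =\ \tilde{\mathcal{O}}(vDS^2A)\ =\ \tilde{\mathcal{O}}(D^3S^4A)
\]
with high probability; this already absorbs the phases in which $\bar{\pi}_k^2$ is undefined, as those have $v_k(s)=0$ for some $s$. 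It therefore remains to show that, with high probability, \emph{every} good phase $k$ satisfies $\mu(M,\bar{\pi}_k,s)\le 2v_k(s)/T_k$ for all $s$, so that good phases contribute nothing.

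\textbf{The empirical kernel of a good phase.} Fix a good phase $k$ and set $\hat{P}_k(s'|s)\coloneqq N_k(s,s')/v_k(s)$ where $N_k(s,s')\coloneqq\sum_{t=t_k}^{t_{k+1}-1}\mathbbm{1}_{s_t=s,\,s_{t+1}=s'}$; this is a stochastic matrix since $v_k(s)>v\ge1$. From the definition~\eqref{eqn:pi2star} of $\bar{\pi}_k^2$ one checks that the transition kernel $P_{\bar{\pi}_k}$ of the chain $(M,\bar{\pi}_k)$ equals the per-visit average $P_{\bar{\pi}_k}(s'|s)=\tfrac{1}{v_k(s)}\sum_{t\in[t_k,t_{k+1}):\,s_t=s}p(s'|s,\pi_k^1,\pi_t^2)$ of the true one-step probabilities used along the trajectory. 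Since $\mathbb{E}[\mathbbm{1}_{s_{t+1}=s'}\mid H_t]=p(s'|s_t,\pi_k^1,\pi_t^2)$, the increments $\mathbbm{1}_{s_{t+1}=s'}-p(s'|s,\pi_k^1,\pi_t^2)$ over the visits to $s$ inside phase $k$ form a bounded martingale-difference sequence; Azuma--Hoeffding (Lemma~\ref{lemma:Azuma-Hoeffding's inequality}) with a stopping-time/peeling argument over dyadic values of the (random) visit count, plus a union bound over phases (Lemma~\ref{lemma:number_phases}) and over pairs $(s,s')$, gives with high probability, in every good phase, $\norm{\hat{P}_k(\cdot|s)-P_{\bar{\pi}_k}(\cdot|s)}_\infty\le\sqrt{c\ln(\cdot)/v_k(s)}\le\sqrt{c\ln(\cdot)/v}$ for all $s$. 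Finally, counting transitions into each state within the phase shows that $\hat{\mu}_k(s)\coloneqq v_k(s)/T_k$ is \emph{almost} stationary for $\hat{P}_k$: $\hat{\mu}_k^\top(I-\hat{P}_k)=\eta^\top$ with $\norm{\eta}_1\le 2/T_k$ (the only defect is a unit of flow at the first and last state of the phase).

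\textbf{Perturbation and conclusion.} By Assumption~\ref{assumption:irreducible}, $\bar{\pi}_k=(\pi_k^1,\bar{\pi}_k^2)$ induces an irreducible chain whose mean first passage and return times are all bounded by $D$. Put $E_k\coloneqq\hat{P}_k-P_{\bar{\pi}_k}$ and $\zeta^\top\coloneqq\eta^\top+\hat{\mu}_k^\top E_k$; since $\mu(M,\bar{\pi}_k)^\top(I-P_{\bar{\pi}_k})=0$, we have $(\hat{\mu}_k-\mu(M,\bar{\pi}_k))^\top(I-P_{\bar{\pi}_k})=\zeta^\top$, with $\zeta^\top\mathbf{e}=0$ (both $\hat{\mu}_k$ and $\mu(M,\bar{\pi}_k)$ are distributions) and $\norm{\zeta}_1\le 2/T_k+S\norm{E_k}_\infty=\tilde{\mathcal{O}}(S/\sqrt v)$. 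With the g-inverse $G=(I-P_{\bar{\pi}_k}+\mathbf{e}\,p_n^\top)^{-1}$ of $I-P_{\bar{\pi}_k}$ (Theorem~\ref{theorem:special_inverse}), the solution of this system is $(\hat{\mu}_k-\mu(M,\bar{\pi}_k))^\top=\zeta^\top G-(\zeta^\top G\mathbf{e})\,\mu(M,\bar{\pi}_k)^\top$; bounding the two pieces coordinatewise using $\sum_i\zeta_i=0$ together with $|G_{is}-G_{ss}|\le 2\mu(M,\bar{\pi}_k,s)D$ and $|\sum_j(G_{ij}-G_{nj})|\le D$ from Lemma~\ref{theorem:bound_G} yields $|\hat{\mu}_k(s)-\mu(M,\bar{\pi}_k,s)|\le 3D\,\norm{\zeta}_1\,\mu(M,\bar{\pi}_k,s)=\tilde{\mathcal{O}}(DS/\sqrt v)\,\mu(M,\bar{\pi}_k,s)$ for all $s$. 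Choosing the constant inside $v=\tilde{\Theta}(D^2S^2)$ large enough makes this at most $\tfrac12\mu(M,\bar{\pi}_k,s)$, whence $\mu(M,\bar{\pi}_k,s)\le 2\hat{\mu}_k(s)=2v_k(s)/T_k$, completing the argument. (Equivalently, one may first perturb $\hat{P}_k$ by $\mathcal{O}(1/v)$ into an irreducible matrix for which $\hat{\mu}_k$ is exactly stationary, then apply Theorem~\ref{theorem:perturbation_stationary_distribution2} once.)

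\textbf{Main obstacle.} The two delicate points are: (i) obtaining concentration of $\hat{P}_k$ around $P_{\bar{\pi}_k}$ that scales like $1/v_k(s)$ rather than $1/T_k$, even though $v_k(s)$ and the phase endpoints are random --- this is exactly where the stopping-time/peeling version of the martingale bound is needed; and (ii) bookkeeping the $D$ and $S$ dependence through the g-inverse perturbation so that the threshold sufficient for good phases is only $v=\tilde{\Theta}(D^2S^2)$, which is precisely what keeps the bad-phase contribution at $\tilde{\mathcal{O}}(D^3S^4A)$. The flow-conservation identity for $\hat{\mu}_k$ and the union-bound accounting are routine.
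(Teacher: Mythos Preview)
Your proposal is correct and follows the paper's approach in all the essential points: partition phases by the visit-count threshold $v=\tilde{\Theta}(D^2S^2)$, dispose of the under-visited phases by Lemma~\ref{lemma:useful} and a union bound over states, and for the well-visited phases argue via concentration plus Markov-chain perturbation that $\mu(M,\bar\pi_k,s)\le 2v_k(s)/T_k$.

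The only technical difference is in how the perturbation step is carried out. The paper introduces a third kernel $\tilde p_k$, obtained from $\hat p_k$ by replacing the single transition $(s_{t_{k+1}-1},s_{t_{k+1}})$ with $(s_{t_{k+1}-1},s_{t_k})$; this makes $\hat\mu_k=v_k/T_k$ an \emph{exact} stationary distribution of $\tilde p_k$ (their Lemma~\ref{lemma:empirical_distribution_as_stationary}), so that Theorem~\ref{theorem:perturbation_stationary_distribution} can be applied as a black box to the pair $(\bar p_k,\tilde p_k)$. You instead keep $\hat P_k$, treat the near-stationarity defect $\eta$ explicitly, and push the argument through the g-inverse identity $(\hat\mu_k-\mu)^\top=\zeta^\top G-(\zeta^\top G\mathbf e)\mu^\top$ together with the columnwise bounds from Lemma~\ref{theorem:bound_G}. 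Both routes arrive at the same inequality $|\hat\mu_k(s)-\mu(s)|\le\tilde{\mathcal{O}}(DS/\sqrt v)\,\mu(s)$ and hence the same threshold; your parenthetical ``equivalently\dots'' is precisely the paper's $\tilde p_k$ trick. The paper's version is a bit cleaner (one invocation of a packaged perturbation theorem instead of the hand computation), while your direct route avoids introducing an auxiliary kernel.
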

\begin{lemma}
\label{lemma:bound_h_by_2D}
\begin{align*}
&\sum_{k} T_k\mathbbm{1}\{\bar{\pi}_k^2 \text{ is well-defined}\}\mathbbm{1}\Big\{\spa (h(M_k^1, \bar{\pi}_k, \cdot)) > 2D \Big\} \leq \tilde{\mathcal{O}}(D^3S^5A) \ \ \text{with high probability.}
%With high probability, except for at most $\tilde{\mathcal{O}}(D^3S^5A)$ time steps, if $\bar{\pi}_k^{2}$ is well-defined, then $\spa (h(M_k^1, \bar{\pi}_k, \cdot)) \leq 2D$.
\end{align*}  
\end{lemma}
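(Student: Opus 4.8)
\textbf{Proof plan for Lemma~\ref{lemma:bound_h_by_2D}.} The plan is to show that, on the high‑probability events already established, a phase $k$ with $\bar\pi_k^2$ well‑defined and $\spa(h(M_k^1,\bar\pi_k,\cdot))>2D$ must contain an $\varepsilon$-inaccurate visit to some state, and then to charge its length $T_k$ against those inaccurate visits, whose number is controlled by Lemma~\ref{lemma:epsilon_accurate}. Fix the accuracy parameter $\varepsilon:=\tfrac{1}{16DS^2}$ throughout.

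First I would record an averaging identity for $\bar\pi_k$. Since $\bar\pi_k^2(\cdot\mid s)$ in \eqref{eqn:pi2star} is exactly the empirical average of $\pi_t^2(\cdot)$ over the steps $t$ of phase $k$ with $s_t=s$, and both $p(\cdot\mid s,\pi_k^1,\cdot)$ and $p_k^1(\cdot\mid s,\pi_k^1,\cdot)$ are linear in Player~2's action distribution, we get, whenever $v_k(s)\ge 1$,
\[
p_k^1(s'\mid s,\bar\pi_k)-p(s'\mid s,\bar\pi_k)=\frac{1}{v_k(s)}\!\!\sum_{\substack{t_k\le t<t_{k+1}\\ s_t=s}}\!\!\bigl(p_k^1(s'\mid s_t,\pi_t)-p(s'\mid s_t,\pi_t)\bigr).
\]
Bounding each summand by $\varepsilon$ when $B_t(\varepsilon)=1$ and by $1$ otherwise, and writing $c_k(s):=\sum_{t_k\le t<t_{k+1},\,s_t=s}\mathbbm{1}_{B_t(\varepsilon)=0}$ for the number of $\varepsilon$-inaccurate visits to $s$ in phase $k$, this yields $\max_{s,s'}\lvert p_k^1(s'\mid s,\bar\pi_k)-p(s'\mid s,\bar\pi_k)\rvert\le\varepsilon+\max_s c_k(s)/v_k(s)$. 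Now assume the event of Lemma~\ref{lemma:bound_fail} ($M\in\mathcal M_k$ for all $k$); then $\bar\pi_k=(\pi_k^1,\bar\pi_k^2)$ is a stationary policy pair, so by Assumption~\ref{assumption:irreducible} it induces an irreducible chain on $M$ with all mean first‑passage times $\le D$. If moreover the displayed maximum is $\le\tfrac{1}{8DS^2}$, Theorem~\ref{theorem:perturbation_first_passage_time} gives $T^{\pi_k^1,\bar\pi_k^2}(M_k^1)\le 2D$, hence $\spa(h(M_k^1,\bar\pi_k,\cdot))\le 2D$ by Remark~\ref{remark:mean_first_passage_time}. Taking the contrapositive, if $\bar\pi_k^2$ is well‑defined and $\spa(h(M_k^1,\bar\pi_k,\cdot))>2D$, there is a witnessing state $s_k^\ast$ with $c_k(s_k^\ast)>v_k(s_k^\ast)/(16DS^2)$; in particular $v_k(s_k^\ast)<16DS^2\,c_k(s_k^\ast)$ and $c_k(s_k^\ast)\ge 1$.

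It remains to convert this into a bound on $\sum_k T_k$ over such phases. From the Azuma--Hoeffding estimate inside the proof of Lemma~\ref{lemma:useful} (inequality~\eqref{eqn:useful_1}), with high probability $v_k(s)\ge T_k/(8D)-\tilde{\mathcal O}(1)$ for every phase $k$ and state $s$, so for a witnessing phase $T_k\le 8D\,v_k(s_k^\ast)+\tilde{\mathcal O}(D)\le 128D^2S^2\,c_k(s_k^\ast)+\tilde{\mathcal O}(D)$. Summing and grouping the $c_k(s_k^\ast)$'s by the witnessing state (the phases with a given witnessing state are disjoint in time, so their $c_k(s)$'s add up to at most $\sum_{t=1}^T\mathbbm{1}_{s_t=s}\mathbbm{1}_{B_t(\varepsilon)=0}$), Lemma~\ref{lemma:epsilon_accurate} bounds $\sum_s\sum_t\mathbbm{1}_{s_t=s}\mathbbm{1}_{B_t(\varepsilon)=0}$ by $S\cdot\tilde{\mathcal O}(A/\varepsilon^2)$, and Lemma~\ref{lemma:number_phases} bounds the number of phases by $SA\log_2 T$; substituting $\varepsilon=\tfrac{1}{16DS^2}$ gives a bound that is polynomial in $D,S,A$ with no $T$-dependence. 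To reach exactly $\tilde{\mathcal O}(D^3S^5A)$ I would then do the accounting more carefully: split the witnessing phases into those with $v_k(s_k^\ast)$ below a threshold, which are handled directly by Lemma~\ref{lemma:useful} (summed over the possible witnessing states), and those above the threshold, whose count is controlled by the inaccurate‑visit budget; a final union bound over the event of Lemma~\ref{lemma:bound_fail}, the $S$ instances of Lemma~\ref{lemma:epsilon_accurate}, and the concentration event closes the proof.

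\textbf{Main obstacle.} The delicate step is precisely this last conversion. A bad phase is only guaranteed to contain a $\Theta(1/(DS^2))$ fraction of inaccurate visits to one state, not a constant fraction, so it can be long while ``spending'' only a handful of inaccurate steps; a one‑line ``cost per inaccurate visit'' estimate is therefore lossy, and one has to interleave Lemma~\ref{lemma:useful} (sharp for short phases) with the $\tilde{\mathcal O}(A/\varepsilon^2)$ inaccurate‑visit bound of Lemma~\ref{lemma:epsilon_accurate} (sharp for long phases) to get the stated polynomial. Verifying that the perturbation hypothesis of Theorem~\ref{theorem:perturbation_first_passage_time} is applied to the correct (stationary, $\bar\pi_k$‑induced) transition matrices, and that $\bar\pi_k^2$ being well‑defined guarantees $v_k(s)\ge1$ for all $s$ so the averaging identity is meaningful, are the other points that need care.
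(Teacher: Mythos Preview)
Your overall structure—the averaging identity for $\bar\pi_k$, the application of Theorem~\ref{theorem:perturbation_first_passage_time} together with Remark~\ref{remark:mean_first_passage_time}, the conversion from $v_k(s)$ to $T_k$ via \eqref{eqn:useful_1}, and the appeal to Lemma~\ref{lemma:epsilon_accurate}—is exactly the paper's route. The gap is in the accounting step you flag yourself. Bounding each summand by ``$\varepsilon$ when $B_t(\varepsilon)=1$ and by $1$ otherwise'' is where you lose a factor: a step with $B_t(\varepsilon)=0$ may have actual error only marginally above $\varepsilon$, and charging it as $1$ costs $1/\varepsilon=\Theta(DS^2)$. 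Carrying your argument through gives, for each fixed witnessing state $s$, $\sum_{k}v_k(s)\mathbbm{1}_{E_k(s)}\le 16DS^2\cdot\tilde{\mathcal O}(A/\varepsilon^2)=\tilde{\mathcal O}(D^3S^6A)$, hence $\tilde{\mathcal O}(D^4S^7A)$ after multiplying by $8D$ and summing over $s$. That is polynomial, but not the stated $\tilde{\mathcal O}(D^3S^5A)$.

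The threshold split on $v_k(s_k^\ast)$ you propose does not recover the missing factor: for phases above the threshold you still only have $T_k\le 128D^2S^2\,c_k(s_k^\ast)+\tilde{\mathcal O}(D)$, and the total budget $\sum_k c_k(s_k^\ast)\le S\cdot\tilde{\mathcal O}(A/\varepsilon^2)$ is independent of the threshold, so the Case~2 contribution stays at $\tilde{\mathcal O}(D^4S^7A)$. What the paper does instead is a dyadic decomposition over error magnitudes: with $\varepsilon_i=2^{-i}$ and $g_k(s,\varepsilon_i):=\lvert\{t\in\text{ph}(k):s_t=s,\ \varepsilon_i<\max_{s'}\lvert p_k^1(s'\mid s_t,\pi_t)-p(s'\mid s_t,\pi_t)\rvert\le 2\varepsilon_i\}\rvert$, one obtains from the averaging identity that $E_k(s)$ implies $\tfrac{v_k(s)}{24DS^2}\le\sum_{i\le\log_2(48DS^2)}\varepsilon_i\,g_k(s,\varepsilon_i)$. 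Summing over $k$ with $E_k(s)$ and applying Lemma~\ref{lemma:epsilon_accurate} separately at each level yields $\sum_{k:E_k(s)}v_k(s)\le 24DS^2\sum_i c_1A/\varepsilon_i=\tilde{\mathcal O}(D^2S^4A)$, because the geometric sum $\sum_i 1/\varepsilon_i$ is dominated by its last term $\Theta(DS^2)$ rather than by $1/\varepsilon^2=\Theta(D^2S^4)$. This multi‑scale charging of inaccurate steps by their actual error, not by $1$, is the missing idea needed to land on $\tilde{\mathcal O}(D^3S^5A)$.
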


\subsection{Proof of Lemma \ref{lemma:bound_by_empirical}}
\begin{proof}[Proof of Lemma \ref{lemma:bound_by_empirical}]
This lemma says, the stationary distribution of the irreducible Markov chain induced by $\pi_k^1$ and $\bar{\pi}_k^{2}$ won't exceed the empirical distribution too much in most steps. To prove Lemma \ref{lemma:bound_by_empirical}, we will compare three transition probabilities: 
\begin{align*}
 \bar{p}_k(s^\prime\vert s)&\coloneqq p(s^\prime\vert s, \pi_k^1, \bar{\pi}_k^{2})=    \scalebox{0.92}{$\displaystyle \frac{\sum_{t=t_k}^{t_{k+1}-1}\mathbbm{1}_{s_t=s}p(s^\prime\vert s, \pi_k^1, \pi_t^2)}{\sum_{t=t_k}^{t_{k+1}-1}\mathbbm{1}_{s_t=s}},$} \\
\hat{p}_k(s^\prime\vert s)&\coloneqq \frac{\sum_{t=t_k}^{t_{k+1}-1}\mathbbm{1}_{s_t=s}\mathbbm{1}_{s_{t+1}=s^\prime}}{\sum_{t=t_k}^{t_{k+1}-1} \mathbbm{1}_{s_t=s}}, \\
\tilde{p}_k(s^\prime\vert s)&\coloneqq \frac{\sum_{t=t_k}^{t_{k+1}-2}\mathbbm{1}_{s_t=s}\mathbbm{1}_{s_{t+1}=s^\prime} + \mathbbm{1}_{s_{t_{k+1}-1}=s}\mathbbm{1}_{s_{t_k}=s^\prime}}{\sum_{t=t_k}^{t_{k+1}-1} \mathbbm{1}_{s_t=s}}, 
\end{align*} 
and use perturbation analysis to claim that when they are close enough, the stationary distributions they induce will also be close. Here, $\hat{p}_k$ is constructed by counting empirical transitions. $\tilde{p}_k$ is only slightly modified from $\hat{p}_k$: the last term in the numerator changes from $\mathbbm{1}_{s_{t_{k+1}-1}}\mathbbm{1}_{s_{t_{k+1}}}$ to $\mathbbm{1}_{s_{t_{k+1}-1}}\mathbbm{1}_{s_{t_k}}$. Under the condition that $\bar{\pi}_k^{2}$ is well-defined, $\sum_{t=t_k}^{t_{k+1}-1}\mathbbm{1}_{s_t=s} \neq 0\ \forall s$, which means that $\tilde{p}_k$ has non-zero probability to reach any states from any states, hence inducing an irreducible Markov chain. $\bar{p}_k$ also induces an irreducible Markov chain by Assumption \ref{assumption:irreducible}. We denote the stationary distributions corresponding to $\bar{p}_k$ and $\tilde{p}_k$ by $\bar{\mu}_k$ and $\tilde{\mu}_k$.

We will see that $\tilde{\mu}_k$ is exactly the same as the empirical distribution (i.e., $\tilde{\mu}_k(s)=\frac{v_k(s)}{T_k}$). By Theorem \ref{theorem:perturbation_stationary_distribution}, when two transition probabilities are close enough, their stationary distributions will also be close. We will argue that except for a constant amount of steps, $\lvert \bar{p}_k(s^\prime\vert s)-\hat{p}_k(s^\prime\vert s)\rvert \leq \frac{1}{2DS}$ and $\lvert \hat{p}_k(s^\prime\vert s)-\tilde{p}_k(s^\prime\vert s)\rvert \leq \frac{1}{2DS}$ hold for all $s,s^\prime$. When they both hold, we can use Theorem \ref{theorem:perturbation_stationary_distribution} with $\norm{E}_\infty=\max_{s,s^\prime}| \bar{p}_k(s^\prime\vert s)-\tilde{p}_k(s^\prime\vert s)|\leq \frac{1}{DS}$ and bound $\lvert \bar{\mu}_k(s)-\tilde{\mu}_k(s) \rvert\leq \frac{1}{2}\bar{\mu}_k(s)$. This will directly imply $\bar{\mu}_k(s)\leq 2\tilde{\mu}_k(s)=\frac{2v_k}{T_k}$. 

From the discussion above, Lemma \ref{lemma:bound_by_empirical} is proved as long as the three following lemmas (Lemma \ref{lemma:near_probability1}, \ref{lemma:near_probability2}, \ref{lemma:empirical_distribution_as_stationary}) are proved. 
\end{proof}
\begin{lemma}
\label{lemma:near_probability1}

\begin{align*}
&\scalebox{1}{$\displaystyle \sum_{k} T_k\mathbbm{1}\{\bar{\pi}_k^2 \text{ is well-defined}\}\mathbbm{1}\Big\{   
\exists s,s^\prime, \lvert \bar{p}_k(s^\prime \vert s)-\hat{p}_k(s^\prime \vert s)\rvert\ > \frac{1}{2DS}    \Big\}$} \leq \tilde{\mathcal{O}}(D^3S^4A) \ \ \textit{w.h.p.}
\end{align*}  
%\begin{align*}
%\forall s,s^\prime, \lvert \bar{p}_k(s^\prime \vert s)-\hat{p}_k(s^\prime \vert s)\rvert\ \leq \frac{1}{2DS}.
%\end{align*}
\end{lemma}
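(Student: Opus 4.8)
The plan is to show that the bad event inside the indicator can only occur when $v_k(s)$ is small for some state $s$, and then to control such phases by Lemma~\ref{lemma:useful}. First I would rewrite the discrepancy as a normalized martingale sum: since $v_k(s)=\sum_{t=t_k}^{t_{k+1}-1}\mathbbm{1}_{s_t=s}$,
\[
\bar{p}_k(s'|s)-\hat{p}_k(s'|s)=\frac{1}{v_k(s)}\sum_{t=t_k}^{t_{k+1}-1}\mathbbm{1}_{s_t=s}\bigl(p(s'|s,\pi_k^1,\pi_t^2)-\mathbbm{1}_{s_{t+1}=s'}\bigr).
\]
The summands $Y_t\coloneqq \mathbbm{1}_{s_t=s}\bigl(p(s'|s,\pi_{k(t)}^1,\pi_t^2)-\mathbbm{1}_{s_{t+1}=s'}\bigr)$ form a martingale difference sequence with respect to $\mathcal{F}_t=\sigma(H_t)$: the phase index $k(t)$ and the policies $\pi_{k(t)}^1$, $\pi_t^2$ are all $\mathcal{F}_t$-measurable, so $\mathbbm{E}[Y_t\mid\mathcal{F}_t]=0$, and conditionally $Y_t$ lies in an interval of length $\mathbbm{1}_{s_t=s}$ whose left endpoint is $\mathcal{F}_t$-measurable.

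Next I would apply the Azuma--Hoeffding bound of Lemma~\ref{lemma:Azuma-Hoeffding's inequality} (to $\pm Y_t$, with $c_t=\mathbbm{1}_{s_t=s}$, so $\sum_t c_t^2$ equals the number of visits to $s$) to the partial sum over every fixed window $[\tau,\tau')\subseteq[1,T]$, and then union-bound over all $O(T^2)$ windows and all $S^2$ pairs $(s,s')$, taking the per-event failure probability to be $\delta/(T^2S^2)$. Because $t_{k+1}$ is a random stopping time, this union over windows --- exactly the device used in the proof of Lemma~\ref{lemma:useful} --- is what transfers the concentration to the phase-indexed sums. This yields, with probability at least $1-\delta$, simultaneously for every phase $k$ with $\bar{\pi}_k^{2}$ well-defined (so $v_k(s)\geq 1$) and every $s,s'$,
\[
\bigl|\bar{p}_k(s'|s)-\hat{p}_k(s'|s)\bigr|\leq \sqrt{\frac{\log(T^2S^2/\delta)}{2\,v_k(s)}}.
\]

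Then, setting $v^\ast\coloneqq 2D^2S^2\log(T^2S^2/\delta)=\tilde{\mathcal{O}}(D^2S^2)$, the right-hand side is at most $\frac{1}{2DS}$ whenever $v_k(s)\geq v^\ast$. Hence on this high-probability event the indicator $\mathbbm{1}\{\bar{\pi}_k^2\text{ well-defined}\}\,\mathbbm{1}\{\exists s,s':|\bar{p}_k(s'|s)-\hat{p}_k(s'|s)|>\tfrac{1}{2DS}\}$ is bounded by $\sum_s \mathbbm{1}\{v_k(s)\leq v^\ast\}$, so the left-hand side of the lemma is at most $\sum_s\sum_k T_k\,\mathbbm{1}\{v_k(s)\leq v^\ast\}$. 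Applying Lemma~\ref{lemma:useful} with $v=v^\ast$ bounds each inner sum by $\tilde{\mathcal{O}}(v^\ast DSA)=\tilde{\mathcal{O}}(D^3S^3A)$, and summing over the $S$ states gives $\tilde{\mathcal{O}}(D^3S^4A)$, as claimed; the failure probabilities of this step and of Lemma~\ref{lemma:useful} are folded into the overall $\delta$ via a final union bound.

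The main obstacle is the bookkeeping in the first two steps: carefully checking that the current-phase policy and the opponent's history-dependent policy are both predictable (so the martingale property genuinely holds), and handling the random phase length via the union bound over all time windows rather than over phases --- while keeping the logarithmic factors under control so that the threshold $v^\ast$ stays $\tilde{\mathcal{O}}(D^2S^2)$ and does not inflate the diameter exponent in the final bound.
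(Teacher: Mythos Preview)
Your proposal is correct and follows essentially the same approach as the paper: express $\bar{p}_k-\hat{p}_k$ as a normalized martingale sum, concentrate it to show the discrepancy exceeds $\tfrac{1}{2DS}$ only when $v_k(s)=\tilde{\mathcal{O}}(D^2S^2)$, and then invoke Lemma~\ref{lemma:useful} plus a union bound over states. The only differences are cosmetic: the paper uses Bernstein's inequality (Lemma~\ref{lemma:bernstein_bubeck}) rather than Azuma--Hoeffding, which adds a harmless lower-order $\log/v_k(s)$ term and leads to the threshold $25D^2S^2\log(T^2/\delta)$ instead of your $2D^2S^2\log(T^2S^2/\delta)$; and the paper union-bounds only over the endpoint $\tau$ (leaving the handling of the random start $t_k$ implicit), whereas you union-bound over all windows $[\tau,\tau')$, which is the more careful version of the same device.
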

\begin{lemma}
\label{lemma:near_probability2}\begin{align*}
&\scalebox{1.0}{$\displaystyle \sum_{k} T_k\mathbbm{1}\{\bar{\pi}_k^2 \text{ is well-defined}\}\mathbbm{1}\Big\{   
\exists s,s^\prime, \lvert \hat{p}_k(s^\prime \vert s)-\tilde{p}_k(s^\prime \vert s)\rvert\ > \frac{1}{2DS}    \Big\}$} \leq \tilde{\mathcal{O}}(D^2S^3A) \ \ \textit{w.h.p.}
\end{align*}  
\end{lemma}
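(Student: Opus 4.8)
\textbf{Proof proposal for Lemma \ref{lemma:near_probability2}.} The plan is to exploit the fact that $\tilde{p}_k$ is obtained from $\hat{p}_k$ by modifying a \emph{single} transition count: the two have the \emph{same} denominator $v_k(s)=\sum_{t=t_k}^{t_{k+1}-1}\mathbbm{1}_{s_t=s}$, and their numerators differ only in the term produced by the last step of the phase, where $\hat{p}_k$ counts the real transition $s_{t_{k+1}-1}\to s_{t_{k+1}}$ while $\tilde{p}_k$ instead counts the fictitious transition $s_{t_{k+1}-1}\to s_{t_k}$. Writing this out, for every $s,s'$,
\[
\lvert \hat{p}_k(s'\vert s)-\tilde{p}_k(s'\vert s)\rvert \le \frac{\mathbbm{1}\{s_{t_{k+1}-1}=s\}}{v_k(s)} \le \frac{1}{v_k(s)}.
\]
First I would record this elementary bound; the only thing to be careful about is the bookkeeping that exactly one term of the numerator changes and that the denominators coincide, so the per-phase perturbation is $\le 1/v_k(s)$ and not something larger. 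The hypothesis that $\bar{\pi}_k^2$ is well-defined enters only to guarantee $v_k(s)\ge 1$ for all $s$ so that $\tilde{p}_k$ (and its stationary distribution, needed in Lemma \ref{lemma:bound_by_empirical}) is meaningful; it plays no further role here.

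Next, the above inequality shows that the event ``$\exists s,s':\lvert \hat{p}_k(s'\vert s)-\tilde{p}_k(s'\vert s)\rvert>\frac{1}{2DS}$'' can hold only if $s=s_{t_{k+1}-1}$ (otherwise the bound is $0$) and $v_k(s)<2DS$; in particular it forces $\exists s:\ v_k(s)<2DS$, where $s$ is the state visited at the end of phase $k$. Thus
\[
\mathbbm{1}\{\bar{\pi}_k^2 \text{ well-defined}\}\,\mathbbm{1}\big\{\exists s,s':\lvert \hat{p}_k(s'\vert s)-\tilde{p}_k(s'\vert s)\rvert>\tfrac{1}{2DS}\big\} \le \mathbbm{1}\big\{\exists s:\ v_k(s)\le 2DS\big\}.
\]

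Finally I would invoke Lemma \ref{lemma:useful} with $v=2DS$: with high probability, for each fixed $s$, $\sum_k T_k\mathbbm{1}_{v_k(s)\le 2DS}=\tilde{\mathcal{O}}(2DS\cdot DSA)=\tilde{\mathcal{O}}(D^2S^2A)$. A union bound over the $S$ states then gives $\sum_k T_k\mathbbm{1}\{\exists s: v_k(s)\le 2DS\}=\tilde{\mathcal{O}}(D^2S^3A)$, which upper-bounds the quantity in the lemma. There is essentially no hard part; the choice of the constant $2DS$ is dictated precisely by the $\frac{1}{2DS}$ threshold needed downstream in Lemma \ref{lemma:bound_by_empirical}'s perturbation estimate, and everything else is the structural observation above plus a direct application of Lemma \ref{lemma:useful} and a union bound.
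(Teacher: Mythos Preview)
Your proposal is correct and follows essentially the same approach as the paper: bound $\lvert \hat{p}_k(s'\vert s)-\tilde{p}_k(s'\vert s)\rvert\le 1/v_k(s)$ from the single modified transition, then apply Lemma~\ref{lemma:useful} with $v=2DS$ and union-bound over states. Your additional observation that the discrepancy is nonzero only at $s=s_{t_{k+1}-1}$ is a valid refinement but is not needed (and not used in the paper) for the stated $\tilde{\mathcal{O}}(D^2S^3A)$ bound.
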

\begin{lemma}
\label{lemma:empirical_distribution_as_stationary}
\begin{align*}
\tilde{\mu}_k(s)=\frac{v_k(s)}{T_k}. 
\end{align*}
\end{lemma}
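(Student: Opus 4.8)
The plan is to verify directly that the empirical occupation measure $\tilde\mu_k$ given by $\tilde\mu_k(s)=v_k(s)/T_k$ satisfies the stationarity equation $\tilde\mu_k^\top\tilde P_k=\tilde\mu_k^\top$ for the ``wrap-around'' transition matrix $\tilde P_k$, and then to invoke the irreducibility of $\tilde P_k$ — already noted in the proof of Lemma~\ref{lemma:bound_by_empirical} under the standing hypothesis that $\bar\pi_k^2$ is well-defined — to conclude that $\tilde\mu_k$ is \emph{the} unique stationary distribution, hence equal to $v_k(\cdot)/T_k$.

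First I would record the two bookkeeping facts that make everything go through. (i) The denominator appearing in the definition of $\tilde p_k(s'\vert s)$ is exactly $v_k(s)=\sum_{t=t_k}^{t_{k+1}-1}\mathbbm{1}_{s_t=s}$, which is nonzero for every $s$ precisely because $\bar\pi_k^2$ is well-defined; summing the numerator of $\tilde p_k(s'\vert s)$ over $s'$ collapses each indicator $\mathbbm{1}_{s_{t+1}=s'}$ and $\mathbbm{1}_{s_{t_k}=s'}$ to $1$ and returns $v_k(s)$, so $\tilde P_k$ is a bona fide row-stochastic matrix. (ii) $\sum_s v_k(s)=T_k$, so $\tilde\mu_k$ is a probability vector.

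Next I would carry out the one-line computation of the fixed-point equation. Fix a target state $s'$. Since the factor $v_k(s)$ in the numerator of $\tilde p_k(s'\vert s)$ cancels the $v_k(s)$ in $\tilde\mu_k(s)$,
\[
\sum_s \tilde\mu_k(s)\,\tilde p_k(s'\vert s)
=\frac{1}{T_k}\sum_s\Big(\sum_{t=t_k}^{t_{k+1}-2}\mathbbm{1}_{s_t=s}\mathbbm{1}_{s_{t+1}=s'}+\mathbbm{1}_{s_{t_{k+1}-1}=s}\mathbbm{1}_{s_{t_k}=s'}\Big).
\]
Summing over $s$ uses $\sum_s\mathbbm{1}_{s_t=s}=1$ for each $t$, leaving $\frac{1}{T_k}\big(\sum_{t=t_k}^{t_{k+1}-2}\mathbbm{1}_{s_{t+1}=s'}+\mathbbm{1}_{s_{t_k}=s'}\big)$. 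Re-indexing the first sum as $\sum_{t=t_k+1}^{t_{k+1}-1}\mathbbm{1}_{s_t=s'}$ and absorbing the extra term $\mathbbm{1}_{s_{t_k}=s'}$ as the $t=t_k$ contribution gives exactly $\sum_{t=t_k}^{t_{k+1}-1}\mathbbm{1}_{s_t=s'}=v_k(s')$, so the whole expression equals $v_k(s')/T_k=\tilde\mu_k(s')$. Hence $\tilde\mu_k$ is stationary for $\tilde P_k$, and by irreducibility it is the unique such distribution, which is the claim.

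The computation is routine; the only conceptual content — and the reason the wrap-around modification of $\hat p_k$ was introduced in the first place — is that closing the phase-$k$ trajectory into a cycle makes the empirical visit counts exactly invariant under one step of the chain, because within the cycle every visit to a state is followed by exactly one outgoing transition, so in-flow and out-flow at each state are automatically balanced. There is essentially no obstacle: the lemma is a deliberate design property of $\tilde p_k$, and the proof just makes the telescoping explicit while noting the well-definedness needed to divide by $v_k(s)$.
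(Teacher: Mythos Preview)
Your proof is correct and essentially identical to the paper's: both verify directly that $\sum_s \tilde\mu_k(s)\tilde p_k(s'\vert s)=\tilde\mu_k(s')$ via the same cancellation of $v_k(s)$ and the same reindexing of the sum. The paper's version is terser and omits the explicit appeal to irreducibility for uniqueness that you spell out, but the argument is the same.
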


\begin{proof}[Proof of Lemma \ref{lemma:near_probability1}]
Fix $s$, $s^\prime$, and $k$. Consider the martingale difference sequence defined by $Y_t\coloneqq \mathbbm{1}_{s_t=s}\left(p(s^\prime\vert s, \pi_{k(t)}^1, \pi_t^2)-\mathbbm{1}_{s_{t+1}=s^\prime}\right)$, where $k(t)$ denotes the phase to which time step $t$ belongs. By Lemma \ref{lemma:bernstein_bubeck}, for any $\tau\leq T+1$, with probability at least $1-2\delta/T$, 
\begin{align}
\Bigg\lvert \sum_{t=t_k}^{\tau-1}Y_t \Bigg\rvert \leq 2\sqrt{V_{t_k, \tau}\log(T^2\delta^{-1})}+\sqrt{5}\log(T^2\delta^{-1}). \label{eqn:near_probability1_1}
\end{align}
Here $V_{t_k,\tau}=\sum_{t=t_k}^{\tau-1}q_t(1-q_t)\leq \sum_{t=t_k}^{\tau-1}q_t\leq \sum_{t=t_k}^{\tau-1} \mathbbm{1}_{s_t=s}$ where $q_t\coloneqq \mathbbm{1}_{s_t=s}p(s^\prime\vert s, \pi_{k(t)}^1, \pi_t^2)\leq \mathbbm{1}_{s_t=s}$. With an union bound, we have that \eqref{eqn:near_probability1_1} holds for all $\tau$ with probability at least $1-2\delta$. Now pick $\tau$ to be $t_{k+1}$, and thus $V_{t_k, t_{k+1}}\leq \sum_{t=t_k}^{t_{k+1}-1}\mathbbm{1}_{s_t=s}=v_k(s)$. Then we have
\begin{align*}
\lvert \bar{p}_k(s^\prime\vert s) -\hat{p}_k(s^\prime\vert s)\rvert &= \Bigg\vert \frac{\sum_{t=t_k}^{t_{k+1}-1}\mathbbm{1}_{s_t=s}(p(s^\prime\vert s, \pi_k^1, \pi_t^2)-\mathbbm{1}_{s_{t+1}=s^\prime})}{v_k(s)} \Bigg\vert \nonumber \\
&\leq 2\sqrt{\frac{\log(T^2\delta^{-1})}{v_k(s)}} + \frac{\sqrt{5}\log(T^2\delta^{-1})}{v_k(s)} 
\end{align*}
with probability at least $1-2\delta$. Another union bound over $s^\prime$ lets the above inequality holds for all $s^\prime$ with probability at least $1-2S\delta$. 

We need about $v_k(s)\geq 25D^2S^2\log(T^2\delta^{-1})$ to make $\lvert \bar{p}_k(s^\prime\vert s) -\hat{p}_k(s^\prime\vert s)\rvert\leq \frac{1}{2DS} \ \ \forall s^\prime$ in the above inequality. By Lemma \ref{lemma:useful}, we see that the number of steps not satisfying this condition is upper bounded by $\tilde{\mathcal{O}}(D^3S^3A)$. Another union bound over $s$ proves the lemma.  
\end{proof}
\begin{proof}[Proof of Lemma \ref{lemma:near_probability2}]
By the construction of $\tilde{p}_k$,  $\lvert \tilde{p}_k(s^\prime\vert s)-\hat{p}_k(s^\prime\vert s)\rvert \leq \frac{1}{v_k(s)}$ $\forall s^\prime$. Again, we use Lemma \ref{lemma:useful} and set the threshold $v=\tilde{\Theta}(2DS)$ to make $\lvert \tilde{p}_k(s^\prime\vert s)-\hat{p}_k(s^\prime\vert s)\rvert\leq \frac{1}{2DS}$ $\forall s^\prime$. By Lemma \ref{lemma:useful}, this will hold except for $\tilde{\mathcal{O}}(D^2S^2A)$ steps. An union bound over states leads to the  $\tilde{\mathcal{O}}(D^2S^3A)$ bound.
\end{proof}

\begin{proof}[Proof of Lemma \ref{lemma:empirical_distribution_as_stationary}]
We only need to check whether the equation $\tilde{\mu}_k(s^\prime) = \sum_s \tilde{\mu}_k(s)\tilde{p}_k(s^\prime\vert s)$ holds for all $s, s^\prime$. Indeed, 
\begin{align*}
\sum_s \tilde{\mu}_k(s)\tilde{p}_k(s^\prime\vert s)&=\sum_s\frac{v_k(s)}{T_k}\frac{\sum_{t=t_k}^{t_{k+1}-2}\mathbbm{1}_{s_t=s}\mathbbm{1}_{s_{t+1}=s^\prime} + \mathbbm{1}_{s_{t_{k+1}-1}=s}\mathbbm{1}_{s_{t_k}=s^\prime}}{v_k(s)}\\
&=\frac{\sum_{t=t_k}^{t_{k+1}-2}\mathbbm{1}_{s_{t+1}=s^\prime} +\mathbbm{1}_{s_{t_k}=s^\prime}}{T_k}=\tilde{\mu}_k(s^\prime).
\end{align*}
\end{proof}

\subsection{Proof of Lemma \ref{lemma:bound_h_by_2D} }
\begin{proof}[Proof of Lemma \ref{lemma:bound_h_by_2D}]
By Assumption \ref{assumption:irreducible}, the maximum mean first passage time under model $M$ and policy pair $(\pi_k^1, \bar{\pi}_k^2)$ does not exceed $D$, i.e., $T^{\pi_k^1, \bar{\pi}_k^2}(M) \leq D$. Then by Theorem \ref{theorem:perturbation_first_passage_time}, we know that if all transition probabilities in the Markov chain induced by $(M_k^1, \pi_k^1, \bar{\pi}_k^2)$ is perturbed from that induced by $(M, \pi_k^1, \bar{\pi}_k^2)$ within the amount of $\frac{1}{8DS^2}$, the former's maximum mean first passage time can be bounded by two times the latter's, i.e., $T^{\pi_k^1, \bar{\pi}_k^2}(M_k^1) \leq 2T^{\pi_k^1, \bar{\pi}_k^2}(M)$. This also implies that $(M_k^1, \pi_k^1, \bar{\pi}_k^2)$ induces an irreducible Markov chain. Finally, by Remark \ref{remark:mean_first_passage_time}, we have $\spa(h(M_k^1, \pi_k^1, \bar{\pi}_k^2, \cdot))\leq T^{\pi_k^1, \bar{\pi}_k^2}(M_k^1)$. Combining the three inequalities above, we can have $\spa(h(M_k^1, \pi_k^1, \bar{\pi}_k^2, \cdot)) \leq 2D$. As a result, to prove this theorem, we only need to bound the number of steps in phases where there exist $s, s^\prime$ such that the transition probability difference $\lvert p_k^1(s^\prime \vert s, \pi_k^1, \bar{\pi}_k^{2}) - p(s^\prime \vert s, \pi_k^1, \bar{\pi}_k^{2})\rvert $ is larger than $\frac{1}{8DS^2}$ ($p_k^1$ is the transition kernel of $M_k^1$). We define the event $E_k(s)=\big\{\exists s^\prime, \lvert p_k^1(s^\prime \vert s, \pi_k^1, \bar{\pi}_k^{2}) - p(s^\prime \vert s, \pi_k^1, \bar{\pi}_k^{2})\rvert >\frac{1}{8DS^2}  \big\}$, and $E_k=\{\exists s, E_k(s)=1\}$. Our goal is to prove $\sum_k T_k\mathbbm{1}_{E_k} \leq \tilde{\mathcal{O}}(D^3S^5A)$.

Fix $k$. Suppose that $\bar{\pi}_k^{2}$ is well-defined. By the definition of $\bar{\pi}_k^{2}$ and the triangle inequality, we have 
\begin{align}
&\lvert p_k^1(s^\prime \vert s, \pi_k^1, \bar{\pi}_k^{2}) - p(s^\prime \vert s, \pi_k^1, \bar{\pi}_k^{2})\rvert \leq \frac{1}{v_k(s)} \sum_{t=t_k}^{t_{k+1}-1} \mathbbm{1}_{s_t=s} \lvert p_k^1(s^\prime \vert s_t, \pi_t) - p(s^\prime \vert s_t, \pi_t)\rvert. \label{eqn:episode_transition_prob}
\end{align}
Define $\varepsilon_i\coloneqq 2^{-i}$, and define 
\begin{align*}
&G_k(s,\varepsilon)= \{t\in[t_k, t_{k+1}):    s_t=s  \text{\ and\ } \varepsilon<\max_{s^\prime} \lvert  p_k^1(s^\prime \vert s_t, \pi_t) - p(s^\prime \vert s_t, \pi_t)\rvert \leq 2\varepsilon \} , 
\end{align*}
and $g_k(s,\varepsilon)\coloneqq \lvert G_k(s,\varepsilon)\rvert$, i.e., $g_k(s,\varepsilon)$ is the number of steps $t$ in phase $k$ such that $s_t=s$ and the maximum transition probability error at that step is between $\varepsilon$ and $2\varepsilon$. With these definitions, we can continue to upper bound \eqref{eqn:episode_transition_prob} by
\begin{align}
\vert p_k^1(s'\vert s, \pi_k^1, \bar{\pi}_k^2) - p (s'\vert s, \pi_k^1, \bar{\pi}_k^2) \vert & \leq \frac{1}{v_k(s)} \sum_{t=t_k}^{t_{k+1}-1} \mathbbm{1}_{s_t=s} \lvert p_k^1(s^\prime \vert s_t, \pi_t) - p(s^\prime \vert s_t, \pi_t)\rvert \nonumber \\
&\leq\frac{1}{v_k(s)} \left( \sum_{2\varepsilon_i > \frac{1}{24DS^2}} 2\varepsilon_ig_k(s, \varepsilon_i) + \frac{1}{24DS^2}v_k(s)\right) \nonumber \\
&= \frac{1}{24DS^2}+\sum_{i=1}^{\lfloor \log_2(48DS^2) \rfloor} \frac{2\varepsilon_ig_k(s,\varepsilon_i)}{v_k(s)}.  \label{eqn:24DS^2}
\end{align} 
If $\vert p_k^1(s'\vert s, \pi_k^1, \bar{\pi}_k^2) - p (s'\vert s, \pi_k^1, \bar{\pi}_k^2) \vert > \frac{1}{8DS^2}$, then by \eqref {eqn:24DS^2} we have 
\begin{align}
\frac{v_k(s)}{24DS^2} \leq \sum_{i=1}^{\lfloor \log_2(48DS^2) \rfloor} \varepsilon_ig_k(s,\varepsilon_i).\label{eqn:epsilong}
\end{align}
Note that since steps counted in $G_k(s,\varepsilon)$ have maximum transition errors greater that $\varepsilon$, by Lemma \ref{lemma:epsilon_accurate}, with high probability, $\sum_k g_k(s, \varepsilon)$ won't exceed $\frac{c_1A}{\varepsilon^2}$, for some $c_1$ hides logarithmic terms. Now sum the above equation over phases where $E_k(s)$ holds, we get that
\begin{align*}
&\sum_{k:E_k(s)} \frac{v_k(s)}{24DS^2} \leq \sum_{k:E_k(s)}\sum_{i=1}^{\lfloor \log_2(48DS^2) \rfloor} \varepsilon_i g_k(s,\varepsilon_i) \leq  \sum_{i=1}^{\lfloor \log_2(48DS^2) \rfloor} \frac{c_1A}{\varepsilon_i} \leq 48c_1DS^2A \log_2(48DS^2)  
\end{align*} 
or $\sum_{k:E_k(s)}v_k(s)\leq \tilde{\mathcal{O}}(D^2S^4A)$ holds with high probability. Similar to the proof of Lemma \ref{lemma:useful}, we use \eqref{eqn:useful_1} and lower bound $\sum_{k:E_k(s)}v_k(s) \geq \sum_{k:E_k(s)} \left(\frac{T_k}{8D}-\tilde{\mathcal{O}}(1)\right)$. Combining the lower bound and the upper bound, we get $\sum_{k:E_k(s)}T_k \leq \tilde{\mathcal{O}}(D^3S^4A)$. Finally, summing over $s$, we get the desired bound. 
\end{proof}

\section{Proofs of Lemma \ref{lemma:bound_fourth_term} and \ref{lemma:typical}}

%\subsection{Proofs of Lemma \ref{lemma:bound_fourth_term}}
\begin{proof}[Proof of Lemma \ref{lemma:bound_fourth_term}]
Define notations: $\bar{\pi}_k=(\pi_k^1, \bar{\pi}_k^2)$, $\bar{p}_k(s^\prime\vert s)\coloneqq p(s^\prime\vert s, \bar{\pi}_k)$, $\bar{h}_k(s)\coloneqq h(M, \bar{\pi}_k, s)$, $\bar{\rho}_k\coloneqq \rho(M, \bar{\pi}_k)$, $\bar{r}_k(s)\coloneqq r(s,\bar{\pi}_k)$, $r_t\coloneqq r(s_t,a_t)$. 

By the construction of $\bar{\pi}_k^{2}$, we have
\begin{align}
\bar{p}_k(s^\prime\vert s)&=\sum_{a^2} \frac{\sum_{t=t_k}^{t_{k+1}-1} \mathbbm{1}_{s_t=s}\pi_t(a^2)}{v_k(s)}p(s^\prime\vert s, \pi_k^1, a^2) =\frac{1}{v_k(s)}\sum_{t=t_k}^{t_{k+1}-1}\mathbbm{1}_{s_t=s}p(s^\prime\vert s,\pi_k^1, \pi_t^2) \label{eqn:p*(s'|s)}
\end{align}
and 
\begin{align}
\bar{r}_k(s)&=\sum_{a^2} \frac{\sum_{t=t_k}^{t_{k+1}-1}\mathbbm{1}_{s_t=s}\pi_t^2(a^2)}{v_k(s)}r(s, \pi_k^1, a^2)=\frac{1}{v_k(s)}\sum_{t=t_k}^{t_{k+1}-1}\mathbbm{1}_{s_t=s}r(s,\pi_k^1, \pi_t^2).   \label{eqn:r*(s'|s)}
\end{align}
Our target in phase $k$ can be decomposed as: 
\begin{align}
&T_k\bar{\rho}_k-\sum_{t=t_k}^{t_{k+1}-1}r_t  = \sum_{t=t_k}^{t_{k+1}-1} \left(\bar{\rho}_k- \bar{r}_k(s_t)\right) + \sum_{t=t_k}^{t_{k+1}-1} \left(\bar{r}_k(s_t)-r_t\right), \label{eqn:decompose2}
\end{align} 
Now manipulate individual terms. 
\begin{align}
\sum_{t=t_k}^{t_{k+1}-1} \left(\bar{\rho}_k- \bar{r}_k(s_t)\right) &=\sum_{t=t_k}^{t_{k+1}-1}  \left(\sum_{s^\prime}\bar{p}_k(s^\prime \vert s_t)\bar{h}_k(s^\prime) -\bar{h}_k(s_t)\right)\nonumber \\
&=\sum_{s,s^\prime}v_k(s)\bar{p}_k(s^\prime\vert s)\bar{h}_k(s^\prime)-\sum_{t=t_k}^{t_{k+1}-1} \bar{h}_k(s_t) \nonumber \\
&=\sum_{s,s^\prime}\sum_{t=t_k}^{t_{k+1}-1}\mathbbm{1}_{s_t=s}p(s^\prime\vert s, \pi_k^1, \pi_t^2)\bar{h}_k(s^\prime)- \sum_{t=t_k}^{t_{k+1}-1} \bar{h}_k(s_t) \nonumber \\
&=\sum_{t=t_k}^{t_{k+1}-1}\sum_{s^\prime} p(s^\prime\vert s_t,\pi_k^1, \pi_t^2)\bar{h}_k(s^\prime)-\sum_{t=t_k}^{t_{k+1}-1}\bar{h}_k(s_t), \label{eqn:fourth_term1}
\end{align}
where the third equality follows from \eqref{eqn:p*(s'|s)}; 
\begin{align}
\sum_{t=t_k}^{t_{k+1}-1}\left(\bar{r}_k(s_t)-r_t\right)&=\sum_s v_k(s)\bar{r}_k(s)-\sum_{t=t_k}^{t_{k+1}-1}r_t \nonumber \\
&=\sum_s\sum_{t=t_k}^{t_{k+1}-1}\mathbbm{1}_{s_t=s}r(s,\pi_k^1,\pi_t^2)-\sum_{t=t_k}^{t_{k+1}-1}r_t \nonumber \\
&=\sum_{t=t_k}^{t_{k+1}-1}r(s_t, \pi_k^1, \pi_t^2)-\sum_{t=t_k}^{t_{k+1}-1}r_t, \label{eqn:fourth_term2}
\end{align}
where the second equality follows from \eqref{eqn:r*(s'|s)}. Substituting \eqref{eqn:fourth_term1} and \eqref{eqn:fourth_term2} into \eqref{eqn:decompose2}, we get
\begin{align}
&\scalebox{0.9}{$\displaystyle T_k\bar{\rho}_k - \sum_{t=t_k}^{t_{k+1}-1}r_t =\bar{h}_k(s_{t_{k+1}})-\bar{h}_k(s_{t_k})+\sum_{t=t_k}^{t_{k+1}-1}\left(Y_t^1+Y_t^2\right),$}   \label{eqn:contribution_from_third_term} 
\end{align}
where $Y_t^{1}\coloneqq \left(\sum_{s^\prime}p(s^\prime\vert s_t, \pi_k^1, \pi_t^2)\bar{h}_k(s^\prime)-\bar{h}_k(s_{t+1})\right)$, and $Y_t^2\coloneqq \left(r(s_t, \pi_k^1, \pi_t^2)-r_t\right)$. It seems that $Y_t^{1}$ and $Y_t^2$ have expectations of zero and should be able to be bounded with Bernstein's inequality. Nevertheless, one needs to be careful about that $\bar{h}_k$ depends on $\bar{\pi}_k^{2}$, which is only known after phase $k$ ends. In other words, $\bar{h}_k$ is not $\mathcal{F}_{t}$-measurable for $t\in \text{ph}(k)$, where $\mathcal{F}_{t-1}\coloneqq \{s_1,a_1, \cdots, s_t\}$. The solution is as follows. Let $\mathcal{D}$ be the set where $\bar{h}_k$ possibly lies. We discretize $\mathcal{D}$ and use the Bernstein bound on all discretization points. Finally, we use the fact that $\bar{h}_k$ is not far from the nearest discretization point to bound the sum of $Y_t^{1}$.  

Let $\mathcal{D}\coloneqq [-D,D]^S$, and thus $\bar{h}_k\in \mathcal{D}$. Clearly, there is a discretization $\mathcal{D}_d$ with $\lvert \mathcal{D}_d\rvert \leq (2DST)^S$ such that any $h \in \mathcal{D}$ can find some $h_d\in \mathcal{D}_d$ with $\vert h(s)-h_d(s)\rvert \leq \frac{1}{ST}\ \forall s$. Now let $Y_t^{1(j)}\coloneqq\left( \sum_{s^\prime}p(s^\prime\vert s_t, \pi_k^1, \pi_t^2)h^{(j)}(s^\prime)-h^{(j)}(s_{t+1})\right)$ for every $h^{(j)}\in \mathcal{D}_d, j=1,..., (2DST)^S$. Now $Y_t^{1(j)}$'s are martingale difference sequences with respect to $\mathcal{F}_{t-1}$, so we can apply Azuma-Hoeffding's inequality and bound
\begin{align}
\sum_k\sum_{t=t_k}^{t_{k+1}-1}Y_t^{1(j)} \leq \sqrt{\frac{\log((2DST)^S\delta^{-1})}{2}T(2D)^2} \label{eqn:bernstein_2DS}
\end{align}
with probability at least $1-\frac{\delta}{(2DST)^S}$. Using the union bound, \eqref{eqn:bernstein_2DS} holds for all $j$ with probability at least $1-\delta$. Also, there exists a $j$ such that $\sum_{t=t_k}^{t_{k+1}-1}\left(Y_t^1-Y_t^{1(j)}\right)\leq T_k\times\frac{2S}{ST}=\frac{2T_k}{T}$. Thus we have 
\begin{align}
\sum_{k}\sum_{t=t_k}^{t_{k+1}-1}Y_t^1 \leq \tilde{\mathcal{O}}(D\sqrt{ST})\label{martingale_term}
\end{align}
with high probability. We also have $\sum_k\sum_{t=t_k}^{t_{k+1}-1}Y_t^2\leq \tilde{\mathcal{O}}(\sqrt{T})$ by Azuma-Hoeffding's inequality. Also, $\bar{h}_k(s_{t_{k+1}})-\bar{h}_k(s_{t_{k}})\leq 2D$. Collecting terms, we get the desired bound. 
\end{proof}

%\subsection{Proof of Lemma \ref{lemma:typical}}
\begin{proof}[Proof of Lemma \ref{lemma:typical}]
First fix $k$. Denote the transition probabilities of the optimistically selected model $M_k^1$ by $p_k^1(\cdot\vert \cdot,\cdot,\cdot)$.  In this proof, we define $\tilde{h}(\cdot)\coloneqq h(M_k^1, \bar{\pi}_k, \cdot)$, $h(\cdot)\coloneqq h(M, \bar{\pi}_k, \cdot)$, $\tilde{\mu}(\cdot)\coloneqq \mu(M_k^1, \bar{\pi}_k, \cdot)$, $\mu(\cdot)\coloneqq \mu(M, \bar{\pi}_k, \cdot)$, $\tilde{\rho}\coloneqq\rho(M_k^1, \bar{\pi}_k)$, $\rho\coloneqq \rho(M,\bar{\pi}_k)$, $r(\cdot)\coloneqq r(s, \bar{\pi}_k)$, $\tilde{p}(s^\prime\vert s)\coloneqq{p}_k^1(s^\prime \vert s, \bar{\pi}_k)$, $p(s^\prime\vert s)\coloneqq p(s^\prime\vert s, \bar{\pi}_k)$. 

By Bellman equation and the properties of irreducible Markov chains, we have
\begin{align*}
\rho=r(s)+\sum_{s^\prime}p(s^\prime\vert s)h(s^\prime) - h(s) \nonumber\\
\tilde{\rho}=r(s)+\sum_{s^\prime}\tilde{p}(s^\prime\vert s)\tilde{h}(s^\prime) - \tilde{h}(s)
\end{align*}
for all $s$. Therefore, we can write (for any $s$)
\begin{align}
\tilde{\rho}-\rho&=\sum_{s^\prime}\left(\tilde{p}(s^\prime\vert s)\tilde{h}(s^\prime)-p(s^\prime\vert s)h(s^\prime)\right)-\tilde{h}(s)+h(s)\nonumber \\
&=\sum_{s^\prime}\left(\tilde{p}(s^\prime\vert s)-p(s^\prime\vert s)\right)\tilde{h}(s^\prime)  + \sum_{s^\prime} \left( p(s^\prime\vert s)-\delta_{s,s^\prime} \right)\left(\tilde{h}(s^\prime)-h(s^\prime)\right). \label{eqn:rho difference}
\end{align}
Thus, 
\begin{align}
T_k(\tilde{\rho}-\rho)&=\sum_{s}T_k\mu(s)(\tilde{\rho}-\rho) \nonumber \\
&=\sum_{s} T_k \mu(s)\sum_{s^\prime}\left(\tilde{p}(s^\prime\vert s)-p(s^\prime\vert s)\right)\tilde{h}(s^\prime) \nonumber \\
&\leq\sum_s T_k \mu(s) \norm{\tilde{p}(\cdot\vert s)-p(\cdot\vert s)}_1\spa(\tilde{h}) \label{eqn:T_k_rho}, 
\end{align}
where the second equality is by using \eqref{eqn:rho difference} and the property of stationary distribution: $\sum_s \mu(s) \left(p(s^\prime\vert s)-\delta_{s,s^\prime}\right)=0$. 
By the definition of $\tilde{p}$ and $p$, we have
\begin{align}
\norm{\tilde{p}(\cdot\vert s)-p(\cdot\vert s)}_1 & \leq\sum_{a}\pi_t(a) \frac{\sum_{t=t_k}^{t_{k+1}-1}\mathbbm{1}_{s_t=s}\norm{\tilde{p}(\cdot\vert s,a)-p(\cdot\vert s,a)}_1}{v_k(s)} \nonumber \\
&\scalebox{0.9}{$\displaystyle= \frac{\sum_{t=t_k}^{t_{k+1}-1}\mathbbm{1}_{s_t=s}\norm{\tilde{p}(\cdot\vert s,a_t)-p(\cdot\vert s,a_t)}_1+\sum_{t=t_k}^{t_{k+1}-1}Y_t}{v_k(s)}$} \label{eqn:to_be_continue}
\end{align}
where $Y_t\coloneqq \mathbbm{E}\left[q_t\right]-q_t$, and $q_t\coloneqq \mathbbm{1}_{s_t=s} \norm{\tilde{p}(\cdot\vert s,a_t)-p(\cdot\vert s,a_t)}_1$. To apply Lemma \ref{lemma:bernstein_bubeck}, we note that $\lvert q_t \rvert\leq 2$ and $V_T\coloneqq \sum_{t=t_k}^{t_{k+1}-1} \mathbbm{E}[Y_t^2 \vert \mathcal{F}_{t-1}] \leq 2\sum_{t=t_k}^{t_{k+1}-1}\mathbbm{E}[q_t]$. Then we can bound
\begin{align}
&\sum_{t=t_k}^{t_{k+1}-1} \left(\mathbbm{E}[q_t]-q_t\right) \leq 2\sqrt{\left(2\sum_{t=t_k}^{t_{k+1}-1} \mathbbm{E}[q_t]\right)\log(T^2\delta^{-1})} +2\sqrt{5}\log(T^2\delta^{-1})    \label{eqn:to_be_solve}
\end{align}
with probability at least $1-\delta$. \eqref{eqn:to_be_solve} implies
\begin{align}
\sum_{t=t_k}^{t_{k+1}-1} (\mathbbm{E}[q_t]-q_t) \leq \sum_{t=t_k}^{t_{k+1}-1}q_t + 17\log(T^2\delta^{-1}). \label{eqn:e_q_q_bound}
\end{align}
Continuing \eqref{eqn:T_k_rho} with the help of \eqref{eqn:to_be_continue} and \eqref{eqn:e_q_q_bound}, we get
\begin{align*}
T_k(\tilde{\rho}-\rho) &\leq \scalebox{0.95}{$\displaystyle 2D\sum_s T_k\mu(s)\frac{2\sum_{t=t_k}^{t_{k+1}-1}q_t + 17\log(T^2\delta^{-1})}{v_k(s)}$} \nonumber \\
&\leq 3D\sum_s \left(2\sum_{t=t_k}^{t_{k+1}-1}q_t + 17\log(T^2\delta^{-1})\right) \nonumber \\
&\leq 6D\sum_{t=t_k}^{t_{k+1}-1} \norm{\tilde{p}(s_t,a_t)-p(s_t,a_t)}_1 + \tilde{\mathcal{O}}(DS)\nonumber\\
&\leq12D\sqrt{2S\ln{\frac{1}{\delta_1}}}\sum_{s,a} \frac{v_k(s,a)}{\sqrt{n_k(s,a)}} + \tilde{\mathcal{O}}(DS), 
\end{align*}
where we have used the assumptions in this lemma. Now sum over benign phases, we get 
\begin{align}
\sum_{k: \text{benign}}T_k\left(\rho(M_k^1, \bar{\pi}_k)-\rho(M, \bar{\pi}_k)\right) &\leq \sum_k \sum_{s,a} \frac{v_k(s,a)}{\sqrt{n_k(s,a)}} \tilde{\mathcal{O}}(D\sqrt{S}) + \sum_k \tilde{\mathcal{O}}(DS) \label{sample_bound_term}\\
&\leq 2.5\sqrt{SAT}\tilde{\mathcal{O}}(D\sqrt{S})+\tilde{\mathcal{O}}(DS^2A) \nonumber \\
&=\tilde{\mathcal{O}}(DS\sqrt{AT}+DS^2A). \nonumber 
\end{align}
with high probability. The last inequality is by the following Lemma together with Cauchy's inequality. 
\end{proof}
\begin{lemma}[cf. Lemma 19 of \cite{jaksch2010near}]
\label{lemma:any_sequences} For any sequence $\{z_i\}, i=1,...,N$ with $0\leq z_i\leq Z_{i-1}\coloneqq \max\{1,\sum_{\ell=1}^{i-1} z_\ell\}$. Let $K$ be a subset of $\{1,...,N\}$. Then we have
\begin{align*}
\sum_{i\in K}\frac{z_i}{\sqrt{Z_{i-1}}}\leq (\sqrt{2}+1)\sqrt{L},
\end{align*}
where $L\coloneqq \sum_{i\in K}z_i$.
\end{lemma}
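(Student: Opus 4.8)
The plan is to telescope the sum against the square roots of the \emph{partial sums over $K$}, rather than against the quantities $Z_i$ along all of $\{1,\dots,N\}$; this is exactly what makes the final bound scale with $L=\sum_{i\in K}z_i$ instead of with $Z_N$. First I would enumerate $K=\{i_1<i_2<\cdots<i_m\}$ and set $S_r:=\sum_{p=1}^{r}z_{i_p}$ for $r=0,1,\dots,m$, so that $S_0=0$ and $S_m=L$; the target becomes $\sum_{r=1}^{m}\frac{z_{i_r}}{\sqrt{Z_{i_r-1}}}\leq(\sqrt{2}+1)\sqrt{L}$.

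The core step is the per-term estimate
\[
\frac{z_{i_r}}{\sqrt{Z_{i_r-1}}}\;\leq\;(\sqrt{2}+1)\bigl(\sqrt{S_r}-\sqrt{S_{r-1}}\bigr),\qquad r=1,\dots,m .
\]
Since $z_{i_r}=S_r-S_{r-1}$ and $\sqrt{S_r}-\sqrt{S_{r-1}}=\frac{z_{i_r}}{\sqrt{S_r}+\sqrt{S_{r-1}}}$, this is equivalent to the inequality $\sqrt{S_r}+\sqrt{S_{r-1}}\leq(\sqrt{2}+1)\sqrt{Z_{i_r-1}}$. I would derive this from two facts: first, $Z_{i_r-1}\geq z_{i_r}=S_r-S_{r-1}$ by the hypothesis $z_i\leq Z_{i-1}$; second, $Z_{i_r-1}\geq\sum_{\ell<i_r}z_\ell\geq\sum_{p<r}z_{i_p}=S_{r-1}$ because all $z_\ell\geq0$ and $\{i_1,\dots,i_{r-1}\}\subseteq\{1,\dots,i_r-1\}$. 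Hence $S_{r-1}\leq Z_{i_r-1}$ and $S_r=S_{r-1}+z_{i_r}\leq 2Z_{i_r-1}$, so $\sqrt{S_r}+\sqrt{S_{r-1}}\leq\sqrt{2}\sqrt{Z_{i_r-1}}+\sqrt{Z_{i_r-1}}$, which is exactly what is needed. Terms with $z_{i_r}=0$ contribute nothing and can be discarded, and $Z_{i-1}\geq1>0$ throughout so no denominator vanishes.

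Summing the per-term estimate over $r=1,\dots,m$ then telescopes: $\sum_{r=1}^{m}(\sqrt{S_r}-\sqrt{S_{r-1}})=\sqrt{S_m}-\sqrt{S_0}=\sqrt{L}$, yielding the lemma.

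The only delicate point — the place where one could go wrong — is the choice of what to telescope against. The naive choice of $\sqrt{Z_i}$ summed along all indices only gives $\sum_{i\in K}\frac{z_i}{\sqrt{Z_{i-1}}}\leq(\sqrt{2}+1)\sqrt{Z_N}$, which equals $(\sqrt{2}+1)\sqrt{L}$ only when $K$ is the whole index set and is far too weak for a proper subset. Telescoping instead against $\sqrt{S_r}$ forces a comparison of $z_{i_r}/\sqrt{Z_{i_r-1}}$ with an increment of $\sqrt{S_r}$ even when $z_{i_r}$ is much larger than $S_{r-1}$; the point is that the constraint $z_i\leq Z_{i-1}$ is precisely strong enough to absorb this, since it gives $S_r\leq 2Z_{i_r-1}$ directly, regardless of how small $S_{r-1}$ is. (Note also that the ``$\max\{1,\cdot\}$'' in the definition of $Z_{i-1}$ is never used beyond ensuring positivity.)
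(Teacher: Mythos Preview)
Your proof is correct and follows essentially the same approach as the paper: both telescope against the $K$-restricted partial sums (your $S_r$ is the paper's $L_{i_r}$), use $z_i\leq Z_{i-1}$ to produce the constant $\sqrt{2}+1$, and use nonnegativity to compare $S_{r-1}$ with $Z_{i_r-1}$. The only cosmetic difference is that the paper first passes through the intermediate increment $\sqrt{Z_i}-\sqrt{Z_{i-1}}$ and then bounds it by $\sqrt{L_i}-\sqrt{L_{i-1}}$ via $L_i\leq Z_i$, whereas you go directly to $\sqrt{S_r}+\sqrt{S_{r-1}}\leq(\sqrt{2}+1)\sqrt{Z_{i_r-1}}$; your route is slightly more streamlined and sidesteps the minor edge case where $Z_i-Z_{i-1}\neq z_i$ due to the $\max\{1,\cdot\}$.
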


\begin{proof}
\begin{align*}
\sum_{i\in K}\frac{z_i}{\sqrt{Z_{i-1}}}&\leq (\sqrt{2}+1)\sum_{i\in K}\frac{z_i}{\sqrt{Z_i}+\sqrt{Z_{i-1}}}\\
&=(\sqrt{2}+1)\sum_{i\in K}\frac{z_i\cdot (\sqrt{Z_i}-\sqrt{Z_{i-1}})}{(\sqrt{Z_i}+\sqrt{Z_{i-1}})\cdot(\sqrt{Z_i}-\sqrt{Z_{i-1}})}\\
&=(\sqrt{2}+1)\sum_{i\in K}(\sqrt{Z_i}-\sqrt{Z_{i-1}})\\
&\leq (\sqrt{2}+1)\sum_{i\in K}(\sqrt{L_i}-\sqrt{L_{i-1}})\leq (\sqrt{2}+1)\sqrt{L},
\end{align*}
where $L_i\coloneqq \sum_{\ell\in K: \ell\leq i}z_i$. We used the inequality 
\begin{align*}
\scalebox{1.0}{$ \displaystyle \sqrt{Z_i}-\sqrt{Z_{i-1}}\leq \sqrt{L_i}-\sqrt{L_{i-1}}\Leftrightarrow \frac{z_i}{ \sqrt{L_i}+\sqrt{L_{i-1}}}\leq  \frac{z_i}{ \sqrt{Z_i}+\sqrt{Z_{i-1}}}.$}
\end{align*}
\end{proof}
\if 0
\begin{lemma}
For any sequences of numbers $z_1$, $z_2$, ..., $z_n$ with $0\leq z_k\leq Z_{k-1}\coloneqq\max\left\{1,\sum_{i=1}^{k-1}z_i\right\},$
\begin{align}
\sum_{k=1}^{n}\frac{z_k}{\sqrt{Z_{k-1}}}\leq \left(\sqrt{2}+1\right)\sqrt{Z_n}< 2.5\sqrt{Z_n}. 
\end{align}
\end{lemma}
\fi
\section{Proofs of Lemma \ref{theorem:FH_fifth_term} and \ref{lemma:FH_second_term}}
\begin{proof}[Proof of lemma \ref{theorem:FH_fifth_term}]
Note that for any phase $k$ and any episode $i$ that fully lies in phase $k$, we have $\mathbbm{E}\left[\sum_{t=\tau_i}^{\tau_{i+1}-1}r(s_t, a_t)\right]=V_H(M, \pi_k^1, \pi_i^2, s_{\tau_i})$. Therefore, the terms in $\sum_k\Delta_k^{(5)}$ form a martingale difference sequence with no more than $T/H$ terms. Furthermore, $0\leq \sum_{t=\tau_i}^{\tau_{i+1}-1}r(s_t, a_t) \leq H$. By Lemma \ref{lemma:Azuma-Hoeffding's inequality}, with probability $1-\delta$, we have $\sum_k \Delta_k^{(5)}\leq \sqrt{\frac{\log(\delta^{-1})}{2} \frac{T}{H}H^2}=\tilde{\mathcal{O}}(\sqrt{HT})$.
\end{proof}
\begin{proof}[Proof of Lemma \ref{lemma:FH_second_term}]
Suppose that the value iteration halts at iteration $N$, then under Assumption \ref{assumption:ergodic} and by the proof of Lemma \ref{lemma:EVI}, we have 
\begin{align}
(1-\alpha) (\rho^*(M^+)-\gamma)\textbf{e}&\leq D\textbf{e} \leq  v_{N+1}-v_{N}=(1-\alpha)\left(\val\{r+Pv_N\}-v_N\right). \label{eq:near_Bellman}
\end{align}
Since $(M_k^1, p_k^1)$ is selected based on the $v_N$ when the value iteration halts, \eqref{eq:near_Bellman} is equivalent to 
\begin{align}
\rho^*(M^+) - \gamma \leq \min_{\pi^2} \Big\{r(s, \pi_k^1, \pi^2)+\sum_{s^\prime}p_k^1(s^\prime\vert s, \pi_k^1, \pi^2)v_N(s^\prime)\Big\}-v_N(s). \label{eq:near_Bellman2}
\end{align}

Besides, the span of the vector $v_N$ is bounded by $D$ by Lemma \ref{lemma:bounded_span}. Now we fix Player 1's policy as $\pi^1_k$ in the extended game, and let Player 2 run an $H$-step SG. The least amount Player 2 has to pay Player 1 in this SG is $\min_{\pi^2}V_H(M_k^1, \pi_k^1, \pi^2, s)$ (assuming that the game starts from $s$), which can be calculated by dynamic programming. The dynamic programming goes as follows: for $i=0,...,H-1$, for all $s$,
\begin{align*}
&u_0(s)=0,   \nonumber \\
&u_{i+1}(s)=\min_{a}\{r(s,\pi_k^1,a)+\sum_{s'}p_k^1(s'| s,\pi_k^1,  a)u_i(s')\},
\end{align*}
which, in its vector form, can be written as $u_{i+1}=\min_{a}\{r_{a}+P_{a}u_i\}$ by denoting $r_a(\cdot)\coloneqq r(\cdot,\pi_k^1,a)$ and $(P_a)_{ij}\coloneqq p_k^1(j\vert i, \pi_k^1, a)$. We can re-write the induction procedure as 
\begin{align*}
u_{i+1}-v_N=\min_a\{r_a+P_av_N+P_a(u_i-v_N)\}-v_N
\end{align*}
without affecting the solution. By the property $\min\{u+v\}\geq\min\{u\}+\min\{v\}$, we have
\begin{align}
u_{i+1}-v_N \geq \min_a\{r_a+P_av_N\}+\min_a\{P_a(u_i-v_N)\}-v_N  \label{eq:to_induction}
\end{align}
By \eqref{eq:near_Bellman2}, $\min_a\{r_a+P_av_N\}-v_N\geq \rho^*(M^+)-\gamma$, and since $P_a$ is stochastic, $P_a(u_i-v_N)\geq \min_{s^\prime}\{u_i(s^\prime)-v_N(s^\prime) \}$. Combining them with \eqref{eq:to_induction}, we have $u_{i+1}(s)-v_N(s)\geq \rho^*(M^+)-\gamma + \min_{s^\prime}\{u_i(s^\prime)-v_N(s^\prime)\}$ for all $s$. Then by induction, we can easily prove $u_{i}(s)-v_N(s) \geq i\left(\rho^*(M^+)-\gamma\right)+\min_{s'}\{u_0(s')-v_N(s')\}$, and therefore, $u_i(s)\geq i\left(\rho^*(M^+)-\gamma\right)+v_N(s)-\max_{s'}v_N(s')\geq i\left(\rho^*(M^+)-\gamma\right)-D$. 

Let $i=H$ and note that $\rho^*(M^+)=\max_{\tilde{M}}\max_{\pi^1}\min_{\pi^2} \rho(\tilde{M}, \pi^1, \pi^2)\geq \min_{\pi^2}\rho(M_k^1, \pi_k^1, \pi^2, s)$. The above result translates to $\min_{\pi^2}V_H(M_k^1, \pi_k^1, \pi^2, s) \geq H\min_{\pi^2}\rho(M_k^1, \pi_k^1, \pi^2,s)-D-H\gamma$, which bounds $\Delta_k^{(2)}$  by $\sum_{i\in \text{ph}(k)}(D+H\gamma)$. 
\end{proof}

%========================================================
% Fixed Horizon's proofs
%========================================================
\section{Proof of Theorem \ref{theorem:PAC} and Lemma \ref{theorem:FH_fourth_term} }
\begin{theorem} \label{theorem:PAC}(Sample Complexity Bound of \textsc{UCSG}. cf. Theorem 1 \cite{dann2015sample})
Given $\delta>0$, with probability at least $1-\delta$, for any $0<\varepsilon<1$, \textsc{UCSG} produces a sequence of policies $\pi_k^1$, that yield at most $\tilde{\mathcal{O}}\big(\frac{H^2S^2A}{\varepsilon^2}\big)$ episodes $i$ such that $|V_H(M,\pi^1_k,\pi^2_i,s_{\tau_i})-V_H(M_k^1,\pi_k^1,\pi^2_i,s_{\tau_i})|>\varepsilon$.
\end{theorem}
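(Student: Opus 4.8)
The plan is to transcribe the sample-complexity analysis of \cite{dann2015sample} (their Theorem~1) to the stochastic-game setting, where the role of their single controlled policy is played by the joint, history-dependent policy $(\pi_k^1,\pi_t^2)$, and the per-step quantities are indexed by joint actions $a=(a^1,a^2)$ (so that the $A$ appearing in the bound is $|\mathcal{A}^1\times\mathcal{A}^2|$, consistent with Section~\ref{section:preliminary}). Since the rewards of $M$ and $M_k^1$ coincide (Section~\ref{subsection:extended SG}) and $M\in\mathcal{M}_k$ for all $k$ with high probability (Lemma~\ref{lemma:bound_fail}), a standard telescoping (``simulation'') argument expresses the per-episode error $V_H(M,\pi_k^1,\pi_i^2,s_{\tau_i})-V_H(M_k^1,\pi_k^1,\pi_i^2,s_{\tau_i})$ as a bounded martingale term plus $\mathbbm{E}[\sum_{t=\tau_i}^{\tau_{i+1}-1}(\widehat p_k-p)(\cdot\mid s_t,a_t)^\top u_t]$, where $u_t$ is the remaining-horizon value of $M_k^1$ under the fixed pair $(\pi_k^1,\pi_i^2)$ and has span at most $H$. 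Because $(\widehat p_k-p)(\cdot\mid s_t,a_t)$ sums to zero, each summand is at most $\tfrac12\,\spa(u_t)\,\norm{(\widehat p_k-p)(\cdot\mid s_t,a_t)}_1$, so an episode is ``accurate'' as soon as every transition estimate visited along it is accurate to within roughly $\varepsilon/H^2$ in $\ell_1$. Note that once $\pi_k^1$ is fixed, the $H$-step game is just a (time-inhomogeneous) Markov reward process evaluation, so no minimax enters $\Delta_k^{(4)}$ and the simulation identity is the usual MDP one.

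First I would re-use, essentially verbatim up to renaming $a\mapsto(a^1,a^2)$, the knownness/importance machinery of Lemmas~\ref{lemma:sample_complexity1}--\ref{lemma:sample_complexity7}, now with the parameters $w_{\min}$ and $m$ rescaled by the appropriate powers of $H$. Lemma~\ref{lemma:sample_complexity2} applied at each of the $H$ steps of an episode and summed shows that if $\lvert X_{t,\kappa,\iota}\rvert\le\kappa$ for every partition throughout the episode, then every plausible transition in $\mathcal{M}_k$ --- in particular $\widehat p_k$ --- is $(\varepsilon/H^2)$-accurate along the episode, hence the episode is not ``bad''. Therefore every bad episode contains a step $t$ with $\lvert X_{t,\kappa,\iota}\rvert>\kappa$ for some $(\kappa,\iota)$; by Lemma~\ref{lemma:sample_complexity1} (with the rescaled parameters) the number of such steps is $\tilde{\mathcal{O}}(A\,\mathrm{poly}(H)/\varepsilon^2)$ per state, so summing over the $S$ states, paying the $\sqrt{S}$ in the $\textsc{conf}_1$ radius, and tightening the $H$-dependence via the empirical-Bernstein set $\textsc{conf}_2$ together with the law of total variance (the ``Bellman equation for local variance'' of \cite{dann2015sample, lattimore2012pac}) gives the stated $\tilde{\mathcal{O}}(H^2S^2A/\varepsilon^2)$. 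The ``for any $0<\varepsilon<1$'' uniformity follows because \textsc{UCSG} does not take $\varepsilon$ as input: one runs the counting argument for each $\varepsilon$ in the dyadic grid $\{2^{-j}\}_{j\le \log T}$, takes a union bound over these $O(\log T)$ values, and rounds a general $\varepsilon$ to the nearest grid point, absorbing the overhead into the logarithmic factors.

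The main obstacle is verifying that every concentration step of \cite{dann2015sample} survives the opponent's uncontrollability and non-stationarity, and the departure from a fixed initial distribution. Concretely: (i) $\pi_t^2$ is an arbitrary $\mathcal{H}_t$-measurable map, so all Bernstein/Azuma bounds must be taken with respect to the filtration $\mathcal{F}_{t-1}=\{s_1,a_1,\dots,s_t\}$, and the knownness and importance quantities $\kappa_t,\iota_t$, which are only computable in hindsight, must be shown to be exactly what the counting needs --- this is the same hindsight device already used in Section~\ref{Regret decomposition and the introduction of pi2} and in the proof of Lemma~\ref{lemma:epsilon_accurate}, so it carries over; and (ii) each episode starts from an arbitrary state $s_{\tau_i}$ rather than from a fixed start distribution as in \cite{dann2015sample}, so the bound must be phrased as ``number of bad episodes, irrespective of the start state,'' which is precisely what a global tally of low-knownness state-action visits delivers, since that tally does not reference the episode's first state. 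Beyond these two points the argument is a direct replay of the episodic-MDP analysis, which is why I expect the write-up to be mostly bookkeeping once (i) and (ii) are pinned down.
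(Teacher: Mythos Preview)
Your high-level plan is right, and your points (i) and (ii) are exactly the two places where care is needed; the paper handles both precisely as you anticipate. However, the concrete machinery you propose to reuse is not what the paper uses, and the difference matters for the $H$-dependence.

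You point to Lemmas~\ref{lemma:sample_complexity1}--\ref{lemma:sample_complexity7}, which partition the \emph{action} space at each \emph{time step} $t$ according to $\pi_t(a)$. The paper does \emph{not} reuse that per-step partition for Theorem~\ref{theorem:PAC}. Instead it sets up a separate, per-\emph{episode} partition of the full state--joint-action space (Definitions~\ref{def:weight.FH}--\ref{def.category_X.FH}): the weight $w_i(s,a)=\sum_{t=\tau_i}^{\tau_{i+1}-1}\mathbb{P}(s_t=s,a_t=a\mid \pi_k^1,\pi_i^2,s_{\tau_i})$ is the expected number of visits to $(s,a)$ in episode $i$, and importance/knownness are computed from $w_i$. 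With these episode-indexed objects the paper can invoke \cite{dann2015sample}'s Lemmas~2 and~3 essentially verbatim (the paper's Lemmas~\ref{lemma:bad_event} and~\ref{lemma:good_event}), and the recursive-MDP / law-of-total-variance argument in their Lemma~3 is what yields $H^2$ rather than $H^3$ or $H^4$. Your per-step route gives, as you implicitly acknowledge, only that each bad episode contains a step with an $\tilde O(\varepsilon/H^2)$-inaccurate transition; plugging $\varepsilon/H^2$ into the step-count bound produces an $H^4$ factor. The ``tightening via $\textsc{conf}_2$ plus law of total variance'' is not a modular post-processing step one can bolt onto the per-step count: the variance-summation identity $\sum_t\sigma_t^2\le H^2$ lives at the episode level and needs the $w_i$-weighted sums from the start. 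So the correct fix is to swap your per-step partition for the per-episode one and then follow \cite{dann2015sample} line by line.

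Two smaller points. First, for the uniformity in $\varepsilon$ the paper takes a simpler path than your dyadic grid: because \textsc{UCSG} never reads $\varepsilon$ and the number of phases is bounded by $U_{\max}=SA\log_2 T$ independently of $\varepsilon$, the \cite{dann2015sample} analysis already holds for each $\varepsilon$ with the same confidence-set events, so no extra union bound over $\varepsilon$ is needed (only the harmless $\log(1/\varepsilon)$ entering through $|\mathcal K\times\mathcal I|$, which is absorbed into $\tilde{\mathcal O}$). Second, your simulation identity is written with $\hat p_k$; in the paper the comparison is between the true $p$ and the \emph{optimistic} kernel $p_k^1$ of $M_k^1$, but since both lie in $\mathcal M_k$ (Lemma~\ref{lemma:bound_fail}) the $\textsc{conf}_2$ radii control $|p_k^1-p|$ just the same.
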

Theorem \ref{theorem:PAC} mainly follows from the following Lemma \ref{lemma:bad_event} and \ref{lemma:good_event}. In \cite{dann2015sample} the analysis of sample complexity is facilitated by partitioning the state-action space. The state-action pairs are grouped into different categories according to two indices. The first index, \textit{importance}, measures in log-scale the relative occurrence frequency of $(s,a)$ with respect to a fixed constant under the policy. The second index, \textit{knownness}, measures also in log-scale the ratio of the total number of observations to the occurrence frequency.
Here we modify the the definition of weight, importance, and knownness for a state-joint action $(s,a)=(s,a^1,a^2)$ defined below to have a partition of the state-joint-action space  $\mathcal{S}\times\mathcal{A}=\mathcal{S}\times\mathcal{A}^1\times\mathcal{A}^2$ for each episode.
\begin{definition}\label{def:weight.FH}
Define the weight of a state-joint-action pair $(s,a)$ under joint policy $\pi_i$ in episode $i$ as the expected occurrence frequency of $(s,a)$ in episode $i$,
\begin{equation}
w_{i}(s,a)\coloneqq \sum_{t=\tau_i}^{\tau_{i+1}-1}\mathbb{P}(s_t=s,a_t=a | a_t\sim \pi_{i}, s_{\tau_i}).\nonumber
\end{equation}
\end{definition}
The setting in \cite{dann2015sample} is somewhat different from two-player zero-sum SGs. In the episodic RL setting after an episode is over, a new episode starts afresh with the same initial distribution $p_0$, while in the non-episodic setting, initial state $s_{\tau_i}$ in each episode is sampled from a different distribution. Initial state distributions do not matter that much in our setting except we need the initial state $s_{\tau_i}$ to compute the expected frequency $w_i(s,a)$.
\begin{definition}\label{def:importance.FH}
Define the importance of a state-joint-action pair $(s,a)$ in episode $i$ as
$$\iota_{i}(s,a)\coloneqq \max\bigg\{z_j: z_j\leq \frac{w_i(s,a)}{w_\text{min}}\bigg\},$$
where $z_1=0$ and $z_j=2^{j-2}\  \forall j=2,3,...$
\end{definition}
\begin{definition}\label{def:knownness.FH}
Define the knownness of a  a state-joint-action pair $(s,a)$ in episode $i$ as
$$\scalebox{1}{$ \displaystyle \kappa_{i}(s,a)\coloneqq \max\bigg\{z_j: z_j\leq \frac{n_{k(i)}(s,a)}{mw_{i}(s,a)}\bigg\},$}$$ 
\end{definition}
where $z_1=0$ and $z_j=2^{j-2}\  \forall j=2,3,...$
\begin{definition}\label{def.category_X.FH} We can now categorize state-joint-action pairs $(s,a)$ into subsets
$$\scalebox{1}{$\displaystyle X_{i,\kappa,\iota}\coloneqq \{(s,a)\in X_i: \kappa_{i}(s,a)=\kappa, \iota_{i}(s,a)=\iota\}$,}$$
$$\scalebox{0.9}{$\displaystyle \text{ and } \bar{X}_i=\mathcal{S}\times\mathcal{A} \backslash X_i, \text{ where } X_i=\{(s,a)\in\mathcal{S}\times\mathcal{A}: \iota_i(s,a)>0 \}.$}   $$
\end{definition}
In contrast to the original definitions \cite{dann2015sample} which are designated for each phase $k$ in the episodic RL setting, in our setting, weight $w_i(s,a)$, importance $\iota_i(s,a)$, knownness $\kappa_i(s,a)$ are now indexed for each episode $i$ because Player 2 may have arbitrary policies in different episodes.

%############################################################
%Proofs of section under assumption 2
%############################################################
Theorem \ref{theorem:PAC} mainly follows from the following Lemma \ref{lemma:bad_event} and \ref{lemma:good_event}.
Select $m=\frac{512SH^2(\log\log H)^2 \log_2\left(8T^2SH\right)\ln(6/\delta_1)}{\varepsilon^2}$, $\delta_1\coloneqq \frac{\delta}{2U_{max}S}$, $U_{max}\coloneqq SA\log_2T$ and $w_{\min}\coloneqq\frac{\varepsilon}{4HSA}$ for any $0< \varepsilon < H$, and any $0<\delta<1$ and then we have the following two lemmas. 
\begin{lemma} \label{lemma:bad_event} (cf. Lemma 2 in \cite{dann2015sample}) Let $E$ be the number of episodes $i$ for which there are $\kappa$ and $\iota$ with $|X_{i,\kappa,\iota}|>\kappa$, i.e. $E=\sum_{i=1}^{\infty} \mathbbm{1}\{ \exists (\kappa,\iota):|X_{i,\kappa,\iota}|>\kappa\}$ and assume that $m\geq \frac{6H^2}{\varepsilon}\ln(2E_{\max}/\delta)$, where $E_{\max}=\log_2 \Big(\frac{H}{w_{\min}}\Big)\log_2 (SA)$. Then $\mathbb{P}(E\leq 6SAE_{\max}m)\geq 1-\delta/2$.
\end{lemma}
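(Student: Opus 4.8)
The plan is to adapt the argument of Lemma~2 of \cite{dann2015sample} to our setting, in which episode $i$ starts in an arbitrary state $s_{\tau_i}$ and Player~2 may use an arbitrary (history-dependent, non-stationary) policy $\pi_i^2$ within the episode. Write $v_i(s,a)$ for the realized number of visits to $(s,a)$ during episode $i$, so that $w_i(s,a)=\mathbb{E}[v_i(s,a)\mid \mathcal{F}_{\tau_i}]$ where $\mathcal{F}_{\tau_i}$ is the history up to the start of episode $i$; for a set $Z$ of state-joint-action pairs put $v_i(Z)=\sum_{(s,a)\in Z}v_i(s,a)$ and $w_i(Z)=\sum_{(s,a)\in Z}w_i(s,a)$. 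A bad episode has $|X_{i,\kappa,\iota}|>\kappa$ for some $(\kappa,\iota)$ with $\iota>0$, and $|X_{i,\kappa,\iota}|>\kappa$ is impossible once $\kappa\geq SA$ (there are only $SA$ pairs) or once $\iota w_{\min}>H$ (no weight exceeds the horizon $H$); hence only $\mathcal{O}(E_{\max})$ index pairs $(\kappa,\iota)$ are relevant, with $E_{\max}=\log_2(H/w_{\min})\log_2(SA)$. By a union bound it therefore suffices to show that for each fixed relevant $(\kappa,\iota)$ the set $L_{\kappa,\iota}=\{i:|X_{i,\kappa,\iota}|>\kappa\}$ satisfies $|L_{\kappa,\iota}|\leq 6SAm$ with probability at least $1-\delta/(2E_{\max})$, and then sum.

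Fix $(\kappa,\iota)$ with $\iota>0$. For the lower bound on coverage: in each $i\in L_{\kappa,\iota}$ the set $X_{i,\kappa,\iota}$ has more than $\kappa$ members (and at least one), each of weight at least $\iota w_{\min}$ by the definition of importance, so $w_i(X_{i,\kappa,\iota})\geq \max\{1,\kappa\}\,\iota w_{\min}$, and therefore $\sum_{i}w_i(X_{i,\kappa,\iota})\geq |L_{\kappa,\iota}|\,\max\{1,\kappa\}\,\iota w_{\min}$. For the upper bound on realized coverage: membership $(s,a)\in X_{i,\kappa,\iota}$ forces, via the importance and knownness definitions, $n_{k(i)}(s,a)\in[\,m\kappa\iota w_{\min},\,4m\kappa\iota w_{\min})$ when $\kappa\geq 1$ (and $n_{k(i)}(s,a)<2m\iota w_{\min}$ when $\kappa=0$). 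Since the phase counts $n_k(s,a)$ are non-decreasing in $k$ and a phase terminates as soon as some count doubles, the episodes with $(s,a)\in X_{\cdot,\kappa,\iota}$ lie inside a contiguous block during which the cumulative count of $(s,a)$ (including the within-phase count) never exceeds $8m\kappa\iota w_{\min}$; hence $(s,a)$ contributes at most $\mathcal{O}(m\kappa\iota w_{\min})$ visits to $\sum_i v_i(X_{i,\kappa,\iota})$, and summing over the $\leq SA$ pairs gives $\sum_i v_i(X_{i,\kappa,\iota})\leq\mathcal{O}(SAm\kappa\iota w_{\min})$, with the analogous bound ($\kappa$ replaced by $1$) when $\kappa=0$.

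To pass from realized to expected coverage, note that $\{w_i(X_{i,\kappa,\iota})-v_i(X_{i,\kappa,\iota})\}_i$ is a martingale difference sequence with respect to $\{\mathcal{F}_{\tau_i}\}_i$: the set $X_{i,\kappa,\iota}$ is $\mathcal{F}_{\tau_i}$-measurable because it depends only on $s_{\tau_i}$, on the phase counts $n_{k(i)}(s,a)$, and on the policies fixed for episode $i$ --- this is precisely where the arbitrary behaviour of Player~2 and the varying episode-initial states must be accommodated, and the conditioning absorbs them without difficulty. Each term lies in $[-H,H]$ and has conditional variance at most $H\,w_i(X_{i,\kappa,\iota})$, so Bernstein's inequality (Lemma~\ref{lemma:bernstein_bubeck}, applied with failure probability $\delta/(2E_{\max})$) and solving the resulting quadratic in $\sqrt{\sum_i w_i(X_{i,\kappa,\iota})}$ give $\sum_i w_i(X_{i,\kappa,\iota})\leq 2\sum_i v_i(X_{i,\kappa,\iota})+\mathcal{O}(H\ln(2E_{\max}/\delta))$ with probability at least $1-\delta/(2E_{\max})$. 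Combining the three estimates, $|L_{\kappa,\iota}|\max\{1,\kappa\}\iota w_{\min}\leq \mathcal{O}(SAm\kappa\iota w_{\min})+\mathcal{O}(H\ln(2E_{\max}/\delta))$; dividing through by $\max\{1,\kappa\}\iota w_{\min}\geq w_{\min}=\varepsilon/(4HSA)$ turns the additive term into $\mathcal{O}\big(\tfrac{H^2 SA}{\varepsilon}\ln(2E_{\max}/\delta)\big)$, which the hypothesis $m\geq \tfrac{6H^2}{\varepsilon}\ln(2E_{\max}/\delta)$ absorbs into $\mathcal{O}(SAm)$; tracking the constants as in \cite{dann2015sample} yields $|L_{\kappa,\iota}|\leq 6SAm$. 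Summing over the $\leq E_{\max}$ relevant $(\kappa,\iota)$ and the corresponding failure events gives $E\leq 6SAE_{\max}m$ with probability at least $1-\delta/2$.

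The main obstacle I anticipate is the calibration of the two quantitative steps so that they dovetail: the counting argument produces a window of width $\mathcal{O}(m\kappa\iota w_{\min})$ for the counts together with a constant-factor within-phase blow-up, while the concentration step contributes an additive $\mathcal{O}(H\ln(\cdot))$ slack, and the constants in the partition and in the Bernstein bound must be chosen so that, after the division by $w_{\min}\propto \varepsilon/(H SA)$, this slack (two factors of $H$: one from the range of $v_i(Z)$ in Bernstein, one from $w_{\min}^{-1}$) is exactly swallowed by the assumed lower bound on $m$ --- which is what that assumption is engineered for. A secondary point to verify carefully is that the Dann--Brunskill bookkeeping, written for episodic MDPs with i.i.d.\ episode restarts and a fixed count partition per phase, still closes when the count partition is indexed per episode (because Player~2 is adversarial) and the episode-initial states are process-dependent; the $\kappa=0$ stratum and the phase-boundary rounding should also be checked not to inflate either the per-category bound or the ``at most $E_{\max}$ relevant categories'' count.
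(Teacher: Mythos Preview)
Your proposal is correct and follows essentially the same approach as the paper: both defer to the argument of Lemma~2 in \cite{dann2015sample}, with the paper's own proof being even terser than yours --- it simply states that the Dann--Brunskill argument goes through and then explains why the algorithmic differences (the $\varepsilon$-independent stopping rule and the use of $U_{\max}\leq SA\log_2 T$ in place of their $\varepsilon$-dependent phase count) do not disturb it. Your reconstruction of the counting-plus-Bernstein skeleton, and your identification of the per-episode (rather than per-phase) indexing and the $\mathcal{F}_{\tau_i}$-measurability of $X_{i,\kappa,\iota}$ as the places where the SG adaptation must be checked, match exactly what the paper takes for granted.
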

\begin{proof}
The proof mainly follows as Lemma 2 \cite{dann2015sample}. Here we point out the differences between the original \textsc{UCFH} algorithm \cite{dann2015sample} and our \textsc{UCSG}, when we remove the input $\varepsilon$. 
\begin{enumerate}
\item Their stopping rule for phase $k$ is dependent on the specification of $\varepsilon$.
\item They set an upper bound for the maximum number of executions for each state-action pair $(s,a)$, which is determined beforehand and hardcoded in their algorithm.
\item Our algorithm only needs input $\delta$ to specify the failure probability  and has $(\varepsilon,\delta)$-PAC bounds for arbitrarily selected $\varepsilon$.
\end{enumerate}
The original \textsc{UCFH} nearly doesn't need the parameter $\varepsilon$ except at one place: their phases stops when ``$\exists (s,a)$, $v_k(s,a)\geq \max\{mw_{\min}, n_k(s,a)\}$ and $n_k(s,a)<SmH$.'' Since $w_{\min}$ and $m$ are defined through $\varepsilon$, this stopping rule requires $\varepsilon$ to be known by the algorithm. They need this because they would like to control $U_{\max}$, the total number of phases run by the algorithm. In their case, having this stopping rule, $U_{\max}\leq SA\log_2\frac{SmH}{mw_{\min}}=SA\log_2\frac{SH}{w_{\min}}$ because phase change won't be triggered when $n_k(s,a)<mw_{\min}$ or $n_k(s,a)>SmH$. However, since we assume that the time horizon $T$ is known, we can simply use $U_{\max}\leq SA\log_2T$, and this can simplify our stopping rule to only ``$\exists (s,a), v_k(s,a)\geq n_k(s,a)$.''  

Therefore, we can totally abandon the use of $\varepsilon$ in our algorithm, but enjoy their analysis results. The results automatically hold for arbitrarily selected $\varepsilon$. However, since we bound the number of $\kappa$ by $\log_2(4HSAT/\varepsilon)$ in Lemma \ref{lemma:good_event}, we cannot let $\varepsilon$ tends to $0$ too fast. (The minimum $\varepsilon$ we will select is $\varepsilon_0=\min\{H,\sqrt{(H^3S^2A)/T}\}$ as in the proof of Lemma \ref{theorem:FH_fourth_term}, where we select $H=\max\{D,\sqrt[3]{D^2T/(S^2A)}\}$ for Theorem \ref{regret_finite_horizon_variant} ). 
\end{proof}
%\begin{lemma} (cf. Lemma 2 \cite{dann2015sample})
%Let $E$ be the number of episodes $i$ for which there are $\kappa$ and $\iota>0$ with $\lvert X_{i,\kappa, \iota}\rvert > \kappa$, i.e., $E=\sum_{i=1}^{\infty} \mathbbm{1}\{ \exists (\kappa,\iota):|X_{i,\kappa,\iota}|>\kappa\}$. Then $E\leq 6mSAE_{max}$ with high probability. 
%\end{lemma}
%\begin{proof}
%The proof mainly follows from Lemma 2 \cite{dann2015sample}. 
%\end{proof}
%\begin{lemma}\label{lemma:good_event}(cf. Lemma 3 in \cite{dann2015sample}) Assume $M\in\mathcal{M}_k$. If $|X_{i,\kappa,\iota}|\leq \kappa$ for all $\iota>0$ and $\kappa$. Then with high probability, $|V_H(M^1_k,\pi^1_k,\pi^2_i, s_{\tau_i})-V_H(M,\pi^1_k,\pi^2_i, s_{\tau_i})|\leq \varepsilon.$
%\end{lemma}
\begin{lemma} \label{lemma:good_event}(cf. Lemma 3 in \cite{dann2015sample}) Assume $M\in\mathcal{M}_k$. If $|X_{i,\kappa,\iota}|\leq \kappa$ for all $(\kappa,\iota)$ and for all $0<\varepsilon\leq 1$ and $m\geq 512\frac{CH^2}{\varepsilon^2}(\log_2\log_2 H)^2\log_2\big(\frac{4HSAT}{\varepsilon}\big)\log_2(SA)\ln(6/\delta_1)$. Then $|V_H(M^1_k,\pi^1_k,\pi^2_i)-V_H(M,\pi^1_k,\pi^2_i)|\leq \varepsilon.$
\end{lemma}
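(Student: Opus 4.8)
The plan is to follow the proof of Lemma~3 of \cite{dann2015sample}, treating the (possibly history-dependent) pair of policies $(\pi^1_k,\pi^2_i)$ executed in episode $i$ as a single time-indexed policy, so that the learner's $H$-step value functions behave exactly as in a one-player episodic MDP. First I would establish the standard value-difference identity. Let $\tilde V_h$ and $V_h$ denote the $h$-steps-to-go value functions under $M^1_k$ and $M$ respectively, with the episode-$i$ policies; since $\tilde V_0=V_0=0$ and the reward function is deterministic and common to all models in $\mathcal M_k$, a one-step expansion gives $\tilde V_h(s)-V_h(s)=\sum_a\pi(a\vert s)\big[(p^1_k(\cdot\vert s,a)-p(\cdot\vert s,a))^{\!\top}\tilde V_{h-1}+p(\cdot\vert s,a)^{\!\top}(\tilde V_{h-1}-V_{h-1})\big]$, and telescoping along the episode yields $|V_H(M^1_k,\pi^1_k,\pi^2_i,s_{\tau_i})-V_H(M,\pi^1_k,\pi^2_i,s_{\tau_i})|\le\sum_{s,a}w_i(s,a)\max_{0\le h<H}\big|(p^1_k(\cdot\vert s,a)-p(\cdot\vert s,a))^{\!\top}\tilde V_h\big|$, with $w_i(s,a)$ the expected visitation count of Definition~\ref{def:weight.FH} (using $\sum_{t}\mathbb{P}(s_t=s,a_t=a)=w_i(s,a)$).

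Next I would split the sum over $(s,a)$ using the partition of Definition~\ref{def.category_X.FH}. On $\bar X_i$ the importance vanishes, so $w_i(s,a)<w_{\min}$; bounding each term by $\mathrm{sp}(\tilde V_h)\le H$ (Lemma~\ref{lemma:bounded_span}, after subtracting a constant from $\tilde V_h$, which does not change $(p^1_k-p)^{\!\top}\tilde V_h$) shows the $\bar X_i$-contribution is at most $HSAw_{\min}=\varepsilon/4$. On a cell $X_{i,\kappa,\iota}$ with $\kappa,\iota\ge 1$, the definitions of knownness and importance give $n_{k(i)}(s,a)\ge m\kappa\, w_i(s,a)$ and $w_i(s,a)\le 2\iota w_{\min}$; by hypothesis $M\in\mathcal M_k$ and $M^1_k\in\mathcal M_k$ by construction (Step~3 of \textsc{UCSG}), so both $p(\cdot\vert s,a)$ and $p^1_k(\cdot\vert s,a)$ lie in $\mathcal P_k(s,a)=\textsc{conf}_1\cap\textsc{conf}_2$, and the $\textsc{conf}_2$ (empirical-Bernstein) part controls the deviation coordinatewise by $O\!\big(\sqrt{p(s'\vert s,a)\ln(1/\delta_1)/n_{k(i)}(s,a)}+\ln(1/\delta_1)/n_{k(i)}(s,a)\big)$ (using the first $\textsc{conf}_2$ inequality to pass from the empirical variance to the true variance). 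Writing $(p^1_k-p)^{\!\top}\tilde V_h=(p^1_k-p)^{\!\top}(\tilde V_h-c\mathbf{e})$ and applying Cauchy--Schwarz over $s'$, this bounds $\max_h\big|(p^1_k(\cdot\vert s,a)-p(\cdot\vert s,a))^{\!\top}\tilde V_h\big|$ by $O\!\big(\sqrt{\mathrm{Var}_{s,a}\ln(1/\delta_1)/n_{k(i)}(s,a)}\big)+O\!\big(HS\ln(1/\delta_1)/n_{k(i)}(s,a)\big)$, where $\mathrm{Var}_{s,a}=\max_h\sum_{s'}p(s'\vert s,a)\big(\tilde V_h(s')-p(\cdot\vert s,a)^{\!\top}\tilde V_h\big)^2$ is the one-step variance of the reward-to-go.

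The crucial step is then to bound $\sum_{s,a}w_i(s,a)\sqrt{\mathrm{Var}_{s,a}/n_{k(i)}(s,a)}$: plugging in $n_{k(i)}(s,a)\ge m\kappa\, w_i(s,a)$, applying Cauchy--Schwarz within each cell $X_{i,\kappa,\iota}$ (whose cardinality is $\le\kappa$ by hypothesis), and summing over the $O(\log(H/w_{\min}))$ values of $\iota$ and $O(\log(HSAT/\varepsilon))$ values of $\kappa$, the whole thing is controlled by $\tfrac{1}{\sqrt m}\sqrt{\sum_{s,a}w_i(s,a)\mathrm{Var}_{s,a}}$ up to logarithmic and $\log\log H$ factors (the latter from a standard peeling over the magnitude of the local variance). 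Here the Bellman equation for the variance — the law of total variance applied along the $H$-step episode, exactly as in \cite{dann2015sample,lattimore2012pac} — gives $\sum_{s,a}w_i(s,a)\mathrm{Var}_{s,a}=O(H^2)$ rather than the naive $O(H^3)$, which is precisely what produces the $H^2S^2A$ (not $H^2S^3A$) rate. The lower-order $HS\ln(1/\delta_1)/n_{k(i)}$ terms are handled the same way, again using $n_{k(i)}(s,a)\ge m\kappa w_i(s,a)$ and $|X_{i,\kappa,\iota}|\le\kappa$. Feeding in the stated value of $m$ makes the $X_i$-contribution at most $3\varepsilon/4$, and together with the $\bar X_i$ bound this gives $\le\varepsilon$.

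I expect the main obstacle to be the variance-telescoping step together with the bookkeeping of constants across the partition cells; adapting it to the two-player setting is essentially free, since the weights $w_i(s,a)$ and the partition are already indexed per episode, so the opponent's arbitrary policy in episode $i$ enters only through the fixed quantities $w_i(s,a)$ and $\mathrm{Var}_{s,a}$, and no stationarity of $\pi^2_i$ is used anywhere.
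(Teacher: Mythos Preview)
Your proposal is essentially correct and takes the same route as the paper: both defer to Lemma~3 of \cite{dann2015sample}, and you correctly identify the only modification needed in the present setting, namely that the number of knownness levels $|\mathcal{K}|$ is bounded via $n_{k}(s,a)\le T$, giving $\kappa\le n_k(s,a)/(mw_{\min})\le 4HSAT/\varepsilon$ and hence $|\mathcal{K}\times\mathcal{I}|\le \log_2(4HSAT/\varepsilon)\log_2(SA)$. Your per-episode indexing of $w_i$, $\iota_i$, $\kappa_i$ (so that the possibly non-stationary opponent policy in episode $i$ is absorbed into fixed occupation weights) is exactly the adaptation the paper makes.

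One point deserves a correction. The variance step is not quite as you describe. The quantity $\sum_{s,a}w_i(s,a)\mathrm{Var}_{s,a}$ you write involves the one-step variance of the \emph{optimistic} value $\tilde V_h$ under the \emph{true} kernel $p$; the law of total variance along the episode only directly controls the corresponding sum for the \emph{true} value function $V_h$, and the gap $\tilde V_h-V_h$ is precisely the object you are trying to bound. In \cite{dann2015sample} this circularity is resolved not by a ``peeling over the magnitude of the local variance'' but by their recursive sequence of auxiliary MDPs $M_d$ whose one-step rewards are the local variances of $M_{d/2-1}$; this yields a recursion $\Delta_d\le Y_d+Z\sqrt{\Delta_{2d+2}}$ that is unrolled for $O(\log_2\log_2 H)$ levels and is the actual source of the $(\log_2\log_2 H)^2$ factor in the statement. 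Since you explicitly claim to follow \cite{dann2015sample} ``exactly'', this is more an imprecision in your summary than a genuine gap, but you should be aware that a single application of the variance Bellman equation does not suffice.
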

\begin{proof}
It mainly follows the same proof as Lemma 3 in \cite{dann2015sample}. It was shown sufficient to let $m\geq 512C(\log_2\log_2 H)^2|\mathcal{K}\times\mathcal{I}|\frac{H^2}{\varepsilon^2}\ln(6/\delta_1)$. The only differences are in the upper bounds for $|\mathcal{K}\times\mathcal{I}|$.  In  $\textsc{UCFH}$, the maximum number of executions of each state-action pair is set equal to $mSH$. Thus their knownness $\kappa(s,a)$ is no more than $\frac{n(s,a)}{mw_{min}}\leq \frac{4S^2AH^2}{\varepsilon}$, whereas in our setting, since $n(s,a)\leq T$, $\kappa(s,a)\leq \frac{n(s,a)}{mw_{min}}\leq \frac{4HSAT}{\varepsilon}$. Thus in our setting $|\mathcal{K}\times\mathcal{I}| \leq \log_2(\frac{4HSAT}{\varepsilon})\log_2(SA)$.
\end{proof}
%#########################################################################################################################
% Proof that PAC bounds imply Regret bounds
%#########################################################################################################################
\begin{proof}[Proof of Lemma \ref{theorem:FH_fourth_term}] Let $\varepsilon_0\coloneqq$$ \min\big\{H,\sqrt{(H^3 S^2 A)/T}\big\}$, and $\delta_0\coloneqq \delta/\lceil\log_2(H/\varepsilon_0)\rceil$. We invoke logarithmically many times the bound in Theorem \ref{theorem:PAC} and use the union bound to obtain the regret. By assumption, for $j=1,...,\lceil\log_2(H/\varepsilon_0)\rceil$, with probability no less than $1-\delta_0$, there are at most $\tilde{\mathcal{O}}(4^jS^2A)$ episodes that are not $(2^{-j}H)$-optimal.  Then the total error is bounded by
\begin{align*}
&\sum_{k}\sum_{i\in\text{ph}(k)}\Big\lvert V_H(M^1_k,\pi^1_k,\pi_i^2,s_{\tau_i})-V_H(M,\pi^1_k,\pi_i^2,s_{\tau_i})\Big\rvert\\
&\coloneqq \sum_k\sum_{i\in \text{ph}(k)} r_i=\sum_{i:r_i\leq\varepsilon_0} r_i+\sum_{i:r_i>\varepsilon_0} r_i\\
&\leq\varepsilon_0 \frac{T}{H}+\sum_{j=1}^{\lceil\log_2(H/\varepsilon_0)\rceil} \tilde{\mathcal{O}}(4^jS^2 A) (2^{-j+1}H)\\
&=\varepsilon_0\frac{T}{H}+4(2^{\lceil\log_2(H/\varepsilon_0)\rceil}-1)\tilde{\mathcal{O}}(HS^2A)\\
&\leq\varepsilon_0\frac{T}{H}+8\frac{\tilde{\mathcal{O}}(H^2S^2A)}{\varepsilon_0}=\tilde{\mathcal{O}}(S\sqrt{HAT}+HS^2A). \ \ \ \ \ \qedhere
\end{align*}
\end{proof}
%#########################################################################################
\if 0
\begin{remark}
The $\tilde{\mathcal{O}}(H^2S^2A/\varepsilon^2)$ sample complexity bound in the finite-horizon setting can actually be improved to $\tilde{\mathcal{O}}(LHS^2A/\varepsilon^2)$ if one can ensure that in all episodes, $\forall s, s^\prime$, the game values have uniformly-bounded values. That is, 
\begin{align*}
\big\vert V_{t:H}(s) - V_{t:H}(s^\prime) \big\vert \leq L, 
\end{align*}
where $V_{t:H}(s)\coloneqq \mathbbm{E}\left[\sum_{\tau=t}^{H}r(s_\tau, a_\tau) \Big\vert s_t=s, a_\tau\sim \pi\right]$, $\pi$ is the policy executed in this episode. This condition holds for the communicating one-player MDP in \cite{jaksch2010near}, and their regret bound can be improved to $\tilde{\mathcal{O}}(S\sqrt{DAT})$. This also improves our regret bound under Assumption \ref{assumption:irreducible} to be asymptotically $\tilde{\mathcal{O}}(S\sqrt{DAT})$. Although it is still not clear how it can apply to our Assumption \ref{assumption:ergodic} , it provides a direction of improvement for the offline training setting at least asymptotically. 

It is not difficult to modify \cite{dann2015sample}'s proof the get this bound. One only needs to modify the range of their $r_t^{(2)}$, which correspond to the maximum local variance, from $[0, H^2]$ to $[0, L^2]$ (defined in their Section C.4). Following their analysis line by line, one can deduce that $r_t^{(d)}\in [0, L^{\frac{d+2}{2}}H^{\frac{d}{2}}]$, which makes the sample complexity become $\tilde{\mathcal{O}}\left(\frac{LHS^2A}{\varepsilon^2}\right)$.
\end{remark}
\fi

\if 0
\begin{remark} In the analysis, leading up to the PAC bounds in \cite{dann2015sample}, a technique of recursively constructing a sequence of MDPs $\{M_d\},\ M_d=(\mathcal{S},\mathcal{A},r^{(d)},p;H,s_0)$ is utilized.

All MDPs $M_d$ share the same Markov transition kernel $p(s'|s,a)$ but have different one-step rewards $r^{(d)}_t(s,a)$. Let $M_0$ be the original MDP with $r^{(0)}=r$. The one-step rewards $r^{(2d+2)}$ of $M_{2d+2}$ is then defined recursively as the previous MDP $M_{d}$'s local variances: $r^{(2d+2)}_t(s,a)\coloneqq \tilde{\sigma}_{t:H;(s_0)}^{(d),2}(s,a)\in [0,L^{d+2}H^{d}]$. 

Denote $\Delta_d\coloneqq|V_{1:H}^{(d)}(s_0)-\tilde{V}_{1:H}^{(d)}(s_0)|$ as the estimation error for MDP $M_d$. It was shown that the error $\Delta_d$ for MDP $M_d$ is bounded by the induced in terms of $\Delta_{2d+2}$ for MDP $M_{2d+2}$, by for some constants $Y_d$ and $Z$: 
$$\Delta_d\leq Y_d+Z\sqrt{\Delta_{2d+2}}$$
and thus the recursive formula
$$\Delta_0\leq \sum_{d\in \mathcal{D}\setminus\{\gamma\}}(Z^{2d}Y_{d}^2)^{\frac{1}{2+d}}+(Z^{2\gamma}\Delta_{\gamma}^2)^{\frac{1}{2+\gamma}},$$
where $\mathcal{D}=\{0,2,6,14,30,...\}.$
%\begin{lemma} (Bellman equation for local variances) The equation
%$\mathcal{V}_{i:j}=P_i\mathcal{V}_{i+1:j}+\sigma^2_{i:j}$ holds, where $$\mathcal{V}_{i:j}^\pi(s)\coloneqq \mathbb{E}^\pi[(\sum_{t=i}^j r_t(s_t)-V_{i:j}^\pi(s_i))^2|s_i=s];$$ $$\sigma_{i:j}^2(s,a)\coloneqq \mathbb{E}[(V^\pi_{i+1:j}(s_{i+1})-P^\pi_iV^\pi_{i+1:j}(s_i))^2|s_i=s,a_i=a].$$
%\end{lemma}
From the above inequality, if we can slow down the growth rate of $Y_d$ as $d$ increases, then we may obtain a tighter error upper bound for $\Delta_0$. It was shown that $Y_d=\tilde{\mathcal{O}}(H^{d+1})$ grows as the range of the one-step rewards $r^{(d)}$, which is $[0,H^d]$.  What can be shown below is that if $\text{sp}(V_{t:H;s_0})$ can be bounded by some smaller value $L\leq H$, then the range of $r^{(d)}$ is reduced to $[0,L^{\frac{d+2}{2}}H^{\frac{d-2}{2}}]$ and $Y_d=\tilde{\mathcal{O}}(L^{\frac{d+2}{2}}H^{\frac{d}{2}})$ for $d\in \mathcal{D}=\{0,2,6,14,30,...\}$.

Thus following the same proof in Lemma 3 \cite{dann2015sample} with $m_1\coloneqq \frac{C|\mathcal{K}\times \mathcal{I}|LH}{m\varepsilon^2}$, the $(\varepsilon,\delta)$-PAC sample complexity bound can be shown being further improved to $\tilde{\mathcal{O}}(\frac{LHS^2A}{\varepsilon^2})$.
\if 0
\begin{lemma}
 Suppose the $\text{sp}(V^{(0)}_{t:H;s_0})$ for all $1\leq t\leq H$ are bounded by some constant $L\leq H$.
Then
\begin{align}
&\Delta_d\coloneqq|V_{t:H}^{(d)}(s_0)-\tilde{V}_{t:H}^{(d)}(s_0)|\nonumber \\
&\leq A_d+B_d+\min\{E_d, C_d+C''\sqrt{\Delta_{d+1}}\},
\end{align}
where $\scalebox{0.95}{$ \displaystyle A_d=\frac{L^{\frac{d+2}{2}}H^{\frac{d-2}{2}}}{4}\varepsilon;\ B_d=\frac{C(L^{\frac{d+2}{2}}H^{\frac{d}{2}})|\mathcal{K}\times \mathcal{I}|}{m};$}$ $\scalebox{0.95}{$ \displaystyle C_d=\sqrt{\frac{C|\mathcal{K}\times \mathcal{I}|L^{d+2}H^{d}}{m}};$}$ $\scalebox{0.95}{$ \displaystyle E_d=\sqrt{\frac{C|\mathcal{K}\times \mathcal{I}|L^{d+2}H^{d+1}}{m}};$}$ $\scalebox{0.95}{$ \displaystyle C''=\sqrt{\frac{C|\mathcal{K}\times \mathcal{I}|}{m}}.$}$
\end{lemma}
\begin{proof} The proof proceeds as in Lemma C.7 \cite{dann2015sample} with only differences in the growth of one-step rewards and variances: for $d\in \mathcal{D}\coloneqq \{0,2,6,14,30,...\}$,
$$r^{(d)}_t\in [0,L^{\frac{d+2}{2}}H^{\frac{d-2}{2}}];$$
$$\lVert V^{(d)}_{t+1:H}\rVert_\infty\leq (L^{\frac{d+2}{2}}H^{\frac{d-2}{2}})H=L^{\frac{d+2}{2}}H^{\frac{d}{2}};$$
$$\mathcal{V}^{d}_{1:H}(s_0)\leq (L^{\frac{d+2}{2}}H^{\frac{d}{2}})^2=L^{d+2}H^{d}.$$
\end{proof}
We have $\Delta_d\leq Y_d+Z\sqrt{\Delta_{2d+2}}$ and thus the recursive formula

$$\Delta_0\leq \sum_{d\in \mathcal{D}\setminus\{\gamma\}}(Z^{2d}Y_{d}^2)^{\frac{1}{2+d}}+(Z^{2\gamma}\Delta_{\gamma}^2)^{\frac{1}{2+\gamma}},$$
where $\mathcal{D}=\{0,2,6,14,30,...\}$ and 
$$Y_d=A_d+2C_d=L^{\frac{d+2}{2}}H^{\frac{d-2}{2}}(\varepsilon/4+H\sqrt{C|\mathcal{K}\times \mathcal{I}|/m})$$
$$Z=C''=\sqrt{C|\mathcal{K}\times \mathcal{I}|/m}.$$
Let $m_1\coloneqq \frac{C|\mathcal{K}\times \mathcal{I}|LH}{m\varepsilon^2}$. Then $Y_d=A_d+2C_d=\varepsilon L^{\frac{d+2}{2}}H^{\frac{d-2}{2}}\big(1/4+\sqrt{(m_1H)/L}\big)$;
$Z=\varepsilon\sqrt{m_1/(LH)}.$
Since $L\leq H$, $$(Z^{2d}Y_d^{2})^{\frac{1}{d+2}}\leq\varepsilon m_1^{\frac{d}{d+2}}\Big((1/4)^{\frac{2}{d+2}}+(m_1)^{\frac{1}{d+2}}\Big)$$
$$(Z^{2\gamma}\Delta_\gamma^{2})^{\frac{1}{\gamma+2}}\leq\varepsilon m_1^{\frac{\gamma}{\gamma+2}}\Big((1/4)^{\frac{2}{\gamma+2}}+(m_1 H)^{\frac{1}{\gamma+2}}\Big)$$
Thus bounding 
\begin{align}
&\scalebox{0.85}{$ \displaystyle \frac{\Delta_0}{\varepsilon}\leq \sum_{d\in \mathcal{D}\setminus\{\gamma\}} m_1^{\frac{d}{d+2}}\Big((1/4)^{\frac{2}{d+2}}+(m_1)^{\frac{1}{d+2}}\Big)$}\nonumber \\
&\scalebox{0.85}{$ \displaystyle\ \ \ \ \ \ \ \ \ \ \ \ \ \ \ \ \ \ \ \ \ \ \ \ \ \ \ + m_1^{\frac{\gamma}{\gamma+2}}\Big((1/4)^{\frac{2}{\gamma+2}}+(m_1 H)^{\frac{1}{\gamma+2}}\Big)\leq 1,$}
\end{align} 
with the same analysis by letting $\gamma=\lfloor \frac{\ln H}{2\ln 2}\rfloor$ and $\sqrt{m_1}\leq 1/(8\log_2\log_2 H)$, we have $\Delta_0\leq \varepsilon$. 
Thus $$m\geq 512C(\log_2\log_2 H)^2|\mathcal{K}\times\mathcal{I}|\frac{LH}{\varepsilon^2}\ln(6/\delta_1).$$
\fi
\end{remark}
\fi

\section{Proofs for Offline Training Complexity }
\label{appendix:Proofs for Offline Training Complexity}
\begin{proof}[Proof of Theorem \ref{theorem:offline_irreducible}]
Define 
\begin{align*}
K_\varepsilon&\coloneqq \{k: \rho^*(M)-\min_{\pi^2}\rho(M,\pi^1_k,\pi^2)>\varepsilon\}, \nonumber \\
K_\varepsilon^\prime &\coloneqq K_\varepsilon \cap\{k: \text{phase\ } k \text{\ is benign}\}. 
\end{align*}
Also, define
\begin{align}
&\text{Reg}_\varepsilon^{\text{(off)}\prime}\coloneqq \sum_{k\in K_\varepsilon^\prime} T_k(\rho^*(M)-\min_{\pi^2}\rho(M,\pi_k^1, \pi^2)) \nonumber \\
&= \text{Reg}_\varepsilon^{\text{(on)}\prime} + \sum_{k\in K_\varepsilon^\prime} \sum_{t=t_k}^{t_{k+1}-1} \left(r_t - \min_{\pi^2}\rho(M ,\pi_k^1, \pi^2)\right). \nonumber \\
\end{align}
where $\text{Reg}_\varepsilon^{\text{(on)}\prime}$ is defined as a summation similar to $\text{Reg}_T^{\text{(on)}}$ except that it is summed only over time steps in phases $k\in K_\varepsilon^\prime$. Besides, analogous to the definition of $L_\varepsilon$, we define $L_\varepsilon^\prime\coloneqq \sum_{k: \text{benign}}T_k\mathbbm{1}\{\rho^*(M)-\min_{\pi^2}\rho(M,\pi_k^1, \pi^2)>\varepsilon\}$. 

We will argue (a) the order of $\text{Reg}_\varepsilon^{\text{(off)}\prime}$ does not exceed that of $\text{Reg}_\varepsilon^{\text{(on)}\prime}$, and (b) the upper bound of $\text{Reg}_\varepsilon^{\text{(on)}\prime}$ is similar to that of $\text{Reg}_T^{\text{(on)}}$ except that the dependency on $T$ is replaced by $L_\varepsilon^\prime$. %Then we can use the already established bound for  $\text{Reg}_\varepsilon^{\text{(on)}}$ to upper bound $L_\varepsilon$. The first summation is similar to \eqref{eqn:decompose_interval_regret}'s last two terms, or the main term in one-player MDP's regret analysis (e.g., in \cite{jaksch2010near}). For completeness, we provide its bound in the appendix. Formally, we have the following two Theorems. 

To show (a), we note that the extra terms in $\text{Reg}_\varepsilon^{\text{(off)}\prime}$ compared to $\text{Reg}_\varepsilon^{\text{(on)}\prime}$ are the sum of 
\begin{align*}
&\sum_{t=t_k}^{t_{k+1}-1} \left(r_t-\min_{\pi^2}\rho(M, \pi_k^1, \pi^2)\right)=\sum_{n=5}^{7} \Lambda_k^{(n)}, \nonumber \\
&\Lambda_k^{(5)}\coloneqq \sum_{t=t_k}^{t_{k+1}-1} \left(r_t- \rho(M, \pi_k^1, \pi_k^2)\right), \\
&\Lambda_k^{(6)}\coloneqq  \sum_{t=t_k}^{t_{k+1}-1} \left(\rho(M, \pi_k^1, \pi_k^2)- \rho(M_k^2, \pi_k^1, \pi_k^2, s_{t_k})\right), \\
&\Lambda_k^{(7)}\coloneqq \sum_{t=t_k}^{t_{k+1}-1} \left(\rho(M_k^2, \pi_k^1, \pi_k^2, s_{t_k})-\min_{\pi^2}\rho(M, \pi_k^1, \pi^2)\right),
\end{align*}
over $k\in K_\varepsilon^\prime$. $\Lambda_k^{(7)}$ is bounded by $T_k \gamma_k$ by \eqref{eqn:pessimistic_selection}; the bound of this term is the same as that of $\Lambda_k^{(1)}$. $\Lambda_k^{(5)}$ and $\Lambda_k^{(6)}$ are symmetric to $\Lambda_k^{(4)}$ and $\Lambda_k^{(3)}$ respectively (note that the $\bar{\pi}_k^2$ we constructed in Section \ref{Regret decomposition and the introduction of pi2} will be identical to $\pi_k^2$ in the offline setting). Therefore, we can use the same bounds for the corresponding terms. 

Now we proceed to argue (b) and bound $\text{Reg}_\varepsilon^{\text{(on)}\prime}$. We will largely reuse the regret analysis we already done for $\text{Reg}_T^{\text{(on)}}$, but only sum up the contribution from phases in $K_\varepsilon^\prime$. 

The contribution to $\text{Reg}_\varepsilon^{\text{(on)}\prime}$ from $\Lambda_k^{(1)}$  is 
\begin{align}
\sum_{k\in K_\varepsilon^\prime} T_k \gamma_k=\sum_{k\in K_\varepsilon^\prime} T_k/\sqrt{t_k};  \label{eqn:offline_contribution_1}
\end{align}
the contribution from $\Lambda_k^{(3)}$ is as shown in \eqref{sample_bound_term}: 
\begin{align}
\sum_{k\in K_\varepsilon^\prime} \sum_{s,a} \frac{v_k(s,a)}{\sqrt{n_k(s,a)}}\tilde{\mathcal{O}}(D\sqrt{S})+\sum_{k\in K_\varepsilon^\prime}\tilde{\mathcal{O}}(DS);  \label{eqn:offline_contribution_2}
\end{align} 
finally, the contribution from $\Lambda_k^{(4)}$ is as shown in \eqref{eqn:contribution_from_third_term}: 
\begin{align}
\sum_{k\in K_\varepsilon^\prime} \left(\bar{h}_k(s_{t_{k+1}-1})-\bar{h}(s_{t_k}) +\sum_{t=t_k}^{t_{k+1}-1}(Y_t^1+Y_t^2)  \right). \label{eqn:offline_contribution_3}
\end{align}
$\text{Reg}_\varepsilon^{\text{(on)}\prime}$ is then bounded by the sum of \eqref{eqn:offline_contribution_1}-\eqref{eqn:offline_contribution_3}. By lemma \ref{lemma:any_sequences}, \eqref{eqn:offline_contribution_1} is bounded by $(\sqrt{2}+1)\sqrt{L_\varepsilon^\prime}$, and the first term in \eqref{eqn:offline_contribution_2} is bounded by $\tilde{\mathcal{O}}(\sqrt{SAL_\varepsilon^\prime})\tilde{\mathcal{O}}(D\sqrt{S})=\tilde{\mathcal{O}}(DS\sqrt{AL_\varepsilon^\prime})$ by Cauchy inequality. The second term in \eqref{eqn:offline_contribution_2} can be still bounded by $\tilde{\mathcal{O}}(DS^2A)$. Since the martingale difference sequences in \eqref{eqn:offline_contribution_3} are now summing over a total of $L_\varepsilon^\prime$ steps, \eqref{eqn:offline_contribution_3} is now bounded by $DSA+D\sqrt{SL_\varepsilon^\prime}$ (cf. \eqref{martingale_term}). 

As a whole, we conclude that $\text{Reg}_\varepsilon^{\text{(on)}\prime} \leq \tilde{\mathcal{O}}(DS\sqrt{AL_\varepsilon^\prime}+DS^2A)$, and hence $\text{Reg}_\varepsilon^{\text{(off)}\prime} \leq \tilde{\mathcal{O}}(DS\sqrt{AL_\varepsilon^\prime}+DS^2A)$ by the argument in (a). 

Note that by the definition of $K_\varepsilon^\prime$, we have
\begin{align}
\text{Reg}_\varepsilon^{\text{(off)}\prime}&=\sum_{k\in K_\varepsilon^\prime}T_k(\rho^*(M)-\min_{\pi^2}\rho(M,\pi_k^1, \pi^2))\nonumber \\
&\geq \sum_{k\in K_\varepsilon^\prime} T_k \varepsilon = \varepsilon L_\varepsilon^\prime. \label{eqn:Reg_epsilon_lower_bound}
\end{align}
Combining \eqref{eqn:Reg_epsilon_lower_bound} with the upper bound of $\text{Reg}_\varepsilon^{\text{(off)}\prime}$ just established, we have
\begin{align*}
\varepsilon L_\varepsilon^\prime \leq \tilde{\mathcal{O}}(DS\sqrt{AL_\varepsilon^\prime}+DS^2A), 
\end{align*}
which has the solution
\begin{align*}
L_\varepsilon^\prime\leq \tilde{\mathcal{O}}\left(\frac{D^2S^2A}{\varepsilon^2}\right). 
\end{align*}
Comparing the definitions of $L_\varepsilon$ and $L_\varepsilon^\prime$, and by Lemma \ref{lemma:bound_benign}, we get 
\begin{align*}
L_\varepsilon\leq L_\varepsilon^\prime + \tilde{\mathcal{O}}(D^3S^5A)=\tilde{\mathcal{O}}\left(D^3S^5A+\frac{D^2S^2A}{\varepsilon^2}\right).
\end{align*}

Finally, we remark on how to select a single stationary policy after we have run the algorithm for $T$ steps. Note that in our proofs, we actually bound the single step regret in phase $k$ through 
\begin{align}
&\rho^*(M)-\min_{\pi^2}\rho(M, \pi_k^1, \pi^2) \leq \min_{\pi^2}\rho(M_k^1, \pi_k^1, s_{t_k})-\rho(M_k^2, \pi_k^1, \pi_k^2, s_{t_k})+2\gamma_k 
\end{align}
because LHS is $\frac{1}{T_k}\sum_{n=1}^{7}\Lambda_k^{(n)}$ while RHS is $\frac{1}{T_k}\sum_{n=2}^{6}\Lambda_k^{(n)}+2\gamma_k$. Note that the terms on RHS can all be obtained by the algorithm, so they form an available upper bound for the LHS. Let $u_k$ denotes the RHS. Then the previous proofs actually proved that 
\begin{align*}
\sum_k T_k\mathbbm{1}\{u_k>\varepsilon\} \leq \tilde{\mathcal{O}}\left(D^3S^5A + \frac{D^2S^2A}{\varepsilon^2}\right)
\end{align*}
holds with high probability. Therefore, if $T>\tilde{\Omega}\left(D^3S^5A+\frac{D^2S^2A}{\varepsilon^2}\right)$, there will be some $k$ such that $u_k<\varepsilon$. Since the algorithm knows $u_k$, it can just select the minimum of all $u_k$'s among all phases. That will output a policy $\pi_k^1$ such that $\rho^*(M)-\min_{\pi^2}\rho(M,\pi_k^1, \pi^2) \leq \varepsilon$.

\end{proof}

\if 0
\begin{proof}[Proof of Theorem \ref{theorem:offline_ergodic}]
The regret in each phase $k$ is decomposed into six terms as in \eqref{eqn:fh_decompose}.
Of the six terms, the fourth term dominates over the first, the third (nonpositive), the fifth and the sixth. So we only look at the second and the fourth term. The second term is bounded by $\frac{T_k}{H}D+T_k\gamma_k$. Summing over $k\in K_\varepsilon$ by Lemma \ref{lemma:any_sequences} gives $\frac{L_\varepsilon}{H}D+\tilde{\mathcal{O}}(\sqrt{L_\varepsilon})$. Thus its average error is bounded by $\tilde{\mathcal{O}}(D/H+1/\sqrt{L_\varepsilon})$. By taking $H=D/(2\varepsilon)$, the sample complexity for $\Delta_k^2$ is $\tilde{\mathcal{O}}(1/\varepsilon^2)$. On the other hand, by Theorem \ref{theorem:PAC}, the fourth term has sample complexity bound $\tilde{\mathcal{O}}(HS^2A/\varepsilon^2)$. By substituting $H=D/(2\varepsilon)$ gives the dominating sample complexity bound $\tilde{\mathcal{O}}(DS^2A/\varepsilon^3)$. 
\end{proof}
\fi

\begin{proof}[Proof of Theorem \ref{theorem:offline_ergodic}]
\begin{align}
\text{Reg}_\varepsilon^{\text{(off)}}&\coloneqq \sum_{k\in K_\varepsilon} T_k(\rho^*(M)-\min_{\pi^2}\rho(M,\pi_k^1, \pi^2)) \nonumber \\
&= \text{Reg}_\varepsilon^{\text{(on)}} + \sum_{k\in K_\varepsilon} \sum_{t=t_k}^{t_{k+1}-1} \left(r_t - \min_{\pi^2}\rho(M ,\pi_k^1, \pi^2)\right), \nonumber
\end{align}
where $\text{Reg}_\varepsilon^{\text{(on)}}$ is the sum of $\Delta_k$ over $k\in K_\varepsilon$.
Of the six regret terms \eqref{eqn:fh_decompose}, $\Delta_k^{(4)}$ dominates over $\Delta_k^{(1)}$, $\Delta_k^{(3)}$, $\Delta_k^{(5)}$, and $\Delta_k^{(6)}$. So we only look at the $\Delta_k^{(2)}$ and $\Delta_k^{(4)}$. $\Delta_k^{(2)}$ is bounded by $\frac{T_k}{H}D+T_k\gamma_k$. Summing over $k\in K_\varepsilon$ by Lemma \ref{lemma:any_sequences} gives $\frac{L_\varepsilon}{H}D+\tilde{\mathcal{O}}(\sqrt{L_\varepsilon})$. Thus its average error is bounded by $\tilde{\mathcal{O}}(D/H+1/\sqrt{L_\varepsilon})$. By taking $H=D/(2\varepsilon)$ we have the sample complexity for the second term is $\tilde{\mathcal{O}}(1/\varepsilon^2)$. On the other hand, by Theorem \ref{theorem:PAC}, $\Delta_k^{(4)}$ has sample complexity bound $\tilde{\mathcal{O}}(HS^2A/\varepsilon^2)$. By substituting $H=D/(2\varepsilon)$ gives the dominating sample complexity bound $\tilde{\mathcal{O}}(DS^2A/\varepsilon^3)$. 
We argue again the order of $\text{Reg}_\varepsilon^{\text{(off)}}$ does not exceed that of $\text{Reg}_\varepsilon^{\text{(on)}}$. 
To show this, we note that the extra terms in $\text{Reg}_\varepsilon^{\text{(off)}}$ compared to $\text{Reg}_\varepsilon^{\text{(on)}}$ are the sum of
\begin{align*}
&\sum_{t=t_k}^{t_{k+1}-1} \left(r_t-\min_{\pi^2}\rho(M, \pi_k^1, \pi^2)\right)=\sum_{n=7}^{11}\Delta_k^{(n)},  \nonumber \\
&\scalebox{0.85}{$ \displaystyle \Delta_k^{(7)}\coloneqq \sum_{i \in \text{ph}(k)}\left(\sum_{t=\tau_i}^{\tau_{i+1}-1}r(s_t,a_t)- V_H(M, \pi_k, s_{\tau_i})\right),$} \\
&\scalebox{0.85}{$ \displaystyle \Delta_k^{(8)}\coloneqq\sum_{i\in \text{ph}(k)} \left(V_H(M, \pi_k,s_{\tau_i})- V_H(M_k^2, \pi_k, s_{\tau_i})\right),$} \\
&\scalebox{0.85}{$ \displaystyle \Delta_k^{(9)}\coloneqq\sum_{i\in \text{ph}(k)} \left(V_H(M_k^2, \pi_k,s_{\tau_i})- H\rho(M_k^2, \pi_k, s_{\tau_i})\right),$} \\
&\scalebox{0.85}{$ \displaystyle \Delta_k^{(10)}\coloneqq \sum_{i\in \text{ph}(k)} \left(H\rho(M_k^2, \pi_k,s_{\tau_i})- H\min_{\pi^2}\rho(M, \pi_k^1,\pi^2, s_{\tau_i})\right),$}\\
&\scalebox{0.85}{$\displaystyle \Delta_k^{(11)}\coloneqq 2H,$}
\end{align*}
over $k\in K_\varepsilon$. This decomposition mirrors that in \eqref{eqn:fh_decompose} where  $\Delta_k^{(7)}$, $ \Delta_k^{(8)}$, $ \Delta_k^{(9)}$, $ \Delta_k^{(10)}$ and $ \Delta_k^{(11)}$ are symmetric to the $\Delta_k^{(5)}$, $\Delta_k^{(4)}$, $\Delta_k^{(2)}$, $\Delta_k^{(1)}$, and $\Delta_k^{(6)}$ in \eqref{eqn:fh_decompose}, respectively, and we can use the same bounds for the corresponding terms. 

Finally, we can pick an $\varepsilon$-optimal policy $\pi_k^1$ after the algorithm has run for $T>\tilde{\mathcal{O}}\left(\frac{DS^2A}{\varepsilon^3}\right)$ steps. The way is similar to that described in the proof of Theorem \ref{theorem:offline_irreducible}.
\end{proof}

%========================================================================================
% Other Technical Lemmas
%========================================================================================
\section{Other Technical Lemmas}\label{section:other_technical_lemmas}
\begin{remark}\label{remark:mean_first_passage_time}
Under Assumption \ref{assumption:irreducible}, note that for any stationary policy $\pi$, we have $\spa(h(M, \pi, \cdot))\leq T^{\pi}(M)$. Indeed, 
\begin{align*}
h(M,\pi,s)&=\mathbb{E}^{\pi}_s\Big[\sum_{t=1}^\infty r_t-\rho(M,\pi)\Big]\\
&\leq T^{\pi}_{s\rightarrow s'}(M)+\mathbb{E}^{\pi}_{s'}\Big[\sum_{t=1}^\infty r_t-\rho(M,\pi)\Big]\\
&=T^{\pi}_{s\rightarrow s'}(M)+h(M,\pi,s').
\end{align*}
\end{remark}

\begin{remark}\label{remark:average_time_<D}
Imagine an MDP where all transitions from $s\neq s^\prime$ remain the same while $s^\prime$ becomes an absorbing state; rewards on $s \neq s^\prime$ are all $1$ and $0$ on $s^\prime$. Now $\max_{\pi^1}\max_{\pi^2}T^{\pi^1, \pi^2}_{s\rightarrow s^\prime}(M)$ is equivalent to the maximum reward on this MDP, which can be achieved by stationary joint policy by both players. 
\end{remark}

%=========================================================
% Regularization approach
% =========================================================

\section{Regularization/Constraint-based Approach for Assumption \ref{assumption:irreducible}}
It is possible to improve the $\tilde{\mathcal{O}}(D^3S^5A)$ term in the regret bound under Assumption \ref{assumption:irreducible}. Note that this term mainly comes from Lemma \ref{lemma:bound_h_by_2D}, which says that to wait until $\spa(h(M_k^1, \pi_k^1, \bar{\pi}_k^2, \cdot)) < 2D$, we need to pay $\tilde{\mathcal{O}}(D^3S^5A)$ regret. However, if we can know the value of $D$ in advance, the optimistic model $M_k^1$ can be selected based on the following constrained optimization problem: 
\begin{align*}
M_k^1&=\argmax_{\tilde{M}\in \mathcal{M}_k} \max_{\pi^1}\min_{\pi^2} \rho(\tilde{M}, \pi^1,\pi^2, s_{t_k}),  \nonumber \\
\text{subject to \ \   }  &\forall \pi^1, \pi^2 \in \Pi^{\text{SR}},  \spa(h(\tilde{M},\pi^1, \pi^2, \cdot)) \leq D.  \nonumber
\end{align*}
Clearly, the true model $M$ still lies in this feasible set, so this is a valid way to select $M_k^1$. It is also possible to convert this into a regularized optimization problem as demonstrated by \cite{bartlett2009regal}. Nevertheless, we are not aware of any practical algorithm that can solve either optimization problem. We just demonstrated in this paper that the benefit of this regularization/constraint-based approach is only on the additive constant but not on the asymptotic performance. 

% In the unusual situation where you want a paper to appear in the
% references without citing it in the main text, use \nocite
%\nocite{langley00}

\end{document}